\documentclass{article}
\usepackage{iclr2027_conference,times}

\usepackage{amsmath,amsfonts,amssymb,amsthm,mathtools,bm,mathrsfs}
\usepackage{graphicx}
\usepackage{booktabs,array,makecell,multirow}
\usepackage{enumitem}
\usepackage{algorithm}
\usepackage{algpseudocode}
\usepackage{placeins}
\usepackage{float}
\usepackage[most]{tcolorbox}
\usepackage{diagbox}
\usepackage{ifpdf}
\usepackage{url}
\definecolor{cvprblue}{rgb}{0.21,0.49,0.74}
\usepackage[pagebackref,breaklinks,colorlinks,citecolor=cvprblue]{hyperref}
\usepackage{cleveref}

\makeatletter
\renewcommand{\theHALG@line}{\thealgorithm.\arabic{ALG@line}}
\makeatother

\ifpdf
  \DeclareGraphicsExtensions{.pdf,.png,.jpg,.jpeg}
\else
  \DeclareGraphicsExtensions{.eps}
\fi
\graphicspath{{pdf_fig/}{./}}
\numberwithin{equation}{section}

\theoremstyle{plain}
\newtheorem{theorem}{Theorem}[section]
\newtheorem{lemma}[theorem]{Lemma}
\newtheorem{proposition}[theorem]{Proposition}
\newtheorem{corollary}[theorem]{Corollary}
\theoremstyle{definition}
\newtheorem{assumption}[theorem]{Assumption}
\theoremstyle{remark}
\newtheorem{remark}[theorem]{Remark}

\crefname{theorem}{Theorem}{Theorems}
\Crefname{theorem}{Theorem}{Theorems}
\crefname{lemma}{Lemma}{Lemmas}
\Crefname{lemma}{Lemma}{Lemmas}
\crefname{proposition}{Proposition}{Propositions}
\Crefname{proposition}{Proposition}{Propositions}
\crefname{corollary}{Corollary}{Corollaries}
\Crefname{corollary}{Corollary}{Corollaries}
\crefname{assumption}{Assumption}{Assumptions}
\Crefname{assumption}{Assumption}{Assumptions}
\crefname{remark}{Remark}{Remarks}
\Crefname{remark}{Remark}{Remarks}

\newenvironment{keywords}{\par\noindent\textbf{Keywords: }\ignorespaces}{\par}

\definecolor{MorandiTheoremBack}{RGB}{224,232,238}
\definecolor{MorandiTheoremFrame}{RGB}{116,139,154}
\definecolor{MorandiLemmaBack}{RGB}{232,226,218}
\definecolor{MorandiLemmaFrame}{RGB}{150,134,118}
\tcolorboxenvironment{theorem}{enhanced,breakable,
  colback=MorandiTheoremBack,colframe=MorandiTheoremFrame,
  boxrule=0.5pt,arc=1mm,left=1.2mm,right=1.2mm,top=0.9mm,bottom=0.9mm,
  before skip=3pt,after skip=3pt}
\tcolorboxenvironment{lemma}{enhanced,breakable,
  colback=MorandiLemmaBack,colframe=MorandiLemmaFrame,
  boxrule=0.45pt,arc=1mm,left=1.2mm,right=1.2mm,top=0.9mm,bottom=0.9mm,
  before skip=3pt,after skip=3pt}

\newcommand{\R}{\mathbb{R}}

\newcommand{\T}{\mathbb{T}}
\newcommand{\E}{\mathbb{E}}
\newcommand{\Prob}{\mathbb{P}}
\newcommand{\G}{\mathsf{G}}
\newcommand{\g}{\mathfrak{g}}
\newcommand{\X}{\mathsf{X}}
\newcommand{\Path}{\Omega}
\newcommand{\cP}{\mathcal{P}}

\newcommand{\dd}{\mathrm{d}}
\newcommand{\KL}{\mathrm{KL}}
\newcommand{\Div}{\operatorname{div}}

\newcommand{\Exp}{\operatorname{Exp}}
\newcommand{\Log}{\operatorname{Log}}

\newcommand{\wrap}{\operatorname{wrap}}
\newcommand{\inner}[2]{\left\langle #1,#2\right\rangle}
\newcommand{\norm}[1]{\left\lVert #1\right\rVert}
\newcommand{\abs}[1]{\left\lvert #1\right\rvert}

\newcommand{\TV}{\mathrm{TV}}
\newcommand{\MMD}{\operatorname{MMD}}
\newcommand{\SW}{\operatorname{SW}}
\newcommand{\Vol}{\operatorname{Vol}}
\newcommand{\pdffig}[2][width=\linewidth]{\includegraphics[#1]{#2.pdf}}

\algrenewcommand\algorithmicrequire{\textbf{Require:}}
\algrenewcommand\algorithmicensure{\textbf{Return:}}
\algrenewcommand\algorithmiccomment[1]{\hfill$\triangleright$~#1}

\title{Schr\"odinger Bridges on Lie Group Manifolds for Probabilistic Intrinsic Generation}

\author{%
\begin{minipage}{0.96\textwidth}
\centering\normalfont
Shizhe Zhang\textsuperscript{1,*}\quad
Mingyang Zhao\textsuperscript{2,*,$\dagger$}\quad
Lei Ma\textsuperscript{1,$\dagger$}\\[3pt]
{\small
\textsuperscript{1}National Biomedical Imaging Center, College of Future Technology, Peking University, Beijing, China\\
\textsuperscript{2}State Key Laboratory of Mathematical Sciences, Academy of Mathematics and Systems Science, Chinese Academy of Sciences, Beijing, China\\[2pt]
\texttt{szzhang25@stu.pku.edu.cn}\quad
\texttt{zhaomingyang@amss.ac.cn}\quad
\texttt{lei.ma@pku.edu.cn}\\[1pt]
\textsuperscript{*}Equal contribution. \quad \textsuperscript{$\dagger$}Corresponding authors.}
\end{minipage}%
}

\iclrfinalcopy

\newsavebox{\arxivProteinHistogramDependencyBox}
\sbox{\arxivProteinHistogramDependencyBox}{%
  \includegraphics{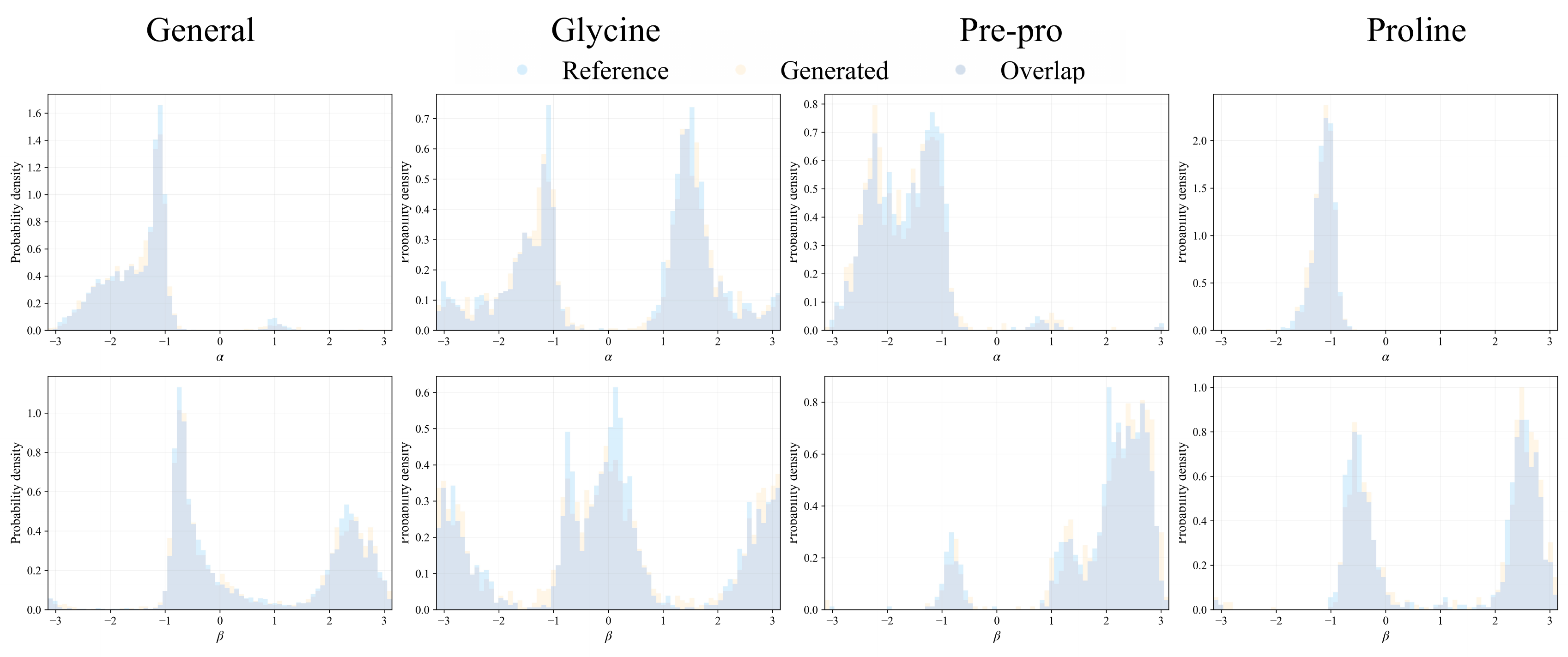}%
}

\ifpdf
\hypersetup{
  pdftitle={Schrödinger Bridges on Lie Group Manifolds for Probabilistic Intrinsic Generation},
  pdfauthor={Shizhe Zhang, Mingyang Zhao, and Lei Ma}
}
\fi

\begin{document}
\maketitle

\pagestyle{plain}
\thispagestyle{plain}

\begin{abstract}
Generative modeling directly on geometric manifolds can avoid errors introduced by flattening non-Euclidean data, repeated ambient projection, and coordinate inconsistency in Euclidean representations. Schr\"odinger bridges provide a probabilistic generative framework for entropy-regularized transport between prescribed endpoint distributions. We study Schr\"odinger bridges for kinetic dynamics on Lie group manifolds with state \(X_t=(g_t,\bm\xi_t)\in\G\times\g\), allowing endpoint observations to constrain only the variables that are actually measured. In particular, the entropy projection determines the conditional law of the unobserved endpoint velocities. For the same observed endpoint bridge, we develop two computational realizations: \emph{Wrapped-Kernel Bridge Calibration} (WKBC) uses an explicit periodized kinetic kernel on compact \emph{Abelian} groups, whereas \emph{Reciprocal Conditional-Control Bridge Matching} (RCCBM) handles compact \emph{non-Abelian} groups through two-sided endpoint calibration and mollified conditional-control matching. The canonical teacher-mixture path law is itself a Markov reciprocal law, so forward generation uses a calibrated initial law and one learned Doob controller. Moreover, we establish a modular error bound in the bounded-Lipschitz path metric that provides a clean separation of errors due to endpoints, control regression, initialization, discretization, and related approximations.
Experiments on multiple Lie group manifold datasets validate the feasibility and consistency of our proposed method, covering protein and RNA torsions, \(SO(3)\), \(U(n)\), and the Protein Conformational Transition Pathway Generation task using mdCATH trajectories in a compact reduced representation. The source code is publicly available at
\url{https://github.com/cafferyzhang12/Schr-dinger_Bridge_on_LieGroup}.
\end{abstract}

\begin{keywords}
Schr\"odinger bridge, Lie group, Lie algebra, manifold generative modeling, stochastic optimal control, Doob $h$-transform
\end{keywords}


\section{Introduction}\label{sec:introduction}

Many scientific data are intrinsically non-Euclidean: molecular torsions are periodic, orientations live on rotation groups, and rigid configurations are naturally described by Lie group variables \citep{falorsi2019relie,rezende2020tori}. Generative and transport models for such data should therefore preserve \emph{intrinsic geometry} throughout their dynamics \citep{mathieu2020rcnf,lou2020manifoldode,debortoli2022riemannian}.

Generative modeling aims to learn complex data distributions together with transformations that produce realistic samples \citep{ho2020ddpm,song2021sde,lipman2023fm} and, increasingly, meaningful stochastic paths between prescribed distributions \citep{debortoli2021sb,shi2023dsbm,liu2024gsbm}. Standard diffusion and flow matching pipelines are commonly formulated by connecting a simple reference distribution to the data distribution \citep{ho2020ddpm,song2021sde,lipman2023fm}; this prior-to-data setting is highly effective for unconditional generation, but it is less directly aligned with scientific problems in which both endpoint populations are observed and the stochastic evolution between them is itself of interest. Schr\"odinger bridges provide a natural alternative: they solve a stochastic optimal transport problem between two prescribed endpoint laws while remaining close, in relative entropy, to a chosen reference dynamics \citep{schrodinger1931,follmer1988,leonard2014,chen2021siamreview}. This distribution-to-distribution formulation supports generative modeling between prescribed endpoint distributions while retaining stochastic path information \citep{debortoli2021sb,shi2023dsbm,liu2024gsbm}, making it well suited to scientific evolution and constrained stochastic generation.

To accommodate scientific data for which only part of the state is observed, let \(Q\in\cP(C([0,T],\X))\) be a reference path law, let \(\mathcal O_i:\X\to\mathsf Y_i\), \(i\in\{0,T\}\), be measurable endpoint observation maps, and let \(\rho_i\) be prescribed observed endpoint laws. We consider the observed endpoint Schr\"odinger problem
\begin{equation}\label{eq:observed-sb}
P^{\rm SB}
=
\operatorname*{argmin}_{P\in\cP(C([0,T],\X))}
\left\{
\KL(P\|Q):
(\mathcal O_0)_\#P_0=\rho_0,\;
(\mathcal O_T)_\#P_T=\rho_T
\right\}.
\end{equation}
Under the endpoint kernel condition introduced in Section~\ref{subsec:conventions}, the entropy projection is unique and admits the two-sided Schr\"odinger scaling
\begin{equation}\label{eq:observed-path-factor}
\frac{\dd P^{\rm SB}}{\dd Q}
=
f_0(\mathcal O_0(X_0))\,g_T(\mathcal O_T(X_T)),
\end{equation}
with positive endpoint factors \(f_0\) and \(g_T\). The propagated terminal factor
\begin{equation*}
h_t(x)
=
\E_Q\!\left[g_T(\mathcal O_T(X_T))\mid X_t=x\right]
\end{equation*}
generates the forward Doob transform, whereas the source factor determines the entropy-optimal initial conditional law of any latent state variables. Thus the same two-sided entropy projection determines the endpoint coupling, latent endpoint conditionals, and the stochastic law of intermediate trajectories.

However, a direct Euclidean parameterization of the state space fails to respect the intrinsic geometry of manifold-valued observations. Periodic coordinates, rotations, and rigid configurations are not unconstrained vectors; hence, one is forced either to repeatedly project Euclidean updates onto the manifold or to transport learned features across location-dependent tangent spaces, both of which incur geometric overhead and compound numerical errors \citep{mathieu2020rcnf,lou2020manifoldode,debortoli2022riemannian}. While intrinsic manifold models preserve the geometric support, they still leave the representation of stochastic velocities, scores, and controls complicated by spatially varying tangent spaces \citep{mathieu2020rcnf,debortoli2022riemannian,chen2024rfm}. For Lie group data, however, the \emph{group structure} offers a natural way out: tangent vectors can be uniformly represented in a single \emph{fixed Lie algebra}, while the configuration itself evolves intrinsically on the group \citep{kong2024klmc,tdm2025}.




Let \(\G\) be a finite-dimensional Lie group with Lie algebra \(\g=T_e\G\). We specialize the state space in Eq.~\eqref{eq:observed-sb} to the \emph{kinetic Lie group state space}
\begin{equation*}
\X:=\G\times\g,
\qquad
X_t=(g_t,\bm\xi_t),
\end{equation*}
where \(g_t\) is the geometric configuration and \(\bm\xi_t\) is a Lie algebra velocity. A general reference process is
\begin{equation}\label{eq:intro-reference-process}
\dd g_t=T_eL_{g_t}(\bm\xi_t)\,\dd t,
\qquad
\dd\bm\xi_t=\bm b(t,g_t,\bm\xi_t)\,\dd t+\alpha\,\dd\mathbf W_t.
\end{equation}
The group coordinate is therefore reconstructed intrinsically, while stochastic forcing and learned controls act in the fixed vector space \(\g\) \citep{kong2024klmc,tdm2025}. In this kinetic setting, the normalized forward Doob control and the corresponding bridge dynamics are
\begin{equation}\label{eq:common-optimal-control}
\bm u_t^*(x)
=
\alpha\nabla_{\bm\xi}\log h_t(x),
\end{equation}
\begin{equation}\label{eq:common-forward-bridge}
\dd g_t=T_eL_{g_t}(\bm\xi_t)\,\dd t,
\qquad
\dd\bm\xi_t=
\bigl[\bm b(t,g_t,\bm\xi_t)+\alpha\bm u_t^*(X_t)\bigr]\dd t
+\alpha\,\dd\mathbf W_t.
\end{equation}
This representation keeps scores, momentum-like variables, and controls in a common Euclidean-type coordinate space without relaxing the geometric constraint on \(g_t\). When only the group coordinate is observed at the endpoints, the Lie algebra velocities remain latent, with their endpoint conditional law determined by the same entropy projection.

\begin{figure}[!tbp]
    \centering
    \includegraphics[width=\textwidth,height=0.58\textheight,keepaspectratio]{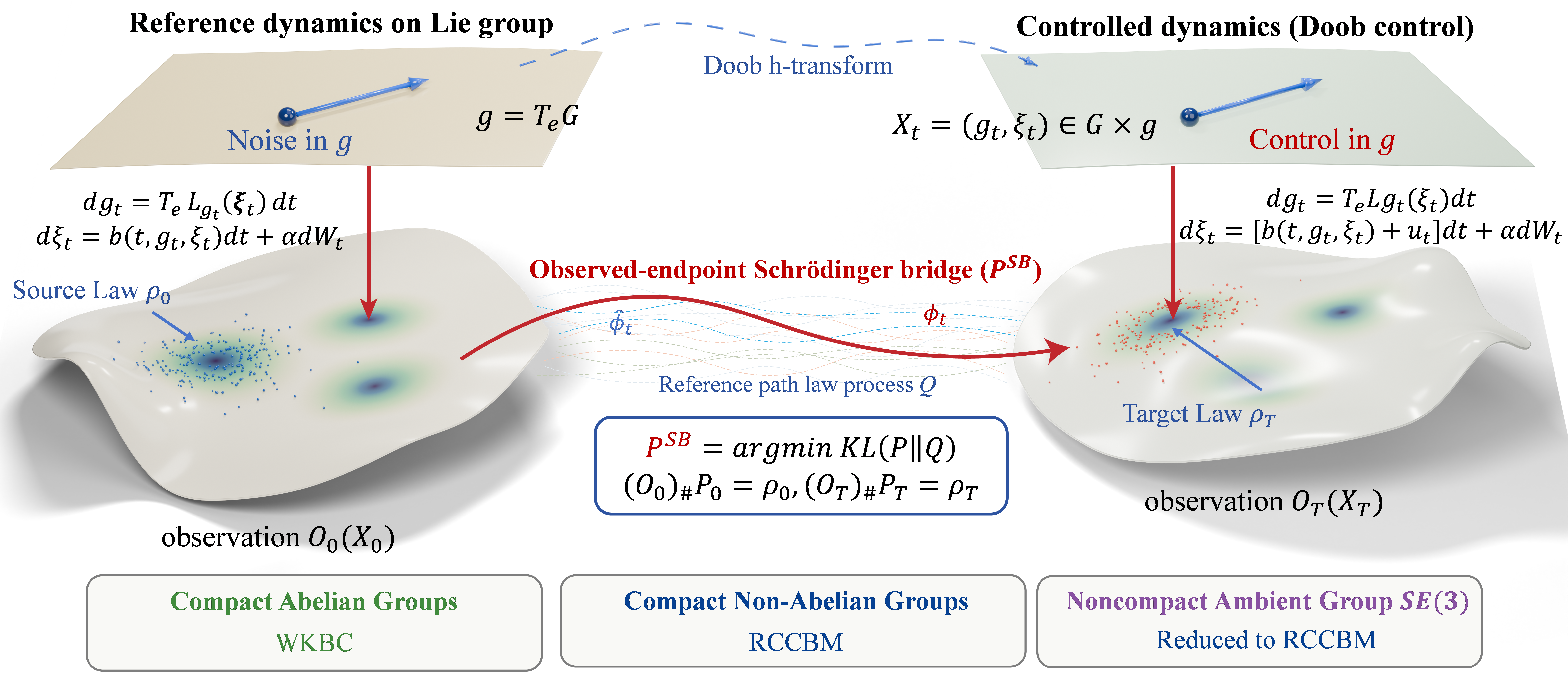}
    \caption{Overview of the observed-endpoint kinetic Schr\"odinger bridge on $\G\times\g$. The reference dynamics are transformed through the Doob control to match the prescribed endpoint observations. WKBC exploits an explicit wrapped kinetic kernel on compact Abelian groups, whereas RCCBM uses endpoint calibration and conditional-control matching on compact non-Abelian groups and on compact reductions of rigid frame data.}
    \label{fig:main-methodology}
\end{figure}

This viewpoint leads to a single observed-endpoint kinetic Schr\"odinger bridge, for which we devise \emph{two computational realizations}. When the kinetic transition kernel is known explicitly, as is the case on compact Abelian groups after periodization, \emph{Wrapped-Kernel Bridge Calibration} (WKBC) directly furnishes the calibrated bridge. When the kernel is not explicitly available, notably on compact non-Abelian groups, \emph{Reciprocal Conditional-Control Bridge Matching} (RCCBM) recovers the Doob field by fusing two-sided endpoint calibration with mollified finite-energy conditional teachers. For rigid frame data, we remove the global rotational component before learning, thereby restricting RCCBM to a compact reduced group. Both procedures target the same bridge, while their computational strategies are determined by the available geometric and kernel structure. Fig.~\ref{fig:main-methodology} summarizes the observed-endpoint kinetic bridge and the two structure-adapted computational realizations developed in this paper.


\paragraph{Contributions} 
This work makes three main contributions.
\begin{enumerate}[label=(\roman*),leftmargin=2.4em]
\item \textbf{Observed-endpoint kinetic bridges.} We develop Schr\"odinger bridges on Lie group manifolds under general endpoint observations, where latent velocities are selected through entropy projection. Subject to the endpoint-kernel condition, we prove two-sided scaling and derive the kinetic Doob representation.
\item \textbf{Structure-adapted bridge computation.} WKBC provides an explicit computational scheme for compact Abelian groups, while RCCBM addresses compact non-Abelian groups via endpoint calibration and reciprocal conditional-control matching. We further establish a bounded modular conditional path-law consistency bound and employ compact gauge reduction for rigid frame data.
\item \textbf{Scientific validation.} Experiments on protein and RNA torsions, \(SO(3)\), \(U(n)\), and the Protein Conformational Transition Pathway Generation task constructed from mdCATH trajectories provide quantitative validation across the evaluated settings. Ablation studies and numerical consistency checks further corroborate the model design and the predicted modular error behavior.
\end{enumerate}

\section{Related Work}\label{sec:related}

This section reviews the literature most closely related to ours, focusing on Schr\"odinger bridges and reciprocal matching, generative modeling on manifolds and Lie groups, kinetic Lie group dynamics, and Schr\"odinger bridges on geometric state spaces.

\paragraph{Schr\"odinger bridges and reciprocal matching}
The Schr\"odinger problem originated as an entropy projection of a reference stochastic process onto prescribed endpoint laws \citep{schrodinger1931,follmer1988}, and was subsequently connected to optimal transport and stochastic control \citep{leonard2014,chen2021siamreview,pavon1991}. This viewpoint leads to computational formulations tailored to generative modeling: De Bortoli et al. \citep{debortoli2021sb} used iterative proportional fitting in Diffusion Schr\"odinger Bridge, Shi et al. \citep{shi2023dsbm} developed simulation-based bridge matching, and Liu et al. \citep{liu2024gsbm} formulated generalized bridge matching through conditional stochastic optimal control. More recent approaches have studied adjoint/corrector constructions \citep{liu2025asbs} and plug-in estimation of bridge drifts from entropic potentials \citep{pooladian2025plugin}. These developments progressively shift computation from explicit potential iteration toward learning conditional or controlled dynamics while retaining the same two-endpoint path-space objective. In parallel, kinetic formulations incorporate auxiliary velocity or momentum variables into the reference dynamics. Chiarini et al. \citep{chiarini2022kinetic} analyzed kinetic Schr\"odinger problems with prescribed position marginals and latent velocities, while Chen et al. \citep{chen2023dmsb} developed a phase-space multi-marginal bridge that reconstructed latent momentum from position observations. Peluchetti \citep{peluchetti2023dbmt} connected conditioned bridge mixtures, reciprocal classes, and Markov transports. Together, these developments provide the probabilistic basis for our observed-endpoint kinetic formulation and for learning a Markov control from endpoint-conditioned bridge laws.

\paragraph{Manifold and Lie group generative modeling}
The development of intrinsic generative models has provided the geometric tools needed to model data whose support is not Euclidean. Falorsi et al. \citep{falorsi2019relie} introduced reparameterizable distributions on Lie groups, and Rezende et al. \citep{rezende2020tori} constructed normalizing flows on tori and spheres. Continuous time formulations then extended these ideas to general curved spaces: Mathieu and Nickel \citep{mathieu2020rcnf} developed Riemannian continuous normalizing flows, while Lou et al. \citep{lou2020manifoldode} formulated neural manifold ODEs. Diffusion and flow methods subsequently provided intrinsic stochastic and transport-based generative mechanisms. De Bortoli et al. \citep{debortoli2022riemannian} developed score-based generative modeling on Riemannian manifolds, Leach et al. \citep{leach2022so3} specialized diffusion modeling to rotational distributions on \(SO(3)\), and Chen and Lipman \citep{chen2024rfm} introduced Riemannian Flow Matching as a geometry-aware extension of flow matching \citep{lipman2023fm}. Conforti et al. \citep{conforti2025kl} established KL-convergence guarantees for score-based diffusion models. For Lie group data, this paradigm renders rotations and periodic variables intrinsic to the model coordinates, as opposed to extrinsic quantities subject to repeated ambient projection~\citep{tdm2025}. These works establish complementary intrinsic regimes for probabilistic, diffusion, and flow-based modeling; our work, in turn, generalizes this principle while additionally incorporating a two-endpoint entropy projection on the kinetic Lie group state space.


\paragraph{Kinetic Lie group dynamics and Lie algebra auxiliary variables}
Kong and Tao \citep{kong2024klmc} analyzed kinetic Langevin Monte Carlo on compact Lie groups, where the momentum evolves in the Lie algebra while the configuration is updated intrinsically on the group. Building on the same geometric separation, Zhu et al. \citep{tdm2025} introduced a trivialized momentum diffusion model, using a fixed Lie algebra auxiliary variable for one-marginal diffusion generative modeling. Bertolini et al. \citep{bertolini2025egsm} developed generalized score matching on Lie groups for group-valued generation. These studies motivate the use of Lie algebra auxiliary coordinates for stochastic dynamics while preserving the group-valued configuration. By contrast, we develop a general framework that lifts Schr\"odinger bridges to Lie group manifolds, accommodating two-sided endpoint observations and entropy-based velocity selection.



\paragraph{Schr\"odinger bridges on manifolds and position of this work}
Thornton et al. \citep{thornton2022rdsb} developed Riemannian Diffusion Schr\"odinger Bridge for manifold-valued state spaces. More recently, Mahmood et al. \citep{mahmood2026compact} formulated Schr\"odinger bridges for Brownian diffusion directly on compact connected Lie groups and established existence and projective uniqueness of the associated Schr\"odinger system. Our formulation instead uses the \emph{kinetic state space} \(\G\times\g\), with intrinsic group reconstruction and stochastic control acting through the Lie algebra velocity, and permits general endpoint observation maps so that unobserved endpoint velocities remain latent. Computationally, WKBC exploits explicit wrapped kinetic kernels on compact Abelian groups, whereas RCCBM uses reciprocal endpoint calibration and conditional-control matching when such kernels are unavailable on compact non-Abelian groups. The present work is therefore positioned as an observed-endpoint kinetic Lie group Schr\"odinger bridge framework with structure-adapted computational realizations.

\section{Methodology}\label{sec:method}

\begin{table}[!tbp]
\centering
\caption{Core notations shared by WKBC and RCCBM.}
\label{tab:notation-guide}
\scriptsize
\setlength{\tabcolsep}{4.0pt}
\renewcommand{\arraystretch}{1.10}
\begin{tabular}{p{0.22\textwidth}p{0.70\textwidth}}
\toprule
Symbol & Meaning \\
\midrule
\(x=(g,\bm\xi)\in\X\) & Kinetic state on \(\G\times\g\). \\
\(Q,P^{\rm SB}\) & Reference law and the common observed-endpoint bridge targeted by both algorithms. \\
\(f_0,g_T\) & Source and terminal endpoint scaling factors. \\
\(h_t,\widehat h_t\) & Backward Doob factor and forward density factor. \\
\(\beta=(\beta_0,\beta_T)\) & Centered endpoint log-potentials used for RCCBM calibration. \\
\(Q^z,Q^{z,\varepsilon},P^{\beta,\varepsilon}\) & Exact conditional bridge, mollified conditional teacher law, and canonical Markov mixture. \\
\(\bm v^{z,\varepsilon},\bm u^{\beta,\varepsilon},\bm u_\theta\) & Conditional teacher control, its population Markov Doob control, and the learned forward control. \\
\bottomrule
\end{tabular}
\end{table}

In Section~\ref{sec:introduction}, we introduced the kinetic reference process, the observed-endpoint entropy projection, its two-sided path factorization, and the forward Doob dynamics. We now present the endpoint regularity underlying these objects, derive the remaining marginal and potential identities, and develop WKBC and RCCBM as two computational realizations of the same bridge. All auxiliary regularity conditions and complete theoretical proofs are provided in the \emph{Appendix}; core notation conventions are summarized in Table~\ref{tab:notation-guide}.



\subsection{Kinetic generator and observed-endpoint Schr\"odinger system}\label{subsec:conventions}

For the kinetic state space introduced in Section~\ref{sec:introduction}, assume that \(\G\) is a finite-dimensional matrix Lie group of dimension \(d\), choose an orthonormal basis \(E_1,\ldots,E_d\) of \(\g\), and let \(\widetilde E_i\) be the associated left-invariant vector fields. Writing \(\bm\xi=\sum_i\xi_iE_i\), define
\begin{equation*}
A(g,\bm\xi)=T_eL_g(\bm\xi),
\qquad
AF=\sum_{i=1}^d\xi_i\widetilde E_iF.
\end{equation*}
Then the generator of Eq.~\eqref{eq:intro-reference-process} is
\begin{equation*}
L_tF=AF+\inner{\bm b_t}{\nabla_{\bm\xi}F}+\frac{\alpha^2}{2}\Delta_{\bm\xi}F.
\end{equation*}
The compact group experiments use the kinetic Ornstein--Uhlenbeck specialization
\begin{equation}\label{eq:tdm-ref}
\bm b(t,g,\bm\xi)=-\gamma\bm\xi,
\qquad
\alpha=\sqrt{2\gamma},
\qquad \gamma>0.
\end{equation}

Write
\begin{equation*}
\mathcal R_{0T}=(\mathcal O_0(X_0),\mathcal O_T(X_T))_\#Q.
\end{equation*}
The exact scaling results use the following uniform endpoint kernel condition; its compact group-marginal verification and the reference-compatible lift of a prescribed group marginal are deferred to Appendix~\ref{app:core-bridge-details}.

\begin{assumption}[Observed endpoint kernel condition]\label{ass:endpoint-kernel}
The measure \(\mathcal R_{0T}\) is equivalent to the product measure
\(\rho_0\otimes\rho_T\), and
\begin{equation*}
k=\frac{\dd\mathcal R_{0T}}{\dd(\rho_0\otimes\rho_T)}
\end{equation*}
satisfies \(0<m\leq k\leq M<\infty\) almost everywhere.
\end{assumption}

This endpoint condition turns the observed-endpoint entropy projection into a bounded two-sided scaling problem. We therefore obtain the following well-posedness result.

\begin{theorem}[Observed endpoint Schr\"odinger system]\label{thm:observed-existence}
Under Assumption~\ref{ass:endpoint-kernel}, the static endpoint problem
\begin{equation*}
\inf_{\Gamma\in\Pi(\rho_0,\rho_T)}\KL(\Gamma\|\mathcal R_{0T})
\end{equation*}
has a unique minimizer \(\Gamma^*\). There exist positive factors \(f_0,f_0^{-1}\in L^\infty(\rho_0)\) and \(g_T,g_T^{-1}\in L^\infty(\rho_T)\), unique up to reciprocal scaling, such that
\begin{equation*}
\Gamma^*(\dd y_0\dd y_T)
=f_0(y_0)g_T(y_T)\mathcal R_{0T}(\dd y_0\dd y_T).
\end{equation*}
With \(\int g_T\,\dd\rho_T=1\), one may choose
\begin{equation*}
\frac1M\leq f_0\leq\frac1m,
\qquad
\frac mM\leq g_T\leq\frac Mm.
\end{equation*}
\end{theorem}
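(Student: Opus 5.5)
Set \(\mu:=\rho_0\otimes\rho_T\), so that \(\mathcal R_{0T}=k\,\mu\) with \(m\le k\le M\). Any \(\Gamma\) with \(\KL(\Gamma\|\mathcal R_{0T})<\infty\) is absolutely continuous with respect to \(\mu\). The product coupling \(\mu\) itself is admissible and gives \(\KL(\mu\|\mathcal R_{0T})=-\int\log k\,\dd\mu\le\log(1/m)<\infty\), so the infimum is finite.

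\emph{Existence and uniqueness of \(\Gamma^*\).} The set \(\Pi(\rho_0,\rho_T)\) is convex. The functional \(\KL(\cdot\|\mathcal R_{0T})\) is strictly convex on its finite domain. Its sublevel sets are uniformly integrable in \(L^1(\mu)\), by de la Vallée-Poussin with \(\varphi(r)=r\log r\) and the bound \(k\le M\). They are therefore weakly compact in \(L^1(\mu)\), and the marginal constraints are weakly closed. Lower semicontinuity then yields a minimizer, and strict convexity makes it unique. Working in \(L^1(\mu)\) avoids any topological assumptions on \(\mathsf Y_0,\mathsf Y_T\).

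\emph{Scaling factors via a Sinkhorn fixed point.} I will construct \(f_0,g_T\) directly rather than extract them from \(\Gamma^*\). Define the operators
\[
\Phi(g)(y_0)=\Bigl(\int k(y_0,y_T)g(y_T)\rho_T(\dd y_T)\Bigr)^{-1},\qquad
\Psi(f)(y_T)=\Bigl(\int k(y_0,y_T)f(y_0)\rho_0(\dd y_0)\Bigr)^{-1}.
\]
Both act on positive functions that are bounded and bounded away from zero. The composite map \(g\mapsto\Psi(\Phi(g))\) is a contraction in Hilbert's projective metric \(d_H(g,g')=\log\bigl(\operatorname{ess\,sup}(g/g')\cdot\operatorname{ess\,sup}(g'/g)\bigr)\). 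The reason is that the kernel \(k\) has oscillation ratio at most \(M/m\), so Birkhoff's theorem gives a contraction coefficient \(\tanh(\tfrac14\log (M/m)^2)<1\). Inversion is an isometry for \(d_H\).

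Completeness of the cone of such functions modulo scaling, under \(d_H\), gives a projective fixed point \(g_T\). Set \(f_0=\Phi(g_T)\). Then \(\Gamma^*:=f_0 g_T\mathcal R_{0T}\) has marginals \(\rho_0,\rho_T\) exactly, since the scalar ambiguity is absorbed once we normalize. This is the step I expect to be the main technical obstacle. Specifically, one must verify completeness of the projective cone in the \(L^\infty\) setting and pin down the normalization. A fallback is to take a Sinkhorn/IPF sequence, establish the uniform bounds below along the whole sequence, and extract a weak-\(*\) limit in \(L^\infty\) of \(\log g^{(n)}\) up to normalization.

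\emph{Optimality and the bounds.} Optimality of \(f_0 g_T\mathcal R_{0T}\) is a standard verification. For any \(\Gamma\in\Pi\) with finite entropy,
\[
\KL(\Gamma\|\mathcal R_{0T})=\KL(\Gamma\|\Gamma^*)+\int\log f_0\,\dd\rho_0+\int\log g_T\,\dd\rho_T .
\]
The last two terms do not depend on \(\Gamma\), because \(\log f_0\) and \(\log g_T\) are bounded. So \(\Gamma^*\) is the minimizer, which also reproves uniqueness.

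For uniqueness of the factors up to reciprocal scaling, suppose \(f g=f'g'\) holds \(\mathcal R_{0T}\)-a.e., hence \(\mu\)-a.e. Then \(f/f'\) as a function of \(y_0\) equals \(g'/g\) as a function of \(y_T\), \(\mu\)-a.e., so both are the same constant.

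For the bounds, normalize \(\int g_T\,\dd\rho_T=1\). The marginal equation for \(y_0\) gives \(f_0(y_0)=1/\int k\,g_T\,\dd\rho_T\in[1/M,1/m]\). Then \(g_T(y_T)=1/\int k f_0\,\dd\rho_0\), and \(\int k f_0\,\dd\rho_0\in[m/M,M/m]\), which yields \(m/M\le g_T\le M/m\). Hence \(f_0^{\pm1}\in L^\infty(\rho_0)\) and \(g_T^{\pm1}\in L^\infty(\rho_T)\).
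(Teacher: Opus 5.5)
Your route is the paper's: Birkhoff--Hilbert contraction of the composite Sinkhorn map, a projective fixed point, the identity $\KL(\Gamma\|\mathcal R_{0T})=\KL(\Gamma\|\Gamma^*)+\int\log f_0\,\dd\rho_0+\int\log g_T\,\dd\rho_T$, and the same pointwise bounds. Two small holes in the fixed-point step need closing.

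(i) Completeness is not actually an obstacle. If $\int g\,\dd\rho_T=\int g'\,\dd\rho_T=1$, then $\int (g/g')\,g'\,\dd\rho_T=1$ forces $\operatorname{ess\,inf}(g/g')\le 1\le\operatorname{ess\,sup}(g/g')$. Hence $\norm{\log g-\log g'}_{L^\infty}\le d_{\rm H}(g,g')$. The normalized iterates therefore have $\log g_n$ Cauchy in $L^\infty$, and the limit is a projective fixed point in the cone; the paper uses exactly this. Your weak-$*$ fallback would not work as stated, because weak-$*$ convergence of $\log g^{(n)}$ does not commute with $\exp$ or with the reciprocal integral maps. It is also unnecessary.

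(ii) A projective fixed point only gives $\Psi(\Phi(g_T))=c\,g_T$ for some $c>0$. Since $\Psi\circ\Phi$ is positively $1$-homogeneous, $c$ is unchanged by rescaling $g_T$, so it is not ``absorbed once we normalize.'' With $f_0=\Phi(g_T)$, the measure $f_0g_T\mathcal R_{0T}$ has first marginal exactly $\rho_0$ and second marginal $c^{-1}\rho_T$. These are marginals of one measure, so their total masses agree, which gives $c=1$. The paper writes this step out explicitly.

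Your $L^1(\mu)$ direct-method argument for existence and uniqueness of $\Gamma^*$ is correct but redundant. Once $\Gamma^*=f_0g_T\mathcal R_{0T}$ is constructed, the displayed identity already shows it is the unique minimizer, and that is all the paper uses.

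Your separation-of-variables argument for uniqueness of the factors is a clean alternative to the paper's appeal to uniqueness of the projective fixed point. Here $f/f'$ as a function of $y_0$ equals $g'/g$ as a function of $y_T$ $\mu$-a.e., so both are constant. It also covers arbitrary positive measurable factorizations of $\Gamma^*$, not only those in $L^\infty_{++}$.
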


This theorem establishes the well-posed two-sided endpoint scaling used throughout the remainder of the paper. For compact connected \(\G\) with observations restricted to the group coordinate and continuous strictly positive source, target, and reference group densities, Assumption~\ref{ass:endpoint-kernel} is verified in Lemma~\ref{prop:compact-observed-kernel}. The two-sided bound is used for projective contraction and uniform factor control.


\subsection{Bridge factorization}\label{subsec:sch-system}

The following result records the marginal and dual-potential identities used by both WKBC and RCCBM algorithms.

\begin{theorem}[Bridge marginal and dual-potential factorization]\label{thm:bridge-factors}
Under Assumption~\ref{ass:endpoint-kernel} we have 
\begin{equation}\label{eq:bridge-marginal-factor}
P_t^{\rm SB}(\dd x)
=\E_Q[f_0(\mathcal O_0(X_0))\mid X_t=x]h_t(x)Q_t(\dd x).
\end{equation}
If \(Q_t=q_t\mu\), define
\(\widehat h_t(x)=q_t(x)\E_Q[f_0(\mathcal O_0(X_0))\mid X_t=x]\). Then
\begin{equation}\label{eq:bridge-density-dual-potential}
p_t^{\rm SB}=h_t\widehat h_t,
\qquad
\partial_t h_t+L_th_t=0,
\qquad
\partial_t\widehat h_t=L_t^\dagger\widehat h_t,
\end{equation}
with
\begin{equation}\label{eq:factor-boundary-traces}
h_T=g_T\circ\mathcal O_T,
\qquad
\widehat h_0=q_0(f_0\circ\mathcal O_0).
\end{equation}
\end{theorem}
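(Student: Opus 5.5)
The plan is to first lift the static result of Theorem~\ref{thm:observed-existence} to path space and then read off the time-$t$ marginal by conditioning on $X_t$. Define $P$ by $\dd P/\dd Q=f_0(\mathcal O_0(X_0))\,g_T(\mathcal O_T(X_T))$. Because $f_0,g_T$ are bounded, this is a finite measure. Its endpoint-observation law is $f_0(y_0)g_T(y_T)\mathcal R_{0T}=\Gamma^*$, so $P$ is a probability measure with the correct observed marginals. It is the entropy projection by the standard additive decomposition $\KL(P'\|Q)=\KL(P'_{\mathcal O}\|\mathcal R_{0T})+\E_{P'}\KL(P'(\cdot\mid \mathcal O)\|Q(\cdot\mid\mathcal O))$, together with uniqueness of $\Gamma^*$. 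I will take the identification $P=P^{\rm SB}$ as given, consistent with Eq.~\eqref{eq:observed-path-factor}.

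For Eq.~\eqref{eq:bridge-marginal-factor}, test against a bounded measurable $\varphi$ and compute
\begin{equation*}
\E_{P}[\varphi(X_t)]=\E_Q\bigl[\varphi(X_t)\,\E_Q[f_0(\mathcal O_0(X_0))\,g_T(\mathcal O_T(X_T))\mid X_t]\bigr].
\end{equation*}
The key step is the Markov property of $Q$, assuming Eq.~\eqref{eq:intro-reference-process} has a well-posed strong Markov solution. It gives conditional independence of $\sigma(X_s:s\le t)$ and $\sigma(X_s:s\ge t)$ given $X_t$, so the conditional expectation factors as $\E_Q[f_0\circ\mathcal O_0(X_0)\mid X_t]\cdot\E_Q[g_T\circ\mathcal O_T(X_T)\mid X_t]$. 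The second factor is $h_t(X_t)$ by definition, which yields \eqref{eq:bridge-marginal-factor}. Multiplying through by $q_t$ when $Q_t=q_t\mu$ gives $p^{\rm SB}_t=h_t\widehat h_t$.

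The boundary traces are immediate. At $t=T$, $h_T=\E_Q[g_T(\mathcal O_T(X_T))\mid X_T]=g_T\circ\mathcal O_T$. At $t=0$, $\E_Q[f_0(\mathcal O_0(X_0))\mid X_0]=f_0\circ\mathcal O_0$, so $\widehat h_0=q_0(f_0\circ\mathcal O_0)$.

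For the PDEs, note that $h_t(x)=\E_Q[h_T(X_T)\mid X_t=x]$ is a space-time harmonic function. Hence $h_t(X_t)$ is a $Q$-martingale, and Itô/Dynkin with generator $L_t$ gives $\partial_th_t+L_th_t=0$. For $\widehat h_t$, define the finite measure $\nu_t(\dd x)=\widehat h_t(x)\mu(\dd x)$. Then
\begin{equation*}
\int\varphi\,\dd\nu_t=\E_Q[f_0(\mathcal O_0(X_0))\,\varphi(X_t)],
\end{equation*}
which is the time-$t$ marginal of the reweighted process $\widetilde Q$ with $\dd\widetilde Q/\dd Q=f_0(\mathcal O_0(X_0))$. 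Reweighting only by an $\mathcal F_0$-measurable density leaves the transition kernels of $Q$ unchanged, so $\nu_t$ solves the forward Kolmogorov equation $\frac{\dd}{\dd t}\int\varphi\,\dd\nu_t=\int L_t\varphi\,\dd\nu_t$ for smooth compactly supported $\varphi$. In density form this reads $\partial_t\widehat h_t=L_t^\dagger\widehat h_t$.

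The main obstacle is the regularity needed to state these PDEs classically. $h_T$ is only $L^\infty$, and $L_t$ is hypoelliptic rather than elliptic: noise acts only in $\bm\xi$, and the transport term $A$ spreads it to $\G$. I would therefore first establish both equations in the weak or distributional sense, which requires only boundedness of $f_0$ and $g_T$ and the Markov/Feller property. I would then upgrade to classical solutions for $t\in(0,T)$ via Hörmander's condition: the brackets $[\partial_{\xi_i},A]=\widetilde E_i$ span the full tangent space, so the smoothness of $\bm b$ assumed in the appendix regularity conditions gives smooth transition densities. I would not rely on classical solutions at the endpoints themselves.
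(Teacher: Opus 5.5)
Your proposal is correct and follows essentially the same route as the paper. Both arguments use the path density $f_0(\mathcal O_0(X_0))\,g_T(\mathcal O_T(X_T))$ from Theorem~\ref{thm:observed-existence}, factor its conditional expectation given $X_t$ by the Markov property, treat $h_t$ as a backward-semigroup evaluation, derive a weak forward equation for $\widehat h_t$ by testing against compactly supported smooth functions, and obtain the boundary traces from trivial conditioning, while your weak-then-H\"ormander upgrade to classical solutions spells out what the paper compresses into ``the assumed regularity upgrades it to the displayed classical identity.''
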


The regularity conditions and Doob transform calculation under Eqs.~\eqref{eq:common-optimal-control}--\eqref{eq:common-forward-bridge}, together with path-stability identities, are collected in Appendix~\ref{app:doob-auxiliary-details}.

\subsection{Two computational regimes}
We now introduce the same Schr\"odinger bridge instantiated under two complementary kernel-access settings: WKBC employs an explicit wrapped kinetic kernel, while RCCBM relies on endpoint calibration together with conditional reference bridges.


\paragraph{Compact Abelian groups: WKBC}\label{subsec:wkbc}

Let \(\G=\T^m=\R^m/\Lambda\), where \(\Lambda\subset\R^m\) is a full-rank lattice defining the torus. The reference dynamics in Eq.~\eqref{eq:intro-reference-process} under the Ornstein--Uhlenbeck specialization (Eq.~\eqref{eq:tdm-ref}) has a Gaussian lift in \(\R^m\times\R^m\), and periodization over \(\Lambda\) gives the exact kinetic kernel. The full covariance calculation, wrapped joint density, and group-marginal kernel are stated in Lemma~\ref{prop:wrapped-kinetic-kernel}; in particular Eq.~\eqref{eq:wrapped-position-kernel} is an explicit observation kernel when only the group coordinate is observed at the endpoints.

For a terminal log-factor \(\beta\), define
\begin{equation*}
h_t^\beta(x)=\int q_{T-t}^{\mathcal O_T}(y\mid x)e^{\beta(y)}\,\nu_T(\dd y).
\end{equation*}

The explicit wrapped kernel makes this propagated terminal factor computable, turning the abstract bridge factorization above into an explicit control representation. This yields the following corollary.

\begin{corollary}[Exact WKBC representation]\label{thm:wkbc}
For \(\beta^*=\log g_T\), differentiation under the wrapped-kernel integral gives
\begin{equation*}
h_t^{\beta^*}=h_t,
\qquad
\bm u_t^{\rm WKBC}=\sqrt{2\gamma}\nabla_{\bm\xi}\log h_t^{\beta^*}=\bm u_t^*.
\end{equation*}
The forward process must be initialized from
\begin{equation*}
p_0^{\rm SB}(x)=q_0(x)f_0(\mathcal O_0(x))h_0^{\beta^*}(x).
\end{equation*}
When only the source group coordinate is observed,
\begin{equation*}
P_0^{\rm SB}(\dd\bm\xi\mid g)
=
\frac{h_0^{\beta^*}(g,\bm\xi)Q_0(\dd\bm\xi\mid g)}
{\int h_0^{\beta^*}(g,\bm\eta)Q_0(\dd\bm\eta\mid g)}.
\end{equation*}
\end{corollary}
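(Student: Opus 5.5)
The plan is to show that \(h_t^{\beta^*}\) is exactly the conditional expectation defining \(h_t\). Everything else then follows from Theorem~\ref{thm:bridge-factors} and the definitions.

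First, I will interpret \(q_{T-t}^{\mathcal O_T}(y\mid x)\) as the density, with respect to the reference measure \(\nu_T\) on \(\mathsf Y_T\), of the law of \(\mathcal O_T(X_T)\) given \(X_t=x\) under \(Q\). By the Markov property of the reference kinetic OU process and time homogeneity (Eq.~\eqref{eq:tdm-ref}), this is the pushforward by \(\mathcal O_T\) of the wrapped kinetic kernel from Lemma~\ref{prop:wrapped-kinetic-kernel}. When only the group coordinate is observed, this pushforward is the wrapped position kernel Eq.~\eqref{eq:wrapped-position-kernel}. Substituting \(e^{\beta^*}=g_T\) gives
\begin{equation*}
h_t^{\beta^*}(x)=\int g_T(y)\,q_{T-t}^{\mathcal O_T}(y\mid x)\,\nu_T(\dd y)=\E_Q[g_T(\mathcal O_T(X_T))\mid X_t=x]=h_t(x).
\end{equation*}
Because \(g_T\in L^\infty\) by Theorem~\ref{thm:observed-existence}, the integral is finite and positive, with bounds \(m/M\le h_t\le M/m\).

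Next, the control identity requires differentiating in \(\bm\xi\) under the integral sign. The wrapped kernel is a lattice sum of Gaussians whose means depend affinely on \(\bm\xi\), through \(e^{-\gamma(T-t)}\bm\xi\) and \((1-e^{-\gamma(T-t)})\bm\xi/\gamma\). For \(t<T\) the covariance is nondegenerate, so each term and its \(\bm\xi\)-gradient are bounded by a Gaussian envelope. This envelope is summable over \(\Lambda\) uniformly on compact \(\bm\xi\)-sets, and it remains integrable against bounded \(g_T\) over the compact torus. Dominated convergence then justifies interchanging the gradient with both the sum and the integral. Since \(h_t>0\), I get \(\nabla_{\bm\xi}\log h_t^{\beta^*}=\nabla_{\bm\xi}\log h_t\). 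Multiplying by \(\alpha=\sqrt{2\gamma}\) and comparing with Eq.~\eqref{eq:common-optimal-control} yields \(\bm u^{\rm WKBC}_t=\bm u_t^*\).

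For the initial law, I will take \(t=0\) in Eq.~\eqref{eq:bridge-marginal-factor}. The conditional expectation of \(f_0(\mathcal O_0(X_0))\) given \(X_0=x\) is simply \(f_0(\mathcal O_0(x))\), which gives \(p_0^{\rm SB}=q_0\,(f_0\circ\mathcal O_0)\,h_0\), equivalently \(\widehat h_0 h_0\) by Eq.~\eqref{eq:factor-boundary-traces}. Replacing \(h_0\) by \(h_0^{\beta^*}\) yields the stated formula. When \(\mathcal O_0(g,\bm\xi)=g\), the factor \(f_0(g)\) is constant in \(\bm\xi\). I will disintegrate \(Q_0(\dd g\,\dd\bm\xi)=Q_0(\dd g)Q_0(\dd\bm\xi\mid g)\) and normalize over \(\bm\xi\); the factor \(f_0(g)\) cancels, leaving the displayed conditional law. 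The denominator is positive and finite by the bounds on \(h_0\).

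I expect the main obstacle to be the uniform domination in the interchange step. Near \(t\to T\) the kernel covariance degenerates, so the argument should be stated for \(t<T\), with \(t=T\) handled by the boundary trace \(h_T=g_T\circ\mathcal O_T\). One also has to check that the Gaussian tail bound on the lattice sum holds uniformly in the torus variable. My first attempt will use the explicit covariance from Lemma~\ref{prop:wrapped-kinetic-kernel} to bound its smallest eigenvalue away from zero on \([0,T-\delta]\).
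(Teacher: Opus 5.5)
Your proposal is correct and follows essentially the paper's route. You identify $h_t^{\beta^*}$ with $\E_Q[g_T(\mathcal O_T(X_T))\mid X_t=x]$ through the Markov property and the wrapped kernel, read off the control from the kinetic Doob representation, and obtain $p_0^{\rm SB}$ and its velocity conditional by setting $t=0$ in the factorization of Theorem~\ref{thm:bridge-factors}, where the $\bm\xi$-independent source factor cancels; your Gaussian-envelope justification on preterminal strips supplies regularity detail that the paper only asserts.
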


The wrapped kernel therefore provides an explicit numerical oracle for the same bridge characterized by Theorems~\ref{thm:observed-existence}--\ref{thm:bridge-factors}. The bridge-weighted score-regression identity and the corresponding kernel/quadrature/Sinkhorn/interpolation error decomposition are provided in Appendix~\ref{app:wkbc-theory-details}; Proposition~\ref{thm:wkbc-stability} reports the population and numerical consistency statement without an additional occupation-density-ratio constant.

{
\paragraph{{Likelihood-facing WKBC specialization}}
For held-out likelihood evaluation, we use a prior-to-data benchmark with the Haar-Gaussian stationary source employed by TDM~\citep{tdm2025}. Let \(\nu_{\G}\) denote the normalized Haar measure on the compact group and define
\[
\varphi_d(\bm\xi)=(2\pi)^{-d/2}\exp\!\left(-\frac12\norm{\bm\xi}^2\right),
\qquad
\pi^\star(\dd g\,\dd\bm\xi)=\nu_{\G}(\dd g)\varphi_d(\bm\xi)\,\dd\bm\xi .
\]
The compact group component of \(\pi^\star\) is Haar-distributed, and its Lie algebra component is standard Gaussian. Following the TDM benchmark data augmentation, each group-valued training datum \(g\) is paired independently with \(\bm\xi\sim\mathcal N(0,I_d)\). Hence
\[
P_0^{\rm lik}=\pi^\star,
\qquad
P_T^{\rm lik}(\dd g\,\dd\bm\xi)
=
\rho_{\rm train}(\dd g)\varphi_d(\bm\xi)\,\dd\bm\xi,
\qquad
\mathcal O_0^{\rm lik}(g,\bm\xi)
=
\mathcal O_T^{\rm lik}(g,\bm\xi)
=
(g,\bm\xi).
\]
This specification defines the likelihood protocol used in the benchmark tables. The scientific bridge experiments use the observed-endpoint formulation above, with latent velocity conditionals determined by the entropy projection.

WKBC obtains the full kinetic marginal score directly from the calibrated wrapped kernel. Besides the backward factor \(h_t\), the explicit wrapped joint kernel propagates the source factor as
\begin{equation}\label{eq:wkbc-forward-factor-likelihood}
\widehat h_t(x)
=
\int_{\X}
q_{0,t}(x\mid x_0)\,q_0(x_0)\,
f_0(\mathcal O_0(x_0))\,\mu(\dd x_0),
\end{equation}
so that Theorem~\ref{thm:bridge-factors} gives
\begin{equation}\label{eq:wkbc-full-score-likelihood}
\bm s_t^{\rm WKBC}(x)
:=
\nabla_{\bm\xi}\log p_t^{\rm SB}(x)
=
\nabla_{\bm\xi}\log h_t(x)
+
\nabla_{\bm\xi}\log \widehat h_t(x).
\end{equation}
Both terms are obtained from the calibrated factors and wrapped kernel. The likelihood evaluator uses this full marginal score, while stochastic WKBC generation uses the forward Doob control in Corollary~\ref{thm:wkbc}. The probability-flow change-of-variables formula and its numerical implementation are given in Appendix~\ref{app:tdm-likelihood}.
}

\paragraph{Compact non-Abelian groups: RCCBM}\label{subsec:apbm}

Let \(\G\) be compact, connected, and non-Abelian. RCCBM targets the same \(P^{\rm SB}\) without requiring a closed-form transition kernel. The construction has four stages:
\begin{enumerate}[label=\arabic*.,leftmargin=2.1em]
    \item calibrate the two-sided endpoint reciprocal law;
    \item construct finite energy mollified conditional-control teachers;
    \item regress one Markov forward control using the canonical teacher mixture and importance correction;
    \item perform terminal refinement only under a matching-loss constraint.
\end{enumerate}
The main text keeps the identities that connect these stages; empirical-process assumptions and implementation details are deferred to Appendix~\ref{app:rccbm-detailed-theory}.

\paragraph{Endpoint reciprocal calibration}
Set \(Y=(Y_0,Y_T)=(\mathcal O_0(X_0),\mathcal O_T(X_T))\). For bounded \(\beta=(\beta_0,\beta_T)\), define
\begin{align}
Z(\beta)&=\E_{\mathcal R_{0T}}e^{\beta_0(Y_0)+\beta_T(Y_T)},
\label{eq:rcm-endpoint-partition}\\
\frac{\dd\Gamma^\beta}{\dd\mathcal R_{0T}}
&=Z(\beta)^{-1}e^{\beta_0(y_0)+\beta_T(y_T)},
\label{eq:rcm-endpoint-tilt}\\
\mathscr D(\beta)&=\E_{\rho_0}\beta_0+\E_{\rho_T}\beta_T-\log Z(\beta).
\label{eq:rcm-population-dual}
\end{align}
We fix the gauge by
\begin{equation}\label{eq:rccbm-double-centering}
\E_{\rho_0}\beta_0=0,
\qquad
\E_{\rho_T}\beta_T=0.
\end{equation}
Let \(Q^y\) be the conditional reference law given \(Y=y\), and define \(P_\Gamma=\int Q^y\Gamma(\dd y)\); the entropy-disintegration justification is presented in Lemma~\ref{lem:endpoint-disintegration} in Appendix~\ref{app:rccbm-detailed-theory}. With the gauge fixed, the centered dual provides a direct parameterization of the reciprocal endpoint correction. Its relation to the Schr\"odinger bridge is given by the following theorem.

\begin{theorem}[Endpoint dual and reciprocal correction]\label{thm:rcm-endpoint-dual}
Under Assumption~\ref{ass:endpoint-kernel}, \(\mathscr D\) has a unique maximizer \(\beta^*\) in the centered class, \(\Gamma^{\beta^*}=\Gamma^*\), and the endpoint tilt agrees with the Schr\"odinger factors up to the fixed gauge. Moreover, for every bounded centered \(\beta\), the following holds
\begin{equation}\label{eq:rcm-dual-gap}
\mathscr D(\beta^*)-\mathscr D(\beta)
=\KL(\Gamma^*\|\Gamma^\beta)
=\KL(P^{\rm SB}\|P^\beta),
\qquad P^\beta:=P_{\Gamma^\beta}.
\end{equation}
\end{theorem}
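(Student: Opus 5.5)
The plan is to reduce everything to Theorem~\ref{thm:observed-existence} and a direct computation of the dual gap.

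\emph{Step 1: attainment of the maximum.} From Theorem~\ref{thm:observed-existence}, the minimizer has the form \(\Gamma^*=f_0g_T\,\mathcal R_{0T}\) with \(f_0,g_T\) bounded above and below. Set \(\tilde\beta_0=\log f_0\) and \(\tilde\beta_T=\log g_T\), both bounded. Then center them: \(\beta^*_0=\tilde\beta_0-c_0\) and \(\beta^*_T=\tilde\beta_T-c_T\), where \(c_0=\E_{\rho_0}\tilde\beta_0\) and \(c_T=\E_{\rho_T}\tilde\beta_T\). Since \(\Gamma^*\) is a probability measure, \(Z(\beta^*)=e^{-c_0-c_T}\). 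Hence \(\Gamma^{\beta^*}=e^{\beta_0^*+\beta_T^*}\mathcal R_{0T}/Z(\beta^*)=\Gamma^*\), and the tilt coincides with the Schrödinger factors up to the constants \(c_0,c_T\).

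\emph{Step 2: the gap identity.} Fix any bounded centered \(\beta\); then \(\Gamma^\beta\sim\mathcal R_{0T}\). Write
\[
\log\frac{\dd\Gamma^*}{\dd\Gamma^\beta}
=(\beta^*_0-\beta_0)(y_0)+(\beta^*_T-\beta_T)(y_T)-\log Z(\beta^*)+\log Z(\beta).
\]
Integrate this against \(\Gamma^*\in\Pi(\rho_0,\rho_T)\). The marginal constraints turn the \(y_0\) and \(y_T\) terms into \(\E_{\rho_0}\) and \(\E_{\rho_T}\) expectations, so
\[
\KL(\Gamma^*\|\Gamma^\beta)=\mathscr D(\beta^*)-\mathscr D(\beta).
\]
This holds for every bounded \(\beta\), so centering is not needed here. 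Because KL is nonnegative, \(\beta^*\) maximizes \(\mathscr D\).

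\emph{Step 3: uniqueness in the centered class.} Suppose \(\beta\) is also a maximizer. Then \(\KL(\Gamma^*\|\Gamma^\beta)=0\), so \(\Gamma^\beta=\Gamma^*\). This gives
\[
\beta_0(y_0)+\beta_T(y_T)=\beta^*_0(y_0)+\beta^*_T(y_T)+\mathrm{const}
\]
\(\mathcal R_{0T}\)-a.e., and therefore \(\rho_0\otimes\rho_T\)-a.e. by the equivalence in Assumption~\ref{ass:endpoint-kernel}. A function \(a(y_0)+b(y_T)\) that is a.e. constant under a product measure must have \(a\) and \(b\) individually a.e. constant; one sees this by integrating in \(y_T\) and then in \(y_0\) via Fubini. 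The centering conditions then force both constants to vanish, so \(\beta=\beta^*\).

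\emph{Step 4: lifting to path space.} Both \(P^{\rm SB}\) and \(P^\beta\) disintegrate over \(Y\) with the same conditional laws \(Q^y\). For \(P^\beta=P_{\Gamma^\beta}\) this holds by definition. For \(P^{\rm SB}\) it follows from the factorization \eqref{eq:observed-path-factor}, whose density depends on the path only through \(Y\); this is the content of Lemma~\ref{lem:endpoint-disintegration}, which also gives \(Y_\#P^{\rm SB}=\Gamma^*\). The chain rule for relative entropy, with the conditional terms cancelling, then gives \(\KL(P^{\rm SB}\|P^\beta)=\KL(\Gamma^*\|\Gamma^\beta)\). To be careful, I would verify this directly: \(\dd P^{\rm SB}/\dd P^\beta=(\dd\Gamma^*/\dd\Gamma^\beta)(Y)\), obtained by composing the two Radon--Nikodym densities with respect to \(Q\).

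\emph{Expected obstacle.} The main technical care is in Step 4: checking that \(Y\) is measurable on path space so that the disintegration \(Q^y\) exists, and that \(P^{\rm SB}\) is exactly \(P_{\Gamma^*}\) rather than merely having the right endpoint marginals. I will handle this by taking Lemma~\ref{lem:endpoint-disintegration} and the path factorization \eqref{eq:observed-path-factor} as given. The remaining steps are algebraic.
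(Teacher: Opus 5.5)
Your proposal is correct, and three of its four ingredients coincide with the paper's proof. These are existence by centering $\log f_0$ and $\log g_T$ from Theorem~\ref{thm:observed-existence}, the gap identity by integrating the log-density ratio against $\Gamma^*\in\Pi(\rho_0,\rho_T)$, and the path-space equality via Lemma~\ref{lem:endpoint-disintegration}.

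The one genuine difference is uniqueness. The paper proves that $\mathscr D$ is strictly concave on the doubly centered subspace. It computes the second variation $-\operatorname{Var}_{\Gamma^\beta}[\psi_0(Y_0)+\psi_T(Y_T)]$ and uses the equivalence $\Gamma^\beta\sim\rho_0\otimes\rho_T$, Fubini and centering to show this vanishes only for $\psi=0$; uniqueness then follows from strict concavity. You instead read uniqueness directly off the gap identity: a second centered maximizer has zero gap, so $\Gamma^\beta=\Gamma^*$. The same Fubini-plus-centering argument applied to $\beta-\beta^*$ then gives $\beta=\beta^*$.

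For this theorem your route is slightly more economical. It needs neither a second derivative of $\log Z$ nor the passage from strict concavity along lines to a unique maximizer. It also makes explicit, through $\KL\ge0$, why the centered Schr\"odinger potentials are the maximizer. The paper's route additionally records strict concavity of the dual as a structural property.
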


Hence endpoint calibration is not an auxiliary heuristic: its population dual gap is exactly a path-law KL error. The empirical dual, half-bridge interpretation, and calibration consistency bound are given in Appendix~\ref{app:rccbm-detailed-theory}.

\paragraph{Exact conditional-control identity}
Once the reciprocal endpoint law has been calibrated, the next step is to identify its Markov forward control from endpoint-conditioned reference bridges. Let \(Z_c=(X_0,Y_T)\), \(\Lambda^\beta=(Z_c)_\#P^\beta\), and let \(Q^z\) be the conditional reference law given \(Z_c=z=(x_0,y)\). Denoting the reference observation density by \(r_{t,T}^{\mathcal O_T}(y\mid x)\), this leads to the following conditional-control identity. 

\begin{theorem}[Exact conditional-control matching]\label{thm:rccbm-exact-matching}
Under the conditional-kernel regularity in Assumption~\ref{ass:rccbm-conditional-kernel}, we have
\begin{equation}\label{eq:rccbm-lifted-disintegration}
P^\beta=\int Q^z\,\Lambda^\beta(\dd z).
\end{equation}
Its backward factor and Markov Doob control are
\begin{equation}\label{eq:rccbm-beta-backward-factor}
h_t^\beta(x)
=\int r_{t,T}^{\mathcal O_T}(y\mid x)e^{\beta_T(y)}\,\nu_T(\dd y),
\end{equation}
\begin{equation}\label{eq:rccbm-beta-control}
\bm u_t^\beta=\alpha\nabla_{\bm\xi}\log h_t^\beta.
\end{equation}
For \(z=(x_0,y)\), the exact conditional bridge has preterminal control
\begin{equation}\label{eq:rccbm-exact-teacher-control}
\bm v_t^z(x)=\alpha\nabla_{\bm\xi}\log r_{t,T}^{\mathcal O_T}(y\mid x),
\qquad t<T,
\end{equation}
and
\begin{equation*}
\E[\bm v_t^{Z_c}(X_t)\mid X_t=x]=\bm u_t^\beta(x),
\qquad P_t^\beta\text{-a.e.}
\end{equation*}
Equivalently, on every preterminal strip the conditional squared-error risk has excess risk
\begin{equation}\label{eq:rccbm-exact-excess-risk}
\mathcal L_{\beta,\tau}(\bm u)-\mathcal L_{\beta,\tau}(\bm u^\beta)
=\int_0^{T-\tau}\lambda_\tau(t)
\int\norm{\bm u-\bm u_t^\beta}^2P_t^\beta(\dd x)\,\dd t.
\end{equation}
\end{theorem}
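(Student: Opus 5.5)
The plan is to treat \(P^\beta\) as an \(h\)-transform of \(Q\) whose density depends only on \((X_0,Y_T)\), and then to read off all four claims from the Markov property of \(Q\) together with the conditional-kernel regularity of Assumption~\ref{ass:rccbm-conditional-kernel}.

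\textbf{Step 1: the disintegration \eqref{eq:rccbm-lifted-disintegration}.} By definition \(P^\beta=\int Q^y\,\Gamma^\beta(\dd y)\), and \(\Gamma^\beta\) has density \(Z(\beta)^{-1}e^{\beta_0(y_0)+\beta_T(y_T)}\) with respect to \(\mathcal R_{0T}\). This gives
\[
\frac{\dd P^\beta}{\dd Q}=Z(\beta)^{-1}e^{\beta_0(\mathcal O_0(X_0))}e^{\beta_T(\mathcal O_T(X_T))}.
\]
The density is a function of \(Z_c=(X_0,Y_T)\) alone, since \(Y_0\) is measurable with respect to \(X_0\). Conditioning \(Q\) on \(Z_c=z\) therefore leaves the tilt constant, so \(P^\beta(\cdot\mid Z_c=z)=Q^z\). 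Integrating against \((Z_c)_\#P^\beta=\Lambda^\beta\) yields \eqref{eq:rccbm-lifted-disintegration}.

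\textbf{Step 2: the Markov factor \eqref{eq:rccbm-beta-backward-factor} and control \eqref{eq:rccbm-beta-control}.} Condition on \(\mathcal F_t\) under \(Q\) and use the Markov property. The density process of \(P^\beta\) becomes
\[
Z(\beta)^{-1}e^{\beta_0(Y_0)}\,\E_Q\!\left[e^{\beta_T(Y_T)}\mid X_t\right]=Z(\beta)^{-1}e^{\beta_0(Y_0)}h_t^\beta(X_t).
\]
Here \(h_t^\beta\) is expressed through the observation density \(r^{\mathcal O_T}_{t,T}\) relative to \(\nu_T\), exactly as in \eqref{eq:rccbm-beta-backward-factor}. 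The factor \(e^{\beta_0}\) is \(\mathcal F_0\)-measurable, so it only reweights the initial law. On \((0,T)\) the process is the Doob transform of \(Q\) by \(h^\beta\). The Doob/Girsanov computation of Appendix~\ref{app:doob-auxiliary-details} then gives the drift correction \(\alpha^2\nabla_{\bm\xi}\log h^\beta_t\). Only \(\bm\xi\) carries noise, since \(g\) is driven by \(A\). This is \eqref{eq:rccbm-beta-control}.

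\textbf{Step 3: the conditional bridge control \eqref{eq:rccbm-exact-teacher-control}.} Apply the same argument to \(Q^z\), whose density relative to \(Q(\cdot\mid X_0=x_0)\) is \(\delta_y\) in the observed terminal variable. For \(t<T\), the density process is \(r^{\mathcal O_T}_{t,T}(y\mid X_t)/r^{\mathcal O_T}_{0,T}(y\mid x_0)\). This yields the \(h\)-function \(r^{\mathcal O_T}_{t,T}(y\mid\cdot)\) and the control \(\bm v^z_t\).

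\textbf{Step 4: the projection identity.} This step, and Step 3, are where the main care lies. One must justify differentiating under the integral and the positivity of \(r\) (supplied by the regularity assumption), and ensure the preterminal singular behaviour as \(t\to T\) does not break the argument. Given that, the conditional law of \(Y_T\) given \(X_t=x\) under \(P^\beta\) is \(r_{t,T}^{\mathcal O_T}(y\mid x)e^{\beta_T(y)}\nu_T(\dd y)/h^\beta_t(x)\). This follows from Step 2 and the Markov property, because \(X_0\) and \(Y_T\) are conditionally independent given \(X_t\) under the \(h\)-transform. Consequently
\[
\E[\bm v_t^{Z_c}(X_t)\mid X_t=x]=\frac{\alpha\int\nabla_{\bm\xi}r_{t,T}^{\mathcal O_T}(y\mid x)\,e^{\beta_T(y)}\,\nu_T(\dd y)}{h_t^\beta(x)}=\alpha\nabla_{\bm\xi}\log h_t^\beta(x)=\bm u_t^\beta(x),
\]
using differentiation under the integral. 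The excess-risk formula \eqref{eq:rccbm-exact-excess-risk} is then the Pythagorean identity for \(L^2\) conditional expectation on each time slice, integrated against \(\lambda_\tau\) over \([0,T-\tau]\). Square-integrability of \(\bm v^{Z_c}_t\) on the strip \(t\le T-\tau\) is what the assumption is needed for.
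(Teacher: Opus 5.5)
Your proposal is correct and follows the paper's proof essentially step for step. The tilt \(\dd P^\beta/\dd Q\) depends only on \((X_0,Y_T)\), which gives the disintegration; the Markov property yields \(h_t^\beta\) and its Doob control; the conditional law of \(Y_T\) given \(X_t\) under \(P^\beta\), together with differentiation under the integral, gives the projection identity; and the \(L^2\) Pythagorean decomposition on preterminal strips gives the excess risk. You supply slightly more justification than the paper in two places — the density-process derivation of the bridge control with \(h\)-function \(r_{t,T}^{\mathcal O_T}(y\mid\cdot)\), and the conditional-independence argument for the law of \(Y_T\) given \(X_t\) — but the route is the same.
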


The identity shows why conditional teachers can recover the Markov Doob field. Because point-pinned controls can become singular as \(t\rightarrow T\), RCCBM uses the mollified terminal condition below to construct finite-energy teachers on the training horizon.

\paragraph{Canonical mollified teachers}
Let \(\kappa_\varepsilon(y,y')>0\) be a continuous approximate identity and define, for \(z=(x_0,y)\),
\begin{equation}\label{eq:rccbm-mollified-conditional-law}
\frac{\dd Q^{z,\varepsilon}}{\dd Q^{x_0}}
=\frac{\kappa_\varepsilon(y,\mathcal O_T(X_T))}{Z_{z,\varepsilon}},
\qquad
Z_{z,\varepsilon}
=\E_{Q^{x_0}}\kappa_\varepsilon(y,\mathcal O_T(X_T)).
\end{equation}
With
\begin{equation*}
g_{T,\varepsilon}^\beta(y')
=\int\kappa_\varepsilon(y,y')e^{\beta_T(y)}\,\nu_T(\dd y),
\end{equation*}
and
\begin{equation*}
Z_\varepsilon(\beta)
=\E_Q\!\left[e^{\beta_0(\mathcal O_0(X_0))}
 g_{T,\varepsilon}^\beta(\mathcal O_T(X_T))\right],
\end{equation*}
define the canonical condition law
\begin{equation}\label{eq:rccbm-mollified-condition-law}
\Lambda^{\beta,\varepsilon}(\dd x_0\dd y)
=\frac{e^{\beta_0(\mathcal O_0(x_0))+\beta_T(y)}Z_{(x_0,y),\varepsilon}}
{Z_\varepsilon(\beta)}Q_0(\dd x_0)\nu_T(\dd y).
\end{equation}
The \(Z_{(x_0,y),\varepsilon}\) factor is essential. Then, it gives
\begin{equation}\label{eq:rccbm-mollified-reciprocal-law}
P^{\beta,\varepsilon}
:=\int Q^{z,\varepsilon}\Lambda^{\beta,\varepsilon}(\dd z),
\qquad
\frac{\dd P^{\beta,\varepsilon}}{\dd Q}
=\frac{e^{\beta_0(\mathcal O_0(X_0))}g_{T,\varepsilon}^\beta(\mathcal O_T(X_T))}
{Z_\varepsilon(\beta)}.
\end{equation}
Thus the conditional teacher mixture is a Markov reciprocal law. Its Markov structure identifies the corresponding Doob control through conditional regression, leading to the following proposition.

\begin{proposition}[Mollified population regression]\label{thm:rcm-control-regression}
Let \(\bm v^{z,\varepsilon}\) be the Doob control of \(Q^{z,\varepsilon}\), and let \(\bm u^{\beta,\varepsilon}\) be the Doob control of \(P^{\beta,\varepsilon}\). Then
\begin{equation*}
\E[\bm v_t^{Z,\varepsilon}(X_t)\mid X_t=x]
=\bm u_t^{\beta,\varepsilon}(x).
\end{equation*}
For any time density \(\lambda_{\rm bm}\geq\underline\lambda>0\), the population matching loss satisfies
\begin{equation}\label{eq:rccbm-mollified-excess-risk}
\mathcal L_{\beta,\varepsilon}(\bm u)
-\mathcal L_{\beta,\varepsilon}(\bm u^{\beta,\varepsilon})
=\int_0^T\lambda_{\rm bm}(t)
\int\norm{\bm u-\bm u_t^{\beta,\varepsilon}}^2
P_t^{\beta,\varepsilon}(\dd x)\,\dd t.
\end{equation}
\end{proposition}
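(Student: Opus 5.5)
The plan is to reduce the statement to a Doob $h$-transform computation on both levels, a conditional one and a mixed one. After that, the regression identity is Bayes' rule for conditional expectations, and the excess-risk formula is the orthogonality of the squared-error loss.

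\emph{Conditional level.} By Eq.~\eqref{eq:rccbm-mollified-conditional-law}, $Q^{z,\varepsilon}$ is $Q^{x_0}$ reweighted by the terminal function $\kappa_\varepsilon(y,\mathcal O_T(X_T))$. Its backward factor is therefore
\[
h_t^{z,\varepsilon}(x)=\E_Q[\kappa_\varepsilon(y,\mathcal O_T(X_T))\mid X_t=x]=\int r_{t,T}^{\mathcal O_T}(y'\mid x)\kappa_\varepsilon(y,y')\,\nu_T(\dd y').
\]
Here the Markov property of $Q$ removes the dependence on $x_0$ once $t>0$, and the Doob control is $\bm v_t^{z,\varepsilon}=\alpha\nabla_{\bm\xi}\log h_t^{z,\varepsilon}$.

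\emph{Mixed level.} From Eq.~\eqref{eq:rccbm-mollified-reciprocal-law}, $P^{\beta,\varepsilon}$ has a product density $e^{\beta_0}\,g^\beta_{T,\varepsilon}\circ\mathcal O_T/Z_\varepsilon(\beta)$ with respect to $Q$. It is therefore Markov, with backward factor
\[
h_t^{\beta,\varepsilon}(x)=\E_Q[g^\beta_{T,\varepsilon}(\mathcal O_T(X_T))\mid X_t=x]=\int e^{\beta_T(y)}h_t^{z,\varepsilon}(x)\,\nu_T(\dd y).
\]
This uses Fubini and the definition of $g^\beta_{T,\varepsilon}$, and it gives $\bm u_t^{\beta,\varepsilon}=\alpha\nabla_{\bm\xi}\log h_t^{\beta,\varepsilon}$. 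The point where the factor $Z_{(x_0,y),\varepsilon}$ in Eq.~\eqref{eq:rccbm-mollified-condition-law} matters is in identifying that the mixture equals this reciprocal law, and Eq.~\eqref{eq:rccbm-mollified-reciprocal-law} already records this.

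\emph{Regression identity.} Next I compute the conditional law of $Z=(X_0,Y)$ given $X_t=x$ under $P^{\beta,\varepsilon}$. Disintegrating Eq.~\eqref{eq:rccbm-mollified-reciprocal-law} and using the Markov property of $Q$ at time $t$, this conditional law has density in $y$ proportional to $e^{\beta_T(y)}h_t^{z,\varepsilon}(x)$, times a factor in $x_0$ that is independent of $y$. Then
\[
\E[\bm v_t^{Z,\varepsilon}(X_t)\mid X_t=x]=\frac{\int e^{\beta_T(y)}\,\alpha\nabla_{\bm\xi}h_t^{z,\varepsilon}(x)\,\nu_T(\dd y)}{\int e^{\beta_T(y)}h_t^{z,\varepsilon}(x)\,\nu_T(\dd y)}=\alpha\nabla_{\bm\xi}\log h_t^{\beta,\varepsilon}(x),
\]
provided $\nabla_{\bm\xi}$ can be exchanged with the $\nu_T$-integral. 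Since $\kappa_\varepsilon$ is continuous and positive on the compact observation space and $\beta_T$ is bounded, I would justify this exchange by dominated convergence, using the conditional-kernel regularity of Assumption~\ref{ass:rccbm-conditional-kernel}. That assumption gives bounds on $\nabla_{\bm\xi}r_{t,T}$ that are uniform in $y'$ on $t\le T-\tau$. The mollification is what should extend this to all $t<T$, because $h^{z,\varepsilon}_t$ stays bounded below by $\inf\kappa_\varepsilon>0$.

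\emph{Excess risk and main obstacle.} For the loss $\mathcal L_{\beta,\varepsilon}(\bm u)=\int_0^T\lambda_{\rm bm}(t)\,\E\norm{\bm u_t(X_t)-\bm v_t^{Z,\varepsilon}(X_t)}^2\dd t$, I expand the square around $\bm u^{\beta,\varepsilon}_t$. The cross term vanishes by the tower property and the regression identity, which yields Eq.~\eqref{eq:rccbm-mollified-excess-risk}; the bound $\lambda_{\rm bm}\ge\underline\lambda$ is only needed to make the excess risk control the weighted $L^2$ distance. The main obstacle is showing that $\mathcal L_{\beta,\varepsilon}(\bm u^{\beta,\varepsilon})<\infty$, i.e.\ finite teacher energy up to $t=T$, so that the subtraction is legitimate. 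I would first try the entropy bound $\E\int_0^T\norm{\bm v^{Z,\varepsilon}_t}^2\dd t=2\,\E\,\KL(Q^{Z,\varepsilon}_{\cdot\mid x_0}\|Q^{x_0})\le 2\log(\sup\kappa_\varepsilon/\inf\kappa_\varepsilon)$, which is finite precisely because of the mollification. I would then bound $\lambda_{\rm bm}$ above, or assume it bounded.
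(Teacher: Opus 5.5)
Your proposal is correct and follows essentially the same route as the paper. Like the paper, you write the teacher as $\alpha\nabla_{\bm\xi}\log h_t^{y,\varepsilon}$, use the cancellation of $Z_{z,\varepsilon}$ to get the Markov backward factor $h_t^{\beta,\varepsilon}=\int e^{\beta_T(y)}h_t^{y,\varepsilon}\,\nu_T(\dd y)$, identify the conditional law of the center $y$ given $X_t=x$ as proportional to $e^{\beta_T(y)}h_t^{y,\varepsilon}(x)$, and close with the conditional-variance identity. Your Girsanov/entropy bound $\E\int_0^T\norm{\bm v_t^{Z,\varepsilon}}^2\dd t\le 2\log(\sup\kappa_\varepsilon/\inf\kappa_\varepsilon)$ is a useful addition: it makes explicit the finiteness of the teacher energy, which the paper only asserts by saying that ``positivity and fixed $\varepsilon>0$ remove the terminal point-pinning singularity.'' That bound needs $\kappa_\varepsilon$ bounded above and away from zero, which holds for the compact group-observation settings.
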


The production time sampler is the explicit preterminal/terminal-window mixture specified in Appendix~\ref{app:lsapbm-algorithms}. Its density has a deterministic lower bound obtained directly from the two mixture components, so the constant entering Theorem~\ref{thm:rcm-conditional-consistency} is derived from the sampling design rather than tuned independently.

This proposition is the central RCCBM learning identity: finite-energy conditional teachers can be pooled without changing the population target away from the Markov Doob control. Conditional Stochastic Optimal Control (CondSOC)~\citep{liu2024gsbm}, replay importance correction, clipping, constrained refinement, and the calibrated empirical initial law are implementation devices for approximating this population objective; their precise definitions are provided in  Appendix~\ref{app:rccbm-detailed-theory} and Algorithm~\ref{alg:apbm-training}.

\paragraph{Conditional path-law error propagation}
Let \(\widehat\beta_N\) be the learned endpoint potential and define
\begin{equation*}
\varepsilon_{{\rm end},N}
:=\KL(\Gamma^*\|\Gamma^{\widehat\beta_N}),
\qquad
\varepsilon_{{\rm mol},N}
:=d_{\rm BL}(P^{\widehat\beta_N,\varepsilon_N},P^{\widehat\beta_N}),
\end{equation*}
\begin{equation*}
\Delta_{{\rm ctrl},N}
:=\mathcal L_{\widehat\beta_N,\varepsilon_N}(\bm u_{\theta_N})
-\mathcal L_{\widehat\beta_N,\varepsilon_N}
(\bm u^{\widehat\beta_N,\varepsilon_N}).
\end{equation*}
The remaining assumptions are stated together in Appendix~\ref{app:rccbm-detailed-theory}: endpoint calibration, conditional-kernel regularity, teacher/empirical-risk approximation, uniform mollification, initial-law stability, and Lie-Trotter convergence. These conditions connect the preceding calibration and regression stages to the implemented generator, leading to the following path-law bound. 

\begin{theorem}[Conditional RCCBM path-law error propagation]\label{thm:rcm-conditional-consistency}
Under Assumptions~\ref{ass:rcm-calibration}, \ref{ass:rccbm-conditional-kernel}, \ref{ass:rccbm-teacher-consistency}, \ref{ass:rccbm-mollification-regularity}, and~\ref{ass:rccbm-statistical-stability}, we have 
\begin{equation}\label{eq:rccbm-control-risk-bound}
\Delta_{{\rm ctrl},N}
\leq\varepsilon_{{\rm app},N}+\varepsilon_{{\rm opt},N}
+4\varepsilon_{{\rm gen},N}+4\varepsilon_{{\rm teach},N}
+2\varepsilon_{{\rm iw},N}+2\varepsilon_{{\rm clip},N}
+\delta_{{\rm ref},N}.
\end{equation}
If \(P_{N,\Delta t}^{\theta_N}\) is the implemented generator initialized from the calibrated empirical initial law in Eq.~\eqref{eq:rcm-calibrated-initial-law}, then
\begin{align}\label{eq:rccbm-statistical-bl-bound}
d_{\rm BL}(P^{\rm SB},P_{N,\Delta t}^{\theta_N})
&\leq\sqrt{2\varepsilon_{{\rm end},N}}
+\varepsilon_{{\rm mol},N}
+\sqrt{\frac{\Delta_{{\rm ctrl},N}}{\underline\lambda}}
\notag\\
&\quad+C_{\rm init}\varepsilon_{{\rm init},N}
+\varepsilon_{{\rm disc},N}.
\end{align}
Consequently, if the component errors specified in the Appendix vanish, then
\(P_{N,\Delta t}^{\theta_N}\Rightarrow P^{\rm SB}\) in bounded-Lipschitz path distance in probability.
\end{theorem}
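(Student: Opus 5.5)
The plan splits into two parts. First we bound the control excess risk \(\Delta_{{\rm ctrl},N}\) by an oracle-inequality decomposition. Then we propagate each component to the bounded-Lipschitz path distance using the triangle inequality along the chain
\[
P^{\rm SB}\to P^{\widehat\beta_N}\to P^{\widehat\beta_N,\varepsilon_N}\to P^{\theta_N}_{\rm pop}\to P^{\theta_N}_{N,\rm init}\to P^{\theta_N}_{N,\Delta t},
\]
where \(P^{\theta_N}_{\rm pop}\) denotes the continuous-time controlled law with control \(\bm u_{\theta_N}\), started from the exact initial marginal \(P_0^{\widehat\beta_N,\varepsilon_N}\). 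The intermediate law \(P^{\theta_N}_{N,\rm init}\) is the same controlled dynamics started from the calibrated empirical initial law.

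For Eq.~\eqref{eq:rccbm-control-risk-bound}, we use Proposition~\ref{thm:rcm-control-regression} to identify \(\Delta_{{\rm ctrl},N}\) with the population excess risk relative to \(\bm u^{\widehat\beta_N,\varepsilon_N}\). We then run the standard chain, with \(\widehat{\mathcal L}\) the empirical (importance-weighted, clipped, teacher-approximated) loss:
\[
\mathcal L(\bm u_\theta)-\mathcal L(\bm u^*)=[\mathcal L(\bm u_\theta)-\widehat{\mathcal L}(\bm u_\theta)]+[\widehat{\mathcal L}(\bm u_\theta)-\inf_\Theta\widehat{\mathcal L}]+[\inf_\Theta\widehat{\mathcal L}-\widehat{\mathcal L}(\bm u_{\bar\theta})]+[\widehat{\mathcal L}(\bm u_{\bar\theta})-\mathcal L(\bm u_{\bar\theta})]+[\mathcal L(\bm u_{\bar\theta})-\mathcal L(\bm u^*)],
\]
where \(\bar\theta\) is a best-in-class parameter. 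The middle two brackets give \(\varepsilon_{\rm opt}\), and the last gives \(\varepsilon_{\rm app}\). The two deviation brackets are each split into a generalization part, a teacher-replacement part, an importance-weight part and a clipping part. Each of these is controlled by the uniform bounds in Assumptions~\ref{ass:rccbm-teacher-consistency} and~\ref{ass:rccbm-statistical-stability}. The teacher and generalization terms enter via squared-loss cross terms, which is where the constants \(4\) arise; the importance-weight and clipping terms enter twice, giving the constants \(2\). The terminal refinement constraint contributes \(\delta_{\rm ref}\).

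For the path bound, the first link uses Pinsker together with Theorem~\ref{thm:rcm-endpoint-dual}: \(d_{\rm BL}\le d_{\TV}\le\sqrt{2\KL(P^{\rm SB}\|P^{\widehat\beta_N})}=\sqrt{2\varepsilon_{{\rm end},N}}\). Here \(d_{\TV}\) is the (unnormalized) total variation distance, and we use \(d_{\rm BL}\le d_{\TV}\), which requires the BL-metric to be normalized to functions with sup norm at most \(1\). The second link is \(\varepsilon_{{\rm mol},N}\) by definition. For the third link, both laws share the initial marginal and differ only through the drift in \(\g\), with diffusion coefficient \(\alpha\). Girsanov's theorem (Doob-control form, Appendix~\ref{app:doob-auxiliary-details}) gives
\[
\KL(P^{\widehat\beta_N,\varepsilon_N}\|P^{\theta_N}_{\rm pop})=\tfrac12\int_0^T\!\!\int\norm{\bm u_{\theta_N}-\bm u_t^{\widehat\beta_N,\varepsilon_N}}^2\,\dd P_t^{\widehat\beta_N,\varepsilon_N}\,\dd t .
\]
This holds under the paper's normalization in which the control enters as \(\alpha\bm u\) with noise \(\alpha\,\dd\mathbf W\). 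Using \(\lambda_{\rm bm}\ge\underline\lambda\) and Eq.~\eqref{eq:rccbm-mollified-excess-risk}, the right side is at most \(\Delta_{{\rm ctrl},N}/(2\underline\lambda)\). Pinsker then yields \(\sqrt{\Delta_{{\rm ctrl},N}/\underline\lambda}\). The fourth link uses the initial-law stability assumption: the controlled flow map is BL-Lipschitz in its initial law with constant \(C_{\rm init}\), which gives \(C_{\rm init}\varepsilon_{{\rm init},N}\). The fifth link is the Lie--Trotter discretization assumption, giving \(\varepsilon_{{\rm disc},N}\). Summing the links gives Eq.~\eqref{eq:rccbm-statistical-bl-bound}. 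Convergence in probability then follows because each term tends to zero in probability and finite sums of such terms do too.

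The main obstacle is the Girsanov step. The Novikov/finite-energy condition needs the teacher control to have finite energy up to \(T\), and the mollification in Eq.~\eqref{eq:rccbm-mollified-conditional-law} is exactly what guarantees this. We also need \(\lambda_{\rm bm}\) to cover the whole interval \([0,T]\), not just a preterminal strip. I would first verify finite entropy via Eq.~\eqref{eq:rccbm-mollified-reciprocal-law}: the density \(\dd P^{\beta,\varepsilon}/\dd Q\) is bounded above and below, so relative entropy is finite. I would then derive the KL identity through the chain rule for relative entropy rather than through Novikov's condition directly.
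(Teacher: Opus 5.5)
Your path-law half matches the paper's proof. It uses the same chain $P^{\rm SB}\to P^{\widehat\beta_N}\to P^{\widehat\beta_N,\varepsilon_N}\to\widetilde P_N^{\theta_N}\to P_N^{\theta_N}\to P_{N,\Delta t}^{\theta_N}$, Pinsker with $d_{\rm BL}\le 2\,\TV$ (equivalent to your unnormalized convention), Girsanov plus $\lambda_{\rm bm}\ge\underline\lambda$, and the two stability assumptions for the last links. Your plan to justify Girsanov through the bounded density in \eqref{eq:rccbm-mollified-reciprocal-law} is more careful than the paper, which simply invokes it. One correction: the bounded density is what gives the Markov control $\bm u^{\widehat\beta_N,\varepsilon_N}$ finite energy, and this holds for $P^{\beta}$ too. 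Mollification is what keeps the teacher energies, hence $\mathcal L_{\beta,\varepsilon}$, finite on all of $[0,T]$.

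The real difference is in \eqref{eq:rccbm-control-risk-bound}. The paper works on the event $\sup_{\bm u\in\mathcal U_N}|\widehat{\mathcal R}_N(\bm u)-\mathcal R_N(\bm u)|\le\eta_N$, with $\eta_N=\varepsilon_{{\rm gen},N}+\varepsilon_{{\rm teach},N}+\tfrac12(\varepsilon_{{\rm iw},N}+\varepsilon_{{\rm clip},N})$. It proceeds in two stages:
\begin{enumerate}
\item an oracle inequality for the pre-refinement fit, giving $2\eta_N+\varepsilon_{{\rm opt},N}+\varepsilon_{{\rm app},N}$;
\item a separate conversion of the empirical acceptance rule into a population statement, giving $2\eta_N+\delta_{{\rm ref},N}$.
\end{enumerate}
The constants $4$ and $2$ come from these four uses of the uniform event. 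Your ``squared-loss cross terms'' explanation is not what generates them, and you do not substantiate it.

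Your single chain through the final checkpoint works differently. It absorbs refinement into the empirical gap, $\widehat{\mathcal R}_N(\bm u_{\theta_N})\le\inf_{\mathcal U_N}\widehat{\mathcal R}_N+\varepsilon_{{\rm opt},N}+\delta_{{\rm ref},N}$, and uses the uniform event only twice. This gives $\Delta_{{\rm ctrl},N}\le\varepsilon_{{\rm app},N}+\varepsilon_{{\rm opt},N}+2\eta_N+\delta_{{\rm ref},N}$, which is sharper and implies the stated bound a fortiori.

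To make your route rigorous you must state the following:
\begin{itemize}
\item the accepted checkpoint lies in the same sieve $\mathcal U_N$, and acceptance is judged by the same empirical risk $\widehat{\mathcal R}_N$ (both are explicit in the paper);
\item the uniform event comes from Assumption~\ref{ass:rccbm-teacher-consistency}, not Assumption~\ref{ass:rccbm-statistical-stability};
\item the inequality therefore holds only with probability tending to one.
\end{itemize}
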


Theorem~\ref{thm:rcm-conditional-consistency} propagates identifiable module-level errors through the RCCBM pipeline to the final path law under the stated consistency assumptions for the individual modules. At the population level, exact endpoint calibration and exact mollified regression recover \(P^{\beta^*,\varepsilon}\), and the mollification result in Proposition~\ref{prop:rccbm-condsoc-consistency} gives
\begin{equation*}
d_{\rm BL}(P^{\beta^*,\varepsilon},P^{\rm SB})\to0
\qquad(\varepsilon\rightarrow0).
\end{equation*}

The preceding theorem is stated at a high level because RCCBM is intended for non-Abelian settings in which the transition kernel is not explicitly available. The assumptions are nevertheless jointly realizable in a concrete model class. The following specialization uses the torus, where the wrapped kinetic kernel makes every population object explicit and permits a direct verification of the approximation chain.

\begin{corollary}[Concrete torus sieve consistency]\label{cor:rccbm-torus-consistency}
Let \(\G=\T^m\), \(\mathcal O_0=\mathcal O_T=g\), and let the reference process be Eq.~\eqref{eq:intro-reference-process} with the Ornstein--Uhlenbeck specialization \eqref{eq:tdm-ref}. Assume that the source, target, and reference group-marginal densities are \(C^2\), strictly positive, and bounded above and below with respect to normalized Haar measure. Let \(\kappa_{\varepsilon}\) be the periodic heat kernel on \(\T^m\), with \(\varepsilon_N\rightarrow0\).

For endpoint calibration, let \(\mathcal B_N\) be the doubly centered trigonometric-polynomial sieve with frequencies \(\|k\|_\infty\le K_N\), restricted to a fixed bounded Lipschitz ball containing the Fej\'er approximants of the exact endpoint potentials. For direct control regression, let \(\mathcal U_N\) be a clipped tensor-product sieve, restricted to a common Lipschitz/linear-growth envelope, built from periodic trigonometric functions in \(g\), splines in \(t\), and polynomials in \(\bm\xi\) on \(\{\|\bm\xi\|\le R_N\}\), with \(R_N,B_N\to\infty\). Denote
\[
d_N=\dim(\mathcal B_N),\qquad p_N=\dim(\mathcal U_N),
\]
and let
\[
M_N:=1+\E_{\mathfrak M^{\beta^*,\varepsilon_N}}
       \bigl[\|\bm v^{Z,\varepsilon_N}\|^4\bigr].
\]
Choose the sieve growth and mollification schedule slowly enough that
\[
\frac{d_N\log N}{N}\to0,\qquad
\frac{M_N B_N^4 p_N\log N}{N}\to0,
\qquad
B_N^2 e^{-cR_N^2}\to0
\]
for some \(c>0\). Use the canonical mollified teacher law, available from the wrapped kernel, so that replay importance weights are identically one; use exact empirical optimization over the compact finite-dimensional sieves; form the calibrated self-normalized initial pool from \(N\) independent reference samples; and let the Lie--Trotter step satisfy \(\Delta t_N\rightarrow0\).

Then Assumptions~\ref{ass:rcm-calibration}, \ref{ass:rccbm-conditional-kernel}, \ref{ass:rccbm-teacher-consistency}, \ref{ass:rccbm-mollification-regularity}, and~\ref{ass:rccbm-statistical-stability} hold for this torus sieve regime. Consequently,
\[
d_{\rm BL}\!\left(P^{\rm SB},P_{N,\Delta t_N}^{\theta_N}\right)
\longrightarrow0
\qquad\text{in probability}.
\]
\end{corollary}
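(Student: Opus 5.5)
The argument verifies the five assumption blocks of Theorem~\ref{thm:rcm-conditional-consistency} one at a time in the torus sieve regime, and then reads off convergence from the bound \eqref{eq:rccbm-statistical-bl-bound}. Each summand there will be shown to vanish in probability. Throughout, Lemma~\ref{prop:wrapped-kinetic-kernel} serves as the explicit oracle. The wrapped Gaussian kernel is smooth and strictly positive for \(t>0\), so the observation density \(r_{t,T}^{\mathcal O_T}(y\mid x)\) on \(\T^m\) is \(C^\infty\), bounded above and below, and has \(\bm\xi\)-gradients of at most linear growth. Together with the strictly positive \(C^2\) densities, this gives Assumption~\ref{ass:endpoint-kernel} through Lemma~\ref{prop:compact-observed-kernel}. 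It also gives the conditional-kernel regularity of Assumption~\ref{ass:rccbm-conditional-kernel} directly: Theorem~\ref{thm:rccbm-exact-matching} applies, and the Doob controls \(\bm u^{\beta,\varepsilon}\) are Lipschitz with linear growth, uniformly over the bounded Lipschitz ball of \(\beta\).

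\textbf{Endpoint calibration.} I will show that the Schr\"odinger factors \(\log f_0,\log g_T\) are \(C^2\) on \(\T^m\). They solve the fixed-point equations \(f_0=1/\int k\,g_T\,\dd\rho_T\) (and symmetrically for \(g_T\)), and \(k\) inherits \(C^2\) smoothness from the kernel and the densities. The Fej\'er approximants then converge uniformly, with approximation error \(O(K_N^{-1})\) in sup norm. Because \(\mathscr D\) is concave and the uniform bounds \(m\le k\le M\) make it locally strongly concave in \(L^2\) on the bounded ball, the dual gap of Theorem~\ref{thm:rcm-endpoint-dual} is quadratic in the sup-norm error.

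The estimation error comes from a uniform law of large numbers over a \(d_N\)-dimensional bounded class, which gives a rate \(O_P(\sqrt{d_N\log N/N})\). The condition \(d_N\log N/N\to0\) therefore yields \(\varepsilon_{{\rm end},N}\to0\), which is Assumption~\ref{ass:rcm-calibration}. Moreover, \(\widehat\beta_N\) remains in the Lipschitz ball, so all subsequent constants are uniform.

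\textbf{Control regression.} Since the teacher law is the canonical one computed from the wrapped kernel, the terms \(\varepsilon_{{\rm teach},N}\) and \(\varepsilon_{{\rm iw},N}\) are zero. Exact optimization gives \(\varepsilon_{{\rm opt},N}=0\), and there is no refinement, so \(\delta_{{\rm ref},N}=0\).

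For the approximation error \(\varepsilon_{{\rm app},N}\), I will approximate \(\bm u^{\beta,\varepsilon_N}\) on \(\{\|\bm\xi\|\le R_N\}\) by trigonometric, spline and polynomial tensors, using Jackson-type bounds for the smooth field. The tail is controlled by Gaussian-moment bounds of the kinetic OU marginals, giving a contribution \(B_N^2 e^{-cR_N^2}\to0\).

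For the generalization error \(\varepsilon_{{\rm gen},N}\), the loss is a squared loss with heavy-tailed responses \(\bm v^{Z,\varepsilon}\). I will use a truncated empirical-process bound for linear sieves of dimension \(p_N\) with envelope \(B_N\), controlling the response tails by the fourth moment \(M_N\). This produces the rate \(\sqrt{M_NB_N^4p_N\log N/N}\). The clipping bias \(\varepsilon_{{\rm clip},N}\) is handled by the same Gaussian tail bound.

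Uniformity in the random \(\widehat\beta_N\) requires care. I plan to condition on the calibration sample (sample splitting), or alternatively to take a union over a net of the compact \(\mathcal B_N\). Together these establish Assumption~\ref{ass:rccbm-teacher-consistency} and \eqref{eq:rccbm-control-risk-bound}.

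\textbf{Mollification, initialization, discretization.} The mollification term \(\varepsilon_{{\rm mol},N}\) is handled through \eqref{eq:rccbm-mollified-reciprocal-law}. The densities \(\dd P^{\beta,\varepsilon}/\dd Q\) and \(\dd P^{\beta}/\dd Q\) differ only through \(g_{T,\varepsilon}^\beta\) versus \(e^{\beta_T}\). The heat-kernel convolution converges uniformly on the Lipschitz ball at rate \(O(\sqrt{\varepsilon})\), so the total variation distance, and hence \(d_{\rm BL}\), tends to zero uniformly in \(\beta\). This gives Assumption~\ref{ass:rccbm-mollification-regularity}.

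The self-normalized importance pool over \(N\) reference samples has bounded weights \(e^{\widehat\beta_0}h_0\), so \(\varepsilon_{{\rm init},N}=O_P(N^{-1/2})\) in \(d_{\rm BL}\). With the Lipschitz bound on the learned control, this gives \(C_{\rm init}<\infty\).

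For the discretization term \(\varepsilon_{{\rm disc},N}\), Lie--Trotter splitting on the Abelian torus is exact for the group flow. The remaining error is a standard Euler-type error for Lipschitz, linear-growth drifts, which tends to zero as \(\Delta t_N\to0\). This completes Assumption~\ref{ass:rccbm-statistical-stability}.

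\textbf{Main obstacle.} The hardest step will be \(\varepsilon_{{\rm gen},N}\). The teacher controls blow up near \(T\) as \(\varepsilon_N\to0\), so \(M_N\) grows, and the Lipschitz envelope must hold uniformly in both \(N\) and the random \(\widehat\beta_N\). I would first bound \(M_N\) explicitly via the wrapped kernel as a polynomial in \(\varepsilon_N^{-1}\). With that bound in hand, the stated growth condition becomes checkable, and the vanishing of every term in \eqref{eq:rccbm-statistical-bl-bound} gives convergence in probability.
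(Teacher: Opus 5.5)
Your proposal is correct and follows essentially the same verification as the paper's proof: Fej\'er approximation with a covering argument and exact maximization for the endpoint dual, and canonical wrapped-kernel teachers giving $\varepsilon_{{\rm iw},N}=\varepsilon_{{\rm teach},N}=\varepsilon_{{\rm opt},N}=\delta_{{\rm ref},N}=0$. It also uses sieve density with Gaussian truncation tails and an $M_NB_N^4$-envelope empirical-process bound, uniform heat-kernel mollification, a bounded-weight law of large numbers for the initial pool, and synchronous coupling plus Lie--Trotter convergence. Your sample-splitting remark for uniformity in $\widehat\beta_N$ makes explicit a point the paper leaves implicit.

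Two slips do not affect the argument. First, the weighted empirical initial law is not $O_P(N^{-1/2})$ in $d_{\rm BL}$ on the $2m$-dimensional space $\X$, because bounded-Lipschitz empirical rates degrade with dimension; only convergence to zero is needed, and that holds. Second, the $M_N$ growth condition is a hypothesis of the corollary, so bounding $M_N$ polynomially in $\varepsilon_N^{-1}$ is an optional sanity check rather than the main obstacle.
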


Corollary~\ref{cor:rccbm-torus-consistency} provides a concrete torus specialization of the RCCBM error-propagation theorem. Its proof in Appendix~\ref{app:torus-rccbm-verification} verifies the endpoint, mollification, empirical-risk, initial-law, and discretization requirements using the explicit wrapped kernel and finite-dimensional sieve construction.

\subsection{Reduction of \(SE(3)^N\) data}\label{subsec:noncompact-general}

For rigid frames \(h_i=(\mathbf{R}_i,\bm p_i)\in SE(3)\), gauge fixing by
\begin{equation}\label{eq:relative-frame-gauge}
\widetilde h_i=h_1^{-1}h_i,
\qquad i=2,\ldots,N,
\end{equation}
removes the global rigid motion. Retained relative rotations and periodic internal coordinates define the compact learning group
\begin{equation*}
\G_{\rm red}=SO(3)^{N-1}\times\T^K,
\qquad
\g_{\rm red}=\mathfrak{so}(3)^{N-1}\times\R^K,
\end{equation*}
with reduction
\begin{equation}\label{eq:se3-reduction-map}
\mathcal R_{\rm red}(h_{1:N})
=(\mathbf{R}_1^\top \mathbf{R}_2,\ldots,\mathbf{R}_1^\top \mathbf{R}_N,\theta_1,\ldots,\theta_K).
\end{equation}
A deterministic reconstruction may use
\begin{equation}\label{eq:forward-kinematics-reconstruction}
\bm p_{i+1}(z)=\bm p_i(z)+\mathbf{R}_i(z)\bm b_i.
\end{equation}

The following proposition formalizes how this reduction transfers the path-law comparison between the reduced and reconstructed representations.

\begin{proposition}[Compact reduction and path-law pushforward]\label{thm:compact-reduction}
Under the reduced-representation assumption in Appendix~\ref{app:reduction-details}, the stochastic learning problem is an RCCBM problem on \(\G_{\rm red}\). If the reconstruction is a \(C^2\)-diffeomorphism onto the modeled conformation manifold, the induced path map preserves relative entropy,
\begin{equation}\label{eq:compact-reduction-kl-invariance}
\KL(P_{\rm red}^{\rm SB}\|\widehat P_{\rm red})
=\KL((\mathbf\Phi_{\rm red})_\#P_{\rm red}^{\rm SB}
\|(\mathbf\Phi_{\rm red})_\#\widehat P_{\rm red}).
\end{equation}
Without injectivity, only the data-processing inequality
\begin{equation}\label{eq:se3-data-processing}
\KL((\mathbf{\mathcal R}_{\rm red})_\#P\|
(\mathbf{\mathcal R}_{\rm red})_\#Q)\leq\KL(P\|Q)
\end{equation}
is available.
\end{proposition}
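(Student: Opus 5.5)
The proof has three parts: first show that the reduced learning problem is an instance of the observed-endpoint kinetic bridge on \(\G_{\rm red}\), then prove KL invariance under an injective measurable path map, then the data-processing inequality in general.

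\emph{Reduction to an RCCBM problem.} Write \(\G_{\rm red}=SO(3)^{N-1}\times\T^K\), which is a compact connected matrix Lie group with Lie algebra \(\g_{\rm red}=\mathfrak{so}(3)^{N-1}\times\R^K\). Pick an orthonormal basis adapted to the product, and take the kinetic reference process \eqref{eq:intro-reference-process} with the Ornstein--Uhlenbeck drift \eqref{eq:tdm-ref} on \(\G_{\rm red}\times\g_{\rm red}\). The observed endpoint laws are the pushforwards \((\mathcal R_{\rm red})_\#\rho_i\). These are well defined because \(\mathcal R_{\rm red}\) in \eqref{eq:se3-reduction-map} is invariant under the left action \(h_i\mapsto h h_i\), so it factors through the gauge-fixed frames \eqref{eq:relative-frame-gauge}. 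Under the reduced-representation assumption of Appendix~\ref{app:reduction-details}, the pushed-forward densities are continuous and strictly positive. Lemma~\ref{prop:compact-observed-kernel} then gives Assumption~\ref{ass:endpoint-kernel}, and Theorems~\ref{thm:observed-existence}--\ref{thm:rcm-conditional-consistency} apply verbatim on \(\G_{\rm red}\). This step is mostly bookkeeping.

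\emph{KL invariance.} Lift the reconstruction \(\Phi\) (forward kinematics \eqref{eq:forward-kinematics-reconstruction} composed with frame assembly) to the path map \(\mathbf\Phi_{\rm red}:C([0,T],\X_{\rm red})\to C([0,T],\mathcal M)\), given by \((\omega_t)\mapsto(\Phi(\omega_t))\), applied to the group component or to the whole state as appropriate. Since \(\Phi\) is a \(C^2\)-diffeomorphism onto its image, \(\mathbf\Phi_{\rm red}\) is continuous and injective, and its inverse \((\eta_t)\mapsto(\Phi^{-1}(\eta_t))\) is continuous on the image. By Lusin--Souslin, the image of a Borel set is Borel, so \(\mathbf\Phi_{\rm red}\) is a Borel isomorphism onto its range.

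The identity \eqref{eq:compact-reduction-kl-invariance} then follows from the general fact that KL is invariant under measurable bijections with measurable inverse. There are two cases:
\begin{itemize}
\item If \(P\ll\widehat P\), then \(\dd(\mathbf\Phi_\#P)/\dd(\mathbf\Phi_\#\widehat P)=(\dd P/\dd\widehat P)\circ\mathbf\Phi^{-1}\), and a change of variables gives equal integrals.
\item If \(P\not\ll\widehat P\), a null set \(A\) with \(P(A)>0\) maps to \(\mathbf\Phi(A)\), which witnesses non-absolute continuity of the pushforwards, so both sides equal \(+\infty\).
\end{itemize}
Alternatively, one can apply the data-processing inequality in both directions, using \(\mathbf\Phi^{-1}\) on the range.

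\emph{Data processing.} The inequality \eqref{eq:se3-data-processing} is the standard data-processing inequality for the measurable map \(\mathcal R_{\rm red}\), applied pathwise. One route is the Donsker--Varadhan variational formula: test functions of the form \(F\circ\mathcal R_{\rm red}\) form a subclass of all bounded measurable test functions, so the supremum can only decrease. Another route uses conditional expectation: the pushforward density equals \(\E_Q[\dd P/\dd Q\mid\sigma(\mathcal R_{\rm red})]\), and Jensen's inequality for \(x\log x\) gives the bound.

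The main obstacle is the measurability of the inverse path map and the topology on the target path space. I would handle it via Lusin--Souslin as above, noting that \(C([0,T],\mathcal M)\) is Polish when \(\mathcal M\) is a closed embedded submanifold. If the modeled manifold is only an immersed image, I would instead restrict to the range with its trace \(\sigma\)-algebra; there KL is unaffected because both pushforwards are concentrated on it.
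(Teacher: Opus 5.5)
Your treatment of the two entropy statements follows the paper. You use a Borel isomorphism of path spaces onto the image plus change of variables for the equality, and the Donsker--Varadhan formula for the data-processing inequality; your conditional-expectation/Jensen route is an equivalent alternative. Your Lusin--Souslin measurability remark and your treatment of the case where the first law is not absolutely continuous with respect to the second are more careful than the paper, which only handles $P\ll Q$. Two steps, however, do not go through as written.

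First, the reduced-representation assumption (Assumption~\ref{ass:compact-internal-representation}) only asserts that the modeled support is an embedded manifold $\mathcal S$ and that $F:\G_{\rm red}\to\mathcal S$ is a $C^2$-diffeomorphism with inverse $\mathcal R_{\rm red}|_{\mathcal S}$. It says nothing about the reduced endpoint laws $(\mathcal R_{\rm red})_\#\rho_i^{\rm amb}$ having densities, let alone continuous strictly positive ones, so you cannot invoke Lemma~\ref{prop:compact-observed-kernel} from it. Theorem~\ref{thm:rcm-conditional-consistency} also needs the calibration, teacher, mollification and stability assumptions, so ``apply verbatim'' is unjustified. What can actually be proved is the structural claim. $SO(3)^{N-1}\times\T^K$ is compact and connected, and it is non-Abelian as soon as one $SO(3)$ factor is present, which is what places it in the RCCBM rather than the WKBC regime. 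The endpoint and reference hypotheses remain a proviso to be checked for the reduced laws, exactly as the paper phrases it.

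Second, your path map is underspecified. On $C([0,T],\G_{\rm red}\times\g_{\rm red})$, applying $F$ to the group component alone is not injective because it discards $\bm\zeta$, and from such a map you would only obtain the inequality, not the equality. The induced map in the statement is built from the state lift $\Phi_{\rm red}(z,\bm\zeta)=(F(z),\dd F_z[T_eL_z(\bm\zeta)])$ of Appendix~\ref{app:reduction-details}. Its injectivity and continuous inverse come from $\dd F_z$ being a linear isomorphism at every $z$, which is exactly where the diffeomorphism hypothesis (rather than mere homeomorphism) enters. Your sentence deriving injectivity of $\mathbf\Phi_{\rm red}$ from $F$ being a diffeomorphism needs this step spelled out.
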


Accordingly, the stochastic guarantee is a guarantee on the compact reduced state space. Cartesian reconstruction is postprocessing and must be evaluated separately whenever it is approximate.

\section{Experiments}\label{sec:experiments}
In this section, we conduct extensive experiments to validate the two computational regimes separately: explicit-kernel WKBC and sample-based RCCBM. Each experiment specifies the ordered endpoint pair, observation maps, reference process, calibrated initial-state sampler, and integration grid. The compact Abelian benchmarks additionally report held-out Negative Log-Likelihood (NLL), while the \(SO(3)\) benchmark reports log-likelihood under the common Trivialized Momentum Diffusion Models (TDM) benchmark convention~\citep{tdm2025}. RCCBM is evaluated using intrinsic feature Maximum Mean Discrepancy (MMD), geodesic Wasserstein proxies, mode coverage, local precision and coverage radii, endpoint-calibration diagnostics, held-out teacher error, post-refinement matching error, normalized-control energy, and group-constraint error. For Protein Conformational Transition Pathway Generation on mdCATH trajectories, we additionally report target-aware tortuosity and normalized Lie algebra roughness to characterize path detour and local fluctuation in the reduced state space.

\subsection{Experimental protocol}\label{subsec:experimental-protocol}

For the compact Abelian experiments, we use a common set of endpoint, path, and geometric diagnostics across methods, together with held-out Negative Log-Likelihood (NLL) for benchmark comparison. We compare WKBC with Riemannian Diffusion Models (RDM)~\citep{huang2022rdm}, Riemannian Flow Matching (RFM)~\citep{chen2024rfm}, and TDM~\citep{tdm2025}. RDM extends continuous-time diffusion generative modeling to general Riemannian manifolds, RFM constructs geometry-aware flow-matching dynamics on manifolds, and TDM exploits the trivialization structure of Lie groups to perform diffusion generative modeling with auxiliary variables represented in a fixed Lie algebra. These baselines are selected because they are representative generative methods specifically designed for manifold- or Lie-group-valued data, thereby providing geometry-aware comparisons with WKBC. For all compact Abelian benchmark experiments, WKBC and the RDM, RFM, and TDM baselines are trained from the same simple prior toward the same target data distribution.

For compact non-Abelian Lie groups, including \(SO(3)\) and \(U(n)\), we evaluate the model using log-likelihood where a common likelihood protocol is available, together with intrinsic geometric and distributional diagnostics and qualitative visualizations. All production feature maps and other evaluation parameters are fixed before checkpoint selection to prevent the evaluation protocol from being influenced by model selection.

The final RCCBM checkpoint is selected by a fixed validation distribution objective. Endpoint dual, teacher, control matching, and sample quality diagnostics are recomputed after the terminal refinement stage so that all reported quantities correspond to the same final controller.

The NLL/LL tables use the prior-to-data likelihood protocol defined in Section~3.3. Held-out likelihoods are evaluated by the intrinsic probability-flow change-of-variables construction used by TDM~\citep{tdm2025}, using the same auxiliary Gaussian variable and group-volume convention. We report
\[
\mathrm{LL}
=
\frac1{N_{\rm test}}\sum_{i=1}^{N_{\rm test}}\widehat\ell(g_i),
\qquad
\mathrm{NLL}=-\mathrm{LL}.
\]
WKBC obtains the required full marginal score from Eq.~\eqref{eq:wkbc-full-score-likelihood}; RCCBM obtains it from an auxiliary score estimator fitted to trajectories generated by the frozen final controller. Complete formulas and proofs are given in Appendix~\ref{app:tdm-likelihood}.

\subsection{Compact Abelian torsion bridges}\label{subsec:compact-abelian-experiments}

\begin{table}[!tbp]
\centering
\caption{Held-out negative log-likelihood (NLL) on the four two-dimensional protein torsion datasets and the seven-dimensional RNA torsion dataset. NLL is evaluated from the prior-to-data Haar-Gaussian benchmark specialization described in Section~\ref{subsec:experimental-protocol}. Lower NLL indicates better likelihood fit. Boldface indicates the best-performing value.}

\label{tab:tdm-table1-nll}
\scriptsize
\setlength{\tabcolsep}{2.4pt}
\renewcommand{\arraystretch}{1.06}
\begin{tabular*}{\textwidth}{@{\extracolsep{\fill}}lccccc@{}}
\toprule
Setting & \makecell{General\\(2D)} & \makecell{Glycine\\(2D)} & \makecell{Proline\\(2D)} & \makecell{Pre-Pro\\(2D)} & \makecell{RNA\\(7D)} \\
\midrule
\multicolumn{1}{l}{\diagbox{Model}{Dataset size}} & 138208 & 13283 & 7634 & 6910 & 9478 \\
\midrule
RDM~\citep{huang2022rdm} & 1.04 & 1.97 & 0.12 & 1.24 & -3.70 \\
RFM~\citep{chen2024rfm} & 1.01 & 1.90 & 0.15 & 1.18 & -5.20 \\
TDM~\citep{tdm2025} & 0.69 & 1.04 & -0.60 & 0.52 & -6.86 \\
\midrule
WKBC  & \textbf{0.59} & \textbf{0.84} & \textbf{-1.02} & \textbf{0.17} & \textbf{-8.86} \\
\bottomrule
\end{tabular*}
\end{table}

Protein backbone torsion pairs are modeled on \(SO(2)^2\simeq\T^2\), and RNA torsion profiles on \(SO(2)^7\simeq\T^7\). We compare WKBC with other baselines under the same benchmark data split and report both held-out NLL and geometric diagnostics. Table~\ref{tab:tdm-table1-nll} reports the held-out NLL values.

{
Table~\ref{tab:tdm-table1-nll} uses the likelihood protocol defined in Section~\ref{subsec:experimental-protocol}: WKBC is instantiated from \(\pi^\star\) to the training torsion law, and held-out samples are scored with the TDM-aligned probability-flow evaluator in Appendix~\ref{app:tdm-likelihood}. Table~\ref{tab:torus-results} reports the corresponding two-endpoint bridge diagnostics.
}

As shown in Table~\ref{tab:tdm-table1-nll}, WKBC attains the lowest NLL in every dataset column. Relative to TDM, the absolute NLL reductions are \(0.10\), \(0.20\), \(0.42\), \(0.35\), and \(2.00\) on General, Glycine, Proline, Pre-Pro, and RNA, respectively. The consistent likelihood improvement indicates that the explicit wrapped-kernel construction captures periodic endpoint densities without sacrificing the intrinsic torus geometry. This supports the advantage of WKBC when an explicit kinetic kernel is available, because the kernel provides a direct representation of the bridge on the torus.

\begin{table}[!tbp]
\centering
\caption{Endpoint and path diagnostics for the compact Abelian torsion tasks. $\downarrow$ ($\uparrow$) indicates that lower (higher) values are better.}
\label{tab:torus-results}
\scriptsize
\setlength{\tabcolsep}{2.0pt}
\renewcommand{\arraystretch}{1.08}
\begin{tabular*}{\textwidth}{@{\extracolsep{\fill}}llccccc@{}}
\toprule
Setting & Method & GeoRMSE $\downarrow$ & \makecell{Sinkhorn\\$W_2\downarrow$} & \makecell{Rama\\JSD $\downarrow$} & \makecell{ValidRate\\$\uparrow$} & \makecell{Path energy\\$\downarrow$} \\
\midrule
\multirow{2}{*}{General} & TDM & 1.504 & 0.282 & 0.277 & 0.595 & 2.944 \\
& WKBC & \textbf{1.483} & \textbf{0.221} & \textbf{0.125} & \textbf{0.873} & \textbf{1.634}\\
\midrule
\multirow{2}{*}{Glycine} & TDM & 1.915 & 0.384 & 0.481 & 0.300 & 2.883 \\
& WKBC & \textbf{1.874} & \textbf{0.296} & \textbf{0.243} & \textbf{0.701} & \textbf{1.672} \\
\midrule
\multirow{2}{*}{Proline} & TDM & 1.453 & 0.914 & 0.238 & 0.769 & 5.740 \\
& WKBC & \textbf{1.421} & \textbf{0.221} & \textbf{0.102} & \textbf{0.907} & \textbf{2.395} \\
\midrule
\multirow{2}{*}{Pre-Pro} & TDM & 1.123 & 0.324 & 0.484 & 0.311 & 2.280 \\
& WKBC & \textbf{1.084} & \textbf{0.258} & \textbf{0.219} & \textbf{0.760} & \textbf{1.532} \\
\midrule
\multirow{2}{*}{RNA} & TDM & 0.843 & 1.024 & 0.335 & 0.511 & 8.169 \\
& WKBC & \textbf{0.854} & \textbf{0.627} & \textbf{0.201} & \textbf{0.733} & \textbf{5.134} \\
\bottomrule
\end{tabular*}
\end{table}

Table~\ref{tab:torus-results} separates distributional endpoint fidelity from paired path diagnostics. Geodesic Root-Mean-Square Error (GeoRMSE) is computed only from the pairing file fixed before training and released with the evaluator, which is a paired diagnostic. To ensure a fair comparison, all methods are evaluated using the same reference process, diffusion coefficient, and other relevant experimental settings.

Table~\ref{tab:torus-results} shows that WKBC improves Sinkhorn distance, Ramachandran JSD, ValidRate, and path energy over TDM on all five reported tasks. GeoRMSE is also lower on the four protein subsets; on RNA it is slightly higher, \(0.854\) versus \(0.843\). Taken together, these metrics show that the likelihood gains are accompanied by improved target-distribution fidelity, geometrically valid trajectories, and lower control-induced path cost.

\begin{figure}[!tbp]
\centering
\includegraphics[width=0.96\textwidth,height=0.58\textheight,keepaspectratio]{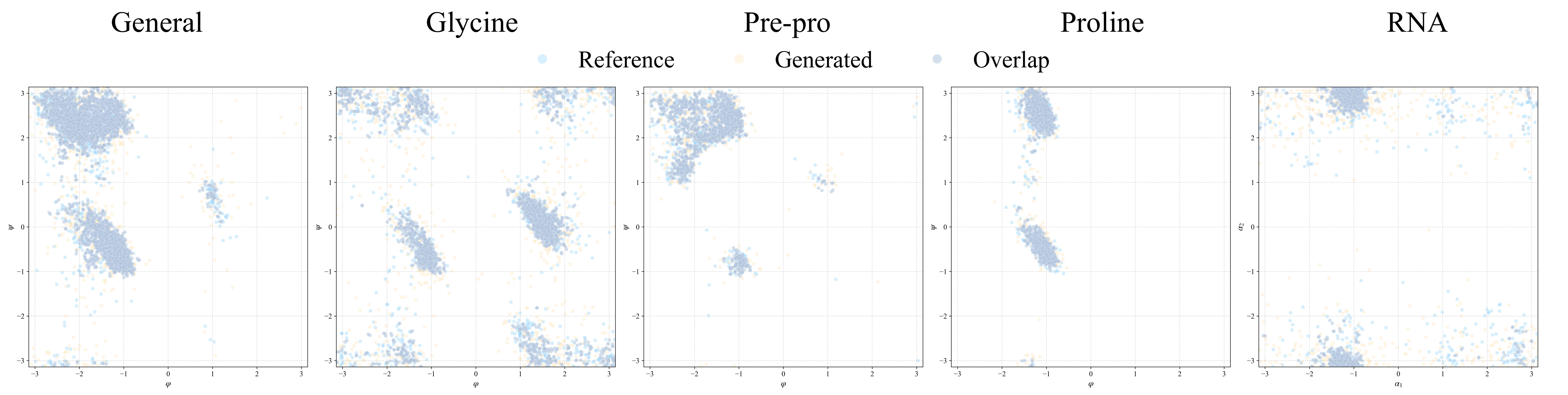}
\caption{Reference and WKBC-generated endpoint samples in wrapped angular coordinates for the General, Glycine, Pre-Pro, Proline, and RNA torsion tasks, together with their overlap. More overlap reflects improved generative performance.
The two-dimensional views visualize the agreement of the dominant periodic support between the generated and reference distributions.}
\label{fig:abelian-scatter-overlap}
\end{figure}

Fig.~\ref{fig:abelian-scatter-overlap} provides the corresponding visualization of the generated and target torsion support. Across the five tasks, the generated samples closely overlap with the reference distributions in the dominant high-density regions and reproduce their principal multimodal structures, indicating that WKBC effectively captures the periodic endpoint geometry and the major modes of the target torsion distributions.

\subsection{Verification against an explicit wrapped bridge}\label{subsec:apbm-wkbc-verification}

To verify whether RCCBM can recover the correct intermediate path distributions and approximate the optimal Doob control provided by WKBC, we perform the numerical consistency experiment. Since higher-order terms in the Magnus expansion generally do not vanish on non-Abelian groups, the RCCBM algorithm is developed for settings in which an explicit wrapped transition kernel is unavailable during training. Accordingly, this experiment applies RCCBM with endpoint calibration and conditional-control matching to recover the same Schrödinger bridge. We then compare RCCBM with the explicit-kernel WKBC reference on a two-dimensional torus task, where such a high-accuracy reference bridge is available. The comparison evaluates source and target discrepancies, intermediate one-time marginals, and forward-control regression against the WKBC optimal control on a common WKBC path pool, which can be  expressed as the following formulation: 

\begin{equation}\label{eq:empirical-forward-control-discrepancy}
    \widehat{\mathcal E}_{\rm F}(\bm u)
    =\frac{1}{N_{\rm path}N_t}
      \sum_{n=1}^{N_{\rm path}}\sum_{k=1}^{N_t}
      \norm{\bm u(t_k,X_{t_k}^{(n)})-
            \bm u_{\rm WKBC}^{\star}(t_k,X_{t_k}^{(n)})}^2,
\end{equation}
where \(\bm u_{\rm WKBC}^{\star}\) is the high-accuracy wrapped-kernel control used as the common numerical teacher. Thus the WKBC row measures the error of its finite numerical realization.

\begin{table}[!tbp]
\centering
\caption{Numerical verification of RCCBM against a high-accuracy WKBC reference bridge on $\mathbb{T}^2$. 
The comparison reports source and target Sinkhorn discrepancies, the mean intermediate-marginal Sinkhorn discrepancy, and the forward-control regression error $\widehat{E}_{F}$. 
Lower values are better.}
\label{tab:apbm-wkbc-verification}
\small
\setlength{\tabcolsep}{3.8pt}
\renewcommand{\arraystretch}{1.08}
\begin{tabular*}{\textwidth}{@{\extracolsep{\fill}}lcccc@{}}
\toprule
Method & Source Sinkhorn $\downarrow$ & Target Sinkhorn $\downarrow$& Mean int. Sinkhorn $\downarrow$& \(\widehat{\mathcal E}_{\rm F}\) $\downarrow$\\
\midrule
WKBC & 0.104 & 0.111 & 0.108 & 0.086 \\
RCCBM & 0.158 & 0.159 & 0.160 & 0.122 \\
\bottomrule
\end{tabular*}
\end{table}

As summarized in Table~\ref{tab:apbm-wkbc-verification}, WKBC achieves lower discrepancies across all four metrics. Nevertheless, RCCBM remains within the same numerical scale as the explicit reference, indicating that it can recover a reasonable approximation to the same Schrödinger bridge. In particular, the learned RCCBM control provides a close approximation to the optimal Doob control induced by WKBC.

\subsection{Compact non-Abelian \(SO(3)\) and \(U(n)\) bridges}
\label{subsec:compact-nonabelian-experiments}

\begin{table}[!tbp]
\centering
\caption{Log-likelihood comparison on the four $SO(3)$ benchmarks under the common TDM evaluation protocol. 
RSGM, TDM, and RCCBM are evaluated using the same likelihood convention; higher values indicate better likelihood fit.}
\label{tab:rcm-nonabelian-results}
\small
\setlength{\tabcolsep}{4.2pt}
\renewcommand{\arraystretch}{1.08}
\begin{tabular*}{\textwidth}{@{\extracolsep{\fill}}lccc@{}}
\toprule
Experiment & RSGM~\citep{debortoli2022riemannian} & TDM~\citep{tdm2025} & RCCBM \\
\midrule
\(SO(3)\)-GMM32    & $0.200$ & $0.292$ & $\mathbf{0.328}$ \\
\(SO(3)\)-GMM64    & $0.185$ & $0.174$ & $\mathbf{0.378}$ \\
\(SO(3)\)-GMM128   & $0.108$ & $0.112$ & $\mathbf{0.200}$ \\
\(SO(3)\)-RingBand & $0.401$ & $0.428$ & $\mathbf{0.834}$ \\
\bottomrule
\end{tabular*}
\end{table}

For \(SO(3)\)-GMM32, the RSGM and TDM log-likelihood values in Table~\ref{tab:rcm-nonabelian-results} are taken directly from the TDM study~\citep{tdm2025}, which reports the RSGM baseline of De Bortoli et al.~\citep{debortoli2022riemannian} alongside TDM. Our \(SO(3)\)-GMM32 benchmark uses the same dataset realization as the released TDM experimental code, so the published values are evaluated on the same underlying rotation distribution. For \(SO(3)\)-GMM64, \(SO(3)\)-GMM128, and \(SO(3)\)-RingBand, the RSGM and TDM entries are our retrained and re-evaluated baselines using the corresponding methods~\citep{debortoli2022riemannian,tdm2025} under the same dataset construction and likelihood protocol used for RCCBM. Accordingly, Table~\ref{tab:rcm-nonabelian-results} distinguishes the published GMM32 baseline values from the three reproduced baseline rows evaluated under the common protocol.

Next, we evaluate RCCBM on multimodal and narrow-support \(SO(3)\) distributions and on physically parameterized \(U(n)\) datasets. For \(SO(3)\), we consider synthetic rotation distributions of varying complexity. \(SO(3)\)-GMM32, \(SO(3)\)-GMM64, and \(SO(3)\)-GMM128 are multimodal mixture distributions with 32, 64, and 128 components, respectively, with increasing numbers of modes and distributional complexity across the three benchmarks. In contrast, \(SO(3)\)-RingBand has a narrow annular support and is designed to assess the ability of the model to capture structured rotation distributions concentrated on a geometrically restricted region of the group. For \(U(n)\), each data point represents the unitary time-evolution operator \(e^{-itH}\) of a quantum system, where \(H\) denotes the Hamiltonian; thus, modeling \(U(n)\) amounts to learning a distribution over quantum dynamical processes. We consider two classes of quantum systems: quantum oscillators with random potentials and the Transverse Field Ising Model (TFIM)~\citep{stinchcombe1973ising}, whose Hamiltonian contains random coupling parameters and transverse-field strengths, thereby yielding an ensemble of unitary evolution operators. Table~\ref{tab:rcm-nonabelian-results} compares the \(SO(3)\) log-likelihood scores of Riemannian score-based generative modeling (RSGM)~\citep{debortoli2022riemannian}, TDM, and RCCBM under the common TDM benchmark convention.

{
Table~\ref{tab:rcm-nonabelian-results} leverages the prior-to-data likelihood protocol defined in Section~\ref{subsec:experimental-protocol}. Following the TDM benchmark~\citep{tdm2025}, each \(SO(3)\)-valued training rotation is paired independently with a standard Gaussian Lie algebra velocity, giving the terminal law \(\rho_{\rm train}(\dd g)\varphi_3(\bm\xi)\,\dd\bm\xi\) on \(SO(3)\times\mathfrak{so}(3)\). After training, fresh trajectories from the frozen benchmark RCCBM controller are used to fit the marginal velocity score required by the probability-flow evaluator in Appendix~\ref{app:tdm-likelihood}. The \(SO(3)\)-GMM64-to-\(SO(3)\)-RingBand experiment below uses the two-observed-endpoint bridge formulation.
}

Under the common TDM evaluation protocol, RCCBM attains the highest log-likelihood in every row of Table~\ref{tab:rcm-nonabelian-results}. Its margins over the second best performer between RSGM and TDM are \(0.036\), \(0.193\), \(0.088\), and \(0.406\) for \(SO(3)\)-GMM32, \(SO(3)\)-GMM64, \(SO(3)\)-GMM128, and \(SO(3)\)-RingBand, respectively. These results demonstrate that RCCBM can more effectively capture complex multimodal structure and highly concentrated geometric support on non-Abelian groups. The extremely small group-constraint errors reported below further indicate that this distributional fidelity is achieved while preserving the intrinsic group structure of \(SO(3)\).

\begin{figure}[!tbp]
\centering
\includegraphics[width=0.98\textwidth,height=0.58\textheight,keepaspectratio]{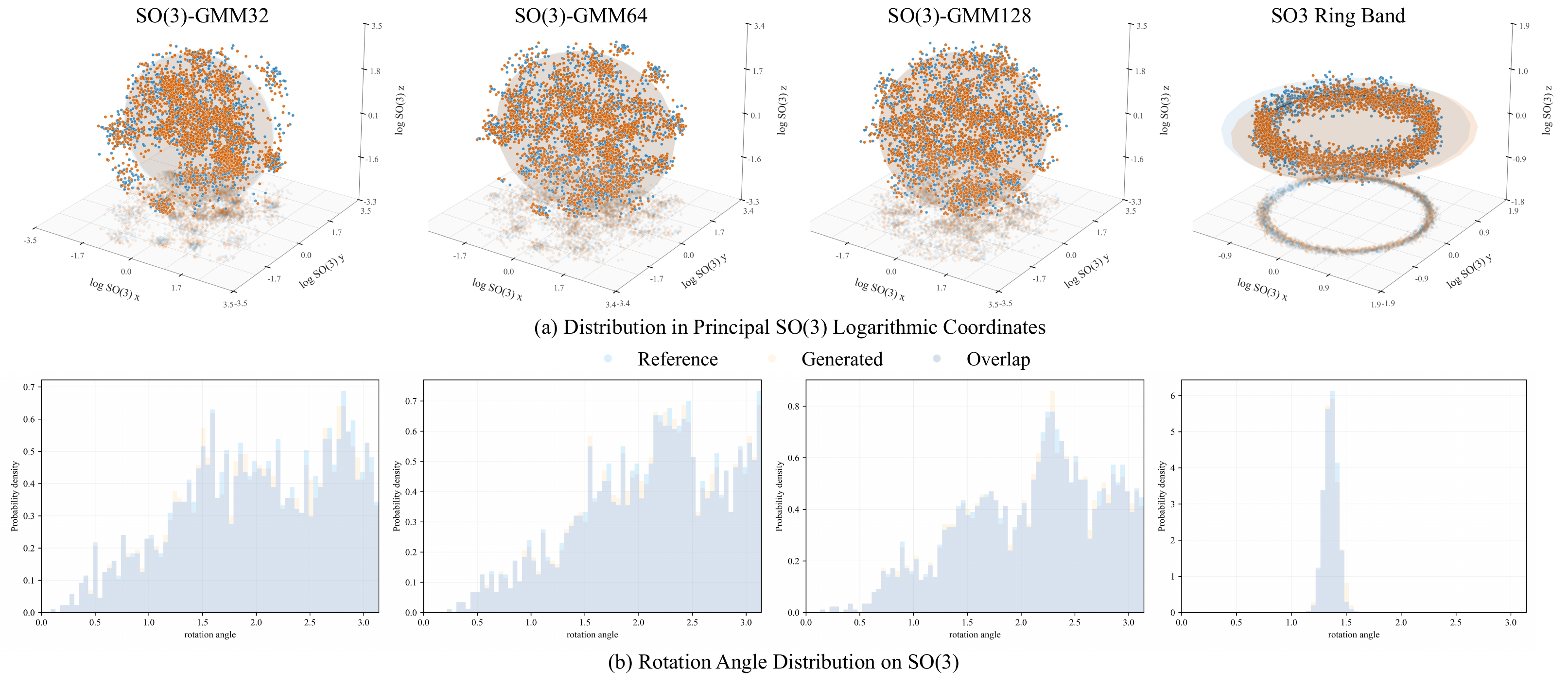}
\caption{Terminal samples for the $SO(3)$-GMM32, $SO(3)$-GMM64, $SO(3)$-GMM128, and $SO(3)$-RingBand benchmarks. 
Top: principal-log-coordinate scatter plots comparing RCCBM-generated samples with the corresponding reference distributions. 
Bottom: rotation-angle marginals for the same benchmarks. 
These complementary views assess the recovery of both multimodal distributions and the concentrated annular support of $SO(3)$-RingBand.}
\label{fig:so3-generated-distribution}
\end{figure}

Fig.~\ref{fig:so3-generated-distribution} provides qualitative principal-log-coordinate projections of the generated terminal rotations. The generated samples reproduce both the multimodal structure of the GMM benchmarks and the concentrated annular support of \(SO(3)\)-RingBand, complementing the likelihood comparison in Table~\ref{tab:rcm-nonabelian-results} and the intrinsic diagnostics reported below.

\paragraph{Distribution-to-distribution transport on \(SO(3)\)}
To further assess the distribution-to-distribution transport capability of RCCBM, we construct a Schr\"odinger bridge from the \(SO(3)\)-GMM64 source distribution to the \(SO(3)\)-RingBand target distribution. Both endpoint distributions are prescribed group-valued laws, and RCCBM learns the stochastic evolution between them under the observed-endpoint bridge formulation.

\begin{figure}[!tbp]
\centering
\includegraphics[width=\textwidth,height=0.58\textheight,keepaspectratio]{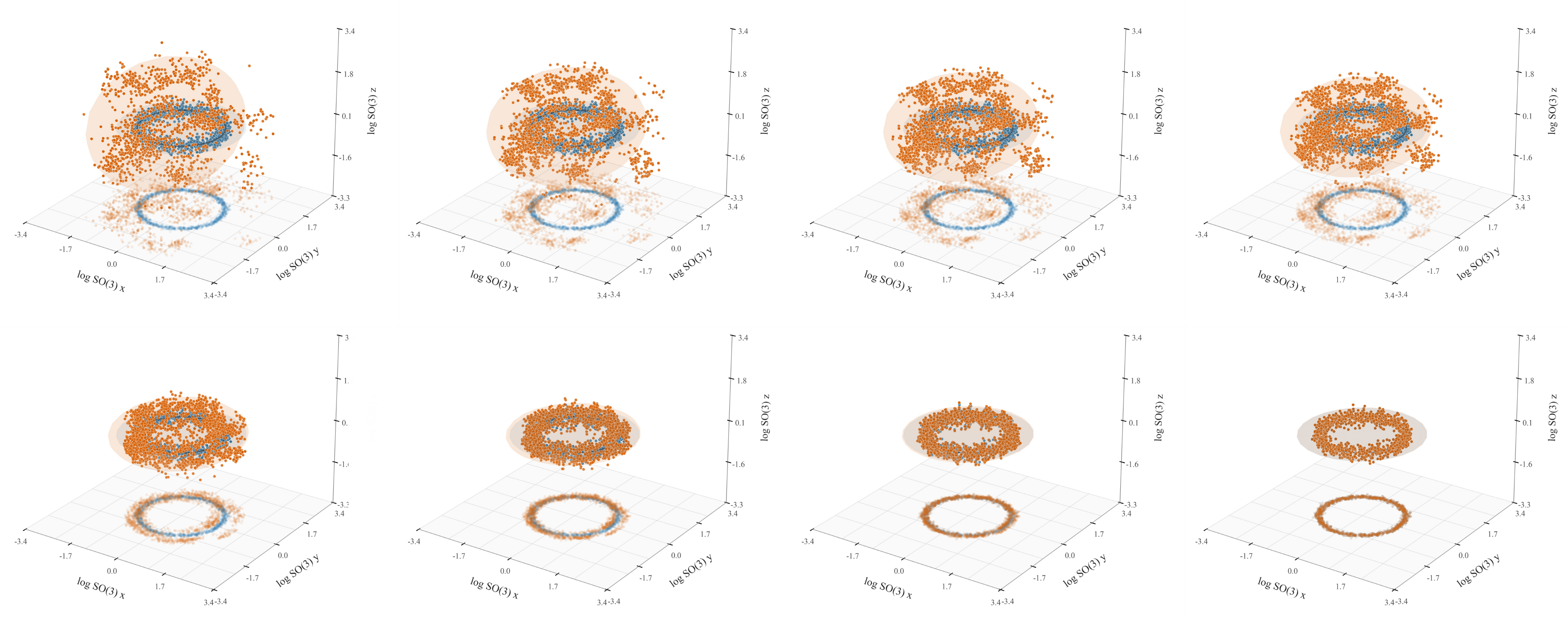}
\caption{Principal-log-coordinate projections of the RCCBM marginal evolution from $SO(3)$-GMM64 to $SO(3)$-RingBand. 
The snapshots illustrate the progressive reorganization of the multimodal source distribution toward the narrow annular support of the prescribed target distribution along the learned Schr\"odinger bridge.}
\label{fig:so3-ringband-dynamics}
\end{figure}

As shown in Fig.~\ref{fig:so3-ringband-dynamics}, the qualitative principal-log-coordinate projections reveal a progressive reorganization of the RCCBM intermediate marginals. Starting from the broad and multimodal support of \(SO(3)\)-GMM64, the generated distribution gradually concentrates toward the narrow annular support of \(SO(3)\)-RingBand, while samples away from the target ring progressively diminish. By the terminal stage, the generated marginal exhibits close qualitative agreement with the geometric structure of the target RingBand distribution.

This experiment shows that RCCBM matches the prescribed terminal law while realizing continuous stochastic transport between complex distributions on the non-Abelian group \(SO(3)\) and representing their intermediate evolution. It therefore highlights the distribution-to-distribution modeling capability of the Lie-group Schr\"odinger bridge beyond the conventional fixed-prior-to-data generative paradigm. \emph{We provide a video to show the complete distributional evolution process in Appendix.}


\paragraph{Theory-facing diagnostics}
Diagnostics aligned with the error components in Theorem~\ref{thm:rcm-conditional-consistency} are reported separately. Endpoint dual and Effective Sample Size (ESS) characterize endpoint calibration; held-out endpoint error and energy assess CondSOC; final relative MSE assesses the post-refinement controller; validation MMD measures terminal distributional agreement; and group error verifies preservation of the matrix-group constraint.

\begin{table}[!tbp]
\centering
\caption{Theory-facing RCCBM diagnostics on $SO(3)$ and $U(4)$, assessing the consistency and numerical stability of the main RCCBM components.}
\label{tab:apbm-diagnostics}
\scriptsize
\setlength{\tabcolsep}{0.7pt}
\renewcommand{\arraystretch}{1.08}
\begin{tabular*}{\textwidth}{@{\extracolsep{\fill}}lcccccccc@{}}
\toprule
Task & \makecell{End.\\dual} & ESS $\uparrow$ & \makecell{Marg.\\err. $\downarrow$} & \makecell{Teach.\\err. $\downarrow$} & \makecell{Teach.\\energy $\downarrow$} & \makecell{Rel.\\MSE $\downarrow$} & \makecell{Val.\\MMD $\downarrow$} & \makecell{Group err.\\$(\times10^{-7})$ $\downarrow$} \\
\midrule
\(SO(3)\)-GMM64 & 6.954 & 1359.900 & 0.001 & 0.010 & 17.930 & 0.157 & 0.014 & 2.880 \\
\(SO(3)\)-RingBand & 5.281 & 886.400 & 0.004 & 0.012 & 17.938 & 0.100 & 0.033 & 2.780 \\
\(U(4)\) oscillator & 6.499 & 592.000 & 0.030 & 0.052 & 100.616 & 0.038 & 0.087 & 5.260 \\
\(U(4)\) TFIM & 7.645 & 893.700 & 0.077 & 0.052 & 99.168 & 0.043 & 0.104 & 5.340 \\
\bottomrule
\end{tabular*}
\end{table}

Table~\ref{tab:apbm-diagnostics} shows that the individual RCCBM modules remain numerically stable and consistent with their intended roles. The relatively high ESS values, together with the small marginal and teacher errors, indicate effective endpoint calibration and reliable CondSOC teacher construction, while the low post-refinement relative MSE and validation MMD further support accurate recovery of the final Markov controller and terminal distribution. Meanwhile, the group-constraint errors remain on the order of \(10^{-7}\) across all tasks, indicating that the distributional fitting and control learning are achieved while preserving the intrinsic matrix-group structure of \(SO(3)\) and \(U(4)\).

\subsection{Protein Conformational Transition Pathway Generation}\label{subsec:protein-transition-pathway-generation}

We evaluate RCCBM on Protein Conformational Transition Pathway Generation, where source and target protein conformations define the endpoint states and the model generates stochastic transition pathways between them. In our formulation, this scientific task is represented as endpoint-conditioned conformational transport on the compact reduced Lie group state space, with observed molecular-dynamics (MD) segments used as reference pathways for evaluation.

\paragraph{Dataset}
mdCATH~\citep{mirarchi2024mdcath} is a large-scale all-atom molecular-dynamics dataset containing 5,398 protein domains from the CATH classification, simulated across independent replicas and temperatures. We use the 320K trajectories and construct the benchmark from domain 1jvmB00. Source-target pairs are selected from persistent metastable conformational regions identified using structural-stability and slow-coordinate analyses, so that each endpoint frame represents a well-characterized conformational basin.

Residue frames are gauge-aligned by Eq.~\eqref{eq:relative-frame-gauge} and reduced by Eq.~\eqref{eq:se3-reduction-map} to \(\G_{\rm red}=SO(3)^{N-1}\times\T^K\). We evaluate \(P=15\) prespecified held-out transitions, each separated by 100 stored MD frames, and compare every observed MD segment with 32 stochastic RCCBM trajectories. Path quality is measured by target-aware path tortuosity and normalized Lie algebra path roughness, where lower values indicate more direct and less locally oscillatory pathways; complete definitions and aggregation rules are given in Appendix~\ref{app:protein-transition-pathway-evaluation}.

\begin{table}[!tbp]
\centering
\caption{Pathway diagnostics for Protein Conformational Transition Pathway Generation on mdCATH domain 1jvmB00, including representative transitions and the aggregate over all $P=15$ held-out cases. 
RCCBM tortuosity and roughness are reported as mean $\pm$ standard deviation over 32 stochastic trajectories for each representative transition.}
\label{tab:protein-transition-pathway-quantitative}
\scriptsize
\setlength{\tabcolsep}{2.0pt}
\renewcommand{\arraystretch}{1.12}
\begin{tabular*}{\textwidth}{@{\extracolsep{\fill}}lcccccc@{}}
\toprule
Pair & \makecell{MD\\tortuosity} & \makecell{RCCBM\\tortuosity} & \makecell{Directness\\gain} & \makecell{MD\\roughness} & \makecell{RCCBM\\roughness} & \makecell{Roughness\\reduction} \\
\midrule
1041 & 51.4241 & $2.3262\pm0.0564$ & 95.48\% & 2.7495 & $0.02409\pm0.00131$ & 99.12\% \\
645  & 65.3731 & $3.1986\pm0.1135$ & 95.11\% & 2.8442 & $0.02159\pm0.00121$ & 99.24\% \\
638  & 63.1760 & $3.3629\pm0.1016$ & 94.68\% & 2.8703 & $0.02153\pm0.00103$ & 99.25\% \\
\midrule
All 15 & -- & -- & \textbf{95.06\%} & -- & -- & \textbf{99.21\%} \\
\bottomrule
\end{tabular*}
\end{table}

Table~\ref{tab:protein-transition-pathway-quantitative} reports representative transitions together with the aggregate result over all \(P=15\) held-out cases. RCCBM achieves an overall directness gain of 95.06\% and a roughness reduction of 99.21\%, with small within-transition variability across the 32 stochastic realizations. These results show that the learned bridge connects the prescribed conformational states through substantially less tortuous and less locally oscillatory pathways than the reference MD segments in the reduced representation. Since the tortuosity metric explicitly penalizes terminal residual, the improvement reflects pathway geometry while retaining the endpoint constraint.

\begin{figure}[!tbp]
\centering
\includegraphics[width=\textwidth,height=0.58\textheight,keepaspectratio]{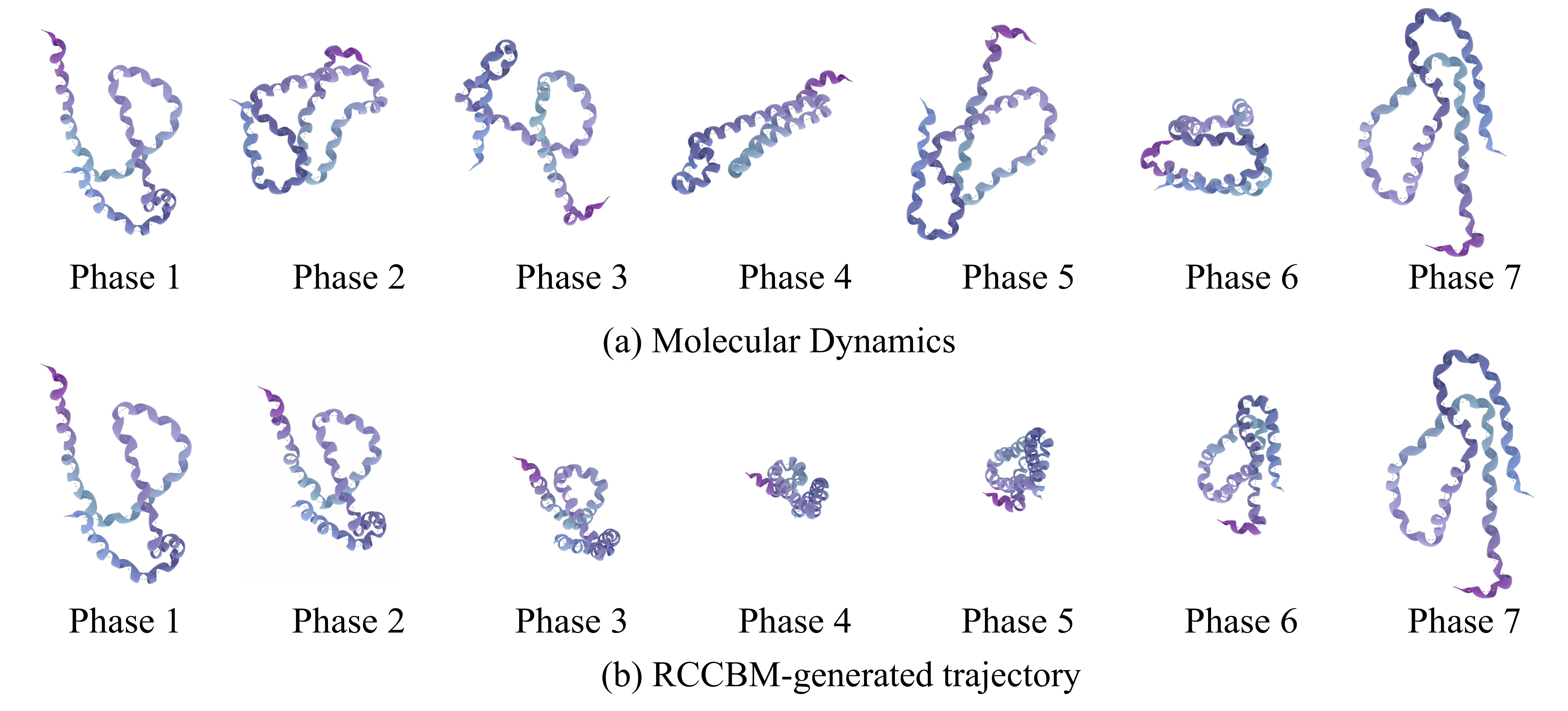}
\caption{Representative protein conformational transition pathways after the same gauge alignment used for quantitative evaluation. 
(a) Reference molecular-dynamics (MD) trajectory and (b) RCCBM-generated trajectory, each visualized at seven representative phases between the prescribed endpoint conformations.}
\label{fig:protein-transition-pathway-comparison}
\end{figure}

Fig.~\ref{fig:protein-transition-pathway-comparison} visualizes representative reference MD and RCCBM transition pathways after the same gauge alignment used for the quantitative evaluation. Compared with the more pronounced inter-phase conformational rearrangements observed along the reference MD trajectory, the RCCBM-generated pathway exhibits a smoother and more continuous conformational evolution, progressively connecting the prescribed source and target conformations. This qualitative behavior is consistent with the reduced path tortuosity and roughness reported above, indicating that RCCBM satisfies the endpoint constraints and generates protein conformational transition pathways that are geometrically more direct and exhibit weaker local oscillations.

\subsection{Ablation and numerical sensitivity analysis}\label{subsec:ablation-studies}

Ablations are evaluated at the validation-selected post-refinement checkpoint. Teacher, control-matching, refinement, and validation diagnostics are taken from the same selected outer iteration, ensuring that component comparisons are made for a consistent final model state.

\subsubsection{WKBC modeling ablation}

\begin{table}[!tbp]
\centering
\caption{WKBC modeling ablations on the General torsion task, comparing the full configuration with removal of endpoint smoothing and replacement of the production time-sampling scheme by uniform sampling. }

\label{tab:wkbc-modeling-ablation}
\scriptsize
\setlength{\tabcolsep}{1.4pt}
\renewcommand{\arraystretch}{1.08}
\begin{tabular*}{\textwidth}{@{\extracolsep{\fill}}lcccccc@{}}
\toprule
Variant & NLL $\downarrow$ & \makecell{Target Sink. $\downarrow$} & \makecell{Energy $\downarrow$} & \makecell{GeoRMSE $\downarrow$} & \makecell{Rama JSD $\downarrow$} & \makecell{ValidRate $\uparrow$} \\
\midrule
Full WKBC & \textbf{0.591} & \textbf{0.221} & \textbf{1.634} & \textbf{1.483} & \textbf{0.125} & \textbf{0.873} \\
w/o endpoint smoothing & 0.605 & 0.244 & 1.665 & 1.507 & 0.126 & 0.868 \\
Uniform time sampling & 0.618 & 0.231 & 1.657 & 1.505 & 0.128 & 0.870 \\
\bottomrule
\end{tabular*}
\end{table}

The General-task WKBC ablations use the same NLL reporting convention as Table~\ref{tab:tdm-table1-nll} and the same endpoint and path diagnostic conventions as Table~\ref{tab:torus-results}. As shown in Table~\ref{tab:wkbc-modeling-ablation}, the full WKBC configuration attains the most favorable value for every metric. Omitting endpoint smoothing or replacing the production time-sampling distribution by uniform sampling increases the principal endpoint and path discrepancies, supporting both design choices in the wrapped-kernel implementation.  



\subsubsection{RCCBM component ablation}

The RCCBM ablations isolate endpoint calibration, conditional-noise resampling, terminal replay, local-scale refinement, distribution refinement, spectral conditioning, and calibrated initial-state sampling. The \(SO(3)\) experiments use 2,048 generated test samples and the \(U(4)\) experiments use 1,024. All variants are evaluated with the fixed production feature map and validation protocol.

\begin{table}[!tbp]
\centering
\caption{Component-wise RCCBM ablations on $SO(3)$-GMM64 and $U(4)$-TFIM under the common validation protocol. 
The variants isolate endpoint calibration, conditional-noise resampling, terminal replay, local-scale and distribution refinement, spectral conditioning, and calibrated initial-state sampling.} 
\label{tab:apbm-ablation}
\scriptsize
\setlength{\tabcolsep}{1.0pt}
\renewcommand{\arraystretch}{1.08}
\begin{tabular*}{\textwidth}{@{\extracolsep{\fill}}llccccc@{}}
\toprule
Task & Variant & ESS $\uparrow$ & \makecell{Teacher err. $\downarrow$} & \makecell{Rel. MSE $\downarrow$} & \makecell{Val. MMD $\downarrow$} & \makecell{Local precision\\radius $\downarrow$} \\
\midrule
\multirow{6}{*}{SO(3)-GMM64}
& Full RCCBM & 1359.9 & 0.010 & 0.157 & 0.014 & 0.070 \\
& \makecell[l]{w/o endpoint\\calibration} & -- & 0.011 & 0.229 & 0.023 & 0.155 \\
& \makecell[l]{Fixed common\\random numbers} & 1359.9 & 0.056 & 0.166 & 0.016 & 0.067 \\
& w/o terminal replay & 1359.9 & 0.014 & 0.237 & 0.026 & 0.066 \\
& w/o local-scale term & 1359.9 & 0.010 & 0.159 & 0.026 & 0.074 \\
& \makecell[l]{w/o distribution\\refinement} & 1359.9 & 0.011 & 0.168 & 0.025 & 0.088 \\
\midrule
\multirow{3}{*}{\(U(4)\)-TFIM}
& Full RCCBM & 893.7 & 0.052 & 0.043 & 0.104 & 0.544 \\
& w/o spectral features & 540.3 & 0.066 & 0.057 & 0.532 & 0.711 \\
& \makecell[l]{w/o calibrated\\initial state} & 810.7 & 0.056 & 0.044 & 0.211 & 0.628 \\
\bottomrule
\end{tabular*}
\end{table}

Table~\ref{tab:apbm-ablation} reports the component-wise RCCBM ablations under the common validation protocol. The ablation results support the complementary roles of the RCCBM components. Fixing the common random numbers markedly increases the held-out teacher error, indicating that stochastic resampling is important for conditional-noise generalization. Omitting endpoint calibration or terminal replay substantially increases the post-refinement relative MSE and validation MMD, while omitting local-scale or distribution refinement also degrades terminal-distribution accuracy. The smaller local precision radius obtained by the fixed-noise or no-replay variants reflects the local nature of this diagnostic, which measures generated-to-target neighborhood proximity; a smaller local radius can occur alongside weaker global distributional agreement or control accuracy. On \(U(4)\)-TFIM, both spectral conditioning and calibrated initial-state sampling improve the distributional and local-precision diagnostics. Taken together, the results indicate complementary and cooperative roles for endpoint correction, conditional-control learning, replay, refinement, and initialization.

\subsubsection{WKBC numerical sensitivity}

To assess whether the numerical settings used by WKBC provide an appropriate balance between computational cost and numerical accuracy, we examine the sensitivity of the General-task results to the lattice radius, the Sinkhorn stopping tolerance, and the number of integration time steps while keeping the trained model fixed. Stability is evaluated using NLL, Target Sinkhorn divergence, and normalized-control energy, with the NLL values following the same reporting convention as Table~\ref{tab:tdm-table1-nll}.

\begin{table}[!tbp]
\centering
\caption{Numerical sensitivity of WKBC on the General torsion task with respect to lattice radius, Sinkhorn stopping tolerance, and the number of integration time steps. 
The $\Delta$ columns report relative changes from the corresponding production settings (lattice radius $2$, Sinkhorn tolerance $10^{-5}$, and $N=800$ time steps).}
\label{tab:wkbc-numerical-refinement}
\scriptsize
\setlength{\tabcolsep}{1.0pt}
\renewcommand{\arraystretch}{1.08}
\begin{tabular*}{\textwidth}{@{\extracolsep{\fill}}llccc!{\vrule width 0.45pt}ccc@{}}
\toprule
Component & Setting & NLL & \makecell{Target\\Sink.} & \makecell{Path\\energy} & $\Delta$NLL & \makecell{$\Delta$Sink.} & \makecell{$\Delta$energy} \\
\midrule
\multirow{3}{*}{Lattice radius}
& 2 & 0.591 & 0.221 & 1.634 & 0.000\% & 0.000\% & 0.000\% \\
& 3 & 0.590 & 0.220 & 1.620 & -0.169\% & -0.452\% & -0.857\% \\
& 4 & 0.588 & 0.223 & 1.627 & -0.508\% & +0.905\% & -0.428\% \\
\midrule
\multirow{3}{*}{Sinkhorn tolerance}
& $10^{-5}$ & 0.591 & 0.221 & 1.634 & 0.000\% & 0.000\% & 0.000\% \\
& $10^{-7}$ & 0.592 & 0.214 & 1.632 & +0.169\% & -3.167\% & -0.122\% \\
& $10^{-9}$ & 0.590 & 0.223 & 1.639 & -0.169\% & +0.905\% & +0.306\% \\
\midrule
\multirow{4}{*}{Time steps $N$}
& 200 & 0.860 & 0.753 & 2.685 & +45.516\% & +240.724\% & +64.321\% \\
& 400 & 0.674 & 0.332 & 1.885 & +14.044\% & +50.226\% & +15.361\% \\
& 600 & 0.593 & 0.230 & 1.642 & +0.338\% & +4.072\% & +0.490\% \\
& 800 & 0.591 & 0.221 & 1.634 & 0.000\% & 0.000\% & 0.000\% \\
\bottomrule
\end{tabular*}
\end{table}

Table~\ref{tab:wkbc-numerical-refinement} shows that further refinement of either numerical parameter yields only marginal and non-uniform changes in the reported metrics. For the integration grid, the additional \(N=600\) evaluation gives an NLL of \(0.593\), only \(0.338\%\) above the production \(N=800\) value \(0.591\), indicating that the likelihood estimate is already close to its production-resolution value by \(N=600\). Increasing the lattice radius from \(2\) to \(3\) reduces NLL, Target Sinkhorn divergence, and path energy by only \(0.169\%\), \(0.452\%\), and \(0.857\%\), respectively. Increasing the radius further to \(4\) produces a \(0.508\%\) reduction in NLL and a \(0.428\%\) reduction in path energy, while Target Sinkhorn divergence increases by \(0.905\%\). Likewise, tightening the Sinkhorn tolerance from \(10^{-5}\) to \(10^{-7}\) changes NLL and path energy by \(+0.169\%\) and \(-0.122\%\), respectively, while Target Sinkhorn divergence decreases by \(3.167\%\); the \(10^{-9}\) setting yields mixed changes across the three metrics. These results indicate that the production settings already operate in a numerically stable regime in which additional lattice expansion or stricter Sinkhorn convergence provides limited and metric-dependent gains. Therefore, the adopted lattice radius and stopping tolerance constitute a practical choice that preserves the generation quality of WKBC while avoiding unnecessary computational overhead.

\section{Limitations and Future Work}\label{sec:limitations}

Although the proposed framework has been validated across several Lie group regimes and scientific datasets, its current empirical evaluation remains concentrated on moderate-dimensional groups and selected molecular and geometric tasks; its behavior in substantially higher-dimensional groups, larger systems, and more heterogeneous scientific settings therefore remains to be characterized. More broadly, the present structure-adapted treatment focuses on compact Abelian groups, compact non-Abelian groups, and rigid frame problems after compact reduction, leaving genuinely noncompact Lie groups and more general geometric state spaces outside the current scope.


These limitations suggest several natural directions for further study. First, broader experiments on higher-dimensional groups, larger molecular systems, and more diverse scientific datasets could clarify the scalability and robustness of the framework. Second, extending the current analysis beyond compact or compact-reduced settings to genuinely noncompact Lie groups, quotient manifolds, and homogeneous spaces would substantially broaden its geometric applicability while retaining the observed-endpoint kinetic Schr\"odinger bridge formulation.

\section{Conclusions}\label{sec:conclusion}

We developed an observed-endpoint Schr\"odinger bridge framework for kinetic dynamics on Lie group state spaces \(\G\times\g\). The formulation imposes endpoint constraints through the observed variables, and the entropy projection determines the conditional law of the latent Lie algebra velocities. Under the endpoint-kernel condition, the resulting bridge admits a two-sided Schr\"odinger factorization and an associated kinetic Doob representation, providing a unified probabilistic foundation for distribution-to-distribution transport on Lie groups.

Building on this formulation, we designed two structure-adapted computational realizations. For compact Abelian groups, WKBC exploits the explicit wrapped kinetic kernel to perform endpoint calibration and recover the corresponding Doob control together with the entropy-optimal initial-state law. For compact non-Abelian groups, where such kernels are generally unavailable, RCCBM combines reciprocal endpoint calibration, mollified finite-energy conditional teachers, importance-corrected control regression, and constrained terminal refinement to learn a single Markov forward controller. The canonical teacher mixture remains within the reciprocal Markov class, which connects the conditional-control learning stage directly to the target Schr\"odinger bridge.

We further established a modular conditional consistency analysis that separates endpoint-calibration, mollification, teacher-approximation, control-regression, initialization, and time-discretization errors in bounded-Lipschitz path distance. This decomposition links the individual computational stages to a single path-law convergence statement and clarifies the role of each approximation through numerical experiments.

Experiments on protein and RNA torsions, \(SO(3)\), \(U(n)\), and protein conformational transition pathways demonstrate the effectiveness of the proposed framework across both compact Abelian and compact non-Abelian settings. In addition, the ablation studies, theory-facing diagnostics, and numerical sensitivity analyses support the roles of the principal WKBC and RCCBM components and confirm that the adopted numerical settings provide a stable and computationally practical realization of the developed bridge constructions.

\bibliographystyle{iclr2027_conference}
\bibliography{main}

@String(NIPS  = {NeurIPS})

@String(ICLR  = {International Conference on Learning Representations})

@String(TIM={IEEE Trans. Instrum. Meas.})

@String(NIPS= {Proc. Int. Conf. Neural Inf. Process. Syst.})

@String(ACLT= {Annu. Conf. Learn. Theory.})

@String(ICLR = {Proc. Int. Conf. Learn. Represent.})

@String(ICML = {Proc. Int. Conf. Mach. Learn.})

@String(AISTS = {Int. Conf. Artif. Intell. Stat.})

@inproceedings{kong2024klmc,
  title={Convergence of kinetic {Langevin} {Monte Carlo} on {Lie} groups},
  author={Kong, Lingkai and Tao, Molei},
  booktitle=ACLT,
  pages={3011--3063},
  year={2024}
}

@inproceedings{falorsi2019relie,
  title={Reparameterizing distributions on {Lie} groups},
  author={Falorsi, Luca and De Haan, Pim and Davidson, Tim R and Forr{\'e}, Patrick},
  booktitle=AISTS,
  pages={3244--3253},
  year={2019}
}

@inproceedings{rezende2020tori,
  title={Normalizing flows on tori and spheres},
  author={Rezende, Danilo Jimenez and Papamakarios, George and Racaniere, S{\'e}bastien and Albergo, Michael and Kanwar, Gurtej and Shanahan, Phiala and Cranmer, Kyle},
  booktitle=ICML,
  pages={8083--8092},
  year={2020}
}

@article{mathieu2020rcnf,
  title={Riemannian continuous normalizing flows},
  author={Mathieu, Emile and Nickel, Maximilian},
  journal=NIPS,
  volume={33},
  pages={2503--2515},
  year={2020}
}

@inproceedings{huang2022rdm,
  author    = {Chin-Wei Huang and Milad Aghajohari and Joey Bose and Prakash Panangaden and Aaron Courville},
  title     = {Riemannian Diffusion Models},
  booktitle = {Advances in Neural Information Processing Systems},
  volume    = {35},
  pages     = {2750--2761},
  year      = {2022}
}

@inproceedings{lou2020manifoldode,
  author    = {A. Lou and D. Lim and I. Katsman and L. Huang and Q. Jiang and S.-N. Lim and C. M. De Sa},
  title     = {Neural manifold ordinary differential equations},
  booktitle = NIPS,
  volume    = {33},
  year      = {2020}
}

@inproceedings{debortoli2022riemannian,
  author    = {V. De Bortoli and E. Mathieu and M. Hutchinson and J. Thornton and Y. W. Teh and A. Doucet},
  title     = {Riemannian score-based generative modelling},
  booktitle = NIPS,
  volume    = {35},
  year      = {2022},
  pages     = {2406--2422}
}

@inproceedings{ho2020ddpm,
  author    = {J. Ho and A. Jain and P. Abbeel},
  title     = {Denoising diffusion probabilistic models},
  booktitle = NIPS,
  volume    = {33},
  year      = {2020},
  pages     = {6840--6851}
}

@inproceedings{song2021sde,
  author    = {Y. Song and J. Sohl-Dickstein and D. P. Kingma and A. Kumar and S. Ermon and B. Poole},
  title     = {Score-based generative modeling through stochastic differential equations},
  booktitle = ICLR,
  year      = {2021}
}

@inproceedings{lipman2023fm,
  author    = {Y. Lipman and R. T. Q. Chen and H. Ben-Hamu and M. Nickel and M. Le},
  title     = {Flow matching for generative modeling},
  booktitle = ICLR,
  year      = {2023}
}

@inproceedings{debortoli2021sb,
  author    = {V. De Bortoli and J. Thornton and J. Heng and A. Doucet},
  title     = {Diffusion {Schr{\"o}dinger} bridge with applications to score-based generative modeling},
  booktitle = NIPS,
  volume    = {34},
  year      = {2021},
  pages     = {17695--17709}
}

@inproceedings{shi2023dsbm,
  author    = {Y. Shi and V. De Bortoli and A. Campbell and A. Doucet},
  title     = {Diffusion {Schr{\"o}dinger} bridge matching},
  booktitle = NIPS,
  volume    = {36},
  year      = {2023},
  pages     = {62183--62223}
}

@inproceedings{liu2024gsbm,
  author    = {G.-H. Liu and Y. Lipman and M. Nickel and B. Karrer and E. A. Theodorou and R. T. Q. Chen},
  title     = {Generalized {Schr{\"o}dinger} bridge matching},
  booktitle = ICLR,
  year      = {2024}
}

@article{schrodinger1931,
  author    = {E. Schr{\"o}dinger},
  title     = {{\"U}ber die Umkehrung der Naturgesetze},
  journal   = {Sitzungsber. Preuss. Akad. Wiss., Phys.-Math. Kl.},
  year      = {1931},
  pages     = {144--153}
}

@incollection{follmer1988,
  author    = {H. F{\"o}llmer},
  title     = {Random fields and diffusion processes},
  booktitle = {{\'E}cole d'{\'E}t{\'e} de Probabilit{\'e}s de Saint-Flour XV--XVII, 1985--1987},
  volume    = {1362},
  year      = {1988},
  pages     = {101--203}
}

@article{leonard2014,
  author    = {C. L{\'e}onard},
  title     = {A survey of the {Schr{\"o}dinger} problem and some of its connections with optimal transport},
  journal   = {Discrete Contin. Dyn. Syst. Ser. A},
  volume    = {34},
  year      = {2014},
  pages     = {1533--1574}
}

@article{chen2021siamreview,
  author    = {Y. Chen and T. T. Georgiou and M. Pavon},
  title     = {Stochastic control liaisons: {Richard Sinkhorn} meets {Gaspard Monge} on a {Schr{\"o}dinger} bridge},
  journal   = {SIAM Rev.},
  volume    = {63},
  year      = {2021},
  pages     = {249--313}
}

@inproceedings{chen2024rfm,
  author    = {R. T. Q. Chen and Y. Lipman},
  title     = {Flow matching on general geometries},
  booktitle = ICLR,
  year      = {2024}
}

@inproceedings{tdm2025,
  author    = {Y. Zhu and T. Chen and L. Kong and E. A. Theodorou and M. Tao},
  title     = {Trivialized momentum facilitates diffusion generative modeling on {Lie} groups},
  booktitle = ICLR,
  year      = {2025}
}

@article{stinchcombe1973ising,
  title={Ising model in a transverse field. I. Basic theory},
  author={Stinchcombe, R. B.},
  journal={Journal of Physics C: Solid State Physics},
  volume={6},
  number={15},
  pages={2459--2483},
  year={1973}
}

@incollection{pavon1991,
  author    = {M. Pavon and A. Wakolbinger},
  title     = {On free energy, stochastic control, and {Schr{\"o}dinger} processes},
  booktitle = {Modeling, Estimation and Control of Systems with Uncertainty},
  year      = {1991},
  pages     = {334--348}
}

@article{liu2025asbs,
  title={Adjoint {Schr{\"o}dinger} bridge sampler},
  author={Liu, Guan-Horng and Choi, Jaemoo and Chen, Yongxin and Miller, Benjamin K and Chen, Ricky TQ},
  journal=NIPS,
  volume={38},
  pages={15673--15708},
  year={2025}
}

@article{pooladian2025plugin,
  author    = {A.-A. Pooladian and J. Niles-Weed},
  title     = {Plug-in estimation of {Schr{\"o}dinger} bridges},
  journal   = {SIAM J. Math. Data Sci.},
  volume    = {7},
  year      = {2025},
  pages     = {1315--1336}
}

@article{chiarini2022kinetic,
  author    = {A. Chiarini and G. Conforti and G. Greco and Z. Ren},
  title     = {Entropic turnpike estimates for the kinetic {Schr{\"o}dinger} problem},
  journal   = {Electron. J. Probab.},
  volume    = {27},
  year      = {2022},
  number    = {131},
  pages     = {1--32}
}

@inproceedings{chen2023dmsb,
  author    = {T. Chen and G.-H. Liu and M. Tao and E. A. Theodorou},
  title     = {Deep momentum multi-marginal {Schr{\"o}dinger} bridge},
  booktitle = NIPS,
  volume    = {36},
  year      = {2023},
  pages     = {57058--57086}
}

@article{peluchetti2023dbmt,
  author    = {S. Peluchetti},
  title     = {Diffusion bridge mixture transports, {Schr{\"o}dinger} bridge problems and generative modeling},
  journal   = {J. Mach. Learn. Res.},
  volume    = {24},
  year      = {2023},
  number    = {374},
  pages     = {1--51}
}

@inproceedings{leach2022so3,
  author    = {A. Leach and S. M. Schmon and M. T. Degiacomi and C. G. Willcocks},
  title     = {Denoising diffusion probabilistic models on {SO(3)} for rotational alignment},
  booktitle = {ICLR 2022 Workshop on Geometrical and Topological Representation Learning},
  year      = {2022}
}

@article{conforti2025kl,
  author    = {G. Conforti and A. Durmus and Gentiloni Silveri, Marta},
  title     = {{KL} convergence guarantees for score diffusion models under minimal data assumptions},
  journal   = {SIAM J. Math. Data Sci.},
  volume    = {7},
  year      = {2025},
  pages     = {86--109}
}

@misc{bertolini2025egsm,
  author    = {M. Bertolini and T. Le and D.-A. Clevert},
  title     = {Generative modeling on {Lie} groups via {Euclidean} generalized score matching},
  year      = {2025},
  note      = {arXiv:2502.02513}
}

@misc{thornton2022rdsb,
  author    = {J. Thornton and M. Hutchinson and E. Mathieu and V. De Bortoli and Y. W. Teh and A. Doucet},
  title     = {Riemannian diffusion {Schr{\"o}dinger} bridge},
  year      = {2022},
  note      = {arXiv:2207.03024}
}

@article{mahmood2026compact,
  author    = {H. Mahmood and A. Halder and A. Akhtar},
  title     = {{Schr{\"o}dinger} bridge over a compact connected {Lie} group},
  journal   = {IEEE Control Syst. Lett.},
  volume    = {10},
  year      = {2026},
  pages     = {1141--1146}
}

@article{mirarchi2024mdcath,
  author    = {A. Mirarchi and T. Giorgino and De Fabritiis, Gianni},
  title     = {{mdCATH}: A large-scale {MD} dataset for data-driven computational biophysics},
  journal   = {Sci. Data},
  volume    = {11},
  year      = {2024},
  pages     = {1299}
}

\appendix

\newpage
\section*{Appendix}
This appendix accompanies the main article and provides complete numerical algorithms, auxiliary assumptions and identities, proofs of all main theoretical results, additional visual diagnostics, and reproducibility details. It first reports the implementation details, then presents the technical results and proofs, and finally presents the additional experiments and the experimental settings used for the main-text results.

\section{Algorithms and Numerical Approximation}\label{app:algorithmic-details}

\subsection{Observed-endpoint WKBC}\label{app:wkbc-implementation}

Choose positive quadrature rules
\begin{equation*}
\nu_{0,M}=\sum_{i=1}^{M_0}\omega_i^0\delta_{y_i^0},
\qquad
\nu_{T,M}=\sum_{j=1}^{M_T}\omega_j^T\delta_{y_j^T},
\end{equation*}
and discrete endpoint masses
\begin{equation*}
\rho_{0,M}=\sum_{i=1}^{M_0}a_i\delta_{y_i^0},
\qquad
\rho_{T,M}=\sum_{j=1}^{M_T}b_j\delta_{y_j^T}.
\end{equation*}
If \(r_{0T}=\dd\mathcal R_{0T}/\dd(\nu_0\otimes\nu_T)\), form
\begin{equation*}
R_{ij}^M
=\frac{r_{0T}(y_i^0,y_j^T)\omega_i^0\omega_j^T}
{\sum_{k,\ell}r_{0T}(y_k^0,y_\ell^T)\omega_k^0\omega_\ell^T}.
\end{equation*}
The discrete Schr\"odinger system is
\begin{equation*}
\Pi_{ij}=f_iR_{ij}^Mg_j,
\qquad
\sum_j\Pi_{ij}=a_i,
\qquad
\sum_i\Pi_{ij}=b_j.
\end{equation*}
Log-domain Sinkhorn updates are used. Positive interpolants \(f_{0,M}\), \(g_{T,M}\) define
\begin{align*}
h_{t,M}(x)
&=\sum_{j=1}^{M_T}q_{T-t}^{\mathcal O_T}(y_j^T\mid x)g_j\omega_j^T,\\
p_{0,M}(x)
&=Z_{0,M}^{-1}q_0(x)f_{0,M}(\mathcal O_0(x))h_{0,M}(x).
\end{align*}

{
For the prior-to-data likelihood specialization, the initial kinetic law is fixed to
\begin{equation}\label{eq:likelihood-haar-gaussian-prior}
\pi^\star(\dd g\,\dd\bm\xi)
=
\nu_{\G}(\dd g)\varphi_d(\bm\xi)\,\dd\bm\xi,
\qquad
\varphi_d(\bm\xi)
=
(2\pi)^{-d/2}\exp\!\left(-\frac12\norm{\bm\xi}^2\right),
\end{equation}
where \(\nu_{\G}\) is a normalized Haar measure.  Thus, the compact group coordinate is Haar-distributed, and the Lie algebra velocity is standard Gaussian, matching the stationary kinetic prior used by TDM~\citep{tdm2025}.  For this benchmark we also initialize the reference law with \(Q_0=\pi^\star\).  Following the TDM benchmark data augmentation, we use the full kinetic state at both benchmark endpoints:
\[
\mathcal O_0^{\rm lik}(g,\bm\xi)
=
\mathcal O_T^{\rm lik}(g,\bm\xi)
=
(g,\bm\xi),
\qquad
P_0^{\rm lik}=\pi^\star,
\qquad
P_T^{\rm lik}(\dd g\,\dd\bm\xi)
=
\rho_{\rm train}(\dd g)\varphi_d(\bm\xi)\,\dd\bm\xi .
\]
Equivalently, every group-valued training datum is paired independently with \(\bm\xi\sim\mathcal N(0,I_d)\).  This specification defines the LL/NLL benchmark.  The scientific observed-endpoint bridges retain their group-valued endpoint observations, with latent velocity conditionals determined by the entropy projection.  The same Sinkhorn scaling applies after replacing both full-state endpoint integrals by positive quadrature or Monte Carlo rules on \(\G\times\g\).

The explicit wrapped \emph{joint} kernel also propagates the calibrated source factor forward.  If \(\{(x_i^0,\omega_i^{0,x})\}_{i=1}^{M_x}\) is a positive quadrature rule for the base measure \(\mu\) on \(\X\), we can define
\begin{equation}\label{eq:wkbc-forward-factor-discrete}
\widehat h_{t,M}(x)
=
\sum_{i=1}^{M_x}
q_{0,t}(x\mid x_i^0)\,
q_0(x_i^0)\,
f_{0,M}(\mathcal O_0(x_i^0))\,
\omega_i^{0,x}.
\end{equation}
With \(p_{t,M}^{\rm WKBC}:=h_{t,M}\widehat h_{t,M}\), consequently,
\begin{equation}\label{eq:wkbc-eval-score-discrete}
\bm s_{{\rm eval},M}^{\rm WKBC}(t,x)
=
\nabla_{\bm\xi}\log h_{t,M}(x)
+
\nabla_{\bm\xi}\log\widehat h_{t,M}(x)
=
\nabla_{\bm\xi}\log p_{t,M}^{\rm WKBC}(x).
\end{equation}
Eq.~\eqref{eq:wkbc-forward-factor-discrete} evaluates the forward factor deterministically from the calibrated source factor returned by Sinkhorn.
}

\paragraph{Sufficient approximation conditions}
For each \(\tau>0\), assume: (i) the truncated wrapped kernels and their \(\bm\xi\)-gradients converge uniformly on \([0,T-\tau]\) after lattice enlargement; (ii) the quadrature rules converge for the uniformly bounded equicontinuous kernel families; (iii) the discrete kernel matrices remain between common positive bounds; (iv) Sinkhorn residuals vanish; (v) the positive interpolants converge in the endpoint norms induced by the kernel operators; and (vi) the self-normalized bridge endpoint weights used by the regression proposal satisfy the usual law-of-large-numbers consistency. Then projective contraction stability gives uniform convergence of the normalized scaling factors, while dominated differentiation gives convergence of \(h_{t,M}\) and its logarithmic gradient on preterminal strips. At the population level the bridge-weighted regression proposal is exactly the Schr\"odinger-bridge occupation law, so no separate domination constant between an off-policy regression measure and the controlled occupation measure is required. These assumptions provide one sufficient set of approximation conditions.

\begin{algorithm}[!htbp]
\caption{Observed-Endpoint WKBC}
\label{alg:wkbc}
\begin{algorithmic}[1]
\Require endpoint quadratures \(\{(y_i^0,a_i)\}\), \(\{(y_j^T,b_j)\}\), wrapped reference kernels, tolerance \(\epsilon_{\rm sk}\), regression model \(\bm s_\theta\)
\State Form the discrete reference matrix \(R^M=(R_{ij}^M)\) and initialize \(f_i,g_j>0\)
\Repeat
    \State \(f_i\gets a_i/\sum_j R_{ij}^Mg_j\) for all \(i\)
    \State \(g_j\gets b_j/\sum_i f_iR_{ij}^M\) for all \(j\)
\Until{both marginal residuals are below \(\epsilon_{\rm sk}\)}
\State Interpolate \(f_{0,M},g_{T,M}\) and compute \(h_{t,M}(x)=\sum_j q_{T-t}^{\mathcal O_T}(y_j^T\mid x)g_j\omega_j^T\)
\For{each regression minibatch}
    \State Sample \(t\sim\mathrm{Unif}[0,T]\) and a reference-path tuple \((X_0,X_t,Y_T)\)
    \State Set \(w_{\rm br}\propto f_{0,M}(\mathcal O_0(X_0))g_{T,M}(Y_T)\) and \(\bm S\gets\nabla_{\bm\xi}\log q_{T-t}^{\mathcal O_T}(Y_T\mid X_t)\)
    \State Update \(\theta\) by the self-normalized bridge-weighted loss \(\sum w_{\rm br}\|\bm s_\theta(t,X_t)-\bm S\|^2\)
\EndFor
\State Define \(p_{0,M}\propto q_0(f_{0,M}\circ\mathcal O_0)h_{0,M}\) and sample \(X_0\sim p_{0,M}\)
\State Set \(\bm u_\theta=\sqrt{2\gamma}\,\bm s_\theta\) and propagate the controlled kinetic dynamics
\State {For the prior-to-data likelihood benchmark, use \(P_0=\pi^\star\), compute \(\widehat h_{t,M}\) from Eq.~\eqref{eq:wkbc-forward-factor-discrete}, and set \(\bm s_{\rm eval}^{\rm WKBC}\) by Eq.~\eqref{eq:wkbc-eval-score-discrete}}
\State {Evaluate held-out LL/NLL with the probability-flow routine in Algorithm~\ref{alg:tdm-aligned-likelihood}}
\Ensure bridge paths, scaling-factor interpolants, and the calibrated initial-state sampler
\end{algorithmic}
\end{algorithm}

\subsection{RCCBM and the reduction wrapper}\label{app:lsapbm-algorithms}

For a learned endpoint pair \(\widehat\beta_N\) and mollifier \(\kappa_{\varepsilon_N}\), the population teacher target is the canonical law \(\Lambda^{\widehat\beta_N,\varepsilon_N}\) in Eq.~(3.18). It can be sampled directly when \(Z_{z,\varepsilon}\) is available, or by an importance proposal over \((x_0,y)\) with the exact Radon--Nikodym correction. Pair balancing, terminal-window oversampling, and teacher-quality proposals do not change the population risk after this correction.

\paragraph{Time-sampling density and its lower bound}
To retain a strictly positive density over the full horizon while allocating a prescribed fraction of samples to the terminal window, RCCBM uses the two-window mixture
\begin{equation}\label{eq:rccbm-production-time-density}
\lambda_{\rm bm}(t)
=\frac{1-r_{\rm term}}{a_{\rm term}T}
\mathbf 1_{[0,a_{\rm term}T)}(t)
+\frac{r_{\rm term}}{(1-a_{\rm term})T}
\mathbf 1_{[a_{\rm term}T,T]}(t),
\qquad 0\le t\le T.
\end{equation}
Equivalently, with probability \(1-r_{\rm term}\) we draw \(t\sim\mathrm{Unif}[0,a_{\rm term}T)\), and with probability \(r_{\rm term}\) we draw \(t\sim\mathrm{Unif}[a_{\rm term}T,T]\). Thus \(r_{\rm term}\) is exactly the terminal-window sampling fraction reported in Table~\ref{tab:essential-training-settings}, and
\begin{equation}\label{eq:rccbm-time-density-floor}
\underline\lambda
:=\inf_{0\le t\le T}\lambda_{\rm bm}(t)
=\frac{1}{T}
\min\!\left\{\frac{1-r_{\rm term}}{a_{\rm term}},
\frac{r_{\rm term}}{1-a_{\rm term}}\right\}.
\end{equation}
The quantity \(\underline\lambda\) is therefore not an independently tuned hyperparameter. The production values \(a_{\rm term}=0.75\), \(r_{\rm term}=0.55\) on \(SO(3)\), and \(r_{\rm term}=0.65\) on \(U(n)\) give \(\underline\lambda T=0.60\) and \(7/15\approx0.467\), respectively; these values are recorded in Table~\ref{tab:essential-training-settings}.

{
We use two feature maps with distinct roles.  The map \(\Phi_{\G}^{\rm id}\) is the fixed continuous injective identification map used by the MMD term, while \(\Phi_{\G}^{\rm net}\) is the feature vector supplied to the controller and may additionally contain spectral or local descriptors that support finite-sample optimization.  The explicit production choices and the injectivity argument are given in \cref{app:feature-identification-details}.
}

The direct control is a residual network
{
\begin{equation*}
\bm h_0=W_{\rm in}[\Phi_\G^{\rm net}(g),\bm\xi,\tau(t)],
\qquad
\bm h_{b+1}=\frac{\bm h_b+F_b(\bm h_b)}{\sqrt2},
\qquad
\bm u_\theta=W_{\rm out}\bm h_B.
\end{equation*}
}
The output clip \(B_N\) is a sieve parameter and increases with sample size; its tail bias is included in \(\varepsilon_{{\rm clip},N}\).

Fresh-rollout refinement uses
{
\begin{align*}
\mathcal L_{\rm dist}
&=w_M\MMD_k^2\!\left((\Phi_{\G}^{\rm id})_{\#}\widehat\rho_T,
(\Phi_{\G}^{\rm id})_{\#}\rho_T\right)
+w_\mu\norm{\widehat{\bm\mu}_g-\widehat{\bm\mu}_t}^2
+w_\Sigma\norm{\widehat{\bm\Sigma}_g-\widehat{\bm\Sigma}_t}_{\rm F}^2\\
&\quad+w_{\rm SW}\SW_2^2+w_{\rm loc}\mathcal L_{\rm loc}
+\frac{\eta_E}{2}\E\norm{\bm u_\theta}^2.
\end{align*}
}
The local term acts as a finite-sample neighborhood regularizer.

\begin{algorithm}[!htbp]
\caption{Reciprocal Conditional-Control Bridge Matching}
\label{alg:apbm-training}
\begin{algorithmic}[1]
\Require reference law \(Q\), endpoint data, centered endpoint networks \((\beta_0,\beta_T)\), direct control \(\bm u_\theta\), mollifier schedule \(\varepsilon_N\), Lie group integrator
\State Draw a reference endpoint pool and compute \(\widehat\beta_N\gets\arg\max_{\beta\in\mathcal B_N}\widehat{\mathscr D}_N(\beta)\)
\State Build \(g_{T,\varepsilon_N}^{\widehat\beta_N}\), self-normalized initial weights, and \(\widehat P_{0,N}^{\varepsilon_N}\)
\For{each outer epoch}
    \State Sample conditions \(z=(x_0,y)\) from a proposal targeting \(\Lambda^{\widehat\beta_N,\varepsilon_N}\)
    \State Solve the mollified CondSOC problem for each \(z\), sample replay times \(t\sim\lambda_{\rm bm}\) from Eq.~\eqref{eq:rccbm-production-time-density}, and generate independent-noise teacher tuples \((t,X_t,\bm v_t)\)
    \State Add teacher tuples to replay and attach the exact proposal-to-target importance weights
    \State Update \(\theta\) by \(\sum_j\widetilde\omega_j\|\bm u_\theta(t_j,X_j)-\bm v_j\|^2\)
    \State Save \(\theta_{\rm pre}\); obtain \(\theta_{\rm cand}\) from fresh-rollout distribution refinement
    \If{\(\widehat{\mathcal L}_{\rm match}(\theta_{\rm cand})\leq
       \widehat{\mathcal L}_{\rm match}(\theta_{\rm pre})+\delta_{\rm ref}\)}
        \State \(\theta\gets\theta_{\rm cand}\)
    \Else
        \State \(\theta\gets\theta_{\rm pre}\)
    \EndIf
\EndFor
\State Sample \(X_0\sim\widehat P_{0,N}^{\varepsilon_N}\) and generate with \(\bm u_\theta\) using the Lie group integrator
\Ensure learned forward controller, bridge samples, and separated calibration/teacher/control diagnostics
\end{algorithmic}
\end{algorithm}

\begin{algorithm}[!htbp]
\caption{Compact reduction wrapper for \(SE(3)^N\) data}
\label{alg:compact-reduced-apbm}
\begin{algorithmic}[1]
\Require rigid frame endpoints \(h_{1:N}\), reduction \(\mathcal R_{\rm red}\), reconstruction \(F\), RCCBM routine
\State Remove global rigid motion: \(\widetilde h_i\gets h_1^{-1}h_i\), \(i=2,\ldots,N\)
\State Extract reduced endpoints \(z\gets\mathcal R_{\rm red}(h_{1:N})\in\G_{\rm red}\)
\State Train RCCBM on the kinetic state space \(\G_{\rm red}\times\g_{\rm red}\) and generate reduced paths
\For{each generated reduced state \(z_t\)}
    \State Reconstruct Cartesian coordinates with \(F(z_t)\)
\EndFor
\Ensure reduced bridge paths and deterministic reconstructions, with diagnostics reported separately
\end{algorithmic}
\end{algorithm}

\subsection{Exact-OU splitting}\label{app:apbm-sampling-details}

For frozen \(\bm u_k=\bm u_\theta(t_k,g_k,\bm\xi_k)\), set
\begin{equation}\label{eq:rcm-exact-ou-coefficients}
a=e^{-\gamma\Delta t},
\qquad
c=\frac{\alpha(1-a)}{\gamma},
\qquad
\sigma=\alpha\sqrt{\frac{1-e^{-2\gamma\Delta t}}{2\gamma}}.
\end{equation}
The step is
\begin{equation*}
\bm\xi_{k+1}=a\bm\xi_k+c\bm u_k+\sigma\bm\varepsilon_k,
\qquad
g_{k+1}=g_k\Exp_\G(\Delta t\,\bm\xi_{k+1}),
\qquad
\bm\varepsilon_k\sim\mathcal N(0,I).
\end{equation*}

\begin{proposition}[Geometry and path-space discretization]\label{prop:rcm-exact-ou-geometry}
Assume the Ornstein--Uhlenbeck specialization \(\bm b(t,g,\bm\xi)=-\gamma\bm\xi\) with \(\gamma>0\). For \(\G=SO(3)\) or \(\G=U(n)\), take the metric \(d_\G\) used in the bounded-Lipschitz path distance to be the standard bi-invariant Riemannian distance induced by an \(\operatorname{Ad}\)-invariant inner product on \(\g\). Let the feedback control \(\bm u:[0,T]\times\G\times\g\to\g\) satisfy, for constants \(L_u,K_u<\infty\),
\begin{align*}
\norm{\bm u(t,g,\bm\xi)-\bm u(s,g',\bm\eta)}
&\le L_u\bigl(|t-s|+d_\G(g,g')+\norm{\bm\xi-\bm\eta}\bigr),\\
\norm{\bm u(t,g,\bm\xi)}
&\le K_u\bigl(1+\norm{\bm\xi}\bigr),
\end{align*}
and assume \(\E\norm{\bm\xi_0}^4<\infty\). Then the update above preserves \(SO(3)\) for skew-symmetric algebra coordinates and \(U(n)\) for skew-Hermitian coordinates, and the velocity update is exact for the frozen-control OU substep. If \(P_{\Delta t}^{\bm u}\) denotes the law of the continuous piecewise OU/exponential interpolation defined in the proof and \(P^{\bm u}\) is the exact controlled path law with the same initial law, then
\begin{equation}\label{eq:rccbm-splitting-error}
d_{\rm BL}(P_{\Delta t}^{\bm u},P^{\bm u})
\leq C_T\Delta t^{1/2}.
\end{equation}
The constant \(C_T\) depends only on \(T,\gamma,\alpha,L_u,K_u\), the fourth moment of the initial velocity, and the fixed group metric. In particular, it is uniform over any control sieve with common Lipschitz and linear-growth bounds.
\end{proposition}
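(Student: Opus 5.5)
The plan has three parts: group preservation and exactness of the velocity substep, construction of a continuous interpolation, and a strong-error coupling estimate that transfers to the bounded-Lipschitz path distance.

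\emph{Group preservation and exactness.} For \(SO(3)\), skew-symmetric \(\bm\xi_{k+1}\) gives \(\Exp_\G(\Delta t\,\bm\xi_{k+1})=e^{\Delta t\bm\xi_{k+1}}\) orthogonal with determinant one. For \(U(n)\), skew-Hermitian \(\bm\xi_{k+1}\) gives a unitary exponential. Products of group elements stay in the group, so \(g_{k+1}\in\G\).

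For the velocity, freeze \(\bm u_k\) on \([t_k,t_{k+1}]\) and solve the linear SDE \(\dd\bm\xi=(-\gamma\bm\xi+\alpha\bm u_k)\dd t+\alpha\dd\mathbf W\) by variation of constants. This gives the mean \(a\bm\xi_k+\alpha\bm u_k(1-a)/\gamma\) and the Itô-isometry variance \(\alpha^2(1-e^{-2\gamma\Delta t})/(2\gamma)\), which match the stated \(a,c,\sigma\). The velocity update is therefore exact for the frozen-control OU substep.

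\emph{Interpolation.} I define the continuous process on \([t_k,t_{k+1}]\) as follows. \(\bar{\bm\xi}_s\) solves the frozen-control OU SDE driven by the same Brownian motion \(\mathbf W\) as the exact process. The group coordinate is \(\bar g_s=g_k\Exp_\G((s-t_k)\bar{\bm\xi}_{t_{k+1}})\). To keep adaptedness, I may instead use \(\bar g_s=g_k\Exp((s-t_k)\bar{\bm\xi}_{t_k})\) and absorb the discrepancy with the \(\bm\xi_{k+1}\) version as an \(O(\Delta t^{1/2})\) term in \(L^2\). The resulting path is continuous, and at grid times its law equals that of the scheme.

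\emph{Coupling estimate.} Couple with the exact controlled process \((g_s,\bm\xi_s)\) using the same \(\mathbf W\) and the same initial state. Since \(d_{\rm BL}\) is bounded by \(\E\sup_s d(X_s,\bar X_s)\wedge 2\), it suffices to bound \(\E\sup_s(d_\G(g_s,\bar g_s)+\norm{\bm\xi_s-\bar{\bm\xi}_s})\le C_T\Delta t^{1/2}\). I will carry this out in four steps.

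\textbf{Step 1 (moments).} Linear growth of \(\bm u\) and the fourth moment of \(\bm\xi_0\) give \(\sup_s\E\norm{\bm\xi_s}^4\le C\) by Grönwall. The same bound holds for \(\bar{\bm\xi}\), uniformly in \(\Delta t\). Only \(K_u,\gamma,\alpha,T\) enter here, since \(\G\) is compact.

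\textbf{Step 2 (velocity error).} The difference \(e_s=\bm\xi_s-\bar{\bm\xi}_s\) satisfies \(\dd e=(-\gamma e+\alpha(\bm u(s,X_s)-\bm u(t_k,\bar X_{t_k})))\dd s\), with no noise because the Brownian motions coincide. Using the Lipschitz bound, the control difference is at most \(L_u(\Delta t+d_\G(g_s,\bar g_{t_k})+\norm{\bm\xi_s-\bar{\bm\xi}_{t_k}})\). The one-step increments \(\E\norm{\bar{\bm\xi}_s-\bar{\bm\xi}_{t_k}}^2\le C\Delta t\) and \(d_\G(\bar g_s,\bar g_{t_k})\le\Delta t\norm{\bar{\bm\xi}}\) supply the \(\Delta t^{1/2}\) order.

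\textbf{Step 3 (group error).} By bi-invariance of \(d_\G\), \(d_\G(gA,hB)\le d_\G(g,h)+d_\G(A,B)\). Also \(\frac{\dd}{\dd s}d_\G(g_s,\bar g_s)\le\norm{\bm\xi_s-\bar{\bm\xi}_{t_{k+1}}}\) almost everywhere, because left-trivialized velocities compare isometrically under the Ad-invariant metric: the right-hand side bounds the speed of \(g_s^{-1}\bar g_s\). Integrating gives \(\sup_s d_\G(g_s,\bar g_s)\le\int_0^T(\norm{e_s}+\norm{\bar{\bm\xi}_s-\bar{\bm\xi}_{t_{k+1}}})\dd s\).

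\textbf{Step 4 (closing).} Combine Steps 2 and 3 and apply Grönwall to \(\E\sup_{r\le s}(\norm{e_r}+d_\G)\). This yields \(C_T\Delta t^{1/2}\), with \(C_T\) depending only on \(T,\gamma,\alpha,L_u,K_u\), the fourth moment, and the metric. That proves uniformity over sieves with common bounds.

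The main obstacle is Step 3: the geometric comparison of the two group paths. It needs a clean inequality for the derivative of the bi-invariant distance away from the cut locus. I would handle this by bounding \(d_\G\) via the length of the curve \(s\mapsto g_s^{-1}\bar g_s\), which avoids differentiating the distance function at all. Its length is controlled by \(\norm{\operatorname{Ad}(\cdot)\bm\xi-\bar{\bm\xi}}\), and Ad-invariance removes the adjoint factor up to a commutator term. That commutator term is in turn bounded by \(\norm{\bm\xi}\,d_\G\) and absorbed by Grönwall.
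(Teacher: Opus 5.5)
Your architecture matches the paper's proof: the exact frozen-OU velocity step, the OU/exponential interpolation using the step-endpoint velocity \(\bar{\bm\xi}_{t_{k+1}}\) on \([t_k,t_{k+1}]\), synchronous coupling so the noise cancels in \(\bm\xi_s-\bar{\bm\xi}_s\), moment and increment bounds, a length bound for the group error, Gr\"onwall, and \(d_{\rm BL}\le\E d_{\rm path}\). Running Gr\"onwall on first moments instead of second moments is fine. The gap is in Step 3, which you correctly flag as the crux.

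\textbf{Why Step 3 fails as argued.} Write \(\hat{\bm\xi}_s\) for the reconstruction velocity. The curve \(w_s=g_s^{-1}\bar g_s\) satisfies \(w^{-1}\dot w=\hat{\bm\xi}-\operatorname{Ad}_{w^{-1}}\bm\xi\). Its norm is not bounded by \(\norm{\bm\xi_s-\hat{\bm\xi}_s}\). For example, two curves with the same constant velocity but different starting points stay at constant distance by right-invariance, yet \(w_s\) moves by conjugation.

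Your fallback is to split off \(\norm{(\operatorname{Ad}_{w^{-1}}-I)\bm\xi_s}\le C\norm{\bm\xi_s}\,d_\G(g_s,\bar g_s)\) and absorb it by Gr\"onwall. This does not deliver the claimed estimate, because the coefficient \(\norm{\bm\xi_s}\) is random and unbounded:
\begin{itemize}
\item Pathwise Gr\"onwall produces the factor \(\exp(C\int_0^T\norm{\bm\xi_s}\,\dd s)\). Its expectation need not be finite when \(\bm\xi_0\) has only four moments.
\item An expectation-level Gr\"onwall cannot bound \(\E[\norm{\bm\xi_s}\,d_\G(g_s,\bar g_s)]\) linearly in the error.
\item H\"older combined with \(d_\G\le\operatorname{diam}\G\) gives only a sublinear power of the error, and the resulting Bihari bound does not vanish as \(\Delta t\to0\).
\item Localizing on \(\{\sup_s\norm{\bm\xi_s}\le R\}\) trades \(e^{CRT}\) against a polynomial tail, which yields only a logarithmic rate.
\end{itemize}
So the step as proposed does not give \(C_T\Delta t^{1/2}\) with \(C_T\) depending only on the stated quantities.

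\textbf{The fix.} Take the right quotient \(r_s=g_s\bar g_s^{-1}\), as the paper does. Then \(\dot r=g(\bm\xi-\hat{\bm\xi})\bar g^{-1}\), so \(r^{-1}\dot r=\operatorname{Ad}_{\bar g_s}(\bm\xi_s-\hat{\bm\xi}_s)\). By \(\operatorname{Ad}\)-invariance its norm is exactly \(\norm{\bm\xi_s-\hat{\bm\xi}_s}\). Right-invariance then gives
\(d_\G(g_t,\bar g_t)=d_\G(r_t,e)\le d_\G(g_0,\bar g_0)+\int_0^t\norm{\bm\xi_s-\hat{\bm\xi}_s}\,\dd s\).
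This has no commutator term, no cut-locus issue, and requires no differentiation of the distance. Your inequality \(d_\G(gA,hB)\le d_\G(g,h)+d_\G(A,B)\), applied over infinitesimal steps, is the discrete form of the same fact; it is the conjugated curve \(g^{-1}\bar g\) that reintroduces the adjoint. With this bound, your Step 4 closes as written.

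Separately, the adapted variant using \(\bar{\bm\xi}_{t_k}\) is unnecessary. It would change the scheme's grid values and hence its controls. Adaptedness of the within-step group interpolation is never used, since the only stochastic integrals are in the velocity equation with adapted frozen controls.
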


\subsection{{TDM-aligned likelihood evaluation}}\label{app:tdm-likelihood}

{
The likelihood tables use the prior-to-data benchmark specified in Appendix~\ref{app:wkbc-implementation}.  Its source is the Haar-Gaussian law \(\pi^\star\) in Eq.~\eqref{eq:likelihood-haar-gaussian-prior}, and its terminal law is the full-state augmentation \(\rho_{\rm train}(\dd g)\varphi_d(\bm\xi)\,\dd\bm\xi\).  Each group-valued training datum is paired independently with a standard Gaussian Lie algebra velocity, following the TDM benchmark augmentation~\citep{tdm2025}.  The main-text \(SO(3)\)-GMM64-to-\(SO(3)\)-RingBand experiment uses the two-prescribed-endpoint scientific bridge protocol.  The likelihood construction below follows the intrinsic probability-flow change-of-variables principle used by TDM~\citep{tdm2025}.

Let \(P_t\) be the one-time law of a benchmark WKBC or RCCBM generator on \(\X=\G\times\g\), with density \(p_t\) relative to normalized Haar measure times Lebesgue measure.  Under the Ornstein--Uhlenbeck specialization \(\alpha=\sqrt{2\gamma}\), write its stochastic dynamics as
\begin{equation}\label{eq:likelihood-controlled-sde}
\dd g_t=T_eL_{g_t}(\bm\xi_t)\,\dd t,
\qquad
\dd\bm\xi_t
=
\bigl[-\gamma\bm\xi_t+\alpha\bm u_t(g_t,\bm\xi_t)\bigr]\dd t
+\alpha\,\dd\mathbf W_t.
\end{equation}
If
\begin{equation}\label{eq:likelihood-full-score}
\bm s_t(x)=\nabla_{\bm\xi}\log p_t(x),
\end{equation}
then the associated probability-flow vector field is
\begin{equation}\label{eq:likelihood-pf-vector-field}
\bm F_t(g,\bm\xi)
=
\left(
T_eL_g(\bm\xi),\,
-\gamma\bm\xi+\alpha\bm u_t(g,\bm\xi)
-\frac{\alpha^2}{2}\bm s_t(g,\bm\xi)
\right).
\end{equation}

\paragraph{Why Eq.~\eqref{eq:likelihood-pf-vector-field} has the correct marginals}
The Fokker--Planck equation of Eq.~\eqref{eq:likelihood-controlled-sde} is
\begin{align}
\partial_t p_t
&=
-\Div_{\G}\!\left(p_tT_eL_g(\bm\xi)\right)
-\Div_{\bm\xi}\!\left(p_t[-\gamma\bm\xi+\alpha\bm u_t]\right)
+\frac{\alpha^2}{2}\Delta_{\bm\xi}p_t \notag\\
&=
-\Div_{\G}\!\left(p_tT_eL_g(\bm\xi)\right)
-\Div_{\bm\xi}\!\left(
p_t\left[-\gamma\bm\xi+\alpha\bm u_t-\frac{\alpha^2}{2}\bm s_t\right]\right),
\label{eq:likelihood-continuity}
\end{align}
where the second equality uses
\(\Delta_{\bm\xi}p_t=\Div_{\bm\xi}(p_t\nabla_{\bm\xi}\log p_t)\).
Hence Eq.~\eqref{eq:likelihood-continuity} is exactly the continuity equation
\[
\partial_t p_t+\Div_{\mu}(p_t\bm F_t)=0.
\]
Therefore the ODE \(\dot Z_t=\bm F_t(Z_t)\) has the same one-time marginals as the stochastic benchmark generator whenever the score in Eq.~\eqref{eq:likelihood-full-score} is exact.

For WKBC, the full score follows directly from the two propagated factors:
\begin{equation}\label{eq:likelihood-wkbc-exact-score}
\bm s_t^{\rm WKBC}
=
\nabla_{\bm\xi}\log h_t
+
\nabla_{\bm\xi}\log\widehat h_t,
\end{equation}
with the numerical realization given by Eq.~\eqref{eq:wkbc-eval-score-discrete}.  This follows directly from \(p_t^{\rm SB}=h_t\widehat h_t\).

\paragraph{RCCBM marginal score for likelihood evaluation}
RCCBM generation uses the learned forward controller \(\bm u_\theta\).  Probability-flow likelihood evaluation additionally requires the full marginal velocity score.  We freeze the final prior-to-data RCCBM controller, generate fresh trajectories from \(P_0=\pi^\star\), and fit an auxiliary network \(\bm s_\psi(t,g,\bm\xi)\) on these trajectories.  With a time weight \(\lambda(t)>0\), the population implicit score-matching objective is
\begin{equation}\label{eq:rccbm-eval-ism}
\mathcal J_{\rm eval}(\psi)
=
\int_0^T
\lambda(t)\,
\E_{P_t^\theta}
\left[
\frac12\norm{\bm s_\psi(t,X_t)}^2
+
\Div_{\bm\xi}\bm s_\psi(t,X_t)
\right]\dd t.
\end{equation}
Assume \(p_t^\theta\) is positive and \(C^1\) in \(\bm\xi\), the score and candidate fields are square integrable, and the boundary term in the \(\bm\xi\)-integration by parts vanishes.  Writing
\(\bm s_t^\star=\nabla_{\bm\xi}\log p_t^\theta\), integration by parts yields
\[
\E_{P_t^\theta}\Div_{\bm\xi}\bm s_\psi
=
-\E_{P_t^\theta}
\inner{\bm s_\psi}{\bm s_t^\star}.
\]
Consequently,
\begin{equation}\label{eq:rccbm-eval-ism-identity}
\mathcal J_{\rm eval}(\psi)-\mathcal J_{\rm eval}(\bm s^\star)
=
\frac12\int_0^T
\lambda(t)\,
\E_{P_t^\theta}
\norm{\bm s_\psi(t,X_t)-\bm s_t^\star(X_t)}^2\,\dd t.
\end{equation}
Thus the population minimizer of Eq.~\eqref{eq:rccbm-eval-ism} is precisely the velocity score required by the probability-flow ODE.  The fit is performed after the final RCCBM controller and checkpoint are fixed, and the resulting \(\bm s_\psi\) supplies the velocity score used in held-out likelihood evaluation.

\begin{algorithm}[!htbp]
\caption{RCCBM likelihood-score fitting}
\label{alg:rccbm-eval-score}
\begin{algorithmic}[1]
\Require frozen prior-to-data RCCBM controller \(\bm u_\theta\), prior \(\pi^\star\), evaluation score \(\bm s_\psi\), time sampler
\For{each evaluation-score minibatch}
    \State Sample \(X_0\sim\pi^\star\) and generate a fresh RCCBM trajectory with the frozen \(\bm u_\theta\)
    \State Sample \(t\) and collect \(X_t=(g_t,\bm\xi_t)\)
    \State Update \(\psi\) using a Monte Carlo estimate of Eq.~\eqref{eq:rccbm-eval-ism}
\EndFor
\State Freeze \(\psi\) and set \(\bm s_{\rm eval}^{\rm RCCBM}=\bm s_\psi\)
\Ensure marginal-score estimator for held-out likelihood evaluation
\end{algorithmic}
\end{algorithm}

\paragraph{Intrinsic change of variables and divergence}
TDM~\citep{tdm2025} uses the intrinsic instantaneous change-of-variables identity
\begin{equation}\label{eq:likelihood-instantaneous-cov}
\frac{\dd}{\dd t}\log p_t(Z_t)
=
-\Div_{\mu}\bm F_t(Z_t).
\end{equation}
For the compact groups considered here, the left-invariant group field \(T_eL_g(\bm\xi)\) is divergence free with respect to Haar measure.  Hence
\begin{equation}\label{eq:likelihood-divergence}
\Div_{\mu}\bm F_t
=
-\gamma d
+
\Div_{\bm\xi}
\left(
\alpha\bm u_t-\frac{\alpha^2}{2}\bm s_t
\right).
\end{equation}
This is the direct analogue of the Lie-group NLL formula in TDM, with the additional controlled-drift term required by the bridge generator.  Following the stochastic-trace convention used in the TDM-style evaluator, one Gaussian probe \(\bm v\sim\mathcal N(0,I_d)\) is drawn for each likelihood trajectory and held fixed along its time integration:
\begin{equation}\label{eq:likelihood-hutchinson}
\widehat D_t
=
-\gamma d
+
\bm v^\top
\nabla_{\bm\xi}
\left(
\alpha\bm u_t-\frac{\alpha^2}{2}\bm s_t
\right)
\bm v .
\end{equation}
Conditioned on \(Z_t\), Eq.~\eqref{eq:likelihood-hutchinson} is an unbiased trace estimator of Eq.~\eqref{eq:likelihood-divergence}.

Let \(z_T=(g,\bm\zeta)\), where the auxiliary terminal velocity is drawn independently as
\(\bm\zeta\sim\mathcal N(0,I_d)\).  Integrating the probability-flow ODE backward from \(z_T\) to \(z_0\), or equivalently evaluating the forward characteristic joining them, gives
\begin{equation}\label{eq:likelihood-logjoint}
\log p_T(z_T)
=
\log\pi^\star(z_0)
-
\int_0^T
\Div_{\mu}\bm F_t(Z_t)\,\dd t .
\end{equation}
Because \(\pi^\star\) has unit density in the group coordinate relative to normalized Haar measure,
\[
\log\pi^\star(z_0)=\log\varphi_d(\bm\xi_0).
\]
Under the full-state likelihood target,
\[
p_T^{\rm lik}(g,\bm\xi)
=
p_{T,\G}^{\rm lik}(g)\varphi_d(\bm\xi),
\]
so the auxiliary Gaussian factor can be removed directly.  Following the TDM benchmark convention, the reported held-out group score is therefore
\begin{equation}\label{eq:likelihood-benchmark-score}
\widehat\ell(g)
=
\log p_T^{\rm lik}(g,\bm\zeta)
-
\log\varphi_d(\bm\zeta)
-
c_{\G},
\end{equation}
where \(\bm\zeta\sim\mathcal N(0,I_d)\) is the independently augmented terminal velocity and \(c_{\G}\) converts normalized Haar density to the benchmark volume convention.  For the experiments in the main text,
\begin{equation}\label{eq:likelihood-volume-offset}
c_{\T^d}=d\log(2\pi),
\qquad
c_{SO(3)}=\log(8\pi^2).
\end{equation}
The main tables use one independent auxiliary Gaussian draw per held-out sample, following the common benchmark evaluator.  The reported quantities are
\begin{equation}\label{eq:likelihood-ll-nll}
\mathrm{LL}
=
\frac1{N_{\rm test}}\sum_{i=1}^{N_{\rm test}}\widehat\ell(g_i),
\qquad
\mathrm{NLL}=-\mathrm{LL}.
\end{equation}
Accordingly, main-text Table~2 reports the negative sign of the same benchmark score for the torus tasks, while the \(SO(3)\) table reports its log-likelihood sign.

\begin{algorithm}[!htbp]
\caption{TDM-aligned held-out LL/NLL evaluation}
\label{alg:tdm-aligned-likelihood}
\begin{algorithmic}[1]
\Require Frozen benchmark generator \(\bm u\), evaluation score \(\bm s_{\rm eval}\), held-out group samples \(\{g_i\}_{i=1}^{N_{\rm test}}\), prior \(\pi^\star\), probability-flow grid
\For{\(i=1,\ldots,N_{\rm test}\)}
    \State Draw \(\bm\zeta_i\sim\mathcal N(0,I_d)\) and a fixed trace probe \(\bm v_i\sim\mathcal N(0,I_d)\)
    \State Set \(z_{i,T}=(g_i,\bm\zeta_i)\) and integrate Eq.~\eqref{eq:likelihood-pf-vector-field} backward to \(z_{i,0}\)
    \State Along the recovered characteristic, accumulate \(\widehat I_i\approx\int_0^T\widehat D_t\,\dd t\) using Eq.~\eqref{eq:likelihood-hutchinson}
    \State \(\widehat\ell_i\gets\log\pi^\star(z_{i,0})-\widehat I_i-\log\varphi_d(\bm\zeta_i)-c_{\G}\)
\EndFor
\State \(\mathrm{LL}\gets N_{\rm test}^{-1}\sum_i\widehat\ell_i\), \(\mathrm{NLL}\gets-\mathrm{LL}\)
\Ensure held-out LL and NLL under the common benchmark protocol
\end{algorithmic}
\end{algorithm}

For WKBC we use \(\bm s_{\rm eval}=\bm s_{\rm eval}^{\rm WKBC}\) from Eq.~\eqref{eq:wkbc-eval-score-discrete}; for RCCBM we use the frozen \(\bm s_\psi\) returned by Algorithm~\ref{alg:rccbm-eval-score}.  The full velocity score is obtained by explicit two-sided wrapped-kernel propagation for WKBC and by post hoc implicit score matching on fresh frozen-generator trajectories for RCCBM.
}

\section{Technical Details Supporting the Main Text}\label{app:technical-theory}

\subsection{Endpoint-kernel verification and latent-state lift}\label{app:core-bridge-details}

\begin{remark}\label{rem:endpoint-kernel-motivation}
The lower bound in Assumption~3.1 excludes inaccessible endpoint pairs, while the upper bound controls the endpoint likelihood ratio. These uniform bounds are used for Birkhoff--Hilbert contraction and \(L^\infty\) control of the scaling factors; they are stronger than the minimal entropy conditions used in general Schr\"odinger existence theory. For observations restricted to the group coordinate on a compact group, the velocity coordinate is integrated out before the condition is applied.
\end{remark}

\begin{lemma}[Compact group-marginal verification]\label{cor:compact-existence}\label{prop:compact-observed-kernel}
If \(\G\) is compact and connected, \(\mathcal O_0=\mathcal O_T=g\), and the relevant source, target, and reference group densities are continuous and strictly positive, then Assumption~3.1 holds. The H\"ormander and positivity verification for Eq.~(1.3) under (3.1) is given in \cref{app:proof-compact-kernel}.
\end{lemma}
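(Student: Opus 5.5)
The plan is to reduce Assumption~\ref{ass:endpoint-kernel} to a statement about the group-marginal transition density of the kinetic reference process, and then use compactness and continuity to get uniform bounds. With \(\mathcal O_0=\mathcal O_T=g\), we have \(\mathcal R_{0T}=(g_0,g_T)_\#Q\). We disintegrate \(Q\) over its initial state. Write \(Q_0(\dd g\,\dd\bm\xi)=q_0^{\G}(g)\,\nu_\G(\dd g)\,Q_0(\dd\bm\xi\mid g)\). Then \(\mathcal R_{0T}(\dd g_0\dd g_T)=q_0^{\G}(g_0)\,r(g_0,g_T)\,\nu_\G(\dd g_0)\nu_\G(\dd g_T)\), where
\[
r(g_0,g_T)=\int p_T\bigl(g_T\mid (g_0,\bm\xi_0)\bigr)\,Q_0(\dd\bm\xi_0\mid g_0),
\]
and \(p_T(\cdot\mid x)\) is the density of \(g_T\) under \(Q\) started at \(x\), with the velocity coordinate integrated out. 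The source and target laws have densities \(\rho_0=p_0\nu_\G\) and \(\rho_T=p_T^{\rm tgt}\nu_\G\) that are continuous and strictly positive. Hence \(k=q_0^{\G}r/(p_0\,p_T^{\rm tgt})\). Each of \(q_0^{\G},p_0,p_T^{\rm tgt}\) is continuous and positive on the compact set \(\G\), so it lies between positive constants. It therefore suffices to show that \(r\) is continuous and strictly positive on \(\G\times\G\); by compactness it is then bounded above and below, which gives \(m\le k\le M\) and mutual equivalence with \(\rho_0\otimes\rho_T\).

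The first step is smoothness of the transition density, via H\"ormander's condition for the generator \(L_t\). The diffusion fields are \(\partial_{\xi_i}\). Their brackets with the drift field \(A+\langle\bm b,\nabla_{\bm\xi}\rangle\) give \([\partial_{\xi_i},A]=\widetilde E_i\) plus \(\bm\xi\)-directions. Together with the \(\partial_{\xi_i}\), the fields \(\widetilde E_1,\dots,\widetilde E_d\) span \(T_g\G\oplus\g\) at every point. So the parabolic H\"ormander condition holds, and the joint kernel \(q_{0,t}(x'\mid x)\) is smooth in \((x,x')\) for \(t>0\). The time-dependence of \(\bm b\) is no problem for the OU specialization \eqref{eq:tdm-ref}.

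The second step is strict positivity, which I expect to be the main obstacle. I would use the Stroock--Varadhan support theorem. Replace \(\dd\mathbf W_t\) by a smooth control \(w\) in \(\dot{\bm\xi}=-\gamma\bm\xi+\alpha w\). Any smooth velocity curve \(\bm\xi(\cdot)\) is then realizable, and \(\dot g=g\bm\xi\) can reach every \(g_T\) from every \(g_0\) in time \(T\): \(\G\) is connected and compact, so \(\Exp_\G\) is surjective, and one can take \(g_T=g_0\Exp_\G(\bm\eta)\) with a constant velocity \(\bm\eta/T\), adjusted at the ends to match any initial velocity. Controllability plus the support theorem puts every point in the support of the law. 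Combining this with smoothness, Bony's strong maximum principle (or the Ben~Arous--L\'eandre positivity criterion for hypoelliptic kernels) upgrades full support to \(p_T(g_T\mid x)>0\) pointwise.

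Continuity of \(r\) follows by dominated convergence. It needs a locally uniform Gaussian-type tail bound of the kernel in \(\bm\xi_0\); I would obtain this for the OU velocity, whose marginal is explicitly Gaussian, together with a moment condition on \(Q_0(\cdot\mid g)\). Positivity of \(r\) follows because it integrates a strictly positive function against a probability measure. Once \(r\) is continuous and positive, compactness of \(\G\times\G\) gives the uniform constants \(m,M\).
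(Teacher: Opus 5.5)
Your outline is essentially the paper's proof. The bracket \([\partial_{\xi_i},A-\gamma\bm\xi\cdot\nabla_{\bm\xi}]=\widetilde E_i-\gamma\partial_{\xi_i}\) gives the global parabolic H\"ormander condition. A steering construction, combined with the Stroock--Varadhan support theorem and a hypoelliptic positivity criterion, gives a strictly positive state kernel. The velocities are then integrated out, and compactness bounds the ratio \(k\) above and below. Building the steering curve from surjectivity of \(\Exp_\G\), rather than the paper's \(C^2\) curve with prescribed endpoint jets, is an inessential variant.

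The step you state too loosely is the passage from support to pointwise positivity. Smoothness plus full support does not give \(p_t(x_0,x_1)>0\), since a smooth density can vanish at isolated points. The Ben Arous--L\'eandre criterion also needs more than reachability: the fixed-time endpoint map must be a submersion at the steering control. The paper checks exactly this. It linearizes around the steering control to the variational system \(\dot{\bm\eta}=[\bm\eta,\bm\xi_c]+\bm\zeta\), \(\dot{\bm\zeta}=-\gamma\bm\zeta+\alpha\bm v\) (left-trivialized group and velocity perturbations). It then shows through the adjoint equation that no nonzero terminal covector annihilates the range.

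If you go through Bony's principle instead, the correct argument is propagation of zeros. A zero of \(p_t(x_0,\cdot)\) propagates backward along admissible control curves. Fixed-time controllability then forces \(p_s(x_0,\cdot)\equiv0\) for \(s<t\), contradicting unit mass. Write that rather than ``upgrading full support''.

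Separately, the paper takes continuity of the reference group transition density as part of the hypothesis. Your dominated-convergence derivation of it needs, besides the moment bound you add, weak continuity of \(g\mapsto Q_0(\cdot\mid g)\), which the statement does not assume.
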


\begin{remark}\label{rem:compact-group-unbounded-velocity}
Although \(\X=\G\times\g\) is noncompact, group coordinate only observations integrate out the velocity coordinate before the endpoint kernel is formed.
\end{remark}

\begin{lemma}[Reference-compatible state lift]\label{lem:canonical-lift}
If \(Q_i(\dd g\dd\bm\xi)=Q_i^\G(\dd g)Q_i(\dd\bm\xi\mid g)\) and \(\rho\ll Q_i^\G\), then the unique state law with group marginal \(\rho\) minimizing \(\KL(\cdot\|Q_i)\) is
\begin{equation*}
\pi^\circ(\dd g\dd\bm\xi)=\rho(\dd g)Q_i(\dd\bm\xi\mid g).
\end{equation*}
The proof is given in \cref{app:proof-canonical-lift}.
\end{lemma}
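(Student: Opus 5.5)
The statement to prove is Lemma~\ref{lem:canonical-lift}. The approach is to apply the chain rule for relative entropy under disintegration with respect to the group coordinate. Let \(\pi\) be any state law on \(\G\times\g\) with group marginal \(\rho\) and \(\KL(\pi\|Q_i)<\infty\). Since \(\G\times\g\) is Polish, \(\pi\) disintegrates as \(\pi(\dd g\,\dd\bm\xi)=\rho(\dd g)\pi(\dd\bm\xi\mid g)\). Finiteness of the divergence forces \(\pi\ll Q_i\), hence \(\rho\ll Q_i^\G\). I will then establish the chain rule
\begin{equation*}
\KL(\pi\|Q_i)=\KL(\rho\|Q_i^\G)+\int_\G\KL\bigl(\pi(\cdot\mid g)\,\big\|\,Q_i(\cdot\mid g)\bigr)\rho(\dd g).
\end{equation*}
The idea is to write \(\dd\pi/\dd Q_i(g,\bm\xi)=\frac{\dd\rho}{\dd Q_i^\G}(g)\,\frac{\dd\pi(\cdot\mid g)}{\dd Q_i(\cdot\mid g)}(\bm\xi)\) and integrate the logarithm against \(\pi\). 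This Radon--Nikodym factorization holds \(\rho\)-a.e. in \(g\), and the conditional densities can be chosen jointly measurable, e.g.\ via a martingale/conditional expectation argument with respect to the \(\sigma\)-algebra generated by the group coordinate.

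The first term \(\KL(\rho\|Q_i^\G)\) does not depend on the choice of \(\pi\) within the constraint set. The second term is nonnegative and vanishes if and only if \(\pi(\cdot\mid g)=Q_i(\cdot\mid g)\) for \(\rho\)-a.e.\ \(g\). Hence \(\pi^\circ=\rho\otimes Q_i(\cdot\mid g)\) attains the infimum \(\KL(\rho\|Q_i^\G)\); it is admissible because \(\rho\ll Q_i^\G\) gives \(\pi^\circ\ll Q_i\). Uniqueness follows from the equality case, because two disintegrations with the same marginal that agree \(\rho\)-a.e.\ define the same measure. Alternatively, uniqueness follows from strict convexity of \(\KL(\cdot\|Q_i)\) on the convex constraint set.

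One edge case needs separate treatment. If \(\KL(\rho\|Q_i^\G)=\infty\), every admissible \(\pi\) has infinite divergence and the minimizer is not unique in a meaningful sense. I would therefore either read the statement as assuming \(\KL(\rho\|Q_i^\G)<\infty\), or interpret it as saying that \(\pi^\circ\) is the canonical lift that satisfies the identity above with value \(+\infty\).

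The main obstacle is the rigorous chain rule with possibly infinite terms. This includes measurability of \(g\mapsto\KL(\pi(\cdot\mid g)\|Q_i(\cdot\mid g))\) and justifying the splitting of the integral of the logarithm, whose negative part must be controlled. I would handle this through the Donsker--Varadhan variational formula, or by citing the standard additivity of relative entropy under disintegration (e.g.\ L\'eonard's survey). Only the inequality \(\KL(\pi\|Q_i)\ge\KL(\rho\|Q_i^\G)\) is actually needed, with equality characterized; the inequality itself is also exactly the data-processing inequality for the projection onto \(\G\).
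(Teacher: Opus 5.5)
Your proposal is correct and follows the paper's own proof: both apply the entropy chain rule along the group coordinate and conclude from the nonnegativity of the conditional term, which vanishes exactly when \(\pi(\cdot\mid g)=Q_i(\cdot\mid g)\) for \(\rho\)-a.e.\ \(g\). Your caveat that uniqueness is only meaningful when \(\KL(\rho\|Q_i^{\G})<\infty\), which \(\rho\ll Q_i^{\G}\) alone does not guarantee, is a fair refinement that the paper's one-line argument leaves implicit.
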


\subsection{Auxiliary Doob and path-stability identities}\label{app:doob-auxiliary-details}

\begin{lemma}[Control-law relative entropy]\label{lem:control-kl-stability}
For controlled laws with the same initial law and diffusion coefficient,
\begin{equation*}
\KL(P^{\bm u}\|P^{\bm v})
=\frac12\E_{P^{\bm u}}\int_0^T\norm{\bm u_t-\bm v_t}^2\,\dd t
\end{equation*}
whenever the Girsanov exponentials are true martingales. In particular,
\begin{equation*}
\norm{P_t^{\bm u}-P_t^{\bm v}}_{\rm TV}
\leq\frac12\left(\E_{P^{\bm u}}\int_0^T\norm{\bm u_s-\bm v_s}^2\,\dd s\right)^{1/2}.
\end{equation*}
The proof is given in \cref{app:proof-control-kl}.
\end{lemma}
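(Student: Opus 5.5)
The plan is to obtain the identity from Girsanov's theorem applied to the velocity coordinate, since both controlled laws share the same group reconstruction \(\dd g_t=T_eL_{g_t}(\bm\xi_t)\,\dd t\), the same initial law, and the same noise \(\alpha\,\dd\mathbf W_t\). Write the two controlled dynamics as
\(\dd\bm\xi_t=[\bm b_t+\alpha\bm u_t]\dd t+\alpha\,\dd\mathbf W_t\) and \(\dd\bm\xi_t=[\bm b_t+\alpha\bm v_t]\dd t+\alpha\,\dd\mathbf W_t\). The drifts differ by \(\alpha(\bm u_t-\bm v_t)\) and the diffusion is \(\alpha I\), so the Girsanov kernel is \(\alpha^{-1}\cdot\alpha(\bm u_t-\bm v_t)=\bm u_t-\bm v_t\). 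This normalization is why the normalized controls enter without factors of \(\alpha\).

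First, using the assumed true-martingale property of the Girsanov exponential, I would write
\[
\log\frac{\dd P^{\bm u}}{\dd P^{\bm v}}
=\int_0^T\inner{\bm u_t-\bm v_t}{\dd\mathbf W_t^{\bm v}}-\frac12\int_0^T\norm{\bm u_t-\bm v_t}^2\dd t,
\]
where \(\mathbf W^{\bm v}\) is the driving Brownian motion under \(P^{\bm v}\). The initial laws coincide, so there is no initial density ratio. The group coordinate is a deterministic functional of \((g_0,\bm\xi_{[0,t]})\) through the left-invariant ODE, so the path law on \(\X\) is determined by the law of \((X_0,\bm\xi)\). The density therefore needs no additional factor for \(g\).

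Second, under \(P^{\bm u}\) the process \(\mathbf W^{\bm v}_t=\mathbf W^{\bm u}_t+\int_0^t(\bm u_s-\bm v_s)\dd s\). Substituting this gives
\[
\log\frac{\dd P^{\bm u}}{\dd P^{\bm v}}=\int_0^T\inner{\bm u_t-\bm v_t}{\dd\mathbf W^{\bm u}_t}+\frac12\int_0^T\norm{\bm u_t-\bm v_t}^2\dd t.
\]
Taking \(P^{\bm u}\)-expectations yields the formula, provided the stochastic integral has zero mean.

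This zero-mean step is the main technical point. If \(\E_{P^{\bm u}}\int\norm{\bm u-\bm v}^2<\infty\), the integral is an \(L^2\) martingale and its expectation vanishes. If that expectation is infinite, I would localize with stopping times \(\tau_n\) at which \(\int_0^{\tau_n}\norm{\bm u-\bm v}^2\) reaches \(n\). On each stopped \(\sigma\)-field I would apply the identity, then use lower semicontinuity and monotonicity of KL under restriction to \(\mathcal F_{\tau_n}\uparrow\mathcal F_T\). This gives equality in \([0,\infty]\), so both sides are infinite together.

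Finally, for the total-variation bound, the time-\(t\) marginal is a pushforward of the path law. The data-processing inequality gives \(\KL(P^{\bm u}_t\|P^{\bm v}_t)\le\KL(P^{\bm u}\|P^{\bm v})\). Pinsker's inequality \(\norm{\mu-\nu}_{\rm TV}\le\sqrt{\KL(\mu\|\nu)/2}\), under the paper's TV normalization, combined with the identity gives \(\sqrt{\tfrac14\E\int\norm{\bm u-\bm v}^2}\), which is the stated bound with constant \(\tfrac12\).
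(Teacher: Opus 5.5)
Your proposal is correct and follows the paper's proof: the relative-entropy form of Girsanov's theorem gives the identity, and Pinsker's inequality combined with contraction under the time-$t$ evaluation map gives the marginal bound. You apply data processing to KL before Pinsker, whereas the paper applies Pinsker on path space and then contracts total variation; the two orderings are equivalent, and your localization for the infinite-energy case adds rigor that the paper's one-line proof leaves implicit.
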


\begin{lemma}[Residual-to-path estimate]\label{thm:residual-stability}
For a positive approximate backward factor \(\widetilde h\), the Kolmogorov residual and endpoint log-factor mismatch control the path-law KL error of the induced Doob control. The exact identity and bound are Eqs.~\eqref{eq:exact-residual-identity}--\eqref{eq:residual-kl-bound} in \cref{app:residual-details}.
\end{lemma}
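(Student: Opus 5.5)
The approach is to compare the two path laws through their Radon--Nikodym derivatives with respect to the reference law \(Q\), computing the derivative for the approximate factor by Itô's formula applied to \(\log\widetilde h\). Let \(\widetilde h_t>0\) be \(C^{1,2}\) on \([0,T)\times\X\) with the needed integrability. Define
\[
\widetilde{\bm u}_t=\alpha\nabla_{\bm\xi}\log\widetilde h_t,
\qquad
r_t=\frac{(\partial_t+L_t)\widetilde h_t}{\widetilde h_t},
\]
so that \(r\equiv0\) exactly when \(\widetilde h\) solves the backward Kolmogorov equation of Theorem~\ref{thm:bridge-factors}. Let \(\widetilde P\) be the law of the controlled dynamics Eq.~\eqref{eq:common-forward-bridge} with \(\bm u^*\) replaced by \(\widetilde{\bm u}\), started from an initial law \(\widetilde P_0\).

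\textbf{Step 1 (Itô identity).} Under \(Q\), using the generator \(L_t\) and the fact that noise enters only through \(\bm\xi\), Itô's formula gives
\[
\dd\log\widetilde h_t(X_t)=\Bigl(r_t-\tfrac{\alpha^2}{2}\norm{\nabla_{\bm\xi}\log\widetilde h_t}^2\Bigr)\dd t+\alpha\inner{\nabla_{\bm\xi}\log\widetilde h_t}{\dd\mathbf W_t}.
\]
Combining this with Girsanov's theorem for the drift \(\alpha\widetilde{\bm u}\) yields
\[
\frac{\dd\widetilde P}{\dd Q}=\frac{\dd\widetilde P_0}{\dd Q_0}(X_0)\,\frac{\widetilde h_T(X_T)}{\widetilde h_0(X_0)}\exp\Bigl(-\int_0^T r_t(X_t)\,\dd t\Bigr).
\]

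\textbf{Step 2 (same formula for the exact bridge).} Applying the same computation to the exact factor \(h\), for which \(r\equiv0\), and using \(h_T=g_T\circ\mathcal O_T\) from Eq.~\eqref{eq:factor-boundary-traces}, gives
\[
\frac{\dd P^{\rm SB}}{\dd Q}=\frac{\dd P^{\rm SB}_0}{\dd Q_0}(X_0)\,\frac{h_T(X_T)}{h_0(X_0)}.
\]
This is consistent with Eq.~\eqref{eq:observed-path-factor}.

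\textbf{Step 3 (exact residual identity).} Taking the log-ratio of Steps 1 and 2 and integrating against \(P^{\rm SB}\) gives the exact identity
\[
\KL(P^{\rm SB}\|\widetilde P)=\KL(P^{\rm SB}_0\|\widetilde P_0)+\E_{P^{\rm SB}}\Bigl[\log\tfrac{h_T}{\widetilde h_T}(X_T)-\log\tfrac{h_0}{\widetilde h_0}(X_0)\Bigr]+\E_{P^{\rm SB}}\int_0^T r_t(X_t)\,\dd t.
\]

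\textbf{Step 4 (bound).} To get a bound in which the endpoint terms appear only through the terminal mismatch, choose \(\widetilde P_0\) as the calibrated initial law \(\propto q_0(f_0\circ\mathcal O_0)\widetilde h_0\); this is the same initialization as in Corollary~\ref{thm:wkbc}. With this choice the \(h_0/\widetilde h_0\) term cancels against the initial density ratio, up to the log of a normalizing constant. That constant is bounded by \(\norm{\log g_T\circ\mathcal O_T-\log\widetilde h_T}_\infty+\int_0^T\norm{r_t}_\infty\dd t\), which follows from the Feynman--Kac representation \(\widetilde h_0=\E_Q[\widetilde h_T e^{-\int r}\mid X_0]\). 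The result is
\[
\KL(P^{\rm SB}\|\widetilde P)\leq2\norm{\log(g_T\circ\mathcal O_T)-\log\widetilde h_T}_{L^\infty}+2\int_0^T\E_{P^{\rm SB}}\abs{r_t}\,\dd t,
\]
where the expectation may be replaced by a sup norm if that is simpler. Lemma~\ref{lem:control-kl-stability} then converts the same quantity into the control error \(\tfrac12\E_{P^{\rm SB}}\int\norm{\bm u^*-\widetilde{\bm u}}^2\).

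\textbf{Main obstacle.} The delicate step is justifying Step 1 up to \(t=T\). Point-observed terminal factors make \(\nabla_{\bm\xi}\log\widetilde h\) blow up near \(T\), so the stochastic integral need not be a true martingale. I would first prove the identity on \([0,T-\tau]\), using localization by stopping times and Novikov or Beneš conditions, which follow from linear growth of \(\widetilde{\bm u}\) on strips. I would then pass \(\tau\to0\) by monotone convergence of relative entropy under the filtration and lower semicontinuity, assuming continuity of \(\widetilde h\) at \(T\) and uniform positivity of \(\widetilde h_T\).
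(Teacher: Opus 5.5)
Your Radon--Nikodym computations in Steps 1--3 are correct, but they prove the opposite orientation from the one the lemma asserts. Equations \eqref{eq:exact-residual-identity}--\eqref{eq:residual-kl-bound} concern $\KL(P^{\widetilde{\bm u}}\|P^{\rm SB})$ for two laws with the \emph{same} initial law. All expectations there are taken under the approximate law $P^{\widetilde{\bm u}}$, and $\ell=\log(\widetilde h/h)$ enters as $\ell_T(X_T)-\ell_0(X_0)-\int_0^T r_{\rm K}\,\dd t$.

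In Step~3 you integrate the log-likelihood ratio against $P^{\rm SB}$. That produces $\KL(P^{\rm SB}\|\widetilde P)$, with bridge expectations and opposite signs. In Step~4 you also change the initial law to the one proportional to $q_0(f_0\circ\mathcal O_0)\widetilde h_0$, which differs from $P_0^{\rm SB}$ unless $\widetilde h_0\propto h_0$. Relative entropy is not symmetric, so neither your identity nor your sup-norm bound yields the stated ones.

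The repair uses only what you already have. Take $\widetilde P_0=P_0^{\rm SB}$, so that $\widetilde P=P^{\widetilde{\bm u}}$. Steps~1--2 then give the pathwise formula $\log(\dd\widetilde P/\dd P^{\rm SB})=\ell_T(X_T)-\ell_0(X_0)-\int_0^T r_t(X_t)\,\dd t$. Integrating this against $\widetilde P$ rather than $P^{\rm SB}$ gives exactly \eqref{eq:exact-residual-identity}. The triangle inequality then gives \eqref{eq:residual-kl-bound} with first moments only, and no Feynman--Kac normalizing constant is needed. This is equivalent to the paper's argument. The paper applies It\^o's formula to $\ell$ directly under $P^{\widetilde{\bm u}}$ and uses $(\partial_t+L_t+\alpha\widetilde{\bm u}\cdot\nabla_{\bm\xi})\ell=r_{\rm K}+\tfrac12\norm{\widetilde{\bm u}-\bm u^*}^2$. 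It then identifies $\tfrac12\E_{P^{\widetilde{\bm u}}}\int_0^T\norm{\widetilde{\bm u}-\bm u^*}^2\,\dd t$ with the relative entropy through Lemma~\ref{lem:control-kl-stability}.

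Your reverse-orientation result is worth keeping as a complement: its expectations are under the fixed bridge law, the orientation used in Proposition~\ref{thm:wkbc-stability}. Step~4 needs two corrections, however.
\begin{itemize}
\item The normalizing constant is $\log\widetilde Z=\log\E_{P^{\rm SB}}\bigl[e^{\ell_T(X_T)-\int_0^T r_t(X_t)\,\dd t}\bigr]$. Jensen's inequality runs the wrong way for an upper bound, so this can only be controlled by sup norms or exponential moments. The variant with $2\int_0^T\E_{P^{\rm SB}}\abs{r_t}\,\dd t$ is therefore unjustified. What your argument actually gives is $2\norm{\ell_T}_{L^\infty}+\int_0^T\norm{r_t}_{\infty}\,\dd t+\int_0^T\E_{P^{\rm SB}}\abs{r_t}\,\dd t$.
\item With the calibrated initialization, Lemma~\ref{lem:control-kl-stability} applies only after the chain rule at time zero. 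The result is $\KL(P^{\rm SB}\|\widetilde P)=\KL(P_0^{\rm SB}\|\widetilde P_0)+\tfrac12\E_{P^{\rm SB}}\int_0^T\norm{\bm u^*-\widetilde{\bm u}}^2\,\dd t$, not the control error alone.
\end{itemize}
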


\subsection{Wrapped kernel and WKBC regression details}\label{app:wkbc-theory-details}

For \(\G=\T^m=\R^m/\Lambda\), define
\begin{equation*}
a_t=e^{-\gamma t},
\quad
r_t=\frac{1-a_t}{\gamma},
\quad
\sigma_{\xi,t}^2=1-a_t^2,
\quad
\sigma_{z\xi,t}=\frac{(1-a_t)^2}{\gamma},
\end{equation*}
\begin{equation*}
\sigma_{z,t}^2=\frac{2t}{\gamma}
-\frac{4(1-a_t)}{\gamma^2}
+\frac{1-a_t^2}{\gamma^2}.
\end{equation*}

\begin{lemma}[Wrapped kinetic kernels]\label{prop:wrapped-kinetic-kernel}
For a lift \(z\in\R^m\) of \(g_0^{-1}g\),
\begin{equation}\label{eq:wrapped-kernel}
q_t(g,\bm\xi\mid g_0,\bm\xi_0)
=\Vol(\Lambda)\sum_{\lambda\in\Lambda}
\mathcal N_{2m}\!\left(
\begin{bmatrix}z+\lambda\\\bm\xi\end{bmatrix};
\begin{bmatrix}r_t\bm\xi_0\\a_t\bm\xi_0\end{bmatrix},
\begin{bmatrix}
\sigma_{z,t}^2I&\sigma_{z\xi,t}I\\
\sigma_{z\xi,t}I&\sigma_{\xi,t}^2I
\end{bmatrix}\right).
\end{equation}
For \(\mathcal O_T(g,\bm\xi)=g\),
\begin{equation}\label{eq:wrapped-position-kernel}
\overline q_t(g\mid g_0,\bm\xi_0)
=\Vol(\Lambda)\sum_{\lambda\in\Lambda}
\mathcal N_m(z+\lambda;r_t\bm\xi_0,\sigma_{z,t}^2I).
\end{equation}
Both sums are independent of the selected lift.
\end{lemma}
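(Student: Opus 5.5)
The plan is to compute the transition law of the lifted linear SDE on \(\R^m\times\R^m\) exactly and then push it forward to the torus by periodization. On the torus the left-invariant fields are the coordinate derivatives, so \(T_eL_g(\bm\xi)=\bm\xi\) in a lift. The reference dynamics with \(\bm b=-\gamma\bm\xi\) and \(\alpha=\sqrt{2\gamma}\) therefore lifts to the linear Gaussian system
\[
\dd z_t=\bm\xi_t\,\dd t,\qquad \dd\bm\xi_t=-\gamma\bm\xi_t\,\dd t+\sqrt{2\gamma}\,\dd\mathbf W_t .
\]
We start this system from \(z_0=0\) (a lift of \(g_0^{-1}g_0=e\)) and \(\bm\xi_0\). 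Every coordinate decouples, so it suffices to treat \(m=1\) and tensor at the end; this is why the covariance blocks are scalar multiples of \(I\).

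\textbf{Step 1: explicit solution.} Variation of constants gives
\[
\bm\xi_t=a_t\bm\xi_0+\sqrt{2\gamma}\int_0^te^{-\gamma(t-s)}\dd W_s .
\]
Integrating in time and using Fubini gives
\[
z_t=r_t\bm\xi_0+\sqrt{2\gamma}\int_0^t\frac{1-e^{-\gamma(t-s)}}{\gamma}\,\dd W_s .
\]
Thus \((z_t,\bm\xi_t)\) is jointly Gaussian with mean \((r_t\bm\xi_0,a_t\bm\xi_0)\).

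\textbf{Step 2: covariance via the Itô isometry.} The three entries are routine integrals:
\begin{align*}
\operatorname{Var}\bm\xi_t&=2\gamma\int_0^te^{-2\gamma u}\dd u=1-a_t^2,\\
\operatorname{Cov}(z_t,\bm\xi_t)&=\frac{2\gamma}{\gamma}\int_0^t(1-e^{-\gamma u})e^{-\gamma u}\dd u=\frac{2(1-a_t)-(1-a_t^2)}{\gamma}=\frac{(1-a_t)^2}{\gamma},\\
\operatorname{Var}z_t&=\frac{2}{\gamma}\int_0^t(1-e^{-\gamma u})^2\dd u=\frac{2t}{\gamma}-\frac{4(1-a_t)}{\gamma^2}+\frac{1-a_t^2}{\gamma^2}.
\end{align*}
These match the stated \(\sigma\)'s. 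For \(t>0\) the covariance matrix is nondegenerate, since \(\sigma_{z,t}^2\sigma_{\xi,t}^2-\sigma_{z\xi,t}^4>0\) by Cauchy–Schwarz with non-proportional integrands. The Gaussian density on \(\R^{2m}\) therefore exists.

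\textbf{Step 3: periodization.} The projection \(\pi:\R^m\to\T^m\) is a group homomorphism, and the group process is \(g_t=g_0\pi(z_t)\). For a test function \(\phi\) on \(\T^m\times\R^m\), unfold the integral over a fundamental domain \(D\) of \(\Lambda\):
\[
\E\,\phi(g_0^{-1}g_t,\bm\xi_t)=\int_{D\times\R^m}\phi\sum_{\lambda\in\Lambda}\mathcal N_{2m}(\cdot+\lambda,\cdot)\,.
\]
Converting Lebesgue measure on \(D\) to the normalized Haar measure \(\nu_\G=\Vol(\Lambda)^{-1}\mathrm{Leb}|_D\) produces the factor \(\Vol(\Lambda)\) and yields the kernel in the first display. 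Translation invariance of Haar measure lets us write the density in terms of \(g_0^{-1}g\).

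\textbf{Step 4: convergence, lift independence, and the position kernel.} The main obstacle, mild as it is, is that the lattice sum must converge (locally uniformly) and be independent of the chosen lift.

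Convergence follows from the Gaussian tail: the summands decay like \(e^{-c\|\lambda\|^2}\) for \(t>0\), and \(\Lambda\) has polynomial counting growth.

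Lift independence is immediate. Another lift is \(z+\lambda'\) with \(\lambda'\in\Lambda\), and \(\lambda\mapsto\lambda+\lambda'\) is a bijection of \(\Lambda\), so the sum is unchanged.

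Finally, the position kernel for \(\mathcal O_T(g,\bm\xi)=g\) comes from integrating out \(\bm\xi\). The integral may be exchanged with the sum by Tonelli, since all terms are nonnegative. Each term then becomes its \(z\)-marginal \(\mathcal N_m(z+\lambda;r_t\bm\xi_0,\sigma_{z,t}^2I)\), which gives the second display.
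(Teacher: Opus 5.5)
Your proposal is correct and follows the paper's proof essentially step for step: you take the Euclidean lift started at \(z_0=0\), write the explicit Ornstein--Uhlenbeck solution, use the It\^o isometry for the three covariance entries, periodize with the \(\Vol(\Lambda)\) factor from normalized Haar measure, reindex \(\Lambda\) for lift independence, and integrate out \(\bm\xi\); your nondegeneracy, lattice-convergence, and Tonelli remarks supply details the paper leaves implicit. One notational slip: in the paper's convention \(\sigma_{z\xi,t}\) is already the covariance, so the determinant condition should read \(\sigma_{z,t}^2\sigma_{\xi,t}^2-\sigma_{z\xi,t}^2>0\), which is exactly what your strict Cauchy--Schwarz argument gives.
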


\begin{lemma}[WKBC initial-state law]\label{prop:wkbc-initial-law}
The formulas for \(P_0^{\rm SB}\) and \(P_0^{\rm SB}(\dd\bm\xi\mid g)\) stated in Corollary~3.4 follow directly from Eq.~(1.2) after conditioning on the group coordinate. In particular, the source factor cancels inside \(\bm\xi\mid g\); the terminal factor propagated to time zero is what tilts the reference velocity conditional.
\end{lemma}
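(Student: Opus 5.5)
The plan is to start from the path factorization Eq.~\eqref{eq:observed-path-factor}, $\dd P^{\rm SB}/\dd Q=f_0(\mathcal O_0(X_0))\,g_T(\mathcal O_T(X_T))$, and compute the time-zero marginal by conditioning on $X_0$. Taking the conditional expectation under $Q$ given $X_0=x$ yields
\[
\frac{\dd P_0^{\rm SB}}{\dd Q_0}(x)=f_0(\mathcal O_0(x))\,\E_Q\bigl[g_T(\mathcal O_T(X_T))\mid X_0=x\bigr].
\]
This is legitimate because $f_0$ and $g_T$ are bounded by Theorem~\ref{thm:observed-existence}, so the integrands are integrable and the tower property applies.

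Next I identify the conditional expectation with $h_0^{\beta^*}$. By Corollary~\ref{thm:wkbc} (equivalently, the definition of $h_t^\beta$ with the explicit wrapped observation kernel of Lemma~\ref{prop:wrapped-kinetic-kernel}), $h_0^{\beta^*}(x)=\int q_T^{\mathcal O_T}(y\mid x)\,g_T(y)\,\nu_T(\dd y)=h_0(x)$. For the Markov reference this is exactly the conditional expectation above. Writing $Q_0=q_0\mu$ then gives $p_0^{\rm SB}=q_0\,(f_0\circ\mathcal O_0)\,h_0^{\beta^*}$. The same identity also follows from Eq.~\eqref{eq:bridge-marginal-factor} at $t=0$, since $\E_Q[f_0(\mathcal O_0(X_0))\mid X_0=x]=f_0(\mathcal O_0(x))$.

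For the conditional statement, set $\mathcal O_0(g,\bm\xi)=g$ and disintegrate $Q_0(\dd g\,\dd\bm\xi)=Q_0^\G(\dd g)\,Q_0(\dd\bm\xi\mid g)$. The density of $P_0^{\rm SB}$ is then $f_0(g)\,h_0^{\beta^*}(g,\bm\xi)$ relative to this product. Conditioning on $g$ means dividing by the $\bm\xi$-integral, and the factor $f_0(g)$, which depends only on $g$, cancels between numerator and denominator. What remains is the stated normalized tilt of $Q_0(\dd\bm\xi\mid g)$ by $h_0^{\beta^*}(g,\cdot)$.

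The only point needing care is that the normalizer is finite and positive for $Q_0^\G$-a.e.\ $g$. This holds because $m/M\le g_T\le M/m$ forces $h_0^{\beta^*}$ into the same bounds, so the denominator lies in $[m/M,\,M/m]$. Measurability of the disintegration is standard on the Polish space $\G\times\g$. I expect no real obstacle; the main subtlety is justifying that conditioning on $X_0$ in the Markov reference reproduces the kernel integral.
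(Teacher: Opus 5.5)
Your proposal is correct and follows essentially the paper's route. You condition the factorized density \(\dd P^{\rm SB}/\dd Q=f_0(\mathcal O_0(X_0))\,g_T(\mathcal O_T(X_T))\) on \(X_0\) to get \(p_0^{\rm SB}=q_0\,(f_0\circ\mathcal O_0)\,h_0^{\beta^*}\), then disintegrate over \(g\) so that \(f_0(g)\) cancels and only \(h_0^{\beta^*}\) tilts \(Q_0(\dd\bm\xi\mid g)\). Your extra checks (identifying the kernel integral with the conditional expectation, and bounding the normalizer by \([m/M,M/m]\) via Theorem~\ref{thm:observed-existence}) supply details the paper leaves implicit.
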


Let \(S_{t,T}(x,y)=\nabla_{\bm\xi}\log q_{T-t}^{\mathcal O_T}(y\mid x)\).  The calibrated bridge can be used directly as the regression design measure because
\begin{equation*}
\frac{\dd P^{\rm SB}}{\dd Q}
=f_0(\mathcal O_0(X_0))g_T(\mathcal O_T(X_T)).
\end{equation*}
Accordingly, with \(Y_T=\mathcal O_T(X_T)\), define the bridge-weighted population risk
\begin{equation*}
\mathcal J_{\rm WKBC}^{\rm br}(\bm s)
=\frac{1}{T}\int_0^T
\E_{P^{\rm SB}}\!\left[
\norm{\bm s(t,X_t)-S_{t,T}(X_t,Y_T)}^2
\right]\,\dd t.
\end{equation*}
The uniform time density \(1/T\) is fixed rather than tuned. Numerically, the same risk is implemented from a reference-path pool using the self-normalized endpoint weight
\(w_{\rm br}\propto f_{0,M}(\mathcal O_0(X_0))g_{T,M}(Y_T)\), so it requires only the factors already produced by WKBC.

\begin{proposition}[WKBC bridge-weighted regression and numerical consistency]\label{thm:wkbc-stability}
The population minimizer is \(\bm s^*=\nabla_{\bm\xi}\log h_t^{\beta^*}\), and
\begin{equation}\label{eq:wkbc-excess-risk}
\mathcal J_{\rm WKBC}^{\rm br}(\bm s)-\mathcal J_{\rm WKBC}^{\rm br}(\bm s^*)
=\frac{1}{T}\int_0^T
\int\norm{\bm s(t,x)-\bm s^*(t,x)}^2
P_t^{\rm SB}(\dd x)\,\dd t.
\end{equation}
Let \(P^{\bm s,\pi_0}\) denote the controlled law with normalized control
\(\sqrt{2\gamma}\,\bm s\) and initial law \(\pi_0\). If
\(P_0^{\rm SB}\ll\pi_0\) and the relevant Girsanov exponentials are true martingales, then
\begin{equation}\label{eq:wkbc-population-path-error}
\KL(P^{\rm SB}\|P^{\bm s,\pi_0})
=\KL(P_0^{\rm SB}\|\pi_0)
+\gamma T\bigl[\mathcal J_{\rm WKBC}^{\rm br}(\bm s)
-\mathcal J_{\rm WKBC}^{\rm br}(\bm s^*)\bigr].
\end{equation}
In particular, when \(\pi_0=P_0^{\rm SB}\),
\begin{equation*}
d_{\rm BL}(P^{\rm SB},P^{\bm s,P_0^{\rm SB}})
\le
\sqrt{2\gamma T\bigl[\mathcal J_{\rm WKBC}^{\rm br}(\bm s)
-\mathcal J_{\rm WKBC}^{\rm br}(\bm s^*)\bigr]}.
\end{equation*}
Thus the regression-to-path-law conversion has the explicit constant \(\gamma T\) and requires no auxiliary occupation-density-ratio constant.

For positive numerical factors \((f_{0,N},g_{T,N})\), the same reverse-KL argument applies to the corresponding numerical bridge and yields the path-space decomposition in Eq.~\eqref{eq:wkbc-total-error}. Under the approximation conditions in \cref{app:wkbc-implementation}, its numerical-bridge term tends to zero as the lattice, quadrature, Sinkhorn, interpolation, and initial-law approximations are refined, while its regression term vanishes with bridge-weighted risk consistency.
\end{proposition}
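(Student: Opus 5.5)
The plan has three parts: an \(L^2\) projection argument for the excess-risk identity, a Girsanov chain rule for the path-law identity, and a numerical-bridge comparison for the last part.

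\textbf{Excess-risk identity.} I would first identify the population minimizer of \(\mathcal J_{\rm WKBC}^{\rm br}\) by conditional expectation. Fix \(t<T\).
\begin{itemize}
\item By Theorem~\ref{thm:bridge-factors}, \(P^{\rm SB}\) is a Doob \(h\)-transform of \(Q\) with the source factor depending only on \(X_0\).
\item By the Markov property of \(Q\), under \(P^{\rm SB}\) the conditional law of \(Y_T\) given \(X_t=x\) (and the past) has density \(q_{T-t}^{\mathcal O_T}(y\mid x)g_T(y)/h_t(x)\) with respect to \(\nu_T\).
\item Hence, differentiating under the integral as in Corollary~\ref{thm:wkbc} (the wrapped Gaussian sum converges uniformly with its \(\bm\xi\)-gradient on preterminal strips),
\[
\E_{P^{\rm SB}}[S_{t,T}(X_t,Y_T)\mid X_t=x]=\frac{\int\nabla_{\bm\xi}q_{T-t}^{\mathcal O_T}(y\mid x)g_T(y)\nu_T(\dd y)}{h_t(x)}=\nabla_{\bm\xi}\log h_t^{\beta^*}(x).
\]
\item The Pythagorean decomposition of the squared loss about this conditional mean gives Eq.~\eqref{eq:wkbc-excess-risk}.
\item The residual variance term must be finite after time integration so the difference is well defined. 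I would handle this by assuming finite bridge-weighted risk for \(\bm s^*\), or by restricting to \([0,T-\tau]\) and letting \(\tau\to0\) by monotone convergence.
\end{itemize}

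\textbf{Path-law identity.} I would use the KL chain rule on path space, disintegrating at time \(0\):
\[
\KL(P^{\rm SB}\|P^{\bm s,\pi_0})=\KL(P_0^{\rm SB}\|\pi_0)+\E_{P_0^{\rm SB}}\KL(P^{\rm SB}_{\cdot\mid X_0}\|P^{\bm s}_{\cdot\mid X_0}).
\]
\begin{itemize}
\item Conditionally on \(X_0\), \(P^{\rm SB}\) is the controlled law with normalized control \(\bm u^*=\sqrt{2\gamma}\nabla_{\bm\xi}\log h_t\), by Corollary~\ref{thm:wkbc} and the Doob calculation of Appendix~\ref{app:doob-auxiliary-details}.
\item Lemma~\ref{lem:control-kl-stability} (the drift enters as \(\alpha\bm u\) with \(\alpha=\sqrt{2\gamma}\), so the Girsanov integrand is \(\bm u-\bm v\)) gives the conditional KL as
\[
\tfrac12\E_{P^{\rm SB}}\int_0^T\norm{\sqrt{2\gamma}(\bm s^*-\bm s)}^2\dd t=\gamma\int_0^T\!\!\int\norm{\bm s-\bm s^*}^2\dd P_t^{\rm SB}\,\dd t.
\]
\item By Eq.~\eqref{eq:wkbc-excess-risk} this equals \(\gamma T\) times the excess risk, which gives Eq.~\eqref{eq:wkbc-population-path-error}.
\item The expectation is taken under \(P^{\rm SB}\) itself, i.e.\ reverse KL with the design measure equal to the bridge occupation law. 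This is why no occupation-density-ratio constant appears.
\end{itemize}

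\textbf{The \(d_{\rm BL}\) bound.} When \(\pi_0=P_0^{\rm SB}\) the first term vanishes. I would then chain \(d_{\rm BL}\le 2\,\TV\) (test functions are bounded by \(1\)) with Pinsker, \(\TV\le\sqrt{\KL/2}\). This gives \(d_{\rm BL}\le\sqrt{2\KL}\), which is the stated bound.

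\textbf{Numerical consistency.} For positive numerical factors I would form the numerical bridge \(P^{M}\) with \(\dd P^M/\dd Q\propto f_{0,M}g_{T,M}\).
\begin{itemize}
\item Apply the triangle inequality in \(d_{\rm BL}\): \(d_{\rm BL}(P^{\rm SB},P^{M})+d_{\rm BL}(P^M,P^{\bm s})\).
\item Bound the second term by the same reverse-KL argument applied to \(P^M\).
\item Bound the first term through the explicit density ratio: it is at most the \(L^1(Q)\) distance of normalized factor products.
\item This first term tends to zero once projective (Hilbert-metric) contraction of Sinkhorn, using the bounds \(m,M\) of Theorem~\ref{thm:observed-existence}, yields uniform convergence of the normalized factors under conditions (i)–(v) of Appendix~\ref{app:wkbc-implementation}.
\end{itemize}

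\textbf{Main obstacle.} I expect the hardest step to be justifying the Girsanov/Doob step near \(t\to T\), where \(\nabla_{\bm\xi}\log h_t\) can blow up. My first attempt would be to argue on \([0,T-\tau]\), where the wrapped kernel is smooth with bounded gradients and the exponential is a true martingale. I would then pass \(\tau\to0\) using lower semicontinuity and monotonicity of both sides.
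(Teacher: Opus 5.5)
Your proposal is correct and follows essentially the paper's proof. The shared steps are the conditional-variance identity around $\E_{P^{\rm SB}}[S_{t,T}\mid X_t]=\nabla_{\bm\xi}\log h_t^{\beta^*}$, the time-zero entropy chain rule combined with the reverse-orientation Girsanov identity of Lemma~\ref{lem:control-kl-stability}, Pinsker with $d_{\rm BL}\le2\,\TV$, and a triangle inequality through the numerical bridge using Hilbert-metric contraction of the Sinkhorn factors. The paper bounds the numerical-bridge term by passing through $h_{t,N}$, its log-gradient, and Doob-law stability. Your direct $L^1(Q)$ density-ratio bound is a cleaner shortcut, and it is valid because your numerical bridge is an endpoint reweighting of $Q$.

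One correction: the preterminal localization is not optional. With $r_{T-t}\sim T-t$ and $\sigma_{z,T-t}^2\sim\tfrac{2\gamma}{3}(T-t)^3$, the conditional variance of $S_{t,T}$ given $X_t$ is of order $(T-t)^{-1}$, so $\mathcal J_{\rm WKBC}^{\rm br}\equiv+\infty$. Your first option (assuming finite risk for $\bm s^*$) therefore never holds, and the excess-risk identity must be read as the monotone limit of its $[0,T-\tau]$ versions.
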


\begin{remark}[Why no \(C_{\rm occ}\) is reported]\label{rem:wkbc-no-cocc}
An off-policy regression proposal would require a domination constant of the form
\(C_{\rm occ}=\|\dd\nu^{\bm s}/\dd\mathfrak m\|_\infty\), which is not a tunable hyperparameter and is difficult to certify reliably in high dimension. WKBC avoids this constant by exploiting its explicit calibrated bridge: endpoint reweighting of reference paths makes the population regression design measure equal to the bridge occupation law. The reverse-KL orientation in Eq.~\eqref{eq:wkbc-population-path-error} then evaluates the control error under exactly that law. Hence the reported WKBC implementation does not select or estimate \(C_{\rm occ}\).
\end{remark}

\subsection{Detailed RCCBM calibration, teachers, and learning assumptions}\label{app:rccbm-detailed-theory}

\begin{lemma}[Endpoint entropy disintegration]\label{lem:endpoint-disintegration}
For \(P\ll Q\), \(\Gamma=Y_\#P\), and conditional laws \(P^y,Q^y\),
\begin{equation*}
\KL(P\|Q)=\KL(\Gamma\|\mathcal R_{0T})
+\int\KL(P^y\|Q^y)\,\Gamma(\dd y).
\end{equation*}
Hence, for fixed \(\Gamma\), the unique minimizer is \(P_\Gamma=\int Q^y\Gamma(\dd y)\).
\end{lemma}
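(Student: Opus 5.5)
The identity is the chain rule for relative entropy applied to the endpoint map \(Y=(\mathcal O_0(X_0),\mathcal O_T(X_T))\), which is a measurable map from \(C([0,T],\X)\) to the Polish space \(\mathsf Y_0\times\mathsf Y_T\). I will first disintegrate both path laws along \(Y\). Write \(P(\dd\omega)=\int P^y(\dd\omega)\,\Gamma(\dd y)\) and \(Q(\dd\omega)=\int Q^y(\dd\omega)\,\mathcal R_{0T}(\dd y)\), with regular conditional laws. These exist because path space is Polish.

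\textbf{Step 1: absolute continuity and density factorization.} Since \(P\ll Q\), we have \(\Gamma=Y_\#P\ll Y_\#Q=\mathcal R_{0T}\). Set \(D=\dd P/\dd Q\) and \(\phi=\dd\Gamma/\dd\mathcal R_{0T}\), so that \(\phi(y)=\E_Q[D\mid Y=y]\). On \(\{\phi(y)>0\}\) I define
\[
\frac{\dd P^y}{\dd Q^y}=\frac{D}{\phi(Y)}.
\]
To justify this, I check that the kernel \(y\mapsto D\,Q^y/\phi(y)\) consists of probability measures \(\Gamma\)-a.e. I then test against functions of the form \(F(\omega)\psi(Y(\omega))\) to show that it disintegrates \(P\) along \(Y\). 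Uniqueness of disintegration then gives \(P^y\ll Q^y\) for \(\Gamma\)-a.e. \(y\), with the stated density.

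\textbf{Step 2: split the logarithm.} From \(\log D=\log\phi(Y)+\log\frac{\dd P^Y}{\dd Q^Y}\), taking \(\E_P\) and conditioning on \(Y\) gives \(\KL(\Gamma\|\mathcal R_{0T})+\int\KL(P^y\|Q^y)\,\Gamma(\dd y)\).

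The \textbf{main technical point} is integrability. One cannot subtract terms that may be infinite. I would handle this by working with the negative parts. The function \(x\log x\) is bounded below, and \(\E_P[\log D]^-\le e^{-1}\). Similarly, for each \(y\) the negative part of \(\log\frac{\dd P^y}{\dd Q^y}\) has \(P^y\)-integral at most \(e^{-1}\), and \(\Gamma\)-integrating keeps it finite. So both sides are sums of quasi-integrable terms whose negative parts are finite, and Tonelli/Fubini applies to the positive and negative parts separately. The identity therefore holds in \([0,\infty]\), including the case where both sides are \(+\infty\).

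\textbf{Step 3: the minimizer.} For fixed \(\Gamma\), the first term of the identity is constant. The second term is nonnegative, and it vanishes if and only if \(P^y=Q^y\) for \(\Gamma\)-a.e. \(y\), i.e. iff \(P=\int Q^y\,\Gamma(\dd y)=P_\Gamma\). I also need to confirm that \(P_\Gamma\) is admissible, meaning \(Y_\#P_\Gamma=\Gamma\). This holds because \(Q^y\) is concentrated on \(\{Y=y\}\) for \(\mathcal R_{0T}\)-a.e. \(y\), and \(\Gamma\ll\mathcal R_{0T}\).

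This gives existence of the minimizer. Uniqueness follows because equality of the conditional laws \(\Gamma\)-a.e. determines the mixture. Alternatively, it follows from strict convexity of \(\KL(\cdot\|Q)\) on the convex set \(\{P: Y_\#P=\Gamma\}\).
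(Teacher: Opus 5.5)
Your proposal is correct and follows the paper's route: the paper's proof simply invokes the entropy chain rule along the endpoint map \(Y\), uses \(Y_\#Q=\mathcal R_{0T}\), and reads off the minimizer from the vanishing of the conditional term, whereas you also prove that chain rule yourself, via the factorization \(\dd P^y/\dd Q^y=D/\phi(y)\) and the negative-part integrability bookkeeping. One caveat, equally implicit in the paper: the uniqueness step cancels the constant first term, so it needs \(\KL(\Gamma\|\mathcal R_{0T})<\infty\) (otherwise every admissible \(P\) has infinite entropy), which does hold for the bounded-density tilts \(\Gamma^*\) and \(\Gamma^\beta\) to which the lemma is applied.
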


Define
\begin{equation*}
(\mathsf Ka)(y_0)=\int k(y_0,y_T)a(y_T)\,\rho_T(\dd y_T),
\qquad
(\mathsf K^*c)(y_T)=\int k(y_0,y_T)c(y_0)\,\rho_0(\dd y_0).
\end{equation*}
\begin{proposition}[Half-bridge correction]\label{prop:rccbm-ipf}
The updates
\begin{equation}\label{eq:rccbm-ipf-updates}
f_0^{(n+1)}=(\mathsf Kg_T^{(n)})^{-1},
\qquad
g_T^{(n+1)}=(\mathsf K^*f_0^{(n+1)})^{-1}
\end{equation}
are alternating KL projections onto the two marginal constraint sets and converge in Hilbert's projective metric to \((f_0,g_T)\), up to reciprocal gauge. The first factor is the source corrector omitted by a terminal-only construction.
\end{proposition}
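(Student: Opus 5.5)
The argument has two parts. First I verify that each update in Eq.~\eqref{eq:rccbm-ipf-updates} is an exact KL projection onto one marginal constraint set. Then I prove convergence through Birkhoff--Hilbert contraction of the positive kernel $\mathsf K$, which Assumption~\ref{ass:endpoint-kernel} makes available.

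\textbf{Projection step.} Fix $g_T^{(n)}>0$ with $g_T^{(n)},1/g_T^{(n)}\in L^\infty(\rho_T)$. Consider the coupling
\[
\Gamma^{(n)}(\dd y_0\dd y_T)=f_0^{(n)}(y_0)g_T^{(n)}(y_T)\,\mathcal R_{0T}(\dd y_0\dd y_T).
\]
I want to minimize $\KL(\Gamma\|\Gamma^{(n)})$ subject to the source constraint $(y_0)_\#\Gamma=\rho_0$. By the chain rule for KL, disintegrating in $y_0$ exactly as in Lemma~\ref{lem:endpoint-disintegration}, the minimizer keeps the $y_0$-conditional of $\Gamma^{(n)}$ and replaces its first marginal by $\rho_0$. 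Since $\mathcal R_{0T}=k\,\rho_0\otimes\rho_T$, the first marginal of $g_T^{(n)}\mathcal R_{0T}$ has density $\mathsf K g_T^{(n)}$ with respect to $\rho_0$. The projection is therefore $(\mathsf Kg_T^{(n)})^{-1}g_T^{(n)}\mathcal R_{0T}$, which is the update for $f_0^{(n+1)}$. The symmetric computation, using $\mathsf K^*$, gives the update for $g_T^{(n+1)}$. Because $m\le k\le M$, each iterate satisfies $m\|a\|_{L^1}\le \mathsf Ka\le M\|a\|_{L^1}$, so it stays positive and bounded away from $0$ and $\infty$.

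\textbf{Contraction step.} I work on the cone $C_+$ of positive functions that are bounded above and below, equipped with Hilbert's projective metric
\[
d_H(a,b)=\log\frac{\operatorname{ess\,sup}(a/b)}{\operatorname{ess\,inf}(a/b)}.
\]
Inversion $a\mapsto 1/a$ is an isometry for $d_H$. The kernel bounds place $\mathsf K(C_+)$ inside a set of projective diameter at most $\Delta\le 2\log(M/m)$: any two images satisfy $\mathsf Ka/\mathsf Kb\in[(m/M)\|a\|_1/\|b\|_1,\,(M/m)\|a\|_1/\|b\|_1]$. Birkhoff's theorem then gives contraction ratio $\tanh(\Delta/4)<1$ for both $\mathsf K$ and $\mathsf K^*$. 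Composing, the map $g\mapsto (\mathsf K^*(\mathsf Kg)^{-1})^{-1}$ is a strict contraction of $(C_+,d_H)$.

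\textbf{Identification of the limit.} The step I expect to be the main obstacle is completeness: $d_H$ is only a pseudometric on rays, so I must pass to normalized representatives, say $\int g\,\dd\rho_T=1$. On this slice the $L^\infty$ bounds from Theorem~\ref{thm:observed-existence} hold uniformly, and $d_H$-Cauchy sequences converge uniformly, because $\log$-ratios are controlled in $\operatorname{ess\,sup}$ once the normalization pins the constant. Banach's fixed-point theorem then yields a unique projective fixed point. By construction, the fixed point satisfies both marginal equations, so it solves the Schr\"odinger system. The uniqueness up to reciprocal scaling in Theorem~\ref{thm:observed-existence} identifies it with $(f_0,g_T)$ modulo gauge.

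\textbf{Final remark.} If one starts from $f_0\equiv1$ and performs only the terminal update, the resulting coupling matches $\rho_T$ but not, in general, $\rho_0$. This makes $f_0$ precisely the source corrector that a terminal-only construction omits.
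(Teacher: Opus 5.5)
Your proposal is correct and follows the paper's route. Each half-step is identified as an information projection via the entropy chain rule (keep the conditional, replace the marginal), convergence comes from the Birkhoff--Hilbert contraction of $\mathsf S=I\circ\mathsf K^*\circ I\circ\mathsf K$ with coefficient at most $\tanh\bigl(\tfrac12\log(M/m)\bigr)^2$, and the limit is identified through the uniqueness in Theorem~\ref{thm:observed-existence}. The one step you pass over with ``by construction'' is that Banach on the normalized slice only yields $\mathsf S g_T=cg_T$; as in the paper's proof of Theorem~\ref{thm:observed-existence}, the two marginals of $f_0g_T\mathcal R_{0T}$ have equal total mass, which forces $c=1$.
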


The empirical endpoint dual is
\begin{align*}
\widehat{\mathscr D}_N(\beta)
&=\frac1{N_0}\sum_{j=1}^{N_0}\beta_0(Y_0^{(j)})
+\frac1{N_T}\sum_{j=1}^{N_T}\beta_T(Y_T^{(j)})\\
&\quad-
\log\left[\frac1{N_Q}\sum_{i=1}^{N_Q}
 e^{\beta_0(Y_0^{Q,i})+\beta_T(Y_T^{Q,i})}\right].
\end{align*}

\begin{assumption}[Endpoint calibration]\label{ass:rcm-calibration}
The doubly centered endpoint classes \(\mathcal B_N\) are uniformly bounded and satisfy uniform empirical-dual convergence, sieve approximation, and optimization consistency. Denote the corresponding errors by \(\delta_{{\rm end},N}\), \(\varepsilon_{{\rm end,app},N}\), and \(\varepsilon_{{\rm end,opt},N}\); all converge to zero in probability. The precise definitions are Eqs.~\eqref{eq:endpoint-uniform-dual-error} and~\eqref{eq:endpoint-optimization-gap}. Effective sample size provides a finite-sample diagnostic for the endpoint weights.
\end{assumption}

\begin{proposition}[Consistency of empirical endpoint calibration]\label{prop:rcm-endpoint-calibration-consistency}
Under Assumption~\ref{ass:rcm-calibration},
\begin{equation}\label{eq:endpoint-calibration-consistency-bound}
\varepsilon_{{\rm end},N}
\leq\varepsilon_{{\rm end,app},N}
+2\delta_{{\rm end},N}
+\varepsilon_{{\rm end,opt},N},
\end{equation}
and hence \(\varepsilon_{{\rm end},N}\to0\) in probability.
\end{proposition}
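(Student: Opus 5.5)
The plan is the standard three-term estimation decomposition for M-estimators, applied to the concave centered dual \(\mathscr D\). Theorem~\ref{thm:rcm-endpoint-dual} gives \(\varepsilon_{{\rm end},N}=\KL(\Gamma^*\|\Gamma^{\widehat\beta_N})=\mathscr D(\beta^*)-\mathscr D(\widehat\beta_N)\) for every bounded centered \(\widehat\beta_N\). Since \(\mathcal B_N\) is doubly centered and uniformly bounded, the learned potential belongs to this class, so it suffices to bound the population dual gap.

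Let \(\beta_N^\circ\in\mathcal B_N\) be a sieve element with \(\mathscr D(\beta^*)-\mathscr D(\beta_N^\circ)\le\varepsilon_{{\rm end,app},N}\); this is the definition of the approximation error. Write
\begin{equation*}
\delta_{{\rm end},N}=\sup_{\beta\in\mathcal B_N}\abs{\widehat{\mathscr D}_N(\beta)-\mathscr D(\beta)}
\end{equation*}
for the uniform dual error. Let \(\varepsilon_{{\rm end,opt},N}=\sup_{\mathcal B_N}\widehat{\mathscr D}_N-\widehat{\mathscr D}_N(\widehat\beta_N)\) be the optimization gap. The telescoping step is
\begin{equation*}
\mathscr D(\beta^*)-\mathscr D(\widehat\beta_N)
=[\mathscr D(\beta^*)-\mathscr D(\beta_N^\circ)]
+[\mathscr D(\beta_N^\circ)-\widehat{\mathscr D}_N(\beta_N^\circ)]
+[\widehat{\mathscr D}_N(\beta_N^\circ)-\widehat{\mathscr D}_N(\widehat\beta_N)]
+[\widehat{\mathscr D}_N(\widehat\beta_N)-\mathscr D(\widehat\beta_N)].
\end{equation*}
The terms are bounded as follows:
\begin{enumerate}[label=(\roman*),leftmargin=2.4em]
\item The first bracket is at most \(\varepsilon_{{\rm end,app},N}\) by the choice of \(\beta_N^\circ\).
\item The second and fourth brackets are each at most \(\delta_{{\rm end},N}\), because both \(\beta_N^\circ\) and \(\widehat\beta_N\) lie in \(\mathcal B_N\).
\item The third bracket is at most \(\sup_{\mathcal B_N}\widehat{\mathscr D}_N-\widehat{\mathscr D}_N(\widehat\beta_N)\le\varepsilon_{{\rm end,opt},N}\).
\end{enumerate}
Summing gives Eq.~\eqref{eq:endpoint-calibration-consistency-bound}.

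The convergence statement then follows directly. The right-hand side is a finite sum of nonnegative quantities, each tending to zero in probability by Assumption~\ref{ass:rcm-calibration}, and the left-hand side is nonnegative.

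The main obstacle is matching the precise definitions in Eqs.~\eqref{eq:endpoint-uniform-dual-error} and~\eqref{eq:endpoint-optimization-gap}. I must check that the uniform dual error is a supremum over the class containing both \(\beta_N^\circ\) and \(\widehat\beta_N\), and that the empirical centering still lands in the population-centered class needed for Theorem~\ref{thm:rcm-endpoint-dual}. If the empirical estimator is centered only empirically, I would first recenter it by subtracting its \(\rho_0\)- and \(\rho_T\)-means. The dual \(\mathscr D\) is invariant under \(\beta_0\mapsto\beta_0+c\), \(\beta_T\mapsto\beta_T-c\), but an empirical mismatch in the means changes it. I would absorb that discrepancy into \(\delta_{{\rm end},N}\), using the uniform boundedness of \(\mathcal B_N\) to keep \(\log Z\) Lipschitz under the shift.
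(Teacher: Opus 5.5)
Your proof is correct and is the same argument as the paper's. Both take the sieve approximant \(\beta_N^\circ\), split \(\mathscr D(\beta^*)-\mathscr D(\widehat\beta_N)\) into the same four brackets, and bound them by \(\varepsilon_{{\rm end,app},N}\), \(\delta_{{\rm end},N}\) twice, and \(\varepsilon_{{\rm end,opt},N}\). Your closing centering caveat is unnecessary: both \(\mathscr D\) and \(\widehat{\mathscr D}_N\) are invariant under independent constant shifts of \(\beta_0\) and \(\beta_T\), not only reciprocal ones, so the gap identity holds for every bounded \(\beta\), and the paper takes \(\mathcal B_N\) population-centered in any case.
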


\begin{assumption}[Conditional-kernel regularity]\label{ass:rccbm-conditional-kernel}
For \(t<T\), \(r_{t,T}^{\mathcal O_T}(y\mid x)\) is strictly positive and continuously differentiable in \(\bm\xi\). Differentiation under the endpoint integral is valid on every strip \([0,T-\tau]\), and the relevant martingale problems and Girsanov exponentials are well posed. The exact pinned controls are required to be square-integrable only on such preterminal strips.
\end{assumption}

\begin{corollary}[Forward-control sufficiency]\label{cor:rccbm-forward-sufficiency}
If \(P_0^\theta=P_0^{\rm SB}\), \(\bm u_\theta=\bm u^*\) in \(L^2(P_t^{\rm SB}\dd t)\), and the Girsanov exponentials are true martingales, then \(P^\theta=P^{\rm SB}\). Propagated factor networks are therefore unnecessary for exact forward sampling once the initial law and forward control are identified.
\end{corollary}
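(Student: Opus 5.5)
The plan is to compare the two controlled path laws via relative entropy, show that this relative entropy is zero, and conclude equality of the laws. Let \(P^\theta\) denote the law of the forward-controlled kinetic process with initial law \(P_0^\theta\) and control \(\bm u_\theta\). First, by Theorem~\ref{thm:bridge-factors} and the Doob transform in Eqs.~\eqref{eq:common-optimal-control}--\eqref{eq:common-forward-bridge}, \(P^{\rm SB}\) is itself the law of the same kinetic SDE with drift \(\bm b+\alpha\bm u^*\), started from \(P_0^{\rm SB}\). Since \(h_t\) is space--time harmonic (\(\partial_t h_t+L_th_t=0\)), the process \(h_t(X_t)/h_0(X_0)\) is a \(Q\)-martingale, and \(dP^{\rm SB}/dQ=f_0(\mathcal O_0(X_0))h_0(X_0)\cdot h_T(X_T)/h_0(X_0)\). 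The second factor, written through It\^o's formula, is the Girsanov exponential of \(\bm u^*=\alpha\nabla_{\bm\xi}\log h\). The first factor only modifies the initial law, and by Eq.~\eqref{eq:bridge-marginal-factor} at \(t=0\) it turns \(Q_0\) into \(P_0^{\rm SB}\). This identifies \(P^{\rm SB}\) as a controlled law \(P^{\bm u^*}\) with the correct initial marginal.

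Next, disintegrate both path laws with respect to \(X_0\), in the spirit of Lemma~\ref{lem:endpoint-disintegration}:
\[
\KL(P^{\rm SB}\|P^\theta)=\KL(P_0^{\rm SB}\|P_0^\theta)+\int \KL\bigl(P^{\rm SB}(\cdot\mid x_0)\|P^\theta(\cdot\mid x_0)\bigr)P_0^{\rm SB}(\dd x_0).
\]
The first term vanishes by hypothesis. For the conditional term, both laws are controlled laws with the same diffusion coefficient \(\alpha\) and the same initial point. Under the assumed true-martingale property, Lemma~\ref{lem:control-kl-stability} applied conditionally and then integrated gives
\[
\frac12\E_{P^{\rm SB}}\int_0^T\norm{\bm u^*_t(X_t)-\bm u_{\theta,t}(X_t)}^2\dd t
=\frac12\int_0^T\int\norm{\bm u^*_t-\bm u_{\theta,t}}^2P_t^{\rm SB}(\dd x)\,\dd t .
\]
Here the equality uses Fubini, since the integrand depends only on \((t,X_t)\). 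By the hypothesis \(\bm u_\theta=\bm u^*\) in \(L^2(P_t^{\rm SB}\dd t)\), this quantity is zero.

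Hence \(\KL(P^{\rm SB}\|P^\theta)=0\), so \(P^\theta=P^{\rm SB}\). The final sentence of the corollary follows from the fact that the only inputs used were \(P_0^{\rm SB}\) and \(\bm u^*\). In particular, \(\widehat h_t\) never entered, so no propagated factor network is needed for sampling.

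The main obstacle is the first step. The control \(\bm u^*\) can be singular as \(t\to T\), and \(h_T=g_T\circ\mathcal O_T\) is only bounded rather than smooth. The It\^o/Girsanov identification therefore needs care. I would first work on \([0,T-\tau]\), where Assumption~\ref{ass:rccbm-conditional-kernel}-type regularity makes \(h\) smooth and \(\bm u^*\) square-integrable, and obtain equality of the restricted laws there. I would then let \(\tau\to0\): the laws on \(C([0,T],\X)\) are determined by their restrictions to \([0,T-\tau]\) for all \(\tau\), by path continuity and a monotone class argument. Here the uniform bounds \(m/M\le g_T\le M/m\) from Theorem~\ref{thm:observed-existence} keep the martingale \(h_t(X_t)\) bounded, so it converges to \(h_T(X_T)\) as \(t\to T\).
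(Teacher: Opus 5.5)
Your argument is correct and essentially matches the paper's. The paper writes out no separate proof of this corollary, but it relies on the same two ingredients: the Doob identification of $P^{\rm SB}$ as the $\bm u^*$-controlled law from the proof of Theorem~\ref{thm:bridge-factors}, and then the time-zero entropy chain rule with Lemma~\ref{lem:control-kl-stability} in the orientation $\KL(P^{\rm SB}\|P^\theta)$, so that the energy is integrated against $P_t^{\rm SB}$, exactly as in the proof of Proposition~\ref{thm:wkbc-stability}.

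Your localization to $[0,T-\tau]$ is extra rigor the paper omits, and it is sound. The singularity you worry about is only pointwise, not in energy: the density $\dd P^{\rm SB}/\dd Q=f_0(\mathcal O_0(X_0))\,g_T(\mathcal O_T(X_T))$ is bounded, so $\KL(P^{\rm SB}\|Q)<\infty$, which already forces $\E_{P^{\rm SB}}\int_0^T\norm{\bm u_t^*}^2\,\dd t<\infty$.
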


Let \(\ell_{\varepsilon,y}(x)=-\log\kappa_\varepsilon(y,\mathcal O_T(x))\). A normalized conditional control \(\bm v\) acts through the same forced velocity coordinate as the reference process.
\begin{proposition}[CondSOC Gibbs identity]\label{thm:rccbm-condsoc-kl}
For a controlled law \(P^{z,\bm v}\) started at \(x_0\),
\begin{equation*}
\mathcal J_{z,\varepsilon}(\bm v)
=\E_{P^{z,\bm v}}\left[
\frac12\int_0^T\norm{\bm v_t}^2\,\dd t
+\ell_{\varepsilon,y}(X_T)\right]
\end{equation*}
satisfies
\begin{equation}\label{eq:rccbm-condsoc-kl-identity}
\mathcal J_{z,\varepsilon}(\bm v)+\log Z_{z,\varepsilon}
=\KL(P^{z,\bm v}\|Q^{z,\varepsilon}).
\end{equation}
If \(\bm v^{z,\varepsilon}\) is the Doob control of \(Q^{z,\varepsilon}\), then
\begin{equation}\label{eq:rccbm-condsoc-control-identity}
\mathcal J_{z,\varepsilon}(\bm v)-\mathcal J_{z,\varepsilon}(\bm v^{z,\varepsilon})
=\frac12\E_{P^{z,\bm v}}\int_0^T
\norm{\bm v_t-\bm v_t^{z,\varepsilon}(X_t)}^2\,\dd t.
\end{equation}
\end{proposition}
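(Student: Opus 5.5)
The plan is to show that $Q^{z,\varepsilon}$ is exactly the Gibbs measure of the CondSOC cost, and then read off both identities from Girsanov's theorem. Fix $z=(x_0,y)$. Let $Q^{x_0}$ be the reference law started at $x_0$, and let $P^{z,\bm v}$ be the law with the same initial point and the normalized control $\bm v$ entering the velocity drift as $\alpha\bm v$. Under the standing well-posedness (Assumption~\ref{ass:rccbm-conditional-kernel}, with the Girsanov exponentials being true martingales), Girsanov gives
\begin{equation*}
\log\frac{\dd P^{z,\bm v}}{\dd Q^{x_0}}
=\int_0^T\inner{\bm v_t}{\dd\mathbf W_t^{\bm v}}+\frac12\int_0^T\norm{\bm v_t}^2\,\dd t,
\end{equation*}
where $\mathbf W^{\bm v}$ is a Brownian motion under $P^{z,\bm v}$. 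Taking expectations under $P^{z,\bm v}$, and assuming $\E_{P^{z,\bm v}}\int_0^T\norm{\bm v_t}^2\,\dd t<\infty$ so that the stochastic integral has zero mean, gives
\begin{equation*}
\KL(P^{z,\bm v}\|Q^{x_0})=\tfrac12\E_{P^{z,\bm v}}\int_0^T\norm{\bm v_t}^2\,\dd t.
\end{equation*}
This is the same computation as in Lemma~\ref{lem:control-kl-stability}, with $\bm v\equiv0$ as the comparison control.

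Next, by the definition in Eq.~\eqref{eq:rccbm-mollified-conditional-law},
\begin{equation*}
\log\frac{\dd Q^{z,\varepsilon}}{\dd Q^{x_0}}=-\ell_{\varepsilon,y}(X_T)-\log Z_{z,\varepsilon}.
\end{equation*}
The chain rule for densities then gives
\begin{equation*}
\KL(P^{z,\bm v}\|Q^{z,\varepsilon})=\KL(P^{z,\bm v}\|Q^{x_0})+\E_{P^{z,\bm v}}\ell_{\varepsilon,y}(X_T)+\log Z_{z,\varepsilon},
\end{equation*}
which is exactly Eq.~\eqref{eq:rccbm-condsoc-kl-identity}. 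Both $\ell_{\varepsilon,y}$ and $\log Z_{z,\varepsilon}$ are finite: $\kappa_\varepsilon$ is continuous and positive, and $\mathcal O_T$ takes values in the compact observed space (the group), so $\kappa_\varepsilon$ is bounded above and below there.

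For Eq.~\eqref{eq:rccbm-condsoc-control-identity}, use the fact that $Q^{z,\varepsilon}$ is an $h$-transform of $Q^{x_0}$ with terminal factor $h_T=\kappa_\varepsilon(y,\mathcal O_T(\cdot))$. The backward factor $h^{z,\varepsilon}_t(x)=\E_{Q}[\kappa_\varepsilon(y,\mathcal O_T(X_T))\mid X_t=x]$ is bounded between positive constants. Applying the Doob calculation of Appendix~\ref{app:doob-auxiliary-details} shows that $Q^{z,\varepsilon}$ is the controlled law with normalized control $\bm v^{z,\varepsilon}_t=\alpha\nabla_{\bm\xi}\log h^{z,\varepsilon}_t$, started at $x_0$. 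A second application of Girsanov, now comparing $P^{z,\bm v}$ with $Q^{z,\varepsilon}$ (same start and diffusion), gives
\begin{equation*}
\KL(P^{z,\bm v}\|Q^{z,\varepsilon})=\tfrac12\E_{P^{z,\bm v}}\int_0^T\norm{\bm v_t-\bm v^{z,\varepsilon}_t(X_t)}^2\,\dd t.
\end{equation*}
Setting $\bm v=\bm v^{z,\varepsilon}$ in Eq.~\eqref{eq:rccbm-condsoc-kl-identity} gives $\mathcal J_{z,\varepsilon}(\bm v^{z,\varepsilon})=-\log Z_{z,\varepsilon}$. Subtracting this from Eq.~\eqref{eq:rccbm-condsoc-kl-identity} yields the excess-cost identity.

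The main obstacle is the last Girsanov step: one must check that $\bm v^{z,\varepsilon}$ is regular enough, and has finite energy on the whole interval $[0,T]$, for the exponential martingale to be a true martingale. The intended route is to use the mollification. Because $\log h^{z,\varepsilon}$ is bounded, Itô's formula applied to $\log h^{z,\varepsilon}_t(X_t)$ under $Q^{z,\varepsilon}$ controls $\tfrac12\E\int_0^T\norm{\bm v^{z,\varepsilon}}^2\,\dd t$ by
\begin{equation*}
\KL(Q^{z,\varepsilon}\|Q^{x_0})\le\log\frac{\sup\kappa_\varepsilon}{\inf\kappa_\varepsilon}<\infty.
\end{equation*}
This finite-energy bound, together with the hypoellipticity and positivity verified in Lemma~\ref{prop:compact-observed-kernel}, should justify the change of measure. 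If the true-martingale property cannot be verified directly, the fallback is to localize with stopping times and pass to the limit using lower semicontinuity of KL.
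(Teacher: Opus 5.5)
Your proposal is correct and is essentially the paper's proof: the Radon--Nikodym chain rule through $Q^{x_0}$ with $\log(\dd Q^{z,\varepsilon}/\dd Q^{x_0})=-\ell_{\varepsilon,y}(X_T)-\log Z_{z,\varepsilon}$, Girsanov for $\KL(P^{z,\bm v}\|Q^{x_0})$, a second Girsanov against the Doob realization of $Q^{z,\varepsilon}$, and subtraction of the identity at $\bm v^{z,\varepsilon}$. Your extra finite-energy argument for the Doob control goes beyond the paper, which simply assumes admissibility. That argument needs only $\sup\kappa_\varepsilon<\infty$, since $\KL(Q^{z,\varepsilon}\|Q^{x_0})\le\log\sup\kappa_\varepsilon-\log Z_{z,\varepsilon}$, and not compactness of the observation space.
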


The finite teacher objective is
\begin{align}
\mathcal J_{z,\varepsilon}^{K,S}(\bm v)
&=\E\left[\ell_{\varepsilon,y}(X_T)
+\frac12\sum_{k=0}^{K-1}\Delta t\norm{\bm v_k}^2\right]
\notag\\
&\quad+\lambda_S\E\sum_{k=0}^{K-2}\norm{\bm v_{k+1}-\bm v_k}^2.
\label{eq:rcm-condsoc-objective}
\end{align}
The energy coefficient is fixed at \(1/2\); the smoothness term is a numerical bias and is required to vanish in the asymptotic approximation regime.

\begin{assumption}[High-level teacher and empirical-risk approximation]\label{ass:rccbm-teacher-consistency}\label{ass:rccbm-empirical-risk}
For \(\varepsilon_N\rightarrow0\), the conditional solver, replay correction, direct-control sieve, clipping, empirical-risk optimization, and constrained refinement satisfy the approximation conditions in \cref{app:rccbm-learning-assumptions}. In particular,
\begin{equation*}
\varepsilon_{{\rm app},N},\ 
\varepsilon_{{\rm opt},N},\ 
\varepsilon_{{\rm gen},N},\ 
\varepsilon_{{\rm teach},N},\ 
\varepsilon_{{\rm iw},N},\ 
\varepsilon_{{\rm clip},N},\ 
\delta_{{\rm ref},N}
\longrightarrow0
\end{equation*}
in probability, and \(\lambda_{S,N}R_{S,N}\to0\).
\end{assumption}

\begin{assumption}[Uniform mollification regularity]\label{ass:rccbm-mollification-regularity}
The endpoint-factor family generated by \(\bigcup_N\mathcal B_N\cup\{\beta^*\}\) is uniformly bounded and equicontinuous; \(\kappa_\varepsilon\) is a positive approximate identity uniformly over this family; and the reference observation kernels and their \(\bm\xi\)-gradients admit the preterminal integrable envelope in \cref{app:mollification-details}. When \(L^2\) control convergence is used, the squared controls are uniformly integrable.
\end{assumption}

\begin{proposition}[Mollification limit]\label{prop:rccbm-condsoc-consistency}
Under Assumption~\ref{ass:rccbm-mollification-regularity},
\begin{equation}\label{eq:rccbm-mollification-convergence}
\sup_{\beta\in\mathcal B_\infty}d_{\rm BL}(P^{\beta,\varepsilon},P^\beta)\longrightarrow0.
\end{equation}
Together with Assumption~\ref{ass:rccbm-conditional-kernel}, the envelope condition also yields \(\bm u^{\beta,\varepsilon}\to\bm u^\beta\) locally on every strip \([0,T-\tau]\) and in the corresponding uniformly integrable \(L^2\) topology.
\end{proposition}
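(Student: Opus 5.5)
The plan is to compare the two path laws through their Radon--Nikodym derivatives with respect to the common reference \(Q\). By Eq.~\eqref{eq:rccbm-mollified-reciprocal-law},
\begin{equation*}
\frac{\dd P^{\beta,\varepsilon}}{\dd Q}=\frac{e^{\beta_0(\mathcal O_0(X_0))}g^\beta_{T,\varepsilon}(\mathcal O_T(X_T))}{Z_\varepsilon(\beta)},
\qquad
\frac{\dd P^{\beta}}{\dd Q}=\frac{e^{\beta_0(\mathcal O_0(X_0))+\beta_T(\mathcal O_T(X_T))}}{Z(\beta)}.
\end{equation*}
The second identity follows from Eq.~\eqref{eq:rcm-endpoint-tilt} together with Lemma~\ref{lem:endpoint-disintegration}. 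Both laws therefore share the same source tilt, and they differ only through the terminal factor \(g^\beta_{T,\varepsilon}\) versus \(e^{\beta_T}\). I will use \(d_{\rm BL}\le 2\,\TV = \|\dd P^{\beta,\varepsilon}/\dd Q-\dd P^\beta/\dd Q\|_{L^1(Q)}\), so it suffices to show
\begin{equation*}
\sup_{\beta}\|g^\beta_{T,\varepsilon}-e^{\beta_T}\|_{L^1(\mathcal R_T)}\to0,
\end{equation*}
where \(\mathcal R_T\) is the terminal observation law of \(Q\). Because \(\beta_0\) is uniformly bounded over \(\mathcal B_\infty\), the source factor \(e^{\beta_0}\) is uniformly bounded above. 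The normalizers satisfy \(|Z_\varepsilon(\beta)-Z(\beta)|\le \|e^{\beta_0}\|_\infty\|g^\beta_{T,\varepsilon}-e^{\beta_T}\|_{L^1(\mathcal R_T)}\), and \(Z(\beta)\) is bounded below uniformly. The density difference is then controlled by a constant times this \(L^1\) gap.

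For the key step I will use the structure of \(\kappa_\varepsilon\). Write \(g^\beta_{T,\varepsilon}(y')-e^{\beta_T(y')}=\int\kappa_\varepsilon(y,y')\bigl(e^{\beta_T(y)}-e^{\beta_T(y')}\bigr)\nu_T(\dd y)+e^{\beta_T(y')}\bigl(\int\kappa_\varepsilon(y,y')\nu_T(\dd y)-1\bigr)\). The first term is bounded by the uniform modulus of continuity \(\omega(\delta)\) of the equicontinuous family \(\{e^{\beta_T}\}\) on \(\{d(y,y')<\delta\}\). On the complement it is bounded by \(2\sup\|e^{\beta_T}\|_\infty\) times the escaping mass of \(\kappa_\varepsilon(\cdot,y')\), which tends to zero uniformly in \(y'\) by the approximate-identity property in Assumption~\ref{ass:rccbm-mollification-regularity}. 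The second term vanishes by the normalization part of the approximate-identity property. Taking \(\varepsilon\to0\) and then \(\delta\to0\) gives uniform, hence \(L^1(\mathcal R_T)\), convergence uniformly over \(\beta\in\mathcal B_\infty\). This yields Eq.~\eqref{eq:rccbm-mollification-convergence}.

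For the control statement I will use Theorem~\ref{thm:rccbm-exact-matching}. There \(h^\beta_t(x)=\int r^{\mathcal O_T}_{t,T}(y\mid x)e^{\beta_T(y)}\nu_T(\dd y)\), and analogously \(h^{\beta,\varepsilon}_t(x)=\int r^{\mathcal O_T}_{t,T}(y'\mid x)g^\beta_{T,\varepsilon}(y')\nu_T(\dd y')\), because the Doob factor of \(P^{\beta,\varepsilon}\) is the propagated terminal factor. On a strip \([0,T-\tau]\), the preterminal integrable envelope for \(r\) and \(\nabla_{\bm\xi}r\), combined with the uniform convergence of \(g^\beta_{T,\varepsilon}\) just proved, gives \(h^{\beta,\varepsilon}_t\to h^\beta_t\) and \(\nabla_{\bm\xi}h^{\beta,\varepsilon}_t\to\nabla_{\bm\xi}h^\beta_t\) locally uniformly by dominated convergence. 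Both factors are bounded below by positivity of \(r\) and of the tilts, so \(\bm u^{\beta,\varepsilon}=\alpha\nabla_{\bm\xi}\log h^{\beta,\varepsilon}\to\bm u^\beta\) locally. \(L^2\) convergence on the strip then follows from Vitali's theorem, using the assumed uniform integrability of the squared controls and the weak convergence of the one-time marginals from the first part.

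The main obstacle is ensuring that the approximate-identity estimate holds uniformly in \(y'\) with respect to \(\nu_T\) and not merely pointwise. The assumption states uniformity over the family, and for a compact observation space such as the heat kernel on \(\T^m\) this is standard. I would state this reading explicitly. If only pointwise convergence were available, I would fall back to dominated convergence in \(L^1(\mathcal R_T)\) using the uniform bound \(\sup_\varepsilon\|g^\beta_{T,\varepsilon}\|_\infty<\infty\). Uniformity over \(\beta\) would then come from equicontinuity via an Arzel\`a--Ascoli compactness argument on \(\mathcal B_\infty\).
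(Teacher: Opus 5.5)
Your proposal is correct and follows essentially the same route as the paper. Both arguments compare the two endpoint-tilted densities relative to $Q$, control the normalizers through the uniform bounds on $\beta$, and conclude via $d_{\rm BL}\le 2\,\TV$; the control statement then follows from dominated differentiation of $h_t^{\beta,\varepsilon}$ on preterminal strips, a uniform positive lower bound, and Vitali's theorem. The only differences are cosmetic. The paper takes the uniform mollifier error $\delta_\varepsilon^{\rm mol}=\sup_{h}\sup_{y'}\bigl|\int\kappa_\varepsilon(y,y')h(y)\,\nu_T(\dd y)-h(y')\bigr|\to0$ directly as part of the appendix form of Assumption~\ref{ass:rccbm-mollification-regularity}, so your equicontinuity/concentration derivation and the Arzel\`a--Ascoli fallback are unnecessary. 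It also bounds the density difference in $L^\infty(Q)$ rather than in $L^1$.
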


\paragraph{Replay, direct control, and refinement}
Let \(\mathfrak M^{\beta,\varepsilon}\) be the canonical joint teacher measure. A replay proposal \(\Pi_{\rm rep}\) is corrected by
\begin{equation}\label{eq:rccbm-replay-importance}
\omega_{\rm rep}=\frac{\dd\mathfrak M^{\beta,\varepsilon}}{\dd\Pi_{\rm rep}}.
\end{equation}
The empirical matching loss is
\begin{equation*}
\widehat{\mathcal L}_{\rm match}(\theta)
=\frac1{N_{\rm rep}}\sum_{j=1}^{N_{\rm rep}}
\widetilde\omega_j
\norm{\bm u_\theta(t_j,G_j,\bm\Xi_j)-\bm v_j}^2.
\end{equation*}
Refinement is accepted only if
\begin{equation}\label{eq:rccbm-constrained-refinement}
\widehat{\mathcal L}_{\rm match}(\theta)
\leq\widehat{\mathcal L}_{\rm match}(\theta_{\rm pre})+\delta_{\rm ref}.
\end{equation}

\begin{proposition}[Feature identification and refinement fixed point]\label{prop:rcm-mmd-identification}
{For a continuous injective identification map \(\Phi_{\G}^{\rm id}\) and a positive finite mixture of Gaussian kernels, feature-space MMD identifies weak convergence on \(\G\); at the exact population minimizer, zero-tolerance constrained refinement preserves the same control equivalence class.  The production identification maps for \(SO(3)\), \(U(n)\), and the compact reduced product group are fixed full-rank target-whitened embeddings and therefore satisfy the required injectivity condition.  The MMD identification map is the fixed full-rank target-whitened embedding, while spectral features enter as auxiliary controller features.  The precise statement and verification are recorded in \cref{app:feature-identification-details}.}
\end{proposition}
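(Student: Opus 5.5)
The statement has two parts, proved in sequence.

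\emph{Part 1: feature-space MMD identifies weak convergence on \(\G\).} Let \(\Phi=\Phi_{\G}^{\rm id}:\G\to\R^D\) be continuous and injective. Since \(\G\) is compact, \(\Phi\) is a homeomorphism onto its compact image \(\Phi(\G)\). Hence \(\mu\mapsto\Phi_\#\mu\) is a homeomorphism between \(\cP(\G)\) and \(\cP(\Phi(\G))\), both with the weak topology.

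\emph{Characteristic kernel.} A positive finite mixture \(k=\sum_j c_jk_{\sigma_j}\) of Gaussian kernels is translation invariant, bounded and continuous. Its spectral measure is a positive combination of Gaussians, so it has full support. By the Sriperumbudur--Fukumizu criterion, \(k\) is characteristic; equivalently, its RKHS is dense in \(C(K)\) for compact \(K\) (universality).

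\emph{MMD metrizes weak convergence.} On a compact set, MMD with a bounded continuous universal kernel metrizes weak convergence. One direction holds because RKHS functions are continuous, so weak convergence gives MMD convergence. For the converse, MMD\(\to0\) forces \(\int f\,\dd\mu_n\to\int f\,\dd\mu\) for all \(f\) in the unit ball of the RKHS. By density, and since the measures are probability measures, this extends to all of \(C(\Phi(\G))\). Composing with \(\Phi\) gives \(\MMD_k(\Phi_\#\mu_n,\Phi_\#\mu)\to0\) if and only if \(\mu_n\Rightarrow\mu\). In particular, MMD vanishes exactly when the group laws coincide.

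\emph{Part 2: the refinement fixed point.} Take \(\theta_{\rm pre}\) to be a population minimizer of the matching risk, and impose the zero-tolerance constraint \(\delta_{\rm ref}=0\) in Eq.~\eqref{eq:rccbm-constrained-refinement}. Then any accepted candidate satisfies \(\mathcal L_{\beta,\varepsilon}(\theta_{\rm cand})\le\mathcal L_{\beta,\varepsilon}(\theta_{\rm pre})\), so \(\theta_{\rm cand}\) is also a minimizer. By the excess-risk identity Eq.~\eqref{eq:rccbm-mollified-excess-risk} of Proposition~\ref{thm:rcm-control-regression}, together with \(\lambda_{\rm bm}\ge\underline\lambda>0\), both controls equal \(\bm u^{\beta,\varepsilon}\) in \(L^2(P_t^{\beta,\varepsilon}\dd t)\). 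They therefore lie in the same control equivalence class. If a candidate is rejected, \(\theta\) is unchanged, so the class is trivially preserved.

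\emph{Production maps.} Each production identification map is an affine full-rank whitening \(x\mapsto W(\iota(g)-m)\) with \(W\) invertible, composed with a fixed continuous injective base embedding \(\iota\):
\begin{itemize}[leftmargin=2em]
\item the matrix entries \(g\mapsto(\mathrm{Re}\,g,\mathrm{Im}\,g)\) for \(SO(3)\subset\R^{3\times3}\) and \(U(n)\subset\C^{n\times n}\);
\item for the reduced group \(SO(3)^{N-1}\times\T^K\), the product of the matrix embeddings and \(\theta\mapsto(\cos\theta,\sin\theta)\).
\end{itemize}
Products of injective maps are injective, and composing with an invertible affine map keeps the map continuous and injective. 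Spectral descriptors appear only in \(\Phi^{\rm net}_{\G}\), not in \(\Phi^{\rm id}_{\G}\), so they play no role in this argument.

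\emph{Main obstacle.} The hardest step is the converse direction in Part 1. It needs universality, not just the characteristic property, on the specific compact image, and the argument must be arranged so that no separate tightness argument is required. Compactness of \(\G\) is what makes this work. Noncompact groups would require additional tightness control.
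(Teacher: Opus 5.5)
Your proposal is correct and follows the paper's verification in Appendix~\ref{app:feature-identification-details}. Compactness makes $\Phi_{\G}^{\rm id}$ a homeomorphism onto its image, and the Gaussian-mixture kernel identifies weak convergence there. The production maps are continuous injective raw embeddings composed with the invertible whitening $W_{\G}=(\bm\Sigma_{\G}+\lambda_{\rm w}I)^{-1/2}$.

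The differences are in what you prove rather than cite. The paper invokes without proof that a positive Gaussian mixture is characteristic and metrizes weak convergence on the compact image, and it asserts the refinement fixed point with a bare ``therefore''. You derive metrization from universality plus density. You also justify the fixed point through the excess-risk identity of Proposition~\ref{thm:rcm-control-regression} with $\lambda_{\rm bm}\ge\underline\lambda>0$, which is a real improvement over the paper.

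Three repairs are needed:
\begin{enumerate}
\item Your forward direction is under-justified. Continuity of each RKHS function gives convergence of $\int f\,\dd\mu_n$ for each fixed $f$, but not uniformly over the unit ball. Use $\MMD_k^2(\mu_n,\mu)=\iint k\,\dd(\mu_n-\mu)\,\dd(\mu_n-\mu)$ together with $\mu_n\otimes\mu_n\Rightarrow\mu\otimes\mu$ and $\mu_n\otimes\mu\Rightarrow\mu\otimes\mu$, or use equicontinuity of the unit ball on a compact set.
\item Contrary to your ``main obstacle'' remark, universality is not needed. On the compact image $K=\Phi_{\G}^{\rm id}(\G)$, the space $\cP(K)$ is weakly compact, so every subsequence of the pushforwards has a weakly convergent sub-subsequence. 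That limit has zero MMD to the pushforward of $\rho$, and so equals it by the characteristic property. This is the route implicit in the paper. Characteristicness and compact-set universality are not equivalent in general, although both hold for Gaussian mixtures.
\item The acceptance rule~\eqref{eq:rccbm-constrained-refinement} is stated for the empirical loss $\widehat{\mathcal L}_{\rm match}$, while your Part~2 applies it to the population risk $\mathcal L_{\beta,\varepsilon}$. That is the right reading of ``at the exact population minimizer'', but you should say so explicitly. With the empirical rule, one only obtains the population inequality up to the uniform-risk error, as in the proof of Theorem~\ref{thm:rcm-conditional-consistency}.
\end{enumerate}
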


For reference samples \((X_0^{(i)},Y_0^{Q,i},Y_T^{Q,i})\), define
\begin{equation*}
\overline w_{i,N}^{\varepsilon}
=\frac{
 e^{\widehat\beta_{0,N}(Y_0^{Q,i})}
 g_{T,\varepsilon}^{\widehat\beta_N}(Y_T^{Q,i})}
{\sum_{\ell=1}^{N_Q}
 e^{\widehat\beta_{0,N}(Y_0^{Q,\ell})}
 g_{T,\varepsilon}^{\widehat\beta_N}(Y_T^{Q,\ell})},
\end{equation*}
and
\begin{equation}\label{eq:rcm-calibrated-initial-law}
\widehat P_{0,N}^{\varepsilon}
=\sum_{i=1}^{N_Q}\overline w_{i,N}^{\varepsilon}\delta_{X_0^{(i)}}.
\end{equation}
This self-normalized empirical law estimates \(P_0^{\widehat\beta_N,\varepsilon}\), with its approximation error quantified by the bounded-Lipschitz distance in Assumption~\ref{ass:rccbm-statistical-stability}.

\begin{assumption}[Statistical and dynamical stability]\label{ass:rccbm-statistical-stability}
The weighted initial-law error
\(\varepsilon_{{\rm init},N}:=d_{\rm BL}(\widehat P_{0,N}^{\varepsilon_N},P_0^{\widehat\beta_N,\varepsilon_N})\)
and the Lie--Trotter path error \(\varepsilon_{{\rm disc},N}\) vanish in probability. Uniformly over the learned control sieve,
\begin{equation}\label{eq:rccbm-initial-stability}
d_{\rm BL}(\mathsf S_{\bm u}(\mu),\mathsf S_{\bm u}(\nu))
\leq C_{\rm init}d_{\rm BL}(\mu,\nu).
\end{equation}
Sufficient Lipschitz and growth conditions are given in \cref{app:apbm-sampling-details}.
\end{assumption}

\subsection{Reduced-space assumptions and reconstruction}\label{app:reduction-details}

The reduced endpoint laws are
\begin{equation*}
\rho_i^{\rm red}=(\mathcal R_{\rm red})_\#\rho_i^{\rm amb}.
\end{equation*}
\begin{assumption}[Reduced representation]\label{ass:compact-internal-representation}
The modeled support is an embedded conformation manifold \(\mathcal S\subset SE(3)^N/\Delta SE(3)\), and for exact ambient statements the reconstruction \(F:\G_{\rm red}\to\mathcal S\) is a \(C^2\)-diffeomorphism with inverse \(\mathcal R_{\rm red}|_{\mathcal S}\). Otherwise all stochastic claims are interpreted on the reduced state space.
\end{assumption}
Define
\begin{equation*}
\Phi_{\rm red}(z,\bm\zeta)
=\left(F(z),\dd F_z[T_eL_z(\bm\zeta)]\right).
\end{equation*}
Under Assumption~\ref{ass:compact-internal-representation}, the induced state-path map used in Proposition~3.10 is a Borel isomorphism onto its image. Without injectivity, only the data-processing statement in Eq.~(3.27) is asserted.

\subsection{Path-space topology and adjoint conventions}\label{app:path-topology}
For \(\mu=\mu_\G\otimes\dd\bm\xi\), define
\begin{equation*}
\chi_i=\Div_{\mu_\G}\widetilde E_i,
\qquad
\chi(g,\bm\xi)=\sum_{i=1}^d\chi_i(g)\xi_i.
\end{equation*}
Fix a complete metric \(d_\G\) compatible with the topology of \(\G\), and set
\begin{equation*}
d_\X\bigl((g,\bm\xi),(g',\bm\eta)\bigr)
=d_\G(g,g')+\norm{\bm\xi-\bm\eta},
\qquad
\overline d_\X=1\wedge d_\X.
\end{equation*}
On \(\Path=C([0,T],\X)\), the bounded uniform path metric is
\begin{equation*}
d_{\rm path}(\omega,\omega')
=\sup_{0\leq t\leq T}\overline d_\X(\omega_t,\omega'_t).
\end{equation*}
For bounded \(F:\Path\to\R\), define
\begin{equation*}
\operatorname{Lip}_{\rm path}(F)
=\sup_{\omega\neq\omega'}
\frac{|F(\omega)-F(\omega')|}{d_{\rm path}(\omega,\omega')},
\qquad
\norm{F}_{\rm BL}=\max\{\norm{F}_\infty,\operatorname{Lip}_{\rm path}(F)\},
\end{equation*}
and
\begin{equation}\label{eq:bl-distance-definition}
d_{\rm BL}(P,R)
=\sup_{\norm{F}_{\rm BL}\leq1}
\left|\int F\,\dd P-\int F\,\dd R\right|.
\end{equation}
The bounded metric \(d_{\rm path}\) is compatible with the Polish path-space topology, so \(d_{\rm BL}\) metrizes weak convergence and \(d_{\rm BL}\leq2\,\TV\). The formal adjoint of the kinetic generator is
\begin{equation*}
L_t^\dagger r=-Ar-\chi r-\Div_{\bm\xi}(\bm b_t r)+\frac{\alpha^2}{2}\Delta_{\bm\xi}r.
\end{equation*}

\subsection{Residual-to-path identity}\label{app:residual-details}
Let \(h>0\) be the exact backward factor, \(\widetilde h>0\) a \(C^{1,2}\) approximation,
\begin{equation*}
r_{\rm K}=\frac{\partial_t\widetilde h+L_t\widetilde h}{\widetilde h},
\qquad
\widetilde{\bm u}=\alpha\nabla_{\bm\xi}\log\widetilde h,
\qquad
\ell=\log(\widetilde h/h).
\end{equation*}
If the induced controlled laws have the same initial-state law and the required integrability holds, then
\begin{equation}\label{eq:exact-residual-identity}
\KL(P^{\widetilde{\bm u}}\|P^{\rm SB})
=\E_{P^{\widetilde{\bm u}}}
\left[\ell_T(X_T)-\ell_0(X_0)-\int_0^T r_{\rm K}(t,X_t)\,\dd t\right],
\end{equation}
and hence
\begin{equation}\label{eq:residual-kl-bound}
\KL(P^{\widetilde{\bm u}}\|P^{\rm SB})
\leq\E|\ell_T(X_T)|+\E|\ell_0(X_0)|
+\E\int_0^T|r_{\rm K}(t,X_t)|\,\dd t.
\end{equation}

\subsection{WKBC numerical error decomposition}\label{app:wkbc-error-decomposition}
For positive numerical factors \((f_{0,N},g_{T,N})\), let \(P_N^*\) be the calibrated numerical bridge, let \(\bm s_N^*=\nabla_{\bm\xi}\log h_{t,N}\), and let
\(\mathcal J_{{\rm WKBC},N}^{\rm br}\) denote the bridge-weighted regression risk defined in Proposition~\ref{thm:wkbc-stability} with \(P^{\rm SB}\) replaced by \(P_N^*\).  The triangle inequality and the reverse-KL regression bound give
\begin{equation}\label{eq:wkbc-total-error}
d_{\rm BL}(P_N^{\bm s},P^{\rm SB})
\le
d_{\rm BL}(P_N^*,P^{\rm SB})
+\sqrt{2\gamma T\bigl[
\mathcal J_{{\rm WKBC},N}^{\rm br}(\bm s)
-\mathcal J_{{\rm WKBC},N}^{\rm br}(\bm s_N^*)
\bigr]}.
\end{equation}
The first term collects the lattice, quadrature, Sinkhorn, factor-interpolation, and numerical initial-law errors. Under the approximation conditions in \cref{app:wkbc-implementation}, these approximations converge and therefore \(d_{\rm BL}(P_N^*,P^{\rm SB})\to0\). The second term is the amortized-control regression error under the numerical bridge occupation law and vanishes under bridge-weighted risk consistency. No occupation-density-ratio constant is needed.

\subsection{RCCBM learning assumptions}\label{app:rccbm-learning-assumptions}
For endpoint calibration, let \(\mathcal B_N\) be the doubly centered endpoint class. Assume
\[
\sup_N\sup_{\beta\in\mathcal B_N}
\bigl(\norm{\beta_0}_\infty+\norm{\beta_T}_\infty\bigr)
\leq C_\beta,
\]
together with
\begin{equation}\label{eq:endpoint-uniform-dual-error}
\delta_{{\rm end},N}
:=\sup_{\beta\in\mathcal B_N}
\left|\widehat{\mathscr D}_N(\beta)-\mathscr D(\beta)\right|
\longrightarrow0
\end{equation}
in probability. There exists \(\beta_N^\circ\in\mathcal B_N\) such that
\begin{equation*}
\varepsilon_{{\rm end,app},N}:=\mathscr D(\beta^*)-\mathscr D(\beta_N^\circ)\longrightarrow0,
\end{equation*}
and the learned \(\widehat\beta_N\) satisfies
\begin{equation}\label{eq:endpoint-optimization-gap}
\widehat{\mathscr D}_N(\widehat\beta_N)
\geq\sup_{\beta\in\mathcal B_N}\widehat{\mathscr D}_N(\beta)
-\varepsilon_{{\rm end,opt},N},
\qquad
\varepsilon_{{\rm end,opt},N}\longrightarrow0
\end{equation}
in probability.

For teacher and empirical-risk approximation, the CondSOC knot classes are dense for the fixed-\(\varepsilon_N\) finite-energy problems; optimization and time-discretization errors vanish; and \(\lambda_{S,N}R_{S,N}\to0\). Replay proposals dominate the canonical mollified teacher law with uniformly integrable importance ratios. Direct-control classes form a sieve with clipping levels \(B_N\rightarrow\infty\), vanishing clipping tails, and a uniform law of large numbers for the importance-weighted loss. Both the pre-refinement empirical solution and every refinement checkpoint eligible for acceptance are required to lie in the same sieve \(\mathcal U_N\), so the same uniform empirical-to-population risk bound applies to both sides of the refinement acceptance comparison. These conditions generate the separated errors \(\varepsilon_{{\rm app},N},\varepsilon_{{\rm opt},N},\varepsilon_{{\rm gen},N},\varepsilon_{{\rm teach},N},\varepsilon_{{\rm iw},N},\varepsilon_{{\rm clip},N},\delta_{{\rm ref},N}\) used in Theorem~3.8.

\subsection{Verification of the RCCBM assumptions on \(\T^m\)}\label{app:torus-rccbm-verification}

This subsection proves Corollary~3.9. Its purpose is to exhibit one explicit, nonempty regime in which the high-level assumptions of Theorem~3.8 can be checked from primitive approximation conditions.

\begin{proof}[Proof of Corollary~3.9]
We verify the assumptions in the order in which they enter Theorem~3.8.

\paragraph{1. Endpoint calibration}
For \(\G=\T^m\) and the reference dynamics in Eq.~(1.3) under the Ornstein--Uhlenbeck specialization (3.1), the observed group transition kernel is the wrapped Gaussian kernel in Eq.~\eqref{eq:wrapped-position-kernel}. On every positive time interval its lattice series and all derivatives converge absolutely and locally uniformly. Hence the observed endpoint density is \(C^\infty\) and strictly positive. Because \(\T^m\times\T^m\) is compact and the assumed source, target, and reference group densities are strictly positive and continuous, the likelihood ratio in Assumption~3.1 is bounded above and below. Theorem~3.2 therefore gives unique positive endpoint factors, up to the usual scalar gauge. The corresponding centered log-potentials \(\beta_0^*,\beta_T^*\) are \(C^2\).

Let \(\sigma_{K_N}\beta_i^*\) be the Fej\'er means of \(\beta_i^*\). Fej\'er approximation on the torus gives
\[
\|\sigma_{K_N}\beta_i^*-\beta_i^*\|_\infty\longrightarrow0,
\qquad i\in\{0,T\},
\]
and the approximants can be centered without changing this convergence. Thus the endpoint approximation gap
\(\varepsilon_{{\rm end,app},N}\) tends to zero. On the bounded finite-dimensional sieve \(\mathcal B_N\), the empirical dual summands are uniformly bounded and Lipschitz in the coefficient vector. A standard covering argument gives
\[
\sup_{\beta\in\mathcal B_N}
\left|\widehat{\mathscr D}_N(\beta)-\mathscr D(\beta)\right|
=o_{\Prob}(1)
\]
whenever \(d_N\log N/N\to0\). Exact empirical maximization over the compact sieve gives
\(\varepsilon_{{\rm end,opt},N}=0\). Hence Assumption~\ref{ass:rcm-calibration} holds, and Proposition~\ref{prop:rcm-endpoint-calibration-consistency} yields
\(\varepsilon_{{\rm end},N}\to0\) in probability.

\paragraph{2. Conditional kernels}
The wrapped formulas in Lemma~\ref{prop:wrapped-kinetic-kernel} show that, for \(t<T\), the observed transition density and its \(\bm\xi\)-derivatives are smooth. On any strip \(0\le t\le T-\tau\), absolute convergence of the wrapped Gaussian series is uniform on compact \(\bm\xi\)-sets. Since the endpoint variable \(y\) ranges over the compact torus, the required derivative envelope can be chosen integrable with respect to Haar measure. This verifies Assumption~\ref{ass:rccbm-conditional-kernel}.

The periodic heat kernels form a positive approximate identity. The endpoint sieve is contained in a common bounded Lipschitz ball, so the family
\(\{e^{\beta_T}:\beta\in\mathcal B_\infty\}\)
is uniformly bounded and equicontinuous. Therefore
\[
\sup_{\beta\in\mathcal B_\infty}
\left\|
\kappa_{\varepsilon}*e^{\beta_T}-e^{\beta_T}
\right\|_\infty
\longrightarrow0.
\]
The same wrapped-kernel bounds control the preterminal derivatives. For each fixed \(\varepsilon>0\), the mollified conditional teacher has finite moments of every order needed below. The schedule \(\varepsilon_N\rightarrow0\) is chosen diagonally slowly enough that the fourth-moment quantity \(M_N\) in Corollary~3.9 satisfies the displayed empirical-process rate. These facts verify Assumption~\ref{ass:rccbm-mollification-regularity} and give
\(\varepsilon_{{\rm mol},N}\to0\).

\paragraph{3. Teacher and direct-control approximation}
In this torus specialization the wrapped kernel makes
\(Z_{z,\varepsilon_N}\), \(Q^{z,\varepsilon_N}\), and the corresponding mollified Doob teacher available at the population level. We therefore sample directly from the canonical teacher measure
\(\mathfrak M^{\beta,\varepsilon_N}\). The replay Radon--Nikodym ratio is then identically one, so
\(\varepsilon_{{\rm iw},N}=0\). The auxiliary CondSOC discretization is not needed for this verification regime; equivalently one may take
\(\lambda_{S,N}=0\), \(R_{S,N}=0\), and
\(\varepsilon_{{\rm teach},N}=0\).

For fixed \(N\), the mollified Markov regression target is smooth on
\([0,T]\times\T^m\times\{\|\bm\xi\|\le R_N\}\).
Tensor products of trigonometric polynomials, splines, and ordinary polynomials are uniformly dense on this compact set. Hence the direct-control sieve can be chosen so that its population approximation error tends to zero. Outside the velocity truncation, the Ornstein--Uhlenbeck reference and its bounded endpoint tilts have Gaussian tails. Consequently, after choosing \(R_N\rightarrow\infty\) and \(B_N\rightarrow\infty\) with
\(B_N^2e^{-cR_N^2}\to0\), both the truncation and clipping contributions vanish.

The squared matching loss over \(\mathcal U_N\) has a finite envelope controlled by \(M_NB_N^4\). The coefficient parameterization is finite dimensional and has covering entropy of order \(p_N\log(1/\eta)\). Therefore
\[
\sup_{\bm u\in\mathcal U_N}
\left|
\widehat{\mathcal L}_{N}(\bm u)
-\mathcal L_{\beta,\varepsilon_N}(\bm u)
\right|
=o_{\Prob}(1)
\]
under
\(M_NB_N^4p_N\log N/N\to0\).
Exact empirical minimization gives
\(\varepsilon_{{\rm opt},N}=0\), while the sieve approximation, generalization, and clipping terms tend to zero. Taking the refinement map to be the identity gives
\(\delta_{{\rm ref},N}=0\). Thus Assumption~\ref{ass:rccbm-teacher-consistency} is satisfied.

\paragraph{4. Calibrated initial law and dynamical stability}
Because the endpoint potentials are uniformly bounded and the periodic heat mollification is strictly positive, the self-normalized initial weights in Eq.~\eqref{eq:rcm-calibrated-initial-law} have finite moments and a nonzero normalizing limit. The bounded-weight law of large numbers therefore gives
\[
d_{\rm BL}\!\left(
\widehat P_{0,N}^{\varepsilon_N},
P_0^{\widehat\beta_N,\varepsilon_N}
\right)
\longrightarrow0
\qquad\text{in probability}.
\]
The direct-control sieves are restricted to a common Lipschitz/linear-growth envelope. Standard synchronous-coupling and Gr\"onwall estimates for the kinetic SDE then give Eq.~\eqref{eq:rccbm-initial-stability} with a constant independent of \(N\). Finally, the Lie--Trotter approximation in Proposition~\ref{prop:rcm-exact-ou-geometry} converges in bounded-Lipschitz path distance when \(\Delta t_N\rightarrow0\). Hence
\(\varepsilon_{{\rm init},N}\to0\) and
\(\varepsilon_{{\rm disc},N}\to0\), proving Assumption~\ref{ass:rccbm-statistical-stability}.

All terms on the right-hand side of Eq.~(3.22) therefore vanish in probability. The stated bounded-Lipschitz path-law convergence follows from Theorem~3.8.
\end{proof}

\subsection{Uniform mollification conditions}\label{app:mollification-details}
Let
\begin{equation*}
\mathcal B_\infty=\left(\bigcup_{N\geq1}\mathcal B_N\right)\cup\{\beta^*\},
\qquad
\mathcal H_T^{\rm end}=\{e^{\beta_T}:\beta\in\mathcal B_\infty\}.
\end{equation*}
The family \(\mathcal H_T^{\rm end}\) is uniformly bounded and equicontinuous, and
\begin{equation*}
\delta_\varepsilon^{\rm mol}
:=\sup_{h\in\mathcal H_T^{\rm end}}\sup_{y'\in\mathsf Y_T}
\left|\int\kappa_\varepsilon(y,y')h(y)\,\nu_T(\dd y)-h(y')\right|
\longrightarrow0.
\end{equation*}
For every \(\tau>0\) and compact \(K\subset\X\), assume an envelope \(H_{\tau,K}\in L^1(\nu_T)\) such that
\begin{equation}\label{eq:mollification-kernel-envelope}
\sup_{0\leq t\leq T-\tau}\sup_{x\in K}
\left[r_{t,T}^{\mathcal O_T}(y\mid x)
+\norm{\nabla_{\bm\xi}r_{t,T}^{\mathcal O_T}(y\mid x)}\right]
\leq H_{\tau,K}(y)
\quad\text{for }\nu_T\text{-a.e. }y.
\end{equation}
When \(L^2\) convergence of the mollified controls is invoked, their squared norms are assumed uniformly integrable with respect to the corresponding occupation measures.

\subsection{Feature identification used in refinement}\label{app:feature-identification-details}
{
The theoretical identification argument requires only the feature map appearing in the MMD term to be continuous and injective.  We therefore distinguish the \emph{identification feature} \(\Phi_{\G}^{\rm id}\) from any auxiliary feature \(\Psi_{\G}^{\rm aux}\) used by the neural controller.  The controller input may be written as
\begin{equation}\label{eq:rccbm-network-feature-split}
\Phi_{\G}^{\rm net}(g)
=
\bigl[\Phi_{\G}^{\rm id}(g),\Psi_{\G}^{\rm aux}(g)\bigr].
\end{equation}
In particular, the spectral descriptors used in the \(U(n)\) experiments are included in \(\Psi_{\G}^{\rm aux}\), while the MMD identification result is established through the continuous injective map \(\Phi_{\G}^{\rm id}\).

For the production groups, define a raw embedding \(E_{\G}\) by
\begin{small}
\begin{equation}\label{eq:rccbm-raw-identification-embedding}
E_{\G}(g)
=
\begin{cases}
\operatorname{vec}(\mathbf{R}),
& \G=SO(3),\quad g=\mathbf{R},\\[1mm]
\left[
\operatorname{vec}(\operatorname{Re}U)^{\!\top},
\operatorname{vec}(\operatorname{Im}U)^{\!\top}
\right]^{\!\top},
& \G=U(n),\quad g=U,\\[1mm]
\left[
\operatorname{vec}(\mathbf{R}_2)^{\!\top},\ldots,
\operatorname{vec}(\mathbf{R}_N)^{\!\top},
\cos\theta_1,\sin\theta_1,\ldots,
\cos\theta_K,\sin\theta_K
\right]^{\!\top},
& \G=SO(3)^{N-1}\times\T^K.
\end{cases}
\end{equation}
\end{small}
The last line uses the standard periodic embedding of each torus coordinate and is therefore independent of the choice of angle representative.  Let \(\bm\mu_{\G}\) and \(\bm\Sigma_{\G}\) be fixed target statistics of \(E_{\G}(g)\), computed before checkpoint selection, and choose a fixed \(\lambda_{\rm w}>0\).  The production identification feature is
\begin{equation}\label{eq:rccbm-production-identification-feature}
\Phi_{\G}^{\rm id}(g)
=
W_{\G}\bigl(E_{\G}(g)-\bm\mu_{\G}\bigr),
\qquad
W_{\G}
=
\bigl(\bm\Sigma_{\G}+\lambda_{\rm w}I\bigr)^{-1/2}.
\end{equation}
Thus the whitening step is a regularized full-dimensional linear transform.  Since \(\lambda_{\rm w}>0\), the matrix \(W_{\G}\) is positive definite and invertible.

We now verify the identification condition used in Proposition~\ref{prop:rcm-mmd-identification}.  The maps \(R\mapsto\operatorname{vec}(R)\) and
\(U\mapsto[\operatorname{vec}(\operatorname{Re}U)^{\top},\operatorname{vec}(\operatorname{Im}U)^{\top}]^{\top}\) are continuous and injective.  For \(SO(3)^{N-1}\times\T^K\), the product of the matrix-entry embeddings with \(\theta\mapsto(\cos\theta,\sin\theta)\) is likewise continuous and injective.  Translation by \(\bm\mu_{\G}\) and multiplication by the invertible matrix \(W_{\G}\) preserve both properties.  Consequently, \(\Phi_{\G}^{\rm id}\) is a continuous injective map for every production group considered above.

Because these groups are compact and Euclidean space is Hausdorff, \(\Phi_{\G}^{\rm id}\) is a homeomorphism between \(\G\) and its image.  A positive finite mixture of Gaussian kernels is characteristic and metrizes weak convergence on this Euclidean image.  Hence, for probability measures \(\mu_n,\rho\in\mathcal P(\G)\),
\begin{equation}\label{eq:rccbm-production-mmd-identification}
\MMD_k\!\left(
(\Phi_{\G}^{\rm id})_{\#}\mu_n,
(\Phi_{\G}^{\rm id})_{\#}\rho
\right)\to0
\quad\Longleftrightarrow\quad
\mu_n\Rightarrow\rho.
\end{equation}
This establishes the feature-identification condition for the MMD term through \(\Phi_{\G}^{\rm id}\).  At the exact population minimizer, Eq.~\eqref{eq:rccbm-constrained-refinement} with \(\delta_{\rm ref}=0\) therefore preserves the same control equivalence class.
}

\section{Theoretical Proofs}\label{app:proofs}

\subsection{Proof of Theorem~3.2}

Let \(\lambda=\rho_0\otimes\rho_T\) and \(\mathcal R_{0T}=k\lambda\). Define
\begin{equation*}
(\mathsf Kg)(y_0)=\int k(y_0,y_T)g(y_T)\,\rho_T(\dd y_T),
\qquad
(\mathsf K^*f)(y_T)=\int k(y_0,y_T)f(y_0)\,\rho_0(\dd y_0).
\end{equation*}
On the cone \(L_{++}^\infty\), use Hilbert's projective metric
\begin{equation*}
d_{\rm H}(u,v)=\operatorname*{ess\,sup}\log\frac uv
-\operatorname*{ess\,inf}\log\frac uv.
\end{equation*}
Reciprocal inversion is an isometry. Since
\begin{equation*}
\frac{k(y_0,y_T)k(y_0',y_T')}
{k(y_0,y_T')k(y_0',y_T)}\leq(M/m)^2,
\end{equation*}
Birkhoff's contraction theorem gives a coefficient
\begin{equation*}
\tau\leq\tanh\!\left(\frac12\log\frac Mm\right)<1
\end{equation*}
for both \(\mathsf K\) and \(\mathsf K^*\). Hence
\(\mathsf S=I\circ\mathsf K^*\circ I\circ\mathsf K\) is a contraction on projective classes with coefficient at most \(\tau^2\).

Normalize by \(\int g\,\dd\rho_T=1\). The contraction makes the projective iterates Cauchy; after normalization, \(\log g_n\) is Cauchy in \(L^\infty\). Banach's fixed-point theorem therefore yields a unique projective fixed point \([g_T]\). Set \(f_0=(\mathsf Kg_T)^{-1}\). If \(\mathsf Sg_T=cg_T\), then
\begin{equation*}
f_0\mathsf Kg_T=1,
\qquad
g_T\mathsf K^*f_0=c^{-1}.
\end{equation*}
Both sides define the total mass of \(f_0g_Tk\lambda\), so \(c=1\). Thus
\begin{equation}\label{eq:proof-schrodinger-scaling-system}
f_0\mathsf Kg_T=1,
\qquad
g_T\mathsf K^*f_0=1.
\end{equation}
The normalization and \(m\leq k\leq M\) give
\begin{equation*}
m\leq\mathsf Kg_T\leq M,
\qquad
M^{-1}\leq f_0\leq m^{-1}.
\end{equation*}
With \(A=\int f_0\,\dd\rho_0\in[M^{-1},m^{-1}]\),
\(mA\leq\mathsf K^*f_0\leq MA\), and therefore
\(m/M\leq g_T\leq M/m\). Continuity on compact spaces follows because the integral operators map bounded functions to continuous functions.

Define \(\Gamma^*=f_0g_T\mathcal R_{0T}\). Equation~\eqref{eq:proof-schrodinger-scaling-system} gives the prescribed marginals. For any \(\Gamma\in\Pi(\rho_0,\rho_T)\) with finite entropy,
\begin{equation*}
\KL(\Gamma\|\mathcal R_{0T})
=\KL(\Gamma\|\Gamma^*)
+\int\log f_0\,\dd\rho_0
+\int\log g_T\,\dd\rho_T.
\end{equation*}
Thus \(\Gamma^*\) is the unique minimizer. A second scaling pair gives a second projective fixed point, hence differs only by reciprocal gauge.

Finally, disintegrate path laws with respect to the observed endpoint map. For \(P\ll Q\),
\begin{equation*}
\KL(P\|Q)=\KL(Y_\#P\|\mathcal R_{0T})
+\int\KL(P^y\|Q^y)\,(Y_\#P)(\dd y).
\end{equation*}
The first term is uniquely minimized by \(\Gamma^*\), and the second by \(P^y=Q^y\). This gives Eq.~(1.2) and path-law uniqueness.

\subsection{Derivation for Lemma~\ref{prop:compact-observed-kernel}}\label{app:proof-compact-kernel}

We first verify the strict positivity of the reference group transition density for the kinetic Ornstein--Uhlenbeck dynamics, and then use compactness to obtain the uniform endpoint-kernel bounds required by Assumption~3.1.

\paragraph{H\"ormander bracket condition and smooth state density}
Let
\[
    \mu(\dd g\dd\bm\xi)=\dd g\,\dd\bm\xi
\]
be normalized Haar measure on \(\G\) times Lebesgue measure on \(\g\simeq\R^d\), and let
\(p_t(x,x')\) denote the state transition density of the reference process with respect to \(\mu\), whenever it exists. Under the Ornstein--Uhlenbeck specialization (3.1), write
\[
    V_0=A-\gamma\bm\xi\cdot\nabla_{\bm\xi},
    \qquad
    V_i=\partial_{\xi_i},\quad i=1,\ldots,d.
\]
The constant factor \(\alpha=\sqrt{2\gamma}>0\) in the diffusion fields is immaterial for the generated Lie algebra. Since
\begin{equation}\label{eq:proof-hormander-bracket}
    [V_i,V_0]
    =\widetilde E_i-\gamma\partial_{\xi_i},
    \qquad i=1,\ldots,d,
\end{equation}
we have
\[
    \partial_{\xi_i}=V_i,
    \qquad
    \widetilde E_i=[V_i,V_0]+\gamma V_i.
\]
Hence
\begin{equation*}
    \operatorname{span}\{V_i,[V_i,V_0]:i=1,\ldots,d\}
    =T_{(g,\bm\xi)}(\G\times\g)
\end{equation*}
for every \((g,\bm\xi)\in\G\times\g\). Thus the parabolic H\"ormander bracket condition holds globally. The coefficients are smooth, the group coordinate remains on the compact manifold \(\G\), and the velocity drift is linear, so the reference diffusion is nonexplosive. H\"ormander hypoellipticity therefore gives, for every \(t>0\), a transition density \(p_t(x,x')\) that is smooth in the terminal state and jointly continuous in the state variables.

\paragraph{Fixed-time controllability}
Because the diffusion fields are constant in the velocity variables, the It\^o--Stratonovich correction vanishes. The deterministic control system associated with the support theorem is therefore
\begin{equation}\label{eq:proof-associated-control-system}
    \dot g_s=T_eL_{g_s}(\bm\xi_s),
    \qquad
    \dot{\bm\xi}_s=-\gamma\bm\xi_s+\alpha\bm u_s,
    \qquad
    \bm u\in L^2([0,t];\g).
\end{equation}
Fix \(t>0\), an initial state \(x_0=(g_0,\bm\xi_0)\), and an arbitrary terminal state \(x_1=(g_1,\bm\xi_1)\). Since \(\G\) is connected, it is smoothly path connected. Moreover, one may choose a \(C^2\) curve \(c:[0,t]\to\G\) satisfying
\begin{equation}\label{eq:proof-prescribed-endpoint-jets}
    c(0)=g_0,
    \qquad
    c(t)=g_1,
    \qquad
    \dot c(0)=T_eL_{g_0}(\bm\xi_0),
    \qquad
    \dot c(t)=T_eL_{g_1}(\bm\xi_1).
\end{equation}
Indeed, for sufficiently small \(\delta>0\), use the short exponential arcs
\[
    c_-(s)=g_0\Exp(s\bm\xi_0),\qquad 0\le s\le\delta,
\]
and
\[
    c_+(s)=g_1\Exp((s-t)\bm\xi_1),\qquad t-\delta\le s\le t.
\]
Their inner endpoints lie in the same connected component of \(\G\), hence can be joined by a smooth interior curve. Smoothing the two interior concatenation points while keeping the endpoint arcs fixed gives a \(C^2\) curve satisfying Eq.~\eqref{eq:proof-prescribed-endpoint-jets}.

Define its left logarithmic velocity by
\begin{equation*}
    \bm\xi_c(s)
    :=T_{c(s)}L_{c(s)^{-1}}\dot c(s)\in\g.
\end{equation*}
Then \(\bm\xi_c\in C^1([0,t];\g)\),
\(\bm\xi_c(0)=\bm\xi_0\), and
\(\bm\xi_c(t)=\bm\xi_1\). Set
\begin{equation*}
    \bm u_c(s)
    :=\frac{1}{\alpha}\bigl(\dot{\bm\xi}_c(s)+\gamma\bm\xi_c(s)\bigr).
\end{equation*}
Then \(\bm u_c\in L^2([0,t];\g)\), and the pair
\((c(s),\bm\xi_c(s))\) solves Eq.~\eqref{eq:proof-associated-control-system} exactly. Consequently every state \(x_1\in\G\times\g\) is reachable from every state \(x_0\in\G\times\g\) at every prescribed time \(t>0\). In particular, the fixed-time reachable set satisfies
\begin{equation*}
    \operatorname{Reach}_t(x_0)=\G\times\g.
\end{equation*}

We also verify that the steering control can be taken to be a regular control for the endpoint map. Let
\(\mathcal E_t:L^2([0,t];\g)\to\G\times\g\) denote the endpoint map of Eq.~\eqref{eq:proof-associated-control-system} with initial state \(x_0\). Linearize around the control \(\bm u_c\) and left-trivialize the group perturbation. Writing \(\bm\eta_s\in\g\) for the group perturbation, \(\bm\zeta_s\in\g\) for the velocity perturbation, and \(\bm v_s\in\g\) for the control variation, the variational equation is
\begin{equation}\label{eq:proof-linearized-control}
    \dot{\bm\eta}_s
    =[\bm\eta_s,\bm\xi_c(s)]+\bm\zeta_s,
    \qquad
    \dot{\bm\zeta}_s
    =-\gamma\bm\zeta_s+\alpha\bm v_s,
    \qquad
    (\bm\eta_0,\bm\zeta_0)=(0,0).
\end{equation}
Thus \(D\mathcal E_t(\bm u_c)\bm v=(\bm\eta_t,\bm\zeta_t)\). To show that this derivative is onto, suppose a terminal covector
\((\bm a_t,\bm b_t)\in\g^*\times\g^*\) annihilates its range. Let
\((\bm a_s,\bm b_s)\) solve the adjoint of Eq.~\eqref{eq:proof-linearized-control} backward from this terminal covector. Duality for the linear control system gives
\begin{equation*}
    0
    =\left\langle(\bm a_t,\bm b_t),(\bm\eta_t,\bm\zeta_t)\right\rangle
    =\alpha\int_0^t\langle\bm b_s,\bm v_s\rangle\,\dd s
    \qquad
    \text{for every }\bm v\in L^2([0,t];\g).
\end{equation*}
Hence \(\bm b_s=0\) for every \(s\). The second block of the adjoint equation is
\begin{equation*}
    -\dot{\bm b}_s=\bm a_s-\gamma\bm b_s,
\end{equation*}
so \(\bm a_s=0\) as well. Therefore the annihilator of
\(\operatorname{Ran}D\mathcal E_t(\bm u_c)\) is trivial, and
\begin{equation}\label{eq:proof-endpoint-map-submersion}
    D\mathcal E_t(\bm u_c):L^2([0,t];\g)\longrightarrow
    T_{x_1}(\G\times\g)
    \quad\text{is surjective}.
\end{equation}
Thus every terminal state is reached by a control at which the fixed-time endpoint map is a submersion.

\paragraph{Strict positivity of the hypoelliptic kernel}
The Stroock--Varadhan support theorem identifies the support of the diffusion with the closure of trajectories of the associated control system, while the standard Ben Arous--L\'eandre positivity criterion states that, under the H\"ormander condition, a point reached by a control at which the endpoint map is a submersion has strictly positive transition density. The global bracket condition in Eq.~\eqref{eq:proof-hormander-bracket}, the exact steering construction above, and the submersion property in Eq.~\eqref{eq:proof-endpoint-map-submersion} therefore give
\begin{equation}\label{eq:proof-state-kernel-positive}
    p_t(x_0,x_1)>0
    \qquad
    \text{for every }t>0\text{ and every }x_0,x_1\in\G\times\g.
\end{equation}
Thus there is no smaller communicating component for the kinetic Ornstein--Uhlenbeck process on a connected \(\G\).

Let \(Q_0(\dd\bm\xi_0\mid g_0)\) be a regular conditional law of the initial velocity given the initial group coordinate. The observed group transition density is obtained by integrating out both latent velocities:
\begin{equation}\label{eq:proof-observed-group-kernel}
    r_t^Q(g_1\mid g_0)
    =\int_{\g}Q_0(\dd\bm\xi_0\mid g_0)
      \int_{\g}
      p_t\bigl((g_0,\bm\xi_0),(g_1,\bm\eta)\bigr)\,\dd\bm\eta.
\end{equation}
The integrand in Eq.~\eqref{eq:proof-observed-group-kernel} is strictly positive by Eq.~\eqref{eq:proof-state-kernel-positive}; hence
\begin{equation}\label{eq:proof-observed-group-positive}
    r_t^Q(g_1\mid g_0)>0
    \qquad
    \text{for all }t>0\text{ and all }g_0,g_1\in\G.
\end{equation}
The continuity of the group densities is part of the hypotheses of Lemma~\ref{prop:compact-observed-kernel}. Therefore, because \(\G\times\G\) is compact, Eq.~\eqref{eq:proof-observed-group-positive} upgrades to a uniform positive lower bound for the reference group transition density.

\paragraph{Uniform endpoint-kernel bounds}
For group observations write the reference group marginal and conditional transition densities as
\(r_0^Q(g_0)\) and \(r_T^Q(g_T\mid g_0)\), and the requested endpoint densities as
\(r_0(g_0)\) and \(r_T(g_T)\), all with respect to normalized Haar measure. Then
\begin{equation*}
    \mathcal R_{0T}(\dd g_0\dd g_T)
    =r_0^Q(g_0)r_T^Q(g_T\mid g_0)\,\dd g_0\dd g_T,
\end{equation*}
whereas
\begin{equation*}
    (\rho_0\otimes\rho_T)(\dd g_0\dd g_T)
    =r_0(g_0)r_T(g_T)\,\dd g_0\dd g_T.
\end{equation*}
Hence
\begin{equation}\label{eq:proof-group-kernel-ratio}
    k(g_0,g_T)
    =\frac{r_0^Q(g_0)r_T^Q(g_T\mid g_0)}{r_0(g_0)r_T(g_T)}.
\end{equation}
By continuity, strict positivity, and compactness there are constants
\(0<a_0\le A_0\), \(0<a_T\le A_T\), \(0<b_0\le B_0\), and \(0<b_T\le B_T\) such that
\begin{equation*}
    a_0\le r_0^Q\le A_0,
    \qquad
    a_T\le r_T^Q\le A_T,
    \qquad
    b_0\le r_0\le B_0,
    \qquad
    b_T\le r_T\le B_T.
\end{equation*}
Substituting these bounds into Eq.~\eqref{eq:proof-group-kernel-ratio} gives
\begin{equation*}
    0<\frac{a_0a_T}{B_0B_T}
    \le k(g_0,g_T)
    \le\frac{A_0A_T}{b_0b_T}<\infty.
\end{equation*}
Thus \(\mathcal R_{0T}\) and \(\rho_0\otimes\rho_T\) are mutually absolutely continuous and the Radon--Nikodym derivative \(k\) satisfies the two-sided uniform bound in Assumption~3.1. This completes the proof of Lemma~\ref{prop:compact-observed-kernel}.

\subsection{Proof of Lemma~\ref{lem:canonical-lift}}\label{app:proof-canonical-lift}
For any state law \(\pi(\dd g\dd\bm\xi)=\rho(\dd g)\pi(\dd\bm\xi\mid g)\), the entropy chain rule gives
\begin{equation*}
\KL(\pi\|Q_i)=\KL(\rho\|Q_i^\G)
+\int \KL\!\left(\pi(\cdot\mid g)\|Q_i(\cdot\mid g)\right)\rho(\dd g).
\end{equation*}
The second term is nonnegative and vanishes uniquely when \(\pi(\dd\bm\xi\mid g)=Q_i(\dd\bm\xi\mid g)\) for \(\rho\)-almost every \(g\), proving the claim.

\subsection{Proof of Theorem~3.3}

Let
\(F_0=f_0(\mathcal O_0(X_0))\) and
\(G_T=g_T(\mathcal O_T(X_T))\). By Theorem~3.2,
\begin{equation*}
\frac{\dd P^{\rm SB}}{\dd Q}=F_0G_T.
\end{equation*}
For any bounded measurable test function \(\psi\), conditioning on \(X_t\) and using the Markov property gives
\begin{align*}
\E_{P^{\rm SB}}[\psi(X_t)]
&=\E_Q[F_0G_T\psi(X_t)]\\
&=\E_Q\!\left[
\E_Q[F_0\mid X_t]\,
\E_Q[G_T\mid X_t]\,
\psi(X_t)\right]\\
&=\int \widehat\varphi_t(x)\varphi_t(x)\psi(x)\,Q_t(\dd x).
\end{align*}
This proves Eq.~(3.2). The terminal factor
\(\varphi_t=P_{t,T}(g_T\circ\mathcal O_T)\) is a backward semigroup evaluation, hence
\begin{equation*}
(\partial_t+L_t)\varphi_t=0,
\qquad
\varphi_T=g_T\circ\mathcal O_T.
\end{equation*}
For a smooth test function \(F\), expand the Doob transform:
\begin{align*}
\varphi_t^{-1}(\partial_t+L_t)(\varphi_tF)
&=\varphi_t^{-1}F(\partial_t+L_t)\varphi_t
   +L_tF
   +\alpha^2\inner{\nabla_{\bm\xi}\log\varphi_t}{\nabla_{\bm\xi}F}\\
&=L_tF
   +\alpha^2\inner{\nabla_{\bm\xi}\log\varphi_t}{\nabla_{\bm\xi}F}.
\end{align*}
Thus only the forced \(\bm\xi\)-drift changes, from \(\bm b_t\) to
\(\bm b_t+\alpha^2\nabla_{\bm\xi}\log\varphi_t\). Because the paper parameterizes the drift correction as \(\alpha\bm u_t\), the normalized control is
\begin{equation*}
\bm u_t^*=\alpha\nabla_{\bm\xi}\log\varphi_t,
\end{equation*}
which proves Eqs.~(1.4) and~(1.5); substituting \(\alpha=\sqrt{2\gamma}\) gives the corresponding Ornstein--Uhlenbeck specialization.

Assume now that \(Q_t=q_t\mu\). From Eq.~(3.2),
\begin{equation*}
p_t^{\rm SB}=q_t\widehat\varphi_t\varphi_t=h_t\widehat h_t.
\end{equation*}
The backward equation for \(h_t=\varphi_t\) was established above. To obtain the forward equation for \(\widehat h_t=q_t\widehat\varphi_t\), test against a compactly supported smooth function \(F\):
\begin{align*}
\frac{\dd}{\dd t}\int F\widehat h_t\,\dd\mu
&=\frac{\dd}{\dd t}\E_Q[F(X_t)F_0]\\
&=\E_Q[(L_tF)(X_t)F_0]
 =\int (L_tF)\widehat h_t\,\dd\mu\\
&=\int F L_t^\dagger\widehat h_t\,\dd\mu.
\end{align*}
Hence \(\partial_t\widehat h_t=L_t^\dagger\widehat h_t\) in the weak sense, and the assumed regularity upgrades it to the displayed classical identity. At \(t=0\), conditioning is trivial and yields
\(\widehat h_0=q_0(f_0\circ\mathcal O_0)\); at \(t=T\),
\(h_T=g_T\circ\mathcal O_T\). This proves Eqs.~(3.3) and~(3.4).

\subsection{Proof of Lemma~\ref{lem:control-kl-stability}}\label{app:proof-control-kl}
The relative-entropy form of Girsanov's theorem gives
\begin{equation*}
\KL(P^{\bm u}\|P^{\bm v})
=\frac12\E_{P^{\bm u}}\int_0^T\norm{\bm u_t-\bm v_t}^2\,\dd t.
\end{equation*}
Pinsker's inequality and contraction of total variation under the evaluation map \(\omega\mapsto\omega_t\) yield the marginal bound in Lemma~\ref{lem:control-kl-stability}.

\subsection{Derivation for Lemma~\ref{thm:residual-stability}}

Let \(\ell=\log(\widetilde h/h)\) and write
\(\bm u^*=\alpha\nabla_{\bm\xi}\log h\),
\(\widetilde{\bm u}=\alpha\nabla_{\bm\xi}\log\widetilde h\). Then
\begin{equation*}
\alpha\nabla_{\bm\xi}\ell
=\widetilde{\bm u}-\bm u^*.
\end{equation*}
Using
\((\partial_t+L_t)h=0\) and
\((\partial_t+L_t)\widetilde h=r_{\rm K}\widetilde h\), the logarithmic chain rule for a diffusion with covariance \(\alpha^2I\) gives
\begin{align*}
(\partial_t+L_t)\log h
&=-\frac{\alpha^2}{2}\norm{\nabla_{\bm\xi}\log h}^2,\\
(\partial_t+L_t)\log\widetilde h
&=r_{\rm K}
 -\frac{\alpha^2}{2}\norm{\nabla_{\bm\xi}\log\widetilde h}^2.
\end{align*}
Subtracting,
\begin{equation*}
(\partial_t+L_t)\ell
=r_{\rm K}
-\frac12\norm{\widetilde{\bm u}}^2
+\frac12\norm{\bm u^*}^2.
\end{equation*}
Under the \(\widetilde{\bm u}\)-controlled law, the generator is
\(L_t+\alpha\widetilde{\bm u}\cdot\nabla_{\bm\xi}\). Therefore
\begin{align*}
(\partial_t+L_t
 +\alpha\widetilde{\bm u}\cdot\nabla_{\bm\xi})\ell
&=r_{\rm K}
 -\frac12\norm{\widetilde{\bm u}}^2
 +\frac12\norm{\bm u^*}^2
 +\inner{\widetilde{\bm u}}{\widetilde{\bm u}-\bm u^*}\\
&=r_{\rm K}
 +\frac12\norm{\widetilde{\bm u}-\bm u^*}^2.
\end{align*}
It\^o's formula thus yields, after localization,
\begin{equation*}
\dd\ell_t(X_t)
=\left[r_{\rm K}(t,X_t)
 +\frac12\norm{\widetilde{\bm u}_t-\bm u_t^*}^2\right]\dd t
+\inner{\widetilde{\bm u}_t-\bm u_t^*}{\dd\mathbf W_t}.
\end{equation*}
Integrating and taking expectations makes the localized martingale term vanish. Hence
\begin{equation*}
\frac12\E_{P^{\widetilde{\bm u}}}
\int_0^T\norm{\widetilde{\bm u}_t-\bm u_t^*}^2\,\dd t
=\E_{P^{\widetilde{\bm u}}}
\left[\ell_T(X_T)-\ell_0(X_0)-\int_0^T r_{\rm K}(t,X_t)\,\dd t\right].
\end{equation*}
By the Girsanov identity in Lemma~\ref{lem:control-kl-stability}, the left side equals
\(\KL(P^{\widetilde{\bm u}}\|P^{\rm SB})\), proving Eq.~\eqref{eq:exact-residual-identity}. Applying the triangle inequality inside the expectation gives Eq.~\eqref{eq:residual-kl-bound}.

\subsection{Derivation for Lemma~\ref{prop:wrapped-kinetic-kernel}}

Work first on the Euclidean lift \((\bm z_t,\bm\xi_t)\in\R^m\times\R^m\), with
\(\bm z_0=0\). Solving the Ornstein--Uhlenbeck equation gives
\begin{equation*}
\bm\xi_t=a_t\bm\xi_0
+\sqrt{2\gamma}\int_0^t e^{-\gamma(t-s)}\,\dd\mathbf W_s,
\qquad a_t=e^{-\gamma t}.
\end{equation*}
Integrating once more,
\begin{align*}
\bm z_t
&=\int_0^t\bm\xi_s\,\dd s\\
&=r_t\bm\xi_0
+\sqrt{2\gamma}\int_0^t
\left(\int_s^t e^{-\gamma(u-s)}\,\dd u\right)\dd\mathbf W_s\\
&=r_t\bm\xi_0
+\sqrt{\frac{2}{\gamma}}
\int_0^t\bigl(1-e^{-\gamma(t-s)}\bigr)\,\dd\mathbf W_s,
\end{align*}
where \(r_t=(1-a_t)/\gamma\). Thus \((\bm z_t,\bm\xi_t)\) is jointly Gaussian with means
\((r_t\bm\xi_0,a_t\bm\xi_0)\). By It\^o isometry,
\begin{align*}
\operatorname{Var}(\bm\xi_t)
&=2\gamma\int_0^t e^{-2\gamma(t-s)}\,\dd s\,I
=(1-a_t^2)I,\\
\operatorname{Cov}(\bm z_t,\bm\xi_t)
&=2\int_0^t
\bigl(1-e^{-\gamma(t-s)}\bigr)e^{-\gamma(t-s)}\,\dd s\,I\\
&=\frac{(1-a_t)^2}{\gamma}I,\\
\operatorname{Var}(\bm z_t)
&=\frac{2}{\gamma}\int_0^t
\bigl(1-e^{-\gamma(t-s)}\bigr)^2\,\dd s\,I\\
&=\left[
\frac{2t}{\gamma}
-\frac{4(1-a_t)}{\gamma^2}
+\frac{1-a_t^2}{\gamma^2}\right]I.
\end{align*}
These are exactly \(\sigma_{\xi,t}^2\), \(\sigma_{z\xi,t}\), and \(\sigma_{z,t}^2\).

Passing from \(\R^m\) to \(\T^m=\R^m/\Lambda\) identifies lifts differing by \(\lambda\in\Lambda\). Therefore the torus transition density is the periodization of the Euclidean Gaussian. If normalized Haar measure is \(\dd g=\Vol(\Lambda)^{-1}\dd z\) on a fundamental cell, conversion from Lebesgue density to Haar density contributes the factor \(\Vol(\Lambda)\), giving Eq.~\eqref{eq:wrapped-kernel}. Replacing the chosen lift \(z\) by \(z+\lambda_0\) merely reindexes the lattice sum, so the result is lift independent. Finally, integrating the joint Gaussian over \(\bm\xi\) leaves its \(\bm z\)-marginal and yields Eq.~\eqref{eq:wrapped-position-kernel}.

\subsection{Proof of Proposition~\ref{thm:wkbc-stability}}

Under the calibrated bridge law, the Markov property and the terminal factorization give
\begin{equation*}
P^{\rm SB}(Y_T\in\dd y\mid X_t=x)
=
\frac{q_{T-t}^{\mathcal O_T}(y\mid x)e^{\beta^*(y)}\nu_T(\dd y)}
{h_t^{\beta^*}(x)}.
\end{equation*}
Differentiation under the endpoint integral therefore yields
\begin{equation*}
\E_{P^{\rm SB}}[S_{t,T}(X_t,Y_T)\mid X_t=x]
=\nabla_{\bm\xi}\log h_t^{\beta^*}(x)
=\bm s^*(t,x).
\end{equation*}
Applying the conditional-variance identity at each \((t,x)\), followed by integration against the uniform time density and \(P_t^{\rm SB}\), proves Eq.~\eqref{eq:wkbc-excess-risk}.

The normalized Doob controls of \(P^{\rm SB}\) and \(P^{\bm s,\pi_0}\) are
\(\sqrt{2\gamma}\,\bm s^*\) and \(\sqrt{2\gamma}\,\bm s\), respectively.  Applying the entropy chain rule at time zero and then Lemma~\ref{lem:control-kl-stability} in the reverse orientation gives
\begin{align*}
\KL(P^{\rm SB}\|P^{\bm s,\pi_0})
&=\KL(P_0^{\rm SB}\|\pi_0)
+\frac12\E_{P^{\rm SB}}\int_0^T
\norm{\sqrt{2\gamma}(\bm s^*-
\bm s)}^2\,\dd t\\
&=\KL(P_0^{\rm SB}\|\pi_0)
+\gamma T\bigl[\mathcal J_{\rm WKBC}^{\rm br}(\bm s)
-\mathcal J_{\rm WKBC}^{\rm br}(\bm s^*)\bigr],
\end{align*}
which proves Eq.~\eqref{eq:wkbc-population-path-error}.  If the initial laws agree, Pinsker's inequality together with \(d_{\rm BL}\le2\,\mathrm{TV}\) gives the displayed bounded-Lipschitz estimate.

For the numerical statement, apply the same argument with the numerical calibrated bridge \(P_N^*\), its control \(\bm s_N^*\), and its own initial law. This yields
\begin{equation*}
d_{\rm BL}(P_N^{\bm s},P_N^*)
\le
\sqrt{2\gamma T\bigl[
\mathcal J_{{\rm WKBC},N}^{\rm br}(\bm s)
-\mathcal J_{{\rm WKBC},N}^{\rm br}(\bm s_N^*)
\bigr]}.
\end{equation*}
The triangle inequality with \(P^{\rm SB}\) proves Eq.~\eqref{eq:wkbc-total-error}. The common positive kernel bounds make the normalized Sinkhorn maps uniformly contractive in Hilbert metric. If the discrete/quadrature kernel operators \(\mathsf K_N,\mathsf K_N^*\) converge uniformly on the normalized bounded cone, the fixed-point perturbation estimate gives
\begin{equation*}
d_{\rm H}(g_{T,N},g_T)
\leq\frac{1}{1-\tau^2}
\sup_g d_{\rm H}(\mathsf S_Ng,\mathsf Sg)+o(1),
\end{equation*}
so the normalized factors converge uniformly; the same follows for \(f_{0,N}\). Lattice and quadrature assumptions then give \(h_{t,N}\to h_t\) and \(\nabla_{\bm\xi}h_{t,N}\to\nabla_{\bm\xi}h_t\) on preterminal strips. Positivity converts these to logarithmic-gradient convergence, while normalization and dominated convergence give convergence of the numerical initial laws. Standard stability of the corresponding Doob laws then yields \(d_{\rm BL}(P_N^*,P^{\rm SB})\to0\). This proves the final assertion.

\subsection{Proof of Lemma~\ref{lem:endpoint-disintegration}}\label{app:proof-endpoint-disintegration}
The entropy chain rule under the endpoint map \(Y\) gives
\begin{equation*}
\KL(P\|Q)=\KL(Y_\#P\|Y_\#Q)
+\int\KL(P^y\|Q^y)\,(Y_\#P)(\dd y).
\end{equation*}
Since \(Y_\#Q=\mathcal R_{0T}\), the displayed identity in Lemma~\ref{lem:endpoint-disintegration} follows. For fixed endpoint law \(\Gamma\), the conditional term is minimized uniquely by \(P^y=Q^y\) for \(\Gamma\)-almost every \(y\), hence \(P_\Gamma=\int Q^y\Gamma(\dd y)\).

\subsection{Proof of Theorem~3.5}

Let \(\Gamma^*=f_0g_T\mathcal R_{0T}\). For any bounded \(\beta\),
\begin{align*}
\KL(\Gamma^*\|\Gamma^\beta)
&=\int\log\frac{f_0(y_0)g_T(y_T)}
{Z(\beta)^{-1}e^{\beta_0(y_0)+\beta_T(y_T)}}\,\Gamma^*(\dd y_0\dd y_T)\\
&=\int\log f_0\,\dd\rho_0+\int\log g_T\,\dd\rho_T
-\mathscr D(\beta).
\end{align*}
The first two terms equal \(\mathscr D(\beta^*)\) for any normalized representative inducing \(\Gamma^*\), which proves the first equality in Eq.~(3.11). Lemma~\ref{lem:endpoint-disintegration} proves the path-space equality.

The functional is concave. For a perturbation \(\psi=(\psi_0,\psi_T)\),
\begin{equation*}
\frac{\dd^2}{\dd s^2}\mathscr D(\beta+s\psi)\bigg|_{s=0}
=-\operatorname{Var}_{\Gamma^\beta}[\psi_0(Y_0)+\psi_T(Y_T)].
\end{equation*}
If this variance is zero, equivalence
\(\Gamma^\beta\sim\rho_0\otimes\rho_T\) implies
\(\psi_0(y_0)+\psi_T(y_T)\) is constant on the product almost everywhere. Fubini's theorem then implies that each \(\psi_i\) is constant. On the doubly centered subspace both constants vanish, so the dual is strictly concave there. Existence follows from Theorem~3.2: choose constants \(c_0,c_T>0\) such that \(\log(c_0f_0)\) and \(\log(c_Tg_T)\) are centered. Their normalized tilt is \(\Gamma^*\), and strict concavity gives uniqueness. Equality of the two factorized densities implies
\(e^{\beta_0^*}/f_0=c_0\) and
\(e^{\beta_T^*}/g_T=c_T\) almost everywhere, with \(c_0c_T=Z(\beta^*)\).

\subsection{Derivation for Proposition~\ref{prop:rccbm-ipf}}

Let \(\Gamma\ll\mathcal R_{0T}\) have density \(a(y_0,y_T)\). Its first marginal relative to \(\rho_0\) is
\begin{equation*}
m_0(y_0)=\int a(y_0,y_T)k(y_0,y_T)\,\rho_T(\dd y_T),
\end{equation*}
and the KL projection onto the affine set of couplings with first marginal \(\rho_0\) is obtained by multiplying the density by \(m_0(y_0)^{-1}\). Indeed, if \(\Gamma'\) has first marginal \(\rho_0\), the entropy chain rule with respect to the projection \((y_0,y_T)\mapsto y_0\) gives
\begin{equation*}
\KL(\Gamma'\|\Gamma)
=\KL(\Gamma'\|\mathsf P_0\Gamma)
+\KL(\mathsf P_0\Gamma\|\Gamma),
\end{equation*}
so \(\mathsf P_0\Gamma\) is the unique information projection. The terminal projection is analogous.

Starting from a factorized density \(f_0^{(n)}(y_0)g_T^{(n)}(y_T)\) relative to \(\mathcal R_{0T}\), its first marginal density is
\begin{equation*}
f_0^{(n)}(y_0)(\mathsf K g_T^{(n)})(y_0).
\end{equation*}
Renormalizing it to one cancels \(f_0^{(n)}\) and gives
\begin{equation*}
f_0^{(n+1)}=(\mathsf K g_T^{(n)})^{-1}.
\end{equation*}
The resulting terminal marginal density is
\(g_T^{(n)}\mathsf K^*f_0^{(n+1)}\), and the second information projection gives
\begin{equation*}
g_T^{(n+1)}=(\mathsf K^*f_0^{(n+1)})^{-1}.
\end{equation*}
This proves Eq.~\eqref{eq:rccbm-ipf-updates}. Under \(m\le k\le M\), the normalized scaling map is the same Hilbert-metric contraction used in Theorem~3.2; therefore the projective class converges geometrically to the unique Schr\"odinger scaling pair. Uniform positive bounds convert projective convergence into uniform convergence after gauge normalization, and dominated convergence then yields convergence of the endpoint KL gap. Because the first projection changes the source factor, a terminal-only correction cannot in general reproduce the two-sided optimum.

\subsection{Proof of Proposition~\ref{prop:rcm-endpoint-calibration-consistency}}

By Theorem~3.5,
\begin{equation*}
\varepsilon_{{\rm end},N}
=\mathscr D(\beta^*)-\mathscr D(\widehat\beta_N).
\end{equation*}
Let \(\beta_N^\circ\in\mathcal B_N\) be the sieve approximant from Assumption~\ref{ass:rcm-calibration}. Adding and subtracting the empirical dual gives
\begin{align*}
\mathscr D(\beta^*)-\mathscr D(\widehat\beta_N)
&=\bigl[\mathscr D(\beta^*)-\mathscr D(\beta_N^\circ)\bigr]
 +\bigl[\mathscr D(\beta_N^\circ)-\widehat{\mathscr D}_N(\beta_N^\circ)\bigr]\\
&\quad+\bigl[\widehat{\mathscr D}_N(\beta_N^\circ)-\widehat{\mathscr D}_N(\widehat\beta_N)\bigr]
 +\bigl[\widehat{\mathscr D}_N(\widehat\beta_N)-\mathscr D(\widehat\beta_N)\bigr].
\end{align*}
The first term is at most \(\varepsilon_{{\rm end,app},N}\); the second and fourth are each bounded above by \(\delta_{{\rm end},N}\) from Eq.~\eqref{eq:endpoint-uniform-dual-error}; and Eq.~\eqref{eq:endpoint-optimization-gap} bounds the third by \(\varepsilon_{{\rm end,opt},N}\). This proves Eq.~\eqref{eq:endpoint-calibration-consistency-bound}. The asserted convergence in probability follows from Assumption~\ref{ass:rcm-calibration}.

\subsection{Proof of Theorem~3.6}

The density \(\dd P^\beta/\dd Q\) is a function of \((X_0,Y_T)\) only. It is therefore constant on each fiber \(Z_c=z\), and the conditional path law of \(P^\beta\) on that fiber equals \(Q^z\). This proves Eq.~(3.12).

Using the Markov property of the reference law and the endpoint tilt defining \(P^\beta\), integrating the terminal factor gives the backward factor in Eq.~(3.13), and the corresponding Doob transform gives Eq.~(3.14). Given \(X_t=x\), the conditional density of \(Y_T=y\) under \(P^\beta\) is
\begin{equation*}
\frac{r_{t,T}^{\mathcal O_T}(y\mid x)e^{\beta_T(y)}}{h_t^\beta(x)}\,\nu_T(\dd y).
\end{equation*}
The point-conditioned reference bridge has control in Eq.~(3.15) on every strip before \(T\). Therefore
\begin{align*}
\E[\bm v_t^{Z_c}(X_t)\mid X_t=x]
&=\frac{\alpha}{h_t^\beta(x)}
\int \nabla_{\bm\xi}\log r_{t,T}^{\mathcal O_T}(y\mid x)
 r_{t,T}^{\mathcal O_T}(y\mid x)e^{\beta_T(y)}\,\nu_T(\dd y)\\
&=\alpha\nabla_{\bm\xi}\log h_t^\beta(x).
\end{align*}
The domination assumption justifies differentiation under the integral. Finally, on \([0,T-\tau]\), the random teacher is square-integrable. Orthogonal projection in the Hilbert space generated by the canonical joint law gives
\begin{equation*}
\E\norm{\bm u-\bm v}^2
=\E\norm{\bm u-\E[\bm v\mid X_t]}^2
+\E\norm{\bm v-\E[\bm v\mid X_t]}^2,
\end{equation*}
which yields Eq.~(3.16). The restriction to preterminal strips is essential: exact point-pinned controls need not have finite energy on the closed interval.

\subsection{Proof of Proposition~\ref{thm:rccbm-condsoc-kl}}

By Eq.~(3.17),
\begin{equation*}
\log\frac{\dd Q^{z,\varepsilon}}{\dd Q^{x_0}}
=\log\kappa_\varepsilon(y,\mathcal O_T(X_T))-\log Z_{z,\varepsilon}
=-\ell_{\varepsilon,y}(X_T)-\log Z_{z,\varepsilon}.
\end{equation*}
For any admissible controlled law \(P^{z,\bm v}\ll Q^{x_0}\), the Radon--Nikodym chain rule gives
\begin{align*}
\KL(P^{z,\bm v}\|Q^{z,\varepsilon})
&=\E_{P^{z,\bm v}}
\log\frac{\dd P^{z,\bm v}}{\dd Q^{x_0}}
-\E_{P^{z,\bm v}}
\log\frac{\dd Q^{z,\varepsilon}}{\dd Q^{x_0}}\\
&=\KL(P^{z,\bm v}\|Q^{x_0})
+\E_{P^{z,\bm v}}\ell_{\varepsilon,y}(X_T)
+\log Z_{z,\varepsilon}.
\end{align*}
Since the control changes only the forced drift from \(\bm b_t\) to \(\bm b_t+\alpha\bm v_t\), Girsanov's theorem yields
\begin{equation*}
\KL(P^{z,\bm v}\|Q^{x_0})
=\frac12\E_{P^{z,\bm v}}
\int_0^T\norm{\bm v_t}^2\,\dd t.
\end{equation*}
Substitution gives Eq.~\eqref{eq:rccbm-condsoc-kl-identity}. Because relative entropy is nonnegative and vanishes only when the two laws agree, the unique law-level minimizer is \(Q^{z,\varepsilon}\).

Let \(\bm v^{z,\varepsilon}\) denote its Doob control. Both \(P^{z,\bm v}\) and the Doob realization of \(Q^{z,\varepsilon}\) start at the same \(x_0\) and have the same diffusion coefficient. A second Girsanov calculation therefore gives
\begin{align*}
\KL(P^{z,\bm v}\|Q^{z,\varepsilon})
&=\frac12\E_{P^{z,\bm v}}\\
&\quad\times\int_0^T
\norm{\bm v_t-\bm v_t^{z,\varepsilon}(X_t)}^2\,\dd t.
\end{align*}
Subtracting the identity at the minimizer \(\bm v^{z,\varepsilon}\) from the identity at \(\bm v\) proves \cref{eq:rccbm-condsoc-control-identity}.

\subsection{Proof of Proposition~3.7}

For fixed \(z=(x_0,y)\), define
\begin{equation*}
h_t^{y,\varepsilon}(x)
=\int r_{t,T}^{\mathcal O_T}(y'\mid x)\kappa_\varepsilon(y,y')\,\nu_T(\dd y').
\end{equation*}
Then \(\bm v_t^{z,\varepsilon}(x)=\alpha\nabla_{\bm\xi}\log h_t^{y,\varepsilon}(x)\). Multiplying Eq.~(3.18) by Eq.~(3.17), the factors \(Z_{z,\varepsilon}\) cancel, and integration in \(y\) gives
\begin{equation*}
\int Q^{z,\varepsilon}\Lambda^{\beta,\varepsilon}(\dd z)
=\frac{e^{\beta_0(\mathcal O_0(X_0))}
 g_{T,\varepsilon}^\beta(\mathcal O_T(X_T))}{Z_\varepsilon(\beta)}Q,
\end{equation*}
proving Eq.~(3.19). This endpoint-factorized law is Markov. Its backward factor is
\begin{equation*}
h_t^{\beta,\varepsilon}(x)
=\int r_{t,T}^{\mathcal O_T}(y'\mid x)
 g_{T,\varepsilon}^\beta(y')\,\nu_T(\dd y')
=\int e^{\beta_T(y)}h_t^{y,\varepsilon}(x)\,\nu_T(\dd y).
\end{equation*}
Given \(X_t=x\), the canonical conditional density of the center \(y\) is proportional to
\(e^{\beta_T(y)}h_t^{y,\varepsilon}(x)\). Hence
\begin{align*}
\E[\bm v_t^{Z,\varepsilon}(X_t)\mid X_t=x]
&=\frac{\alpha\int e^{\beta_T(y)}\nabla_{\bm\xi}h_t^{y,\varepsilon}(x)\,\nu_T(\dd y)}
{h_t^{\beta,\varepsilon}(x)}\\
&=\alpha\nabla_{\bm\xi}\log h_t^{\beta,\varepsilon}(x)
=\bm u_t^{\beta,\varepsilon}(x).
\end{align*}
The conditional-variance identity, integrated against \(\lambda_{\rm bm}\), proves Eq.~(3.20). Positivity and fixed \(\varepsilon>0\) remove the terminal point-pinning singularity.

\subsection{Derivation for Proposition~\ref{prop:rccbm-condsoc-consistency}}

Write \(g_T^\beta=e^{\beta_T}\) and
\begin{equation*}
g_{T,\varepsilon}^\beta(y')
=\int\kappa_\varepsilon(y,y')g_T^\beta(y)\,\nu_T(\dd y).
\end{equation*}
Assumption~\ref{ass:rccbm-mollification-regularity} gives the uniform approximation
\begin{equation}\label{eq:proof-mollifier-uniform}
\delta_\varepsilon
:=\sup_{\beta\in\mathcal B_\infty}\sup_{y'\in\mathsf Y_T}
|g_{T,\varepsilon}^\beta(y')-g_T^\beta(y')|
\leq\delta_\varepsilon^{\rm mol}
\longrightarrow0.
\end{equation}
Uniform boundedness of \(\beta_0\) and \(\beta_T\) implies constants
\(0<c\le C<\infty\) such that all endpoint factors lie in \([c,C]\). Consequently the partition functions satisfy
\begin{equation*}
0<c_Z\le Z_\varepsilon(\beta),Z(\beta)\le C_Z<\infty
\end{equation*}
uniformly in \(\beta\), and Eq.~\eqref{eq:proof-mollifier-uniform} implies
\begin{equation*}
\sup_{\beta\in\mathcal B_\infty}|Z_\varepsilon(\beta)-Z(\beta)|\le C\delta_\varepsilon\to0.
\end{equation*}
The path-law densities relative to \(Q\) are
\begin{equation*}
p_\varepsilon^\beta
=\frac{e^{\beta_0(\mathcal O_0(X_0))}
 g_{T,\varepsilon}^\beta(Y_T)}{Z_\varepsilon(\beta)},
\qquad
p^\beta
=\frac{e^{\beta_0(\mathcal O_0(X_0))}
 g_T^\beta(Y_T)}{Z(\beta)}.
\end{equation*}
The uniform bounds on denominators and numerators therefore yield
\begin{equation*}
\sup_{\beta\in\mathcal B_\infty}\norm{p_\varepsilon^\beta-p^\beta}_{L^\infty(Q)}
\le C'\delta_\varepsilon\to0.
\end{equation*}
Hence
\begin{equation*}
\sup_{\beta\in\mathcal B_\infty}\TV(P^{\beta,\varepsilon},P^\beta)
\le\frac12\sup_{\beta\in\mathcal B_\infty}
\norm{p_\varepsilon^\beta-p^\beta}_{L^1(Q)}\to0,
\end{equation*}
and \(d_{\rm BL}\le2\TV\) proves Eq.~\eqref{eq:rccbm-mollification-convergence}.

For the controls, on any strip \([0,T-\tau]\),
\begin{equation*}
h_t^{\beta,\varepsilon}(x)
=\int r_{t,T}^{\mathcal O_T}(y'\mid x)
 g_{T,\varepsilon}^\beta(y')\,\nu_T(\dd y').
\end{equation*}
The differentiation statement in Assumption~\ref{ass:rccbm-conditional-kernel}, together with the envelope in Eq.~\eqref{eq:mollification-kernel-envelope}, permits dominated differentiation uniformly on compact preterminal sets. Equation~\eqref{eq:proof-mollifier-uniform} then gives local uniform convergence of both
\(h_t^{\beta,\varepsilon}\) and
\(\nabla_{\bm\xi}h_t^{\beta,\varepsilon}\) to their unmollified counterparts. Strict positivity on compact subsets provides a common lower bound, so
\begin{equation*}
\nabla_{\bm\xi}\log h_t^{\beta,\varepsilon}
\longrightarrow
\nabla_{\bm\xi}\log h_t^\beta
\end{equation*}
locally uniformly. Multiplication by \(\alpha\) gives local convergence of the controls. Finally, the assumed uniform integrability of the squared controls permits Vitali's theorem (or dominated convergence under the stronger domination form) to upgrade local convergence to the stated \(L^2\) convergence.

\subsection{Proof of Theorem~3.8}

Let
\(\mathcal R_N(\bm u)=\mathcal L_{\widehat\beta_N,\varepsilon_N}(\bm u)\)
be the exact population mollified matching risk and let
\(\widehat{\mathcal R}_N\) be its importance-weighted empirical counterpart after teacher approximation and clipping. Define
\begin{equation*}
\eta_N
:=\varepsilon_{{\rm gen},N}+\varepsilon_{{\rm teach},N}
+\frac12\bigl(\varepsilon_{{\rm iw},N}+\varepsilon_{{\rm clip},N}\bigr).
\end{equation*}
On the declared uniform-convergence event,
\begin{equation}\label{eq:proof-uniform-risk-event}
\sup_{\bm u\in\mathcal U_N}
|\widehat{\mathcal R}_N(\bm u)-\mathcal R_N(\bm u)|
\le \eta_N.
\end{equation}
All inequalities below are first established on this event. By the constrained-refinement condition in Assumption~\ref{ass:rccbm-teacher-consistency} and the detailed sieve requirement in \cref{app:rccbm-learning-assumptions}, both the pre-refinement control and every accepted refinement checkpoint belong to the same class \(\mathcal U_N\), so Eq.~\eqref{eq:proof-uniform-risk-event} applies to each of them.

Choose \(\bm u_N^\circ\in\mathcal U_N\) such that
\begin{equation}\label{eq:proof-approximation-risk}
\mathcal R_N(\bm u_N^\circ)
-\mathcal R_N(\bm u^{\widehat\beta_N,\varepsilon_N})
\le \varepsilon_{{\rm app},N}.
\end{equation}
If \(\bm u_{N,\rm pre}\in\mathcal U_N\) is the pre-refinement empirical solution, the optimization assumption gives
\begin{equation}\label{eq:proof-pre-empirical-opt}
\widehat{\mathcal R}_N(\bm u_{N,\rm pre})
\le\inf_{\bm u\in\mathcal U_N}\widehat{\mathcal R}_N(\bm u)
+\varepsilon_{{\rm opt},N}
\le\widehat{\mathcal R}_N(\bm u_N^\circ)
+\varepsilon_{{\rm opt},N}.
\end{equation}
Using Eq.~\eqref{eq:proof-uniform-risk-event} at
\(\bm u_{N,\rm pre}\) and \(\bm u_N^\circ\), followed by
Eqs.~\eqref{eq:proof-pre-empirical-opt} and \eqref{eq:proof-approximation-risk}, yields
\begin{align}
&\mathcal R_N(\bm u_{N,\rm pre})
-\mathcal R_N(\bm u^{\widehat\beta_N,\varepsilon_N})
\notag\\
&\quad=
\bigl[\mathcal R_N(\bm u_{N,\rm pre})
      -\widehat{\mathcal R}_N(\bm u_{N,\rm pre})\bigr]
+\bigl[\widehat{\mathcal R}_N(\bm u_{N,\rm pre})
      -\widehat{\mathcal R}_N(\bm u_N^\circ)\bigr]
\notag\\
&\qquad
+\bigl[\widehat{\mathcal R}_N(\bm u_N^\circ)
      -\mathcal R_N(\bm u_N^\circ)\bigr]
+\bigl[\mathcal R_N(\bm u_N^\circ)
      -\mathcal R_N(\bm u^{\widehat\beta_N,\varepsilon_N})\bigr]
\notag\\
&\quad\le
2\eta_N+\varepsilon_{{\rm opt},N}+\varepsilon_{{\rm app},N}.
\label{eq:proof-pre-population-risk}
\end{align}
This is the population excess-risk bound before refinement.

Let \(\bm u_{N,\rm final}:=\bm u_{\theta_N}\in\mathcal U_N\) denote the accepted final checkpoint. The refinement acceptance rule \eqref{eq:rccbm-constrained-refinement}, written in the present notation, is
\begin{equation*}
\widehat{\mathcal R}_N(\bm u_{N,\rm final})
\le
\widehat{\mathcal R}_N(\bm u_{N,\rm pre})
+\delta_{{\rm ref},N}.
\end{equation*}
A second empirical-to-population comparison is necessary at this stage because the acceptance rule is empirical whereas
\(\Delta_{{\rm ctrl},N}\) is defined through the population risk. Applying Eq.~\eqref{eq:proof-uniform-risk-event} separately to the final and pre-refinement controls gives
\begin{align}
\mathcal R_N(\bm u_{N,\rm final})
-\mathcal R_N(\bm u_{N,\rm pre})
&=
\bigl[\mathcal R_N(\bm u_{N,\rm final})
      -\widehat{\mathcal R}_N(\bm u_{N,\rm final})\bigr]
\notag\\
&\quad+
\bigl[\widehat{\mathcal R}_N(\bm u_{N,\rm final})
      -\widehat{\mathcal R}_N(\bm u_{N,\rm pre})\bigr]
\notag\\
&\quad+
\bigl[\widehat{\mathcal R}_N(\bm u_{N,\rm pre})
      -\mathcal R_N(\bm u_{N,\rm pre})\bigr]
\notag\\
&\le 2\eta_N+\delta_{{\rm ref},N}.
\label{eq:proof-refinement-population}
\end{align}
Combining Eqs.~\eqref{eq:proof-pre-population-risk} and
\eqref{eq:proof-refinement-population}, and recalling
\begin{equation*}
\Delta_{{\rm ctrl},N}
=\mathcal R_N(\bm u_{N,\rm final})
-\mathcal R_N(\bm u^{\widehat\beta_N,\varepsilon_N}),
\end{equation*}
we obtain
\begin{align*}
\Delta_{{\rm ctrl},N}
&\le
\varepsilon_{{\rm app},N}+\varepsilon_{{\rm opt},N}
+4\eta_N+\delta_{{\rm ref},N}\\
&=
\varepsilon_{{\rm app},N}+\varepsilon_{{\rm opt},N}
+4\varepsilon_{{\rm gen},N}+4\varepsilon_{{\rm teach},N}
+2\varepsilon_{{\rm iw},N}+2\varepsilon_{{\rm clip},N}
+\delta_{{\rm ref},N},
\end{align*}
which is Eq.~(3.21). Because the uniform-convergence event has probability tending to one under Assumption~\ref{ass:rccbm-teacher-consistency}, this finite-sample inequality holds with probability tending to one. Under the stated assumptions, every component on the right-hand side converges to zero in probability.

We now convert control risk into path-law error. Proposition~3.7 and
\(\lambda_{\rm bm}\ge\underline\lambda\) imply
\begin{equation}\label{eq:proof-control-l2}
\E_{P^{\widehat\beta_N,\varepsilon_N}}
\int_0^T
\norm{\bm u_{\theta_N}-\bm u^{\widehat\beta_N,\varepsilon_N}}^2\,\dd t
\le\frac{\Delta_{{\rm ctrl},N}}{\underline\lambda}.
\end{equation}
Let \(\widetilde P_N^{\theta_N}\) be the continuous controlled law with control
\(\bm u_{\theta_N}\), initialized from the population law
\(P_0^{\widehat\beta_N,\varepsilon_N}\). We apply Girsanov between
\(P^{\widehat\beta_N,\varepsilon_N}\) and
\(\widetilde P_N^{\theta_N}\), which share the same continuous population initial law. By Eq.~\eqref{eq:proof-control-l2},
\begin{equation*}
\KL(P^{\widehat\beta_N,\varepsilon_N}
\|\widetilde P_N^{\theta_N})
\le\frac{\Delta_{{\rm ctrl},N}}{2\underline\lambda}.
\end{equation*}
Pinsker's inequality and \(d_{\rm BL}\le2\TV\) therefore give
\begin{equation}\label{eq:proof-control-bl}
d_{\rm BL}(P^{\widehat\beta_N,\varepsilon_N},
\widetilde P_N^{\theta_N})
\le\sqrt{\frac{\Delta_{{\rm ctrl},N}}{\underline\lambda}}.
\end{equation}

The endpoint dual gap in Theorem~3.5 satisfies
\begin{equation*}
\KL(P^{\rm SB}\|P^{\widehat\beta_N})
=\varepsilon_{{\rm end},N},
\end{equation*}
so again by Pinsker,
\begin{equation*}
d_{\rm BL}(P^{\rm SB},P^{\widehat\beta_N})
\le\sqrt{2\varepsilon_{{\rm end},N}}.
\end{equation*}
By Assumption~\ref{ass:rccbm-mollification-regularity} and Proposition~\ref{prop:rccbm-condsoc-consistency},
\begin{equation*}
d_{\rm BL}(P^{\widehat\beta_N},
P^{\widehat\beta_N,\varepsilon_N})
=\varepsilon_{{\rm mol},N}\longrightarrow0.
\end{equation*}
The initial-law stability assumption yields
\begin{equation*}
d_{\rm BL}(\widetilde P_N^{\theta_N},P_N^{\theta_N})
\le C_{\rm init}\varepsilon_{{\rm init},N},
\end{equation*}
and the numerical integrator contributes
\begin{equation}\label{eq:proof-disc-bl}
d_{\rm BL}(P_N^{\theta_N},P_{N,\Delta t}^{\theta_N})
\le\varepsilon_{{\rm disc},N}.
\end{equation}
Applying the triangle inequality along the chain
\begin{equation*}
P^{\rm SB}
\to P^{\widehat\beta_N}
\to P^{\widehat\beta_N,\varepsilon_N}
\to\widetilde P_N^{\theta_N}
\to P_N^{\theta_N}
\to P_{N,\Delta t}^{\theta_N}
\end{equation*}
and using Eqs.~\eqref{eq:proof-control-bl}--\eqref{eq:proof-disc-bl} gives Eq.~(3.22). Proposition~\ref{prop:rcm-endpoint-calibration-consistency} gives \(\varepsilon_{{\rm end},N}\to0\); Assumption~\ref{ass:rccbm-mollification-regularity} and Proposition~\ref{prop:rccbm-condsoc-consistency} give \(\varepsilon_{{\rm mol},N}\to0\); Eq.~(3.21) together with Assumption~\ref{ass:rccbm-teacher-consistency} gives \(\Delta_{{\rm ctrl},N}\to0\); and Assumption~\ref{ass:rccbm-statistical-stability} supplies the initial-law and discretization limits. Hence every term in Eq.~(3.22) vanishes in probability, proving bounded-Lipschitz path-law convergence.

\subsection{Proof of Proposition~\ref{prop:rcm-exact-ou-geometry}}

We separate geometry preservation, exact integration of the frozen velocity substep, and the path-space error estimate.

\paragraph{Geometry preservation and exact frozen OU update}
For \(SO(3)\), every Lie algebra element \(\Omega\in\mathfrak{so}(3)\) is skew-symmetric. Hence
\begin{equation*}
\exp(\Omega)^\top\exp(\Omega)=I,
\qquad
\det\exp(\Omega)=\exp(\operatorname{tr}\Omega)=1,
\end{equation*}
so \(\exp(\Omega)\in SO(3)\). For \(U(n)\), every \(H\in\mathfrak u(n)\) is skew-Hermitian and therefore
\begin{equation*}
\exp(H)^\dagger\exp(H)=I,
\end{equation*}
so \(\exp(H)\in U(n)\). Consequently, if \(g_k\in\G\), then
\(g_k\Exp_\G(\Delta t\,\bm\xi_{k+1})\in\G\); the reconstruction step therefore preserves the matrix Lie group exactly.

On one interval \([t_k,t_{k+1}]\), freeze
\(\bm u_k=\bm u(t_k,g_k,\bm\xi_k)\). The velocity equation is the affine OU equation
\begin{equation*}
\dd\bm\xi_t
=-\gamma\bm\xi_t\,\dd t
+\alpha\bm u_k\,\dd t
+\alpha\,\dd\mathbf W_t.
\end{equation*}
Variation of constants gives
\begin{equation*}
\bm\xi_{k+1}
=e^{-\gamma\Delta t}\bm\xi_k
+\alpha\frac{1-e^{-\gamma\Delta t}}{\gamma}\bm u_k
+\alpha\int_{t_k}^{t_{k+1}}
 e^{-\gamma(t_{k+1}-s)}\,\dd\mathbf W_s.
\end{equation*}
The stochastic integral is centered Gaussian with covariance
\begin{equation*}
\alpha^2\int_0^{\Delta t}e^{-2\gamma r}\,\dd r\,I
=\alpha^2\frac{1-e^{-2\gamma\Delta t}}{2\gamma}I.
\end{equation*}
This is exactly the velocity update with the coefficients in Eq.~\eqref{eq:rcm-exact-ou-coefficients}.

\paragraph{Continuous interpolation and uniform moment estimates}
Write \(h=\Delta t\), assume first that \(T=Nh\), and let
\(\pi(t)=t_k\), \(\pi^+(t)=t_{k+1}\) for \(t\in[t_k,t_{k+1})\). A final interval shorter than \(h\) is handled by the same estimates. Couple the exact and numerical processes by the same initial state and the same Brownian motion. Denote the numerical grid values by
\((\bar g_k,\bar{\bm\xi}_k)\), and set
\(\bar{\bm u}_k=\bm u(t_k,\bar g_k,\bar{\bm\xi}_k)\). On \([t_k,t_{k+1}]\), define the continuous velocity interpolation by the exact frozen-control OU flow
\begin{equation}\label{eq:proof-ou-interpolation}
\bar{\bm\xi}_t
=e^{-\gamma(t-t_k)}\bar{\bm\xi}_k
+\alpha\frac{1-e^{-\gamma(t-t_k)}}{\gamma}\bar{\bm u}_k
+\alpha\int_{t_k}^{t}
 e^{-\gamma(t-s)}\,\dd\mathbf W_s,
\end{equation}
and define the piecewise exponential group interpolation by
\begin{equation}\label{eq:proof-group-interpolation}
\bar g_t
=\bar g_k\Exp_\G\bigl((t-t_k)\bar{\bm\xi}_{k+1}\bigr).
\end{equation}
Both interpolants are continuous and agree with the stated discrete update at the grid points. The group interpolation is the standard continuous exponential interpolation of the splitting path; it need not be adapted inside a step, whereas the grid-point scheme and the frozen controls \(\bar{\bm u}_k\) are adapted.

We first record the moment and one-step increment estimates used below. Since \(\G\) is compact and the control has linear growth in \(\bm\xi\), It\^o's formula applied to
\(V(\bm\xi)=(1+\norm{\bm\xi}^2)^2\), followed by the Burkholder--Davis--Gundy and Young inequalities, gives for the exact controlled process
\begin{equation*}
\E\sup_{0\le t\le T}V(\bm\xi_t)
\le C_T\bigl(1+\E\norm{\bm\xi_0}^4\bigr).
\end{equation*}
Indeed, the drift term satisfies
\begin{equation*}
\bigl\langle\bm\xi,-\gamma\bm\xi+\alpha\bm u(t,g,\bm\xi)\bigr\rangle
\le C\bigl(1+\norm{\bm\xi}^2\bigr),
\end{equation*}
and the first two derivatives of \(V\) have at most cubic and quadratic growth, respectively, so the finite-variation and quadratic-variation contributions are bounded by \(C\int_0^T V(\bm\xi_s)\,\dd s\) after Young's inequality. The same argument applies on each numerical interval because Eq.~\eqref{eq:proof-ou-interpolation} solves
\begin{equation*}
\dd\bar{\bm\xi}_t
=-\gamma\bar{\bm\xi}_t\,\dd t
+\alpha\bar{\bm u}_k\,\dd t
+\alpha\,\dd\mathbf W_t,
\end{equation*}
with
\(\norm{\bar{\bm u}_k}^4\le C(1+\norm{\bar{\bm\xi}_k}^4)\). Conditional It\^o/BDG estimates on one step and discrete Gr\"onwall therefore yield
\begin{equation}\label{eq:proof-splitting-moments}
\E\sup_{0\le t\le T}
\bigl(1+\norm{\bm\xi_t}^4+\norm{\bar{\bm\xi}_t}^4\bigr)
\le C_T.
\end{equation}
More explicitly, on a numerical interval the conditional It\^o estimate has the form
\begin{equation*}
\E\left[
\sup_{t_k\le r\le t}
V(\bar{\bm\xi}_r)
\,\middle|\,\mathcal F_{t_k}
\right]
\le
V(\bar{\bm\xi}_k)
+C\int_{t_k}^{t}
\E\left[
\sup_{t_k\le q\le r}V(\bar{\bm\xi}_q)
\,\middle|\,\mathcal F_{t_k}
\right]\dd r
+Ch\bigl(1+\norm{\bar{\bm\xi}_k}^4\bigr),
\end{equation*}
where the martingale term is absorbed by BDG and Young inequalities. Gr\"onwall on the step gives
\begin{equation*}
\E\left[
\sup_{t_k\le r\le t_{k+1}}V(\bar{\bm\xi}_r)
\,\middle|\,\mathcal F_{t_k}
\right]
\le (1+Ch)V(\bar{\bm\xi}_k)+Ch,
\end{equation*}
and discrete Gr\"onwall yields Eq.~\eqref{eq:proof-splitting-moments}. Thus all constants below can be chosen independently of \(0<h\le h_0\le1\).

For completeness, the increment estimate follows directly from Eq.~\eqref{eq:proof-ou-interpolation}. If \(r=t-t_k\in[0,h]\), then
\begin{align*}
\E\norm{\bar{\bm\xi}_t-\bar{\bm\xi}_k}^2
&\le
3(1-e^{-\gamma r})^2\E\norm{\bar{\bm\xi}_k}^2
+3\alpha^2\left(\frac{1-e^{-\gamma r}}{\gamma}\right)^2
\E\norm{\bar{\bm u}_k}^2\\
&\quad
+3\alpha^2\int_{t_k}^{t}e^{-2\gamma(t-s)}\dd s
\le Cr.
\end{align*}
Applying the same calculation to the frozen OU flow from time \(t\) to \(t_{k+1}\) gives
\(\E\norm{\bar{\bm\xi}_{k+1}-\bar{\bm\xi}_t}^2\le C(h-r)\). Hence, uniformly for \(t\in[0,T]\),
\begin{equation}\label{eq:proof-splitting-increments}
\E\norm{\bar{\bm\xi}_t-\bar{\bm\xi}_{\pi(t)}}^2
\le Ch,
\qquad
\E\norm{\bar{\bm\xi}_{\pi^+(t)}-\bar{\bm\xi}_t}^2
\le Ch.
\end{equation}
For the group interpolation, the one-parameter curve
\(r\mapsto \bar g_k\Exp_\G(r\bar{\bm\xi}_{k+1})\) has constant speed
\(\norm{\bar{\bm\xi}_{k+1}}\) in the bi-invariant metric. Hence
\begin{equation}\label{eq:proof-group-increment}
\E d_\G(\bar g_t,\bar g_{\pi(t)})^2
\le h^2\E\norm{\bar{\bm\xi}_{\pi^+(t)}}^2
\le Ch^2.
\end{equation}

\paragraph{Velocity error}
Let
\begin{equation*}
\delta\bm\xi_t=\bm\xi_t-\bar{\bm\xi}_t,
\qquad
\Delta\bm u_t
=\bm u(t,g_t,\bm\xi_t)
-\bm u\bigl(\pi(t),\bar g_{\pi(t)},\bar{\bm\xi}_{\pi(t)}\bigr).
\end{equation*}
Because the exact and interpolated velocity equations are driven by the same Brownian motion, the stochastic terms cancel. Variation of constants across successive intervals gives, for every \(t\le T\),
\begin{equation}\label{eq:proof-velocity-error-representation}
\delta\bm\xi_t
=\alpha\int_0^t e^{-\gamma(t-s)}\Delta\bm u_s\,\dd s.
\end{equation}
The joint Lipschitz assumption, followed by the triangle inequality and Eqs.~\eqref{eq:proof-splitting-increments}--\eqref{eq:proof-group-increment}, yields
\begin{align}
\E\norm{\Delta\bm u_t}^2
&\le C\Bigl(
 h^2
 +\E d_\G(g_t,\bar g_{\pi(t)})^2
 +\E\norm{\bm\xi_t-\bar{\bm\xi}_{\pi(t)}}^2
 \Bigr)\notag\\
&\le C\Bigl(
 h
 +\E d_\G(g_t,\bar g_t)^2
 +\E\norm{\delta\bm\xi_t}^2
 \Bigr).
\label{eq:proof-control-difference}
\end{align}
Using \(e^{-\gamma(t-s)}\le1\) and Cauchy--Schwarz in Eq.~\eqref{eq:proof-velocity-error-representation}, we obtain
\begin{equation}\label{eq:proof-velocity-strong}
\E\sup_{0\le r\le t}\norm{\delta\bm\xi_r}^2
\le C\int_0^t
\left(
 h
 +\E d_\G(g_s,\bar g_s)^2
 +\E\norm{\delta\bm\xi_s}^2
\right)\dd s.
\end{equation}

\paragraph{Group error without a local logarithm}
A bi-invariant metric yields the following global pathwise comparison directly on the group. If two absolutely continuous group curves satisfy
\begin{equation*}
\dot g_t=T_eL_{g_t}(\bm\zeta_t),
\qquad
\dot q_t=T_eL_{q_t}(\bm\eta_t),
\end{equation*}
then, with \(r_t=g_tq_t^{-1}\), the left-trivialized velocity of \(r_t\) is
\(\operatorname{Ad}_{q_t}(\bm\zeta_t-\bm\eta_t)\). The norm is \(\operatorname{Ad}\)-invariant, so the length bound for \(r\) gives the pathwise estimate
\begin{equation}\label{eq:proof-group-variation}
d_\G(g_t,q_t)
\le d_\G(g_0,q_0)
+\int_0^t\norm{\bm\zeta_s-\bm\eta_s}\,\dd s.
\end{equation}
For the numerical interpolation \eqref{eq:proof-group-interpolation}, the reconstruction velocity on \([t_k,t_{k+1}]\) is
\(\widehat{\bm\xi}_t=\bar{\bm\xi}_{k+1}=\bar{\bm\xi}_{\pi^+(t)}\). Since the exact and numerical processes start from the same group state, Eq.~\eqref{eq:proof-group-variation} gives
\begin{align*}
\sup_{0\le r\le t}d_\G(g_r,\bar g_r)
&\le\int_0^t\norm{\bm\xi_s-\widehat{\bm\xi}_s}\,\dd s\\
&\le\int_0^t\norm{\delta\bm\xi_s}\,\dd s
+\int_0^t\norm{\bar{\bm\xi}_s-\bar{\bm\xi}_{\pi^+(s)}}\,\dd s.
\end{align*}
Therefore, by Cauchy--Schwarz and Eq.~\eqref{eq:proof-splitting-increments},
\begin{equation}\label{eq:proof-group-strong}
\E\sup_{0\le r\le t}d_\G(g_r,\bar g_r)^2
\le C\int_0^t
\E\sup_{0\le q\le s}\norm{\delta\bm\xi_q}^2\,\dd s
+ C_T h.
\end{equation}
The second term is precisely the splitting/reconstruction defect: although the OU velocity varies inside the step, the Lie--Trotter reconstruction uses the endpoint velocity \(\bar{\bm\xi}_{k+1}\). Equation~\eqref{eq:proof-splitting-increments} shows that its mean-square contribution over the full horizon is \(O(h)\). Thus the contribution associated with noncommutative group reconstruction is controlled at order \(O(h)\) in mean square over the full horizon.

\paragraph{Global strong error and bounded-Lipschitz path law}
Define
\begin{equation*}
A(t)
=
\E\sup_{0\le r\le t}
\left[
 d_\G(g_r,\bar g_r)^2
 +\norm{\bm\xi_r-\bar{\bm\xi}_r}^2
\right].
\end{equation*}
Combining Eqs.~\eqref{eq:proof-control-difference}, \eqref{eq:proof-velocity-strong}, and \eqref{eq:proof-group-strong} gives
\begin{equation*}
A(t)
\le C_T h+C_T\int_0^t A(s)\,\dd s.
\end{equation*}
Gr\"onwall's inequality therefore yields the uniform strong path estimate
\begin{equation}\label{eq:proof-splitting-strong-error}
\E\sup_{0\le t\le T}
\left[
 d_\G(g_t,\bar g_t)^2
 +\norm{\bm\xi_t-\bar{\bm\xi}_t}^2
\right]
\le C_T h.
\end{equation}

Let \(X=(g,\bm\xi)\) and \(\bar X=(\bar g,\bar{\bm\xi})\). Since the path metric in Eq.~\eqref{eq:bl-distance-definition} is built from
\(\overline d_\X=1\wedge(d_\G+\norm{\cdot})\), Eq.~\eqref{eq:proof-splitting-strong-error} implies
\begin{equation*}
\E d_{\rm path}(X,\bar X)
\le
\left(
\E\sup_{0\le t\le T}
\bigl[d_\G(g_t,\bar g_t)+\norm{\bm\xi_t-\bar{\bm\xi}_t}\bigr]^2
\right)^{1/2}
\le C_T h^{1/2}.
\end{equation*}
Hence, for every path functional \(F\) with \(\norm{F}_{\rm BL}\le1\),
\begin{equation*}
\left|\E F(X)-\E F(\bar X)\right|
\le \E d_{\rm path}(X,\bar X)
\le C_T h^{1/2}.
\end{equation*}
Taking the supremum over such \(F\) proves
\begin{equation*}
d_{\rm BL}(P_h^{\bm u},P^{\bm u})
\le C_T h^{1/2},
\end{equation*}
which is Eq.~\eqref{eq:rccbm-splitting-error}. Every constant used above depends on the controller only through the common bounds \(L_u\) and \(K_u\); consequently the estimate is uniform over a control sieve satisfying those bounds.

\subsection{Proof of Proposition~3.10}

Under Assumption~\ref{ass:compact-internal-representation},
\(F:\G_{\rm red}\to\mathcal S\) is a \(C^2\)-diffeomorphism. Hence its differential
\(\dd F_z:T_z\G_{\rm red}\to T_{F(z)}\mathcal S\) is a linear isomorphism at every \(z\). The state lift
\begin{equation*}
\Phi_{\rm red}(z,\bm\zeta)
=\bigl(F(z),\dd F_z[T_eL_z(\bm\zeta)]\bigr)
\end{equation*}
is therefore continuous and injective, with continuous inverse obtained from
\(\mathcal R_{\rm red}|_{\mathcal S}=F^{-1}\), the inverse differential, and the Lie algebra coordinate identification induced by left translation. In particular, it is a Borel isomorphism between the reduced kinetic state space and its image. Applying \(\Phi_{\rm red}\) pointwise to a continuous path defines a Borel isomorphism
\(\mathbf\Phi_{\rm red}\) of the corresponding continuous path spaces onto its image.

Let \(P\ll Q\) be two reduced path laws and set
\(\widetilde P=(\mathbf\Phi_{\rm red})_\#P\),
\(\widetilde Q=(\mathbf\Phi_{\rm red})_\#Q\). Because the map is bijective onto its image,
\begin{equation*}
\frac{\dd\widetilde P}{\dd\widetilde Q}
\bigl(\mathbf\Phi_{\rm red}(\omega)\bigr)
=\frac{\dd P}{\dd Q}(\omega)
\qquad Q\text{-a.e.}
\end{equation*}
Thus, by change of variables,
\begin{align*}
\KL(\widetilde P\|\widetilde Q)
&=\int\log\frac{\dd\widetilde P}{\dd\widetilde Q}\,\dd\widetilde P\\
&=\int\log\frac{\dd P}{\dd Q}\,\dd P
=\KL(P\|Q),
\end{align*}
which proves Eq.~(3.26) with
\(P=P_{\rm red}^{\rm SB}\) and \(Q=\widehat P_{\rm red}\).

If the reconstruction is not injective, an inverse density transformation is unavailable. For any measurable map \(T\), the variational representation of relative entropy gives
\begin{align*}
\KL(T_\#P\|T_\#Q)
&=\sup_f\left\{\E_P[f\circ T]-\log\E_Q[e^{f\circ T}]\right\}\\
&\le\sup_\psi\left\{\E_P[\psi]-\log\E_Q[e^\psi]\right\}
=\KL(P\|Q),
\end{align*}
which is the data-processing inequality in Eq.~(3.27). Finally,
\(SO(3)^{N-1}\times\T^K\) is compact, and it is non-Abelian whenever at least one \(SO(3)\) factor is present. Therefore the reduced learning problem lies in the compact non-Abelian RCCBM regime, provided the endpoint and reference assumptions used earlier are verified for the reduced laws.

The preceding proof sequence establishes the main theoretical dependencies used in the article.

\section{Additional Experimental Results}\label{app:supplementary-results}

\begin{figure}[!tbp]
\centering
\pdffig[width=0.98\textwidth,height=0.58\textheight,keepaspectratio]{Protein-histogram}
\caption{One-dimensional protein torsion marginals for the General, Glycine, Pre-Pro, and Proline subsets, comparing the reference and WKBC-generated distributions. 
These coordinate-wise marginals complement the joint wrapped-coordinate endpoint visualization reported in the main text.}
\label{fig:appendix-protein-histograms}
\end{figure}

Fig.~\ref{fig:appendix-protein-histograms} provides one-dimensional protein torsion marginals as an additional diagnostic of the wrapped-coordinate experiments.

\begin{figure}[!tbp]
\centering
\includegraphics[width=0.98\textwidth]{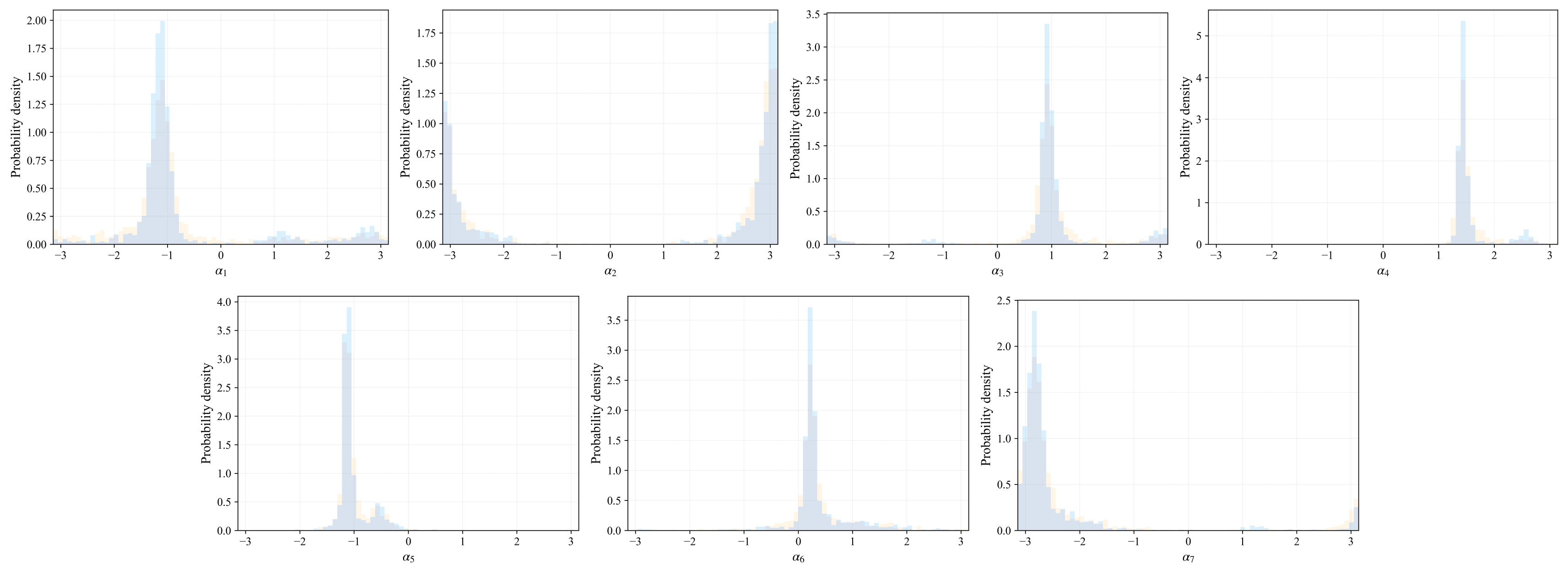}
\caption{Coordinate-wise RNA torsion marginals on $\mathbb{T}^{7}$, comparing the reference and WKBC-generated distributions. 
The seven periodic coordinates exhibit heterogeneous marginal structures ranging from sharply concentrated modes to broader asymmetric profiles.}
\label{fig:appendix-rna-histograms}
\end{figure}

Fig.~\ref{fig:appendix-rna-histograms} shows strongly coordinate-dependent periodic marginals, ranging from sharply concentrated modes to broader asymmetric profiles. These heterogeneous shapes illustrate the non-Euclidean structure of the seven RNA torsions and complement the quantitative endpoint and path results in Table~3.

\begin{figure}[!tbp]
\centering
\includegraphics[width=0.97\textwidth]{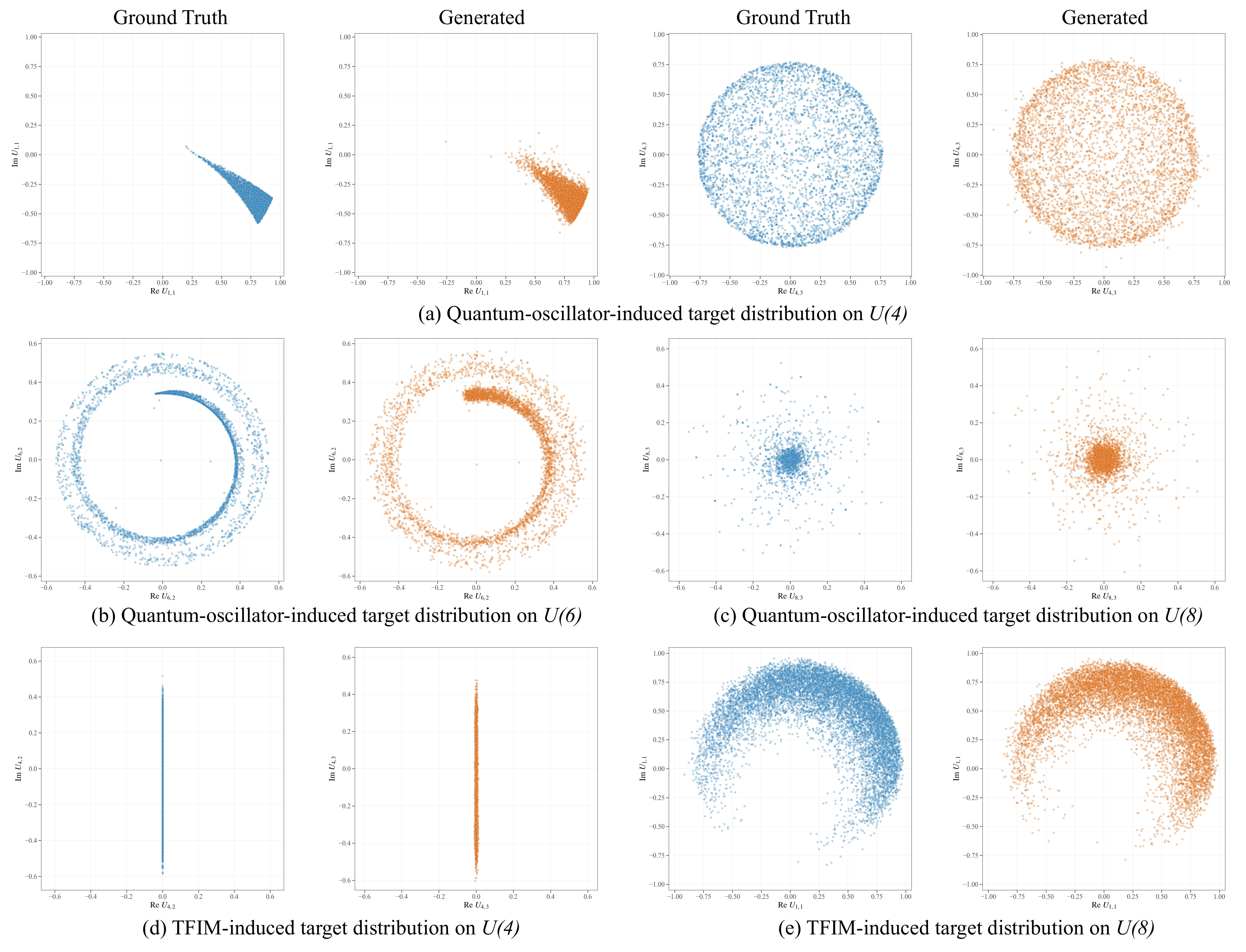}
\caption{Projected reference and RCCBM-generated samples for the oscillator- and TFIM-induced $U(n)$ tasks. 
Panels (a)--(c) show the quantum-oscillator-induced distributions on $U(4)$, $U(6)$, and $U(8)$, respectively, while panels (d)--(e) show the TFIM-induced distributions on $U(4)$ and $U(8)$.}
\label{fig:appendix-un-visualization}
\end{figure}

Fig.~\ref{fig:appendix-un-visualization} shows the projected samples used for the qualitative comparison. Across the oscillator-induced \(U(4)\), \(U(6)\), and \(U(8)\) tasks and the TFIM-induced \(U(4)\) and \(U(8)\) tasks, the paired projections show close agreement in their dominant support geometry, including annular, spiral, crescent-like, and strongly anisotropic structures. The visual agreement complements the RCCBM diagnostics in Table~6 and supports the model's ability to represent structured distributions on unitary groups of different dimensions.

\section{Experiment Settings}\label{app:experiment-settings}

\subsection{Endpoint, reference, and optimization specification}\label{app:training-settings}

Every experiment specifies \((\mathcal O_0,\rho_0)\), \((\mathcal O_T,\rho_T)\), the reference initial density \(q_0\), \((\gamma,\alpha,T)\), the endpoint mollifier \(\kappa_\varepsilon\), and the integration grid. WKBC uses the wrapped-kernel implementation, whereas RCCBM uses two bounded endpoint-corrector networks and a DirectControlNet. The random seed is fixed to 1209 for the reported training and sampling configurations.

For RCCBM, Eq.~\eqref{eq:rcm-calibrated-initial-law} approximates the mollified population initial law and is used for refinement and generation; teacher conditions are sampled or importance-weighted toward Eq.~(3.18). Replay balancing and terminal oversampling use the importance correction in Eq.~\eqref{eq:rccbm-replay-importance}. Refinement is accepted only if Eq.~\eqref{eq:rccbm-constrained-refinement} holds. Endpoint calibration, CondSOC, importance-weighted control matching, constrained refinement, and validation are logged as distinct stages.

\paragraph{RCCBM ablations}
The active-component panel evaluates RCCBM variants w/o endpoint calibration, terminal replay, the local-scale term, distribution refinement, spectral features, or calibrated initial-state sampling, together with a fixed-common-random-number CondSOC variant. The calibrated-initial-state ablation directly probes the empirical initial-law consistency condition in Assumption~\ref{ass:rccbm-statistical-stability}. The General-task WKBC ablation additionally varies endpoint smoothing and the time-sampling distribution.

\begin{table}[!tbp]
\centering
\caption{Training and sampling settings used for the reported RCCBM experiments on $SO(3)$ and $U(n)$. 
The table summarizes endpoint calibration, reciprocal training, control and refinement batches, network architecture, replay, integration, and calibrated-initial-state settings; the final column specifies the associated implementation or reporting convention.}
\label{tab:essential-training-settings}
\small
\setlength{\tabcolsep}{2.6pt}
\renewcommand{\arraystretch}{1.12}
\begin{tabular}{p{0.25\textwidth}p{0.22\textwidth}p{0.22\textwidth}p{0.23\textwidth}}
\toprule
Parameter & \(SO(3)\) v5 & \(U(n)\) v5 & Reporting convention \\
\midrule
Endpoint calibration & 600 steps; pool 4096 & 600 steps; pool 4096 & bounded endpoint nets, width 192, 4 residual blocks each \\
Outer epochs & 30 & 30 & reciprocal outer epochs \\
Updates per epoch & 1200 match + 300 refine & 1100 match + 400 refine & report stages separately \\
Control batch & 4096 & 3072 & replay control matching \\
Refinement batch & 128 & 128 & fresh stochastic rollouts \\
Coupling batch & 512 & 256 & reference-weighted triples \\
CondSOC smoothness \(\lambda_{\rm SE}\) & 0.5 & 0.5 & empirical smoothness weight \\
Direct control net & width 512, 10 blocks & width 512, 18 blocks & LayerNorm--SiLU, residual divided by \(\sqrt2\) \\
Time embedding & 128 & 128 & sinusoidal plus time-to-go \\
Terminal-window start \(a_{\rm term}\) & 0.75 & 0.75 & terminal component supported on \([a_{\rm term}T,T]\) \\
Terminal replay fraction \(r_{\rm term}\) & 0.55 & 0.65 & probability of terminal-window time draw \\
Derived density floor \(\underline\lambda T\) & 0.600 & 0.467 & Eq.~\eqref{eq:rccbm-time-density-floor}; derived, not tuned \\
Generation & 128 stochastic steps & 128 stochastic steps & exact-OU Lie--Trotter \\
Calibrated initial state & enabled & enabled & same law for teacher, refinement, generation \\
On-policy refresh & not applied & not applied & fixed reciprocal training schedule \\
\bottomrule
\end{tabular}
\end{table}

Table~\ref{tab:essential-training-settings} records the training and sampling settings used for the reported RCCBM experiments. The time-density lower bound is determined exactly by Eq.~\eqref{eq:rccbm-time-density-floor}: the reported \(SO(3)\) and \(U(n)\) mixtures give \(\underline\lambda=0.60/T\) and \(7/(15T)\approx0.467/T\), respectively. Thus \(\underline\lambda\) is a derived theorem constant, whereas \(a_{\rm term}\) and \(r_{\rm term}\) are the actual sampling hyperparameters. For WKBC, the bridge-weighted regression of Proposition~\ref{thm:wkbc-stability} uses the fixed uniform time density \(1/T\); the former occupation-ratio constant \(C_{\rm occ}\) is not used or tuned, as explained in Remark~\ref{rem:wkbc-no-cocc}.

The CondSOC terminal kernel uses the reported concentration parameters 600 on \(SO(3)\) and 800 on \(U(n)\). The normalized control-energy coefficient in Eq.~\eqref{eq:rcm-condsoc-objective} is fixed at \(1/2\), and the empirical smoothness coefficient is \(\lambda_{\rm SE}=0.5\) in all reported RCCBM runs. Independent noise is used during optimization and for the held-out teacher rollout, with antithetic sampling when configured. The final checkpoint is chosen by the fixed validation distribution objective. Final validation MSE, relative MSE, and cosine are recomputed after distribution refinement.

\paragraph{Computational workload}
A generated RCCBM trajectory uses 128 direct-control evaluations. Training workload is reported as endpoint calibration plus matching and refinement updates; CondSOC inner iterations and teacher rollout counts are listed separately in the released configuration. WKBC workload reporting remains unchanged and includes lattice, quadrature, Sinkhorn, score-regression, and integration costs.

\paragraph{Reproducibility record}
Every numerical row is indexed by a configuration file recording the seed, data split, reference pool, endpoint checkpoint rule, ESS threshold, CondSOC settings, replay sampler, refinement weights, validation panel, and evaluator settings. The manifest also records the calibrated-initial-state and on-policy-refresh settings used for each run.

\subsection{Geometric and bridge diagnostics}\label{app:metrics}

\subsubsection{RCCBM endpoint, teacher, control, and structure diagnostics}\label{app:bridge-diagnostics}

For normalized pool weights \(\overline w_i\),
\begin{equation*}
    \operatorname{ESS}=\left(\sum_i\overline w_i^2\right)^{-1},
    \qquad
    \operatorname{ESS}(\%)=100\,\frac{\operatorname{ESS}}{N_Q}.
\end{equation*}
Endpoint calibration reports the empirical dual, ESS, and held-out marginal feature error. CondSOC reports train and independent-noise terminal error, normalized teacher energy, and smoothness. Direct control matching reports MSE, relative MSE, and cosine on a fixed validation replay; all three are recomputed after terminal refinement.

For \(SO(3)\), structure error reports
\(\norm{R^\top R-I}_{\rm F}\) and \(|\det R-1|\), with mean and maximum. For \(U(n)\), it reports
\(\norm{U^\dagger U-I}_{\rm F}\). These quantities measure numerical preservation of the Lie group constraint under the exponential update.

\subsubsection{MMD, sliced-Wasserstein, local precision/coverage, and intrinsic costs}

{The RCCBM feature MMD uses a positive finite mixture of Gaussian kernels on the fixed full-rank target-whitened identification features \(\Phi_{\G}^{\rm id}\) defined in Eq.~\eqref{eq:rccbm-production-identification-feature}.  For \(SO(3)\), this map is obtained from all nine matrix entries; for \(U(n)\), it uses all real and imaginary matrix entries; and for the compact reduced product group it additionally uses the standard \((\cos\theta,\sin\theta)\) embedding of each torus coordinate.  The regularized whitening transform is invertible and full dimensional.  Spectral features used in the \(U(n)\) controller serve as auxiliary optimization-facing features, while the fixed target-whitened embedding \(\Phi_{\G}^{\rm id}\) defines the MMD identification map.} A target-versus-target split provides the finite-sample floor. The sliced-Wasserstein term uses a fixed bank of random directions. Local precision and coverage are reported separately from generated-to-target and target-to-generated kNN radii, and their combined local training term aggregates these complementary neighborhood diagnostics.

For empirical endpoint measures, the debiased Sinkhorn divergence is
\begin{equation*}
    S_\varepsilon(\widehat\rho,\widehat\nu)
    =\operatorname{OT}_\varepsilon(\widehat\rho,\widehat\nu)
      -\frac12\operatorname{OT}_\varepsilon(\widehat\rho,\widehat\rho)
      -\frac12\operatorname{OT}_\varepsilon(\widehat\nu,\widehat\nu),
\end{equation*}
using squared intrinsic torus or \(SO(3)\) cost. The regularization, sample count, and stopping rule are fixed across methods. We report the resulting finite-sample entropic geodesic Wasserstein quantity as a proxy for the population Wasserstein discrepancy.

For paired torus samples,
\begin{equation*}
    \operatorname{GeoRMSE}
    =\left[\frac{1}{Nd}\sum_{n=1}^N\sum_{j=1}^d
      \wrap(\widehat\theta_{n,j}-\theta_{n,j})^2\right]^{1/2}.
\end{equation*}
The pairing is fixed before training, so GeoRMSE measures geometric error under this predetermined evaluation correspondence. Ramachandran JSD and ValidRate use a common periodic partition and a validity region fixed without generated test samples.

\subsubsection{Normalized-control energy}

If the velocity drift correction is \(\alpha\bm u_t\), the reported energy is
\begin{equation*}
    \mathcal E_{\rm path}
    =\frac12\E\int_0^T\norm{\bm u_t}^2\,\dd t.
\end{equation*}
Cross-method comparison requires the same reference process, diffusion coefficient, time horizon, initial-state law, and integration grid.

\subsubsection{Protein conformational transition pathway evaluation}\label{app:protein-transition-pathway-evaluation}

The pathway diagnostics for the Protein Conformational Transition Pathway Generation task are evaluated in the same reduced state space used by the generator, using molecular-dynamics trajectories from mdCATH domain 1jvmB00. The production representation is
\[
    \G_{\rm red}=SO(3)^{63}\times\T^{192}.
\]
Write two reduced conformations as
\[
 g=(R_1,\ldots,R_{n_R},\theta_1,\ldots,\theta_K),
 \qquad
 h=(S_1,\ldots,S_{n_R},\phi_1,\ldots,\phi_K),
\]
with \(n_R=63\) and \(K=192\). For each rotation factor and torus coordinate, respectively, define
\[
    \delta_i^R(g,h)=\bigl(\Log(R_i^\top S_i)\bigr)^\vee,
    \qquad
    \delta_j^\theta(g,h)=\wrap(\phi_j-\theta_j)\in(-\pi,\pi].
\]
The intrinsic product distance used by the evaluator is
\begin{align*}
    d_{\rm red}(g,h)^2
    &=\frac{1}{n_R}\sum_{i=1}^{n_R}\norm{\delta_i^R(g,h)}^2
      +\lambda_\theta\frac{1}{K}\sum_{j=1}^{K}\abs{\delta_j^\theta(g,h)}^2,\\
    \lambda_\theta&=\frac{K}{n_R}=\frac{192}{63}\approx3.047619.
\end{align*}
This choice assigns the same coefficient to every Lie algebra coordinate after accounting for the three coordinates in each \(\mathfrak{so}(3)\) factor. It agrees with the production geometry up to a common global scale; the two reported dimensionless diagnostics are unchanged by such a common scaling.

\paragraph{Common temporal grid}
Each MD pair contains the 101 stored states from the source frame through the target frame at lag 100. The RCCBM sampler uses 128 stochastic integration steps. To prevent the roughness statistic from being confounded by different temporal resolutions, both trajectories are evaluated on the same normalized-time grid \(s_m=m/100\), \(m=0,\ldots,100\). RCCBM states are interpolated intrinsically: for \(R_a,R_b\in SO(3)\),
\[
    R(\alpha)=R_a\Exp\!\bigl(\alpha\Log(R_a^\top R_b)\bigr),
    \qquad 0\le\alpha\le1,
\]
and each torus coordinate is interpolated along the wrapped angular difference.

\paragraph{Target-aware path tortuosity}
For a discrete trajectory \(\gamma=(g_0,\ldots,g_M)\) with prescribed target \(g_T^\star\), define its accumulated intrinsic length by
\[
    L(\gamma)=\sum_{k=0}^{M-1}d_{\rm red}(g_k,g_{k+1}).
\]
The target-aware tortuosity is
\[
    \tau_T(\gamma)
    =\frac{L(\gamma)+d_{\rm red}(g_M,g_T^\star)}
           {d_{\rm red}(g_0,g_T^\star)}.
\]
The terminal-residual term is essential for the generated trajectories: it prevents a path that fails to reach the prescribed target from receiving an artificially favorable score merely because it travels a shorter distance. For the observed MD segment, \(g_M=g_T^\star\) and the residual vanishes. By the triangle inequality, \(\tau_T\ge1\); values closer to one indicate less geometric detour. For pair \(p\), with one MD path and \(S=32\) RCCBM samples, the reported pairwise gain is
\[
    \operatorname{DirectnessGain}_p
    =100\left(1-
      \frac{S^{-1}\sum_{s=1}^S\tau_T(\gamma_{p,s}^{\rm SB})}
           {\tau_T(\gamma_p^{\rm MD})}\right)\%.
\]

\paragraph{Normalized Lie algebra path roughness}
For each adjacent pair of states, define the Lie algebra increment \(\eta_k\in\mathfrak g_{\rm red}\) by
\[
    \eta_{k,i}^R=\bigl(\Log((R_{k,i})^\top R_{k+1,i})\bigr)^\vee,
    \qquad
    \eta_{k,j}^\theta=\wrap(\theta_{k+1,j}-\theta_{k,j}).
\]
The squared norm is weighted consistently with the product distance,
\[
    \norm{\eta_k}_W^2
    =\frac{1}{n_R}\sum_{i=1}^{n_R}\norm{\eta_{k,i}^R}^2
     +\lambda_\theta\frac{1}{K}\sum_{j=1}^{K}\abs{\eta_{k,j}^\theta}^2.
\]
The normalized roughness statistic is
\[
    \mathcal R(\gamma)
    =\frac{\displaystyle\sum_{k=0}^{M-2}
                \norm{\eta_{k+1}-\eta_k}_W^2}
           {\displaystyle\sum_{k=0}^{M-1}
                \norm{\eta_k}_W^2+\varepsilon},
    \qquad \varepsilon=10^{-12}.
\]
The numerator measures changes in successive local displacement vectors, whereas the denominator normalizes by the total local motion. Thus a trajectory with approximately persistent local direction and amplitude has small roughness, while repeated reversals, abrupt directional changes, or local oscillations increase the statistic. The quantity is nonnegative but is not constrained to be below one. The pairwise relative reduction is
\[
    \operatorname{RoughnessReduction}_p
    =100\left(1-
      \frac{S^{-1}\sum_{s=1}^S\mathcal R(\gamma_{p,s}^{\rm SB})}
           {\mathcal R(\gamma_p^{\rm MD})}\right)\%.
\]

\paragraph{Aggregation and evaluation protocol}
For each endpoint pair, the 32 stochastic trajectories are summarized by their mean and standard deviation. Aggregate percentages are computed from the ratio of the \(P=15\) pair-level means,
\[
    100\left(1-
    \frac{P^{-1}\sum_{p=1}^{P}\overline\tau_{p}^{\rm SB}}
         {P^{-1}\sum_{p=1}^{P}\tau_{p}^{\rm MD}}\right)
    =95.06\%,
    \qquad
    100\left(1-
    \frac{P^{-1}\sum_{p=1}^{P}\overline{\mathcal R}_{p}^{\rm SB}}
         {P^{-1}\sum_{p=1}^{P}\mathcal R_{p}^{\rm MD}}\right)
    =99.21\%,
\]
with \(P=15\). The evaluation uses a prespecified held-out pair list, and target-aware tortuosity is computed from the prescribed target according to the rule defined above.


\end{document}